\newcommand{\ltwo}[1]{\|#1\|_2}
\newcommand{\real}{\ensuremath{\mathbb{R}}}
\newcommand{\thetastar}{\ensuremath{\bm{\theta}^\star}}
\newcommand{\Ind}{\ensuremath{\textbf{I}}}
\newcommand{\defn}{\coloneqq}
\newcommand{\Exs}{\ensuremath{\mathbb{E}}}
\newcommand{\HACKPROOF}{\begin{proof}}
\newcommand{\HACKENDPROOF}{\end{proof}}
\newlength{\widebarargwidth}
\newlength{\widebarargheight}
\newlength{\widebarargdepth}
\long\def\@makecaption#1#2{
        \vskip 0.8ex
        \setbox\@tempboxa\hbox{\small {\bf #1:} #2}
        \parindent 1.5em  %% How can we use the global value of this???
        \dimen0=\hsize
        \advance\dimen0 by -3em
        \ifdim \wd\@tempboxa >\dimen0
                \hbox to \hsize{
                        \parindent 0em
                        \hfil 
                        \parbox{\dimen0}{\def\baselinestretch{0.96}\small
                                {\bf #1.} #2
                                %%\unhbox\@tempboxa
                                } 
                        \hfil}
        \else \hbox to \hsize{\hfil \box\@tempboxa \hfil}
        \fi
        }
\newcommand{\cS}{\mathcal{S}}
\newcommand{\cA}{\mathcal{A}}
\newcommand{\bmtheta}{\bm\theta}
\newcommand{\thetaavg}[1]{\overline{\bm{\theta}}_{#1}}
\newcommand{\bmSigma}{\bm \Sigma}
\newcommand{\lsigma}[1]{\|#1\|_{\bmSigma}}
\newcommand{\lsigmaBig}[1]{\Big\|#1\Big\|_{\bmSigma}}
\newcommand{\Del}[1]{\bm \Delta_{#1}}
\newcommand{\Delbar}[1]{\overline{\bm \Delta}_{#1}}
\newcommand{\citep}{\cite}
\newcommand{\citet}{\cite}
\newcommand{\lambdamax}{\lambda_{\mathsf{max}}(\bm{\Sigma})}
\newcommand{\lambdamin}{\lambda_{\mathsf{min}}(\bm{\Sigma})}
\newcommand{\mymid}{\,|\,}
\newcommand{\ind}{\mathds{1}} 
\definecolor{yxc}{RGB}{255,0,0}
\definecolor{yjc}{RGB}{225,0,0}
\definecolor{cm}{RGB}{0,0,200}
\definecolor{ytw}{RGB}{255,0,0}
\definecolor{gen}{RGB}{0,200,0}
\definecolor{wwc}{RGB}{0,125,0}
\newcommand{\yc}[1]{\textcolor{yjc}{[YJC: #1]}}
\newcommand{\wwc}[1]{\textcolor{wwc}{[WWC: #1]}}
\let\hat\widehat
\begin{document}
\theoremstyle{plain} \newtheorem{lemma}{\textbf{Lemma}}\newtheorem{proposition}{\textbf{Proposition}}\newtheorem{theorem}{\textbf{Theorem}}\newtheorem{assumption}{Assumption}\newtheorem{corollary}{\textbf{Corollary}}\newtheorem{claim}{\textbf{Claim}}

\theoremstyle{remark}\newtheorem{remark}{\textbf{Remark}} 

\title{High-probability sample complexities for policy evaluation with linear function approximation}

% \author{Gen Li + Weichen Wu}
\author{Gen~Li,~\IEEEmembership{Member,~IEEE,}
	    Weichen~Wu,~%~\IEEEmembership{Member,~IEEE,}
	    Yuejie~Chi,~\IEEEmembership{Fellow,~IEEE,}
	    Cong~Ma,~\IEEEmembership{Member,~IEEE,}
	    Alessandro~Rinaldo,~%\IEEEmembership{Member,~IEEE,}
	    Yuting~Wei,~\IEEEmembership{Member,~IEEE}%
\thanks{The first two authors contributed equally.}
\thanks{The work of Gen Li was supported in part by the Chinese University of Hong Kong Direct Grant for Research.
The work of Weichen Wu and Alessandro Rinaldo were supported in part by the NIH Grant R01 NS121913. 
The work of Yuejie Chi was supported in part by the grants Office of Naval Research (ONR) N00014-19-1-2404 and NSF CCF-2106778.
The work of Cong Ma was supported in part by NSF grant DMS-2311127.
The work of Yuting Wei was supported in part by the NSF grants CCF-2106778, DMS-2147546/2015447 and NSF CAREER award DMS-2143215 and Google Research Scholar Award. \texttt{(Correspondent author: Yuting Wei.)}}
\thanks{Gen Li is with the Department of Statistics, The Chinese University of Hong Kong, Hong Kong.}%
\thanks{Weichen Wu is with the Department of Statistics and Data Science, Carnegie Mellon University, Pittsburgh, PA 15213, USA.}%
\thanks{Yuejie Chi is with the Department of Electrical and Computer Engineering, Carnegie Mellon University, Pittsburgh, PA 15213, USA.}%
\thanks{Cong Ma is with the Department of Statistics, University of Chicago, Chicago, IL 60637, USA.}%
\thanks{Alessandro Rinaldo is with the Department of Statistics and Data Science, University of Texas at Austin, Austin, UT 78712, USA.}%
\thanks{Yuting Wei is with the Department of Statistics and Data Science, The Wharton School, University of Pennsylvania, Philadelphia, PA 19104, USA; email: \texttt{ytwei@wharton.upenn.edu}}
}

% \author{Gen Li\footnote{The first two authors contributed equally.} \thanks{Department of Statistics, The Chinese University of Hong Kong, Hong Kong.} \\
% {CUHK Statistics}    \\
% \and
% Weichen Wu\footnotemark[1] \thanks{Department of Statistics and Data Science, Carnegie Mellon University, Pittsburgh, PA 15213, USA.} \\
% CMU Statistics \\
% \and
% Yuejie Chi\thanks{Department of Electrical and Computer Engineering, Carnegie Mellon University, Pittsburgh, PA 15213, USA.} \\
% CMU ECE \\
% \and
% Cong Ma\thanks{Department of Statistics, University of Chicago, Chicago, IL 60637, USA.} \\
% UChicago Statistics  \\
% \and
% Alessandro Rinaldo \thanks{Department of Statistics and Data Science, University of Texas at Austin, Austin, UT 78712, USA.}\\
% UT Austin Statistics \\
% \and
% Yuting Wei \thanks{Department of Statistics and Data Science, The Wharton School, University of Pennsylvania, Philadelhia, PA 19104, USA.} \\
% %; email: \texttt{yuxin.chen@princeton.edu} 	
% UPenn Statistics
% }

\maketitle

\begin{abstract}
%This paper is concerned with the problem of policy evaluation with linear function approximation in the discounted infinite horizon Markov decision process.  We investigate the sample complexities required to attain any tolerant error for two widely-used policy evaluation algorithms: the temporal difference (TD) learning algorithm and the two time-scale linear TD with gradient correction (TDC) algorithm. When the observations are generated from the target policy, we improve upon the existing results on TD with Polyak-Ruppert averaging, by a large factor that can scale linearly with the dimension of state space. We also derive minimax lower bounds that match the upper bound on salient problem parameters, such as the feature maps and the problem dimension. When the samples are drawn from a behavior policy potentially different from the target policy, we establish the first sample complexity bound that attains the optimal dependence on the error tolerance, and simultaneously pins down the exact dependence on problem-related constants. 
This paper is concerned with the problem of policy evaluation with linear function approximation in discounted infinite horizon Markov decision processes. We investigate the sample complexities required to guarantee a predefined estimation error of the best linear coefficients for two widely-used policy evaluation algorithms: the temporal difference (TD) learning algorithm and the two-timescale linear TD with gradient correction (TDC) algorithm. In both the on-policy setting, where observations are generated from the target policy, and the off-policy setting, where samples are drawn from a behavior policy potentially different from the target policy, we establish the first sample complexity bound with high-probability convergence guarantee that attains the optimal dependence on the tolerance level. We also exhibit an explicit dependence on problem-related quantities, and show in the on-policy setting that our upper bound matches the minimax lower bound on crucial problem parameters, including the choice of the feature map and the problem dimension.

%When the observations are generated from the target policy, we establish the first sample complexity bound with high probability convergence guarantee that attains the minimax-optimal dependence on the tolerance level. Our upper bound improve upon the existing results on TD with Polyak-Ruppert averaging, by a large factor that can scale linearly with the dimension of the feature space. We also derive minimax lower bound that matches the upper bound on crucial problem parameters, including the choice of the feature maps and the problem dimension. 
%When the samples are drawn from a behavior policy potentially different from the target policy, we establish the first sample complexity bound that attains the optimal dependence on the error tolerance while simultaneously exhibiting an explicit dependence on problem-related quantities. 
\end{abstract}

\begin{IEEEkeywords}
policy evaluation, temporal difference learning, two-timescale stochastic approximation, minimax optimal, function approximation
\end{IEEEkeywords}

%\noindent \textbf{Keywords:} policy evaluation, temporal difference learning, two-timescale stochastic approximation, minimax optimal, function approximation

%\tableofcontents{}

%\doublespacing

\section{Introduction}

% \IEEEPARstart{P}{olicy}
Policy evaluation plays a critical role in many scientific and engineering applications in which practitioners aim to evaluate the performance of a target strategy based on either sequentially collected or a batch of offline data samples \citep{murphy2003optimal,bojinov2019time,tang2021model,dann2014policy}. 
For example, in clinical trials \citep{tang2021model}, real-time data acquisition might be expensive and risky; it is thus of essential value if historical data can be analyzed and information can be transferred to new tasks. While in other applications, such as mobile health \citep{bertsimas2022data}, it is practical to implement the desired policy and collect its feedback in a timely manner. 

Mathematically, Markov decision processes (MDPs) provide a general framework to design policy evaluation methods in dynamic settings; reinforcement learning (RL) is often modeled using MDPs when the exact model configuration is not available \citep{sutton2018reinforcement,bertsekas2017dynamic}. 
In this framework, a target policy is assessed through its corresponding value function. 
In practice, evaluating value functions often require an overwhelming number of samples due to the large dimensionality of the underlying state space. For this reason, RL methods are normally concerned with some form of function approximation. 
Dating back to the seminal work of \cite{tsitsiklis1997analysis}, there has been an extensive line of works that consider different types of function approximation, including linear function approximation \citep{bhandari2018finite,fan2020theoretical}, reproducing kernel Hilbert space \citep{farahmand2016regularized,duan2021optimal}, deep neural networks \citep{bertsekas1995neuro,arulkumaran2017brief} or function approximation on the model itself (see, e.g.~\cite{jin2020provably,wang2021sample,li2021sample-eff}), with a focus on improving the sample efficiency of RL algorithms.

\paragraph{Two settings: on-policy vs.~off-policy} 
The main goal of this paper is to provide sharp statistical guarantees of policy evaluation algorithms with linear function approximation in two different settings.
As the aforementioned examples already indicated, there are typically two different types of data-generating mechanisms to consider: the \emph{on-policy} setting when we have access to the outcomes of the target policy and the \emph{off-policy} setting, in which the only available data are generated from a behavior policy that is potentially different from the target policy.

In the on-policy setting, temporal difference (TD) learning is arguably the most popular  algorithm \citep{sutton1988learning} for policy evaluation in RL practice, partly because it is easy to implement and lends itself well to function approximations. 
As a model-free algorithm, TD learning processes data in an online manner without explicitly modeling the environment and is, therefore, memory efficient. %As a result, TD learning has been one of the most widely used algorithms for policy evaluation in RL. 
While the asymptotic convergence of TD with linear function approximation has been known since \cite{tsitsiklis1997analysis}, the finite-sample minimax optimality of TD has been established only recently for the tabular MDP \citep{li2021q}. 
For TD learning with linear function approximation, several recent contributions have produced new non-asymptotic analyses and insights (e.g.~\cite{bhandari2018finite,dalal2018finite,srikant2019finite,lakshminarayanan2018linear}), which partially unveil impacts of both the tolerance level and various problem-related parameters on its sample efficiency. 
However, minimax-optimal dependence on the tolerance level (i.e.~target level of estimation accuracy) is only established in expectation instead of with high probability; furthermore, the optimal dependence on problem-related parameters, such as the size of the state space and the effective horizon,  still remains unsettled, and it is unclear whether existing sample complexity bounds can be further improved. Failing to understand these questions, however, casts doubt on whether TD with linear function approximation is statistically efficient in practice, and brings difficulties to performing statistical inference based on TD estimators. 
In this paper, we seek to answer these questions by providing tighter characterizations of the performance of TD with linear function approximation.

In the off-policy setting, it is known that the error of TD learning with linear function approximation may diverge to infinity \citep{baird1995residual}. 
% The danger of divergence arises whenever function approximation, bootstrapping (e.g. TD) and off-policy training are combined, referred to as \emph{the deadly triad} by \cite{sutton2018reinforcement} (see Section 11.3). 
In order to address this issue, \cite{sutton2009fast} proposed a now popular alternative with two-timescale learning rates, called the linear TD with gradient correction (TDC) algorithm, which enjoys convergence guarantees in the off-policy case. 
In terms of finite-sample guarantees, although a number of recent efforts (see, e.g.~\cite{dalal2018finite_two,xu2021sample,wang2021non,dalal2020tale,kaledin2020finite,gupta2019finite}) tried to characterize the statistical performance of TDC for both $i.i.d.$ and Markovian data, 
they remain inadequate in providing either a convergence guarantee with high-probability, an explicit dependence on salient problem parameters, or a sharp dependence on the sample size. 
The challenge lies in dealing with the statistical dependence between two separate iterate sequences at different timescales. To tackle this challenge, it calls for a new analysis framework for the TDC algorithm.

\subsection{Our main contributions}

This paper is concerned with evaluating the performance of a given target policy $\pi$ in an infinite-horizon $\gamma$-discounted MDP with a finite but large number of states. 
The goal is to learn the best linear approximation of the value function in a pre-specified feature space given $i.i.d.$ transition pairs drawn from the stationary distribution. 
In the on-policy setting, we focus on the TD learning algorithm; in the off-policy setting, we shift gear to  the TDC learning algorithm. 
We summarize our main contributions as follows, with  their exact statements and consequences postponed to later sections. 
\begin{itemize}
\begin{comment}
\item We prove that in order to guarantee an estimation error of at most $\epsilon$, the number of independent samples needed for the LSTD algorithm with linear function approximation is no more than the order of
\[
\frac{\max_s \bm{\phi}(s)^\top \bm{\Sigma}^{-1} \bm{\phi}(s) (1+\|\bm{\theta}^\star\|_{\bm{\Sigma}}^2)}{(1-\gamma)^2 \epsilon^2};
\]
\end{comment}

\item Via a careful analysis of TD learning with Polyak-Ruppert averaging, we show that, in the on-policy setting, a number of samples of order 
\begin{align*}
	\widetilde{O} \left(\frac{\max_s \{\bm{\phi}(s)^\top \bm{\Sigma}^{-1} \bm{\phi}(s)\} (1+\|\bm{\theta}^\star\|_{\bm{\Sigma}}^2)}{(1-\gamma)^2 \varepsilon^2}\right)
\end{align*}
is sufficient to achieve an accuracy level (estimation error) of $\varepsilon > 0$, with high probability. 
% textcolor{red}{Explain the notation $\widetilde{O} $}
Here, $\bm{\phi}(s) \in \real^{d}$ indicates the linear feature vector for the state $s$ in the state space $\mathcal{S}$, $\bm{\theta}^\star$ is the best linear approximation coefficient of the value function, and $\bm{\Sigma}$ corresponds to the feature covariance matrix weighted by the stationary distribution. See Section~\ref{sec:models} for the definitions of these parameters.
Compared to prior work by~\cite{bhandari2018finite} and ~\cite{srikant2019finite}, our sample complexity bound can be tighter by a factor of $\mathsf{cond}(\bm{\Sigma})$ which can be as large as $|\mathcal{S}|$ (the cardinality of the state space). Our result also controls $\varepsilon$-convergence with high probability that matches the minimax-optimal dependence on the tolerance level $\varepsilon$ with lowest burn-in cost. 
To assess the tightness of this upper bound, we provide a minimax lower bound in Section~\ref{sec:minimax-lb}, which certifies the optimal dependence of our bound on both the tolerance level $\varepsilon$ and problem-related parameters $\bm{\Sigma}$ and $\thetastar$.

% \vspace{2cm} 
% We prove that that in order to guarantee an estimation error of at most $\varepsilon$, the number of independent samples needed for the TD algorithm with linear function approximation is no more than the order of 
% \begin{align*}
% \frac{\max_s \bm{\phi}(s)^\top \bm{\Sigma}^{-1} \bm{\phi}(s) (1+\|\bm{\theta}^\star\|_{\bm{\Sigma}}^2)}{(1-\gamma)^2 \varepsilon^2},	
% \end{align*}
% where $\{\bm{\phi}_s\}_{1 \leq s \leq S}$ represents the feature vectors for states $1 \leq s \leq S$, $\bm{\theta}^\star$ represents coefficient vector which we aim to estimate, and $\bm{\Sigma}$ is the expectation for $\bm{\phi}(s)\bm{\phi}(s)^\top$ when $s$ follows the stationary distribution of the Markov decision process.

\item In the off-policy setting, we establish a sample complexity bound for the TDC algorithm of order 
\begin{align*}
\widetilde{O} \left(\frac{\rho_{\max}^7}{\lambda_1^4 \lambda_2^3 }\frac{\|\widetilde{\bm{\Sigma}}\|^2}{\varepsilon^2} (1+\|\widetilde{\bm{\theta}}^\star\|_{\widetilde{\bm{\Sigma}}}^2)
\right),
\end{align*}
where $\widetilde{\bm{\theta}}$ corresponds to the best linear approximation coefficient of the value function in the off-policy setting, $\widetilde{\bm{\Sigma}}$ is the feature covariance matrix under the behavior policy, 
$\rho_{\max}$ denotes the largest importance sampling ratio measuring the discrepancy between the target policy and the behavior policy, 
and lastly, $\lambda_1$ and $\lambda_2$ denote the smallest eigenvalues of some problem-dependent matrices. 
 % \yc{It is a bit strange to only define a subset of the parameters}
Details about these constants are deferred to Section~\ref{sec:TDC}.
To the best of our knowledge, our bound is the first one to control $\varepsilon$-convergence with high probability that matches the minimax-optimal dependence on the tolerance level $\varepsilon$. At the same time, our sample complexity bound also provides an explicit dependence on the salient parameters.
\end{itemize}
Comparisons of our results to existing bounds and relevant commentary can be found in Table~\ref{table-prior-TD} and ~\ref{table-prior-TDC}.

\begin{table*}[t]
\centering
\renewcommand{\arraystretch}{2.3}
\begin{tabular}{c|c|c|c|c} 
\hline
\toprule
paper & algorithm  &  stepsize & sample complexity & error control \\ 
\hline
\toprule
% \hline \hline
	%\cite{lazaric2012finite}    & LSTD & $O\left(\frac{\gamma^2 d \|\bm{\Sigma}^{-1}\|}{(1-\gamma)^3 \epsilon^2}\right)$ \\ \hline
	\cite{bhandari2018finite} & TD & $\eta_t \asymp t^{-1}$ &$O\left(\frac{\|\bm{\Sigma}^{-2}\| \|\bm{\Sigma}\| }{(1-\gamma)^4\varepsilon^2}\right)$ & in expectation \\ \hline
	
	\cite{srikant2019finite} & TD &  $\eta_t \asymp T^{-1}$  & $O\left(\frac{\|\bm{\Sigma}^{-2}\| \|\bm{\Sigma}\| }{(1-\gamma)^4\varepsilon^2}\right)$ & in expectation\\ \hline

	\cite{dalal2018finite} & TD &  $\eta_t = t^{-1}$ &$O\left(\frac{1} {\varepsilon^{\max\{ 2, 1+\frac{1}{\lambda}\}}}\right)$,$\lambda \in (0,\lambda_{\min}(\bm{A}))$ & w. high-prob\\ \hline
	%\textcolor{blue}{This work} & \textcolor{blue}{LSTD} &\textcolor{blue}{$O\left(\frac{||\bm{\Sigma}^{-1}|| ||\bm{\theta}^\star||_{\bm{\Sigma}}^2}{(1-\gamma)^2 \epsilon^2}\right)$}\\ \hline
	\cite{durmus2022finite} & Averaged TD & $\eta_t \asymp T^{-1/2}$ &{$O\left(\frac{\|\bm{\Sigma}^{-1}\| }{(1-\gamma)^4 \varepsilon^2} \lor \frac{\|\bm{\Sigma}^2\|\|\bm{\Sigma}^{-4}\|}{(1-\gamma)^6} \right)$} & w. high-prob\\ \hline
	\textbf{This work} & {Averaged TD }  & {$\eta_t = \eta$} &{$O\left(\frac{\|\bm{\Sigma}^{-1}\| }{(1-\gamma)^4 \varepsilon^2} \lor \frac{\|\bm{\Sigma}^2\|\|\bm{\Sigma}^{-3}\|}{(1-\gamma)^4} \right)$} & w. high-prob\\ 
	\hline
	\toprule
\end{tabular}
\caption{Comparisons with prior results (up to logarithmic terms) in finding an $\varepsilon$-optimal solution using TD learning.
Using the Polyak-Ruppert averaging, our high-probability sample complexity bound improves upon previous works in the dependence on the tolerance level $\varepsilon$ and problem-related parameters. }
%our result improves upon previous sample complexity bounds by a multiplicative factor of the condition number of $\bm{\Sigma}$, and is the first sample complexity with high-probability guarantee of convergence to match the minimax-optimal dependence on the tolerance level $\varepsilon$
\label{table-prior-TD}
\end{table*}

\subsection{Other related works}

In this section, we review several recent lines of works and provide a broader context of the current paper.

\paragraph{Finite-sample guarantees for policy evaluation} 
Classical analyses of policy evaluation algorithms have mainly focused on providing asymptotic guarantees given a fixed model
\citep{tsitsiklis1997analysis,szepesvari1998asymptotic}. 
New tools developed in high-dimensional statistics and probability allow for a fine-grained understanding of these algorithms especially from a finite-sample and finite-time perspective. 
As argued in this paper, understanding how statistical errors depend on the effective horizon, dimension of the problem and the number of samples, is essential as it provides important insights on how these RL algorithms perform in practice. 
A highly incomplete list of prior art includes \cite{khamaru2020temporal,jin2018q,srikant2019finite,bhandari2018finite,dalal2018finite,lakshminarayanan2018linear,boyan1999least} with a focus on the non-asymptotic analyses for model-free algorithms, 
and \cite{sidford2018near,agarwal2020model,pananjady2019value,li2020breaking} which derive non-asymptotic bounds for model-based algorithms.

\paragraph{Stochastic approximation}
The idea of stochastic approximation (SA) \citep{lai2003stochastic,robbins1951stochastic} lies at the core of the TD and TDC learning algorithms considered in this paper. 
With the intention of solving a deterministic fixed-point equation, SA methods perform stochastic updates based on approximations of the current residual. 
The asymptotic theory of SA methods are relatively well-developed, where SA iterates provably track the trajectory of a limiting ordinary differential equation \citep{borkar2009stochastic,borkar2000ode} and with properly decaying step sizes, the Polyak-Ruppert averaged iterates asymptotically follow the central limit theorem. 
Recently, non-asymptotic results have also been obtained for SA for different problems especially in the RL setting; see \cite{mou2020linear,moulines2011non,lakshminarayanan2018linear,nemirovski2009robust} and references therein. 
The TDC algorithm is a special case of two-timescale linear SA, whose convergence rates have also been investigated in 
\cite{dalal2020tale,xu2019two,wu2020finite,gupta2019finite}, among others.

\paragraph{Off-policy learning} Policy evaluation in the off-policy setting is closely related to offline or batch RL, which aims to learn purely based on historical data without actively exploring the environment. The main challenge here lies in the discrepancy between the behavior policy and the target or optimal policy. 
One natural approach is to use importance sampling (IS) in order to form an unbiased estimator of the target policy \citep{precup2000eligibility}, and various different techniques have been applied to reduce the high variance of IS (see, e.g.~\cite{jiang2016doubly,ma2022minimax, thomas2016data,xie2019towards,kallus2020double,yang2020off}). Non-asymptotic guarantees are also provided for off-policy evaluation using a fitted $Q$-iteration approach under linear function approximation in~\cite{duan2020minimax}. %Most of existing theoretical results along this line are, however, attained only for tabular MDPs. 
%Another popular approach has a model-based flavor; it aims to first estimate the transition probability matrix, and then construct an estimator for the value function based on the resulting empirical MDP configuration. least-squares TD and least-squares policy iteration \citep{sutton2018reinforcement,} are all instances of this approach.
A recent line of works also considered finding the optimal policy using batch datasets \citep{jin2021pessimism,xie2021policy,li2022settling,shi2022pessimistic,rashidinejad2021bridging}.

\begin{table*}[t]
\centering
\renewcommand{\arraystretch}{2.3}
\begin{tabular}{c|c|c|c|c} 
\hline
\toprule
paper & algorithm  &  stepsize & sample complexity & error control \\ 
\hline
\toprule
    %\cite{dalal2018finite_two} & \textbf{Projected} TDC & $\alpha_t \asymp t^{-\alpha}, \beta_t \asymp t^{-\beta}$ & $O\left(\frac{1}{\varepsilon^3}\right)$ & w.high-prob \\ \hline
	\cite{dalal2020tale} & \textbf{Projected} TDC & $\alpha_t = t^{-\alpha}$, $\beta_t = t^{-\beta}$ & $O\left(\frac{1}{\varepsilon^{2\alpha}}\right),\alpha < 1$ & w. high-prob \\ \hline
	\cite{kaledin2020finite} & TDC & $\alpha_t, \beta_t \asymp \frac{1}{T}$ & $O\left(\frac{1}{\varepsilon^2}\right)$ & in expectation\\ \hline
	\cite{xu2021sample} & \textbf{Batched} TDC  & $\alpha_t = \alpha$, $\beta_t = \beta$ & $O\left(\frac{1}{\varepsilon^2} \log \frac{1}{\varepsilon}\right)$ & in expectation \\ \hline
	\textbf{This work} & {TDC}  & $\alpha_t,\beta_t \asymp \frac{1}{T}$ &{$O\left(\frac{1}{\varepsilon^2}\right)$} & w. high-prob\\ 
	\hline
	\toprule
\end{tabular}
\caption{Comparisons with prior results (up to logarithmic terms) in finding an $\varepsilon$-optimal solution using TDC learning. We omit dependence on problem-related parameters in this table. Our sample complexity bound for TDC is the first to achieve high-probability convergence guarantee with non-varying stepsizes and without using projection steps or batched updates; in the mean time, we also provide explicit dependence on problem-related parameters.}
\label{table-prior-TDC}
\end{table*}

\subsection{Notation}

Throughout this paper, we denote by $\Delta(\cS)$ (resp.~$\Delta(\cA)$) the probability simplex over the finite set $\cS$ (resp.~$\cA$). For any positive integer $n$, we use $[n]$ to denote the set of positive integers that are no larger than $n$: $[n] = \{1,2,...,n\}$.
When a function is applied to a vector, it should be understood as being applied in a component-wise fashion; for example, 
$\sqrt{\bm{z}}:=[\sqrt{z_{i}}]_{1\leq i\leq n}$ and $|\bm{z}|:=[|z_{i}|]_{1\leq i\leq n}$.
For any vectors $\bm{z}=[a_{i}]_{1\leq i\leq n}$ and $\bm{w}=[w_{i}]_{1\leq i\leq n}$, the notation $\bm{z}\geq\bm{w}$ (resp.~$\bm{z}\leq\bm{w}$) stands for $z_{i}\geq w_{i}$ (resp.~$z_{i}\leq w_{i}$) for all $1\leq i\leq n$.
%, and we let $\bm{a} \circ \bm{b} := [a_ib_i]_{1\leq i\leq n}$ represent the Hadamard product. 
Additionally, we write $\bm{1}$ for the all-one vector, $\bm{I}$
for the identity matrix, and $\ind\{\cdot\}$ for the indicator function.

For any matrix $\bm{P}=[P_{ij}]$, we denote $\|\bm{P}\|_{1}:=\max_{i}\sum_{j}|P_{ij}|$.
% which equals $1$ if and only if the event $\mathcal{B}$ is true, and $0$ otherwise. 
Given a symmetric positive definite matrix $\bm{D}$, define the inner product 
$\langle\cdot,\cdot\rangle_{\bm{D}}$ as $\langle \bm{u},\bm{v}\rangle_{\bm{D}} = \bm{u}^{\top}\bm{D}\bm{v}$
and the associated norm $\|\bm{v}\|_{\bm{D}} = \sqrt{\langle \bm{v},\bm{v}\rangle_{\bm{D}}}$. 
For any matrix $\bm{M}$, we use $\|\bm{M}\|$ to denote its operator norm (i.e. the largest singular value), if not specified otherwise. 
Throughout this paper, we use $c,c_{0},c_{1},C,\cdots$ to denote universal constants that do not depend either on the parameters of the MDP or the target levels $(\varepsilon,\delta)$; their exact values may change from line to line. 
Given two sequences, $\{f_t\}_{t \geq 0}$ and $\{g_t\}_{t \geq 0}$, we write $f_t \lesssim g_t$ (resp.~$f_t \gtrsim g_t$) or $f_t = O(g_t)$ (resp.~$g_t = O(f_t)$) if there exists some universal constant $c_{1} > 0$, such that $f_t \leq c_{1} g_t$ (resp.~$f \ge c_{1}g$). If both $f = O(g)$ and $g = O(f)$ hold simultaneously, we write $f_t \asymp g_t$ or $f_t = \Theta(g_t).$ 
%\textcolor{red}{I think it should be $f = \Theta(g)$.  Also, the big-O, little-o, etc notation is defined for sequences, not functions}
We adopt the notation $f = \widetilde{O}(g)$ to indicate $f = O(g)$ up to logarithmic factors in $g$. For any symmetric matrix $\bm{X}$, we use $\lambda_{\min}(\bm{X})$ to denote its smallest eigenvalue.
% In addition, we write $f = o(g)$

%\yc{define $\lambda_{\min}$}

%\input{related_work.tex}

\section{Problem formulation}
\label{sec:models}

\subsection{Model and settings}

\paragraph{Markov decision process}

Consider an infinite-horizon MDP $\mathcal{M}=(\cS,\cA,\mathcal{P},r,\gamma)$ with discounted rewards, where $\cS$
and $\cA$ denote respectively the (finite) state space and action
space, and $\gamma\in(0,1)$ indicates the discount factor \citep{bertsekas2017dynamic}. The probability
transition kernel of the MDP is given by $\mathcal{P}:\cS\times\cA\mapsto\Delta(\cS)$,
where for each state-action pair $(s,a)\in\cS\times\cA$, $\mathcal{P}(\cdot\mid{s,a})\in\Delta(\cS)$
denotes the transition probability distribution from state $s$ when
action $a$ is executed.  The reward function is represented by  the function $r:\cS\times\cA\mapsto[0,1]$,
where $r(s,a)$ denotes the immediate reward from state $s$ when
action $a$ is taken; for simplicity, we assume throughout that all immediate 
rewards lie within $[0,1]$.

% \paragraph{Value function.}

A policy $\pi:\cS\mapsto\Delta(\cA)$ is an action selection rule
that maps a state to a distribution over the set of actions; in particular,
it is said to be stationary if it is time-invariant. The value function
$V^{\pi}:\cS\mapsto\mathbb{R}$ is used to measure the quality of
a policy $\pi$, defined as 
\begin{align}
\label{eqn:value-func}
\forall s\in\cS:\qquad V^{\pi}(s)\defn\mathbb{E}\left[\sum_{t=0}^{\infty}\gamma^{t}r(s_{t},a_{t})\,\big|\,s_{0}=s\right],
\end{align}
which is the expected discounted cumulative reward received by following
the policy $\pi$ under the MDP $\mathcal{M}$ when initialized at
state $s_{0}=s$. Here, $a_{t}\sim\pi(\cdot \mid s_{t})$ and $s_{t+1}\sim \mathcal{P}(\cdot \mid s_{t},a_{t})$
for all $t\geq0$.  It can be easily verified that $0\leq V^{\pi}(s)\leq\frac{1}{1-\gamma}$
for any $\pi$.

For a given policy $\pi$, we can define the reward function of every state $s \in \mathcal{S}$ as the expected reward for $(s,a)$ when $a$ is chosen according to $\pi$:
\begin{align}
r(s) = \mathbb{E}_{a \sim \pi(\cdot|s)}[r(s,a)].
\end{align}
For simplicity, we introduce the vector notation for the reward function $\bm{r}:=[r(s)]_{1 \leq s \leq |\mathcal{S}|} \in \mathbb{R}^{|\mathcal{S}|}$, and the value function $\bm{V}^{\pi} = [V^{\pi}(s)]_{1 \leq s \leq |\mathcal{S}|} \in \mathbb{R}^{|\mathcal{S}|}$. 
\begin{comment}
\textcolor{red}{Minor notational comment: here we are implicitly assuming an ordering of the states: we could just write, for example,  $\bm{V}^{\pi} = [V^{\pi}(s)]_{s \in \mathcal{S}} \in \mathbb{R}^{\mathcal{S}}$. }\wwc{Later in the definition of $\bm{D}_{\bm{\mu}}$, $\bm{D}_{\bm{\mu}_{\mathsf{b}}}$ and $\bm{\Phi}$, we still assumed ordering of the states. Should we change them all?}
\end{comment}
We can also define the transition matrix $\bm{P}^{\pi}$ for this given policy $\pi$, such that its $(i,j)$ element represents the probability that state $i$ is transited to state $j$ under the policy $\pi$; formally,
\begin{align}
{P}^\pi_{ij}  = \sum_{a \in \mathcal{A}} \mathcal{P}(s_{t+1} = j \mid s_t = i, a_t = a) \pi(a_t = a \mid s_t = i).
\end{align} 
%\textcolor{red}{Shouldn't it be: $P^\pi_{ij} = \sum_a \mathcal{P}(s_{t+1} = j \mid s_t = i, a_t = a) \pi(a_t = a \mid s_t = i)$?}\wwc{I think they are the same.}
We denote by $\mu$ the stationary distribution corresponding to the Markov chain when the transition follows  $\bm{P}^{\pi}$, which we assume to be well-defined, and introduce the vector notation $\bm{\mu}:=[\mu(s)]_{1\leq s\leq|\mathcal{S}|}\in\mathbb{R}^{|\mathcal{S}|}$.

\paragraph{Linear approximation for the value function}
As discussed previously, it is often infeasible to collect a number of samples that scales with the ambient dimension $|\cS|$. This motivates the search for lower dimension approximation of the value function, of which linear approximation emerges as a convenient option. 
Mathematically, for  ${\bm \theta} \in \real^{d}$, define $V_{\bm\theta}(s)$ as 
\begin{align*}
\forall s\in\cS:\qquad 	V_{\bm\theta}(s) = {\bm\phi}(s)^\top {\bm\theta},  
\end{align*}
where $\bm{\phi}(s)\in\mathbb{R}^{d}$ is the feature vector associated with state $s \in \mathcal{S}$, with $d\leq|\mathcal{S}|$. The vector ${\bm\theta}$ of linear coefficients is shared across states. 
 
Using matrix notation, we let 
\begin{align}
\label{eq:defn-Phi-feature-matrix}
\bm{\Phi}
:=\left[
% \begin{array}{c}
\bm{\phi}(1), \bm{\phi}(2), \cdots, \bm{\phi}(|\mathcal{S}|)
% \end{array}
\right]^\top\in\mathbb{R}^{|\mathcal{S}|\times d},
\end{align}
be the feature matrix that concatenates the feature
vectors for all states and  ${\bm V}_\theta  = [V_{\bm \theta}(s) ]_{s \in \mathcal{S}} \in \real^{|S|}$ be the linear approximation vector to the value function. It follows that
\begin{align*}
	{\bm V}_\theta = \bm{\Phi} \bm{\theta}.
\end{align*}

We impose the following mild assumption on the feature vectors.
\begin{assumption}
\label{assumption:Phi}
The columns of $\bm{\Phi}$
are linearly independent with Euclidean norm uniformly bounded by one, i.e.  $\max_{s \in \mathcal{S}} \|\bm{\phi}(s)\|_{2}\leq1$.
\end{assumption}

\subsection{Policy evaluation with linear approximation}

\paragraph{On-policy evaluation with linear approximation}
The task of policy evaluation is to measure the value function $V^{\pi}(s)$ for every $s \in \cS$ (see definition~\eqref{eqn:value-func}) given a policy $\pi$ of interest. 
In the \textbf{on-policy} setting, data samples are collected while the policy $\pi$ is executed and a sequence of samples are obtained
\begin{align*}
	&\{(s_{0}, a_0,r_{0}), \ldots, (s_{T}, a_0,r_{T})\}, \\
	 &\text{where } a_t \sim \pi(\cdot \mid s_t), \; r_t = r(s_t,a_t).
\end{align*}

In this setting, in order to find the best linear approximation to $\bm{V}^{\pi}$, we find it helpful to first introduce some shorthand notation. 
First, given the stationary distribution ${\mu}$ for $\bm{P}^{\pi}$, we let 
\begin{align}\label{eq:defn-Dmu}
\bm{D}_{{\mu}}=\mathsf{diag}\big(\mu(1),\mu(2),\cdots,\mu(|\mathcal{S}|)\big)
\end{align}
\begin{comment}
\begin{align}
\bm{D}_{\mu}:=\left[\begin{array}{cccc}
\mu(1)\\
 & \mu(2)\\
 &  & \ddots\\
 &  &  & \mu(|\mathcal{S}|)
\end{array}\right]\in\mathbb{R}^{|\mathcal{S}|\times|\mathcal{S}|}.\label{eq:defn-Dmu}
\end{align}
\textcolor{red}{Why not $\mathsf{diag}\big(\mu(1),\mu(2),\cdots,\mu(|\mathcal{S}|)\big)$, like we do below?}
\end{comment}
and denote with 
\begin{align}
\bm{\Sigma}:=\bm{\Phi}^{\top}\bm{D}_{\mu}\bm{\Phi}=\mathop{\mathbb{E}}\limits _{s\sim\mu}\left[\bm{\phi}(s)\bm{\phi}(s)^{\top}\right] \in \mathbb{R}^{d \times d} \label{eq:defn-Sigma}
\end{align}
the feature covariance matrix with respect to this stationary distribution. %\textcolor{red}{Do we have $\bm{\Phi}^\top  \bm{D}_{\mu} \bm{1} = 0?$}\wwc{I believe not.}

The best linear approximation coefficients, $\bm{\theta}^{\star}$, is defined as the unique
solution to the following projected Bellman equation \citep{tsitsiklis1997analysis}
\begin{align}
\bm{\Phi}\bm{\theta}=\Pi_{\bm{D}_{\mu}}\mathcal{T}^{\pi}\left(\bm{\Phi}\bm{\theta}\right).\label{eq:projected-Bellman-eqn}
\end{align}
Here, $\Pi_{\bm{D}_{\mu}}$ denotes the projection operator onto the
column space of $\bm{\Phi}$ (namely, the subspace $\{\bm{\Phi}\bm{x}\mid\bm{x}\in\mathbb{R}^{d}\}$)
w.r.t.~the inner product $\langle\cdot,\cdot\rangle_{\bm{D}_{\mu}}$,
where for any vector $\bm{v}\in\mathbb{R}^{|S|}$ one has 
\[
\Pi_{\bm{D}_{\mu}}(\bm{v}):=\underset{\bm{z}\,\in\,\{\bm{\Phi}\bm{x}\,\mid\,\bm{x}\in\mathbb{R}^{d}\}}{\arg\min}\|\bm{z}-\bm{v}\|_{\bm{D}_{\mu}}^{2}.
\]
The function $\mathcal{T}^{\pi}:\mathbb{R}^{|\mathcal{S}|}\mapsto\mathbb{R}^{|\mathcal{S}|}$
is known as the {\it Bellman operator,}  which is given by%such that for any vector $\bm{V}\in\mathbb{R}^{|\mathcal{S}|}$,
\begin{align}
\bm{v} \mapsto \mathcal{T}^{\pi}(\bm{v}):=\bm{r}+\gamma\bm{P}^\pi \bm{v}.\label{eq:Bellman-operator}
\end{align}
%\textcolor{red}{Suggestion about notation: use lowercase for vectors. Here we have switched from $\bm{v}$ to $\bm{V}$}

\paragraph{Off-policy evaluation with linear approximation}
In contrast, in the \textbf{off-policy} setting, we observe a trajectory from a behavior policy $\pi_{\mathsf{b}}$ instead of the target policy $\pi$. The goal is then to learn the value function for the target policy $\pi$ based on  
 \begin{align*}
	&\{(s_{0}, a_0,r_{0}), \ldots, (s_{T}, a_0,r_{T})\}, \\ 
	& \text{where } a_t \sim \pi_{\mathsf{b}}(\cdot \mid s_t), \; r_t = r(s_t,a_t).
\end{align*}
% The goal of TDC with linear function approximation is to learn a linear approximation to
% $V^{\pi}\left(s\right)$ based on the trajectory from $\pi_{\mathsf{b}}$,
% $\{s_{t},\pi_{\mathsf{b}}(s_{t}),r_{t}\}_{t=0}^{\infty}$.
Let $\mu_{\mathsf{b}}$ be the stationary distribution over $\mathcal{S}$ induced by the behavior $\pi_{\mathsf{b}}$, and correspondingly
let 
\[
\bm{D}_{\mu_b}\coloneqq\mathsf{diag}\big(\mu_{\mathsf{b}}(1),\mu_{\mathsf{b}}(2),\cdots,\mu_{\mathsf{b}}(|\mathcal{S}|)\big).
\]
We denote with $\Pi_{{\bm{D}_{\mu_b}}}$ the projection operator associated with $\bm{D}_{\mu_b}$, which is given explicitly as
\[
\Pi_{{\bm{D}_{\mu_b}}}\bm{v}:=\underset{\bm{z}\,\in\,\{\bm{\Phi}\bm{x}\,\mid\,\bm{x}\in\mathbb{R}^{d}\}}{\arg\min}\|\bm{z}-\bm{v}\|_{\bm{D}_{\mu_{\mathsf{b}}}}^{2}.
\]

In the off-policy setting, instead of trying to solve the projected Bellman's equation \eqref{eq:projected-Bellman-eqn}, we aim at minimizing the Mean-Squared Projected Bellman Error (MSPBE):
\begin{equation}
\text{minimize}_{\bm{\theta}}\quad\mathsf{MSPBE}(\bm{\theta})\coloneqq\frac{1}{2}\|\bm{V}_{\bm{\theta}}-\Pi_{{\bm{D}_{\mu_b}}}\mathcal{T}^{\pi}\bm{V}_{\bm{\theta}}\|_{\bm{D}_{\mu_b}}^{2}.
\label{eq:MSPBE}
\end{equation}
Throughout, we shall denote the minimizer of the above problem~\eqref{eq:MSPBE} as $\widetilde{\bm{\theta}}^\star$. 
We remark here that the norm and the projection are both induced by $\bm{D}_{\mu_b}$, while the Bellman operator is again in terms of the target policy $\pi$.
For this reason, solving \eqref{eq:MSPBE} is different from solving the projected Bellman's equation \eqref{eq:projected-Bellman-eqn}; as a result, in general, $\bm{\theta}^\star \neq \widetilde{\bm{\theta}}^\star$. 

\section{On-policy evaluation with TD learning}

\label{sec:on-policy-evaluation} 

In this section, we study the accuracy of the estimator of $\thetastar$ (cf.~\eqref{eq:projected-Bellman-eqn}) returned by the TD learning algorithm in the on-policy setting.
Specifically, we seek to determine the tightest sample complexity for this algorithm that ensures an $\varepsilon$-close solution. To better highlight our analysis strategy, we only consider the stylized generative model\footnote{We believe that our framework can be potentially generalized to Markovian samples using similar techniques in \cite{li2021sample}. We will briefly discuss the techniques and difficulties in the following sections, but the full details are beyond the scope of the current paper.} whereby, at each time stamp $t$, one acquires an independent sample pair
\begin{align}\label{eq:iid-samples-on-policy}
	&(s_t, s'_t),  \quad \text{ where } \nonumber \\ 
	&s_t \stackrel{\text{i.i.d.}}{\sim} \mu, ~ a_t \sim \pi(s_t), ~ \text{ and }s_t' \sim \mathcal{P}(\cdot \mid s_t, a_t).
\end{align}
Here recall that $\mu$ is the stationary distribution corresponding to ${\bm P^\pi}.$ Notice that in the on-policy setting, since we are focused on a fixed policy $\pi$ and interested only in the state pairs $\{(s_t,s_t')\}_{t=0}^T$ and not the actions $\{a_t\}_{t=0}^T$, the Markov decision process reduces to a Markov reward process (MRP).
Given a sequence of sample pairs $\{(s_t, s'_t)\}_{t=0}^{T}$ and a given level of tolerance $\varepsilon > 0$, our goal is to derive a sharp lower bound on  the number of samples $T$ that is required for TD learning to produce an estimator $\widehat{\bm\theta}$ such that, with high probability, 
\begin{align*}
	\| \widehat{\bm\theta} - \thetastar \|_{\bm \Sigma} \leq \varepsilon. 
\end{align*}

\subsection{The TD learning algorithm}

\label{sec:TD-on-policy}

To motivate TD learning, it is helpful to first consider the properties of the best linear approximation coefficients $\bm{\theta}^\star$; see \eqref{eq:projected-Bellman-eqn}. For any sample transition $(s_{t},s_{t}')$ (see \eqref{eq:iid-samples-on-policy}), define the random quantities 
\begin{subequations}
\begin{align}
\bm{A}_{t} & :=\bm{\phi}(s_{t})\left(\bm{\phi}(s_{t})-\gamma\bm{\phi}(s_{t}')\right)^{\top}\in\mathbb{R}^{d\times d},\label{eq:defn-At}\\
\bm{b}_{t} & :=\bm{\phi}(s_{t})r(s_{t})\in\mathbb{R}^{d},\label{eq:defn-bt}
\end{align}
\end{subequations}whose means are given respectively by 
\begin{subequations} 
\begin{align}
\bm{A} & :=\mathop{\mathbb{E}}\limits _{s\sim\mu,s'\sim P^\pi(\cdot\mid s)}\left[\bm{\phi}(s)\left(\bm{\phi}(s)-\gamma\bm{\phi}(s')\right)^{\top}\right] \\ 
&~=\bm{\Phi}^{\top}\bm{D}_{\mu}(\Ind-\gamma\bm{P}^\pi)\bm{\Phi}\,\in\mathbb{R}^{d\times d},\label{eq:defn-At-mean}\\
\bm{b} & :=\mathop{\mathbb{E}}\limits _{s\sim\mu}\left[\bm{\phi}(s)r(s)\right]=\bm{\Phi}^{\top}\bm{D}_{\mu}\bm{r}\,\in\mathbb{R}^{d}.\label{eq:defn-bt-mean}
\end{align}
\end{subequations}

It turns out that the target vector $\thetastar$ satisfies the equation~\citep{tsitsiklis1997analysis}  
\begin{align}
\label{eq:defn-theta-star}
\bm{\theta}^{\star}:=\bm{A}^{-1}\bm{b}. 
\end{align}
The TD learning algorithm leverages this representation by iteratively improving the linear approximation of the value function at each time stamp through the updates
\begin{subequations}
\label{eq:TD-update-all} 
\begin{align}
\bm{\theta}_{t+1} & =\bm{\theta}_{t}-\eta_t(\bm{A}_{t}\bm{\theta}_{t}-\bm{b}_{t}), \quad t =0,1,2,\ldots,
\label{eq:TD-update-rule}
\end{align}
where, for each $t$, $\eta_t>0$ denotes the learning rate or stepsize.
After $T$ iterations, the TD learning algorithm returns $\bm{\theta}_T$ as the estimator. In contrast, TD learning with Polyak-Ruppert averaging, or \emph{averaged TD learning} in short, returns an average across all iterates
\begin{align}
	\thetaavg{T} & =\frac{1}{T}\sum_{i=1}^{T}\bm{\theta}_{i}. \label{eq:TD-averaging}
\end{align}
\end{subequations} 
While we are mainly concerned with the averaged estimator $\thetaavg{T}$,  we also obtain some theoretical properties of $\bm{\theta}_T$ as a by-product of our analysis. 

\subsection{Sample complexity of TD learning}

In this section, we present a finite-sample bound for the estimation error of $\thetaavg{T}$ assuming independent data, from which we derive a novel sample complexity guarantee for TD learning. 
Below, we denote by $\kappa$ the condition number of $\bm{\Sigma}$ as follows 
\begin{align}
\kappa\defn\lambdamax/\lambdamin \geq 1.%\qquad\text{and}\qquad%\rho\defn\max_{s}\{\bm{\phi}(s)^{\top}\bmSigma^{-1}\bm{\phi}(s)\}\leq1/\lambdamin.
\label{defn:kappa-Sigma}
\end{align}

\begin{theorem} \label{thm:ind-td} 
There exist universal, positive constants $C_{0},c_{0}>0$ and $c_{1}>0$ , such that for any given $0<\delta<1$, the averaged TD learning estimator $\thetaavg{T}$  \eqref{eq:TD-update-all} after $T$ iterations satisfies the bound
\begin{align}
&\big\|\overline{\bm{\theta}}_{T}-\bm{\theta}^{\star}\big\|_{\bm{\Sigma}} \\
 &\leq C_{0}\Bigg\{\sqrt{\frac{\max_{s}\bm{\phi}(s)^{\top}\bm{\Sigma}^{-1}\bm{\phi}(s)\log(\frac{d}{\delta})}{T(1-\gamma)^{2}}}\big(\|\bm{\theta}^{\star}\|_{\bm{\Sigma}}+1\big)\nonumber \\
 & +\frac{\|\bm{\Sigma}^{-1}\|\Big[\big(\|\bm{\theta}^{\star}\|_{\bm{\Sigma}}+1\big)\sqrt{\frac{\kappa\log(\frac{dT}{\delta})}{\eta(1-\gamma)^{3}}}+\frac{1}{\eta(1-\gamma)}\|\bm{\theta}^\star\|_{\bm{\Sigma}}\Big]}{T}\Bigg\}%,
%\|\bmtheta_{t}-\thetastar\|_{\bm{\Sigma}}
\end{align}
%for all $1\leq t\leq T$ 
with probability at least $1-\delta$, provided that $\bm{\theta}_0 =\bm{0}$, $\eta_0 = \ldots = \eta_T = \eta <\frac{c_{0}(1-\gamma)}{\kappa\log(Td/\delta)}$ 
and 
\[
T\geq\frac{c_{1}\kappa(\|\bm{\theta^\star}\|_{\bm{\Sigma}}+1)^{2}\log^{2}\frac{\kappa dT(\|\bm{\theta}^\star\|_{2}+1)}{(1-\gamma)\delta}}{\eta(1-\gamma)\lambda_{\mathsf{min}}(\bm{\Sigma})}.
\] 

%	$T\geq \frac{c_1\sqrt{\kappa}\log\frac{dT}{\delta}}{\eta(1-\gamma)\lambdamin}$.
%	\begin{align}
%\lsigma{\thetaavg{t} - \thetastar}
%	&\lesssim 
% \frac{1}{t} \frac{\big\|\bm{\Sigma}^{-1}\big\|}{1-\gamma} \big\|\bm{\Delta}_{0}\|_{\bm{\Sigma}} \notag\\
%	& 
%	+  \sqrt{\frac{\max_{s}\bm{\phi}(s)^{\top}\bm{\Sigma}^{-1}\bm{\phi}(s)\log^{2}\frac{2dT}{\delta}}{t(1-\gamma)^{2}}}+\frac{\big\|\bm{\Sigma}^{-1}\big\|}{t}\sqrt{\frac{\kappa\log\frac{2dT}{\delta}}{\eta(1-\gamma)^{3}}} , 
%\end{align}
%\begin{align*}
%\lsigma{\thetaavg{t} - \thetastar}
%\lesssim&~
%\frac{1}{t\eta}\left(\frac{1}{1-\gamma} + \sqrt{\kappa} \right) \lsigma{\Del{0}}
%+
%\left(\sqrt{\frac{\kappa\log(td/\delta)}{t^2\eta(1-\gamma)^3}} + \sqrt{\frac{\rho\log \frac{d}{\delta}}{t(1-\gamma)^2}}\right)
%\left(\sqrt{\frac{\eta\kappa\log(T/\delta)}{1-\gamma}}\|\bmtheta^{\star}\|_{\bm{\Sigma}} + \sqrt{\kappa}\|\Del{0}\|_{\bm{\Sigma}} + 1\right),
%\end{align*}
%and at the same time,
%\begin{equation*} 
%\|\bmtheta_{t} - \thetastar\|_{\bm{\Sigma}} \leq 8\sqrt{\frac{\eta\kappa\log(T/\delta)}{1-\gamma}}(1+\|\bmtheta^{\star}\|_{\bm{\Sigma}}) + 2\sqrt{\kappa}\|\Del{0}\|_{\bm{\Sigma}}.
%\end{equation*}
%with probability at least $1-\delta. $
%	For an step size satisfies $\eta < \frac{1-\gamma}{64\kappa \log(Td/\delta)}$ and $\delta\in (0,1)$,
\end{theorem}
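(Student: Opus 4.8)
The plan is to study the error vector $\bm{\Delta}_t := \bm{\theta}_t - \bm{\theta}^{\star}$, to use $\bm{A}\bm{\theta}^{\star}=\bm{b}$, and to convert the Polyak--Ruppert average into a telescoping sum. From \eqref{eq:TD-update-rule} one gets the linear recursion $\bm{\Delta}_{t+1} = (\bm{I}-\eta\bm{A}_t)\bm{\Delta}_t + \eta\bm{\xi}_t$ with the i.i.d., mean-zero noise $\bm{\xi}_t := \bm{b}_t-\bm{A}_t\bm{\theta}^{\star} = \bm{\phi}(s_t)\,z_t$, where $z_t := r(s_t)-\big(\bm{\phi}(s_t)-\gamma\bm{\phi}(s_t')\big)^{\top}\bm{\theta}^{\star}$. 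Splitting $\bm{A}_t\bm{\Delta}_t=\bm{A}\bm{\Delta}_t+(\bm{A}_t-\bm{A})\bm{\Delta}_t$, summing over $t=1,\dots,T$, and left-multiplying by $\bm{A}^{-1}$ yields
\begin{align*}
\overline{\bm{\theta}}_T-\bm{\theta}^{\star}
&= \underbrace{-\frac{\bm{A}^{-1}(\bm{\Delta}_{T+1}-\bm{\Delta}_1)}{\eta T}}_{=:~\mathsf{I}}
+\underbrace{\frac{1}{T}\bm{A}^{-1}\sum_{t=1}^{T}\bm{\xi}_t}_{=:~\mathsf{II}} \\
&\quad -\underbrace{\frac{1}{T}\bm{A}^{-1}\sum_{t=1}^{T}(\bm{A}_t-\bm{A})\bm{\Delta}_t}_{=:~\mathsf{III}},
\end{align*}
and each piece is bounded in $\|\cdot\|_{\bm{\Sigma}}$. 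The two algebraic facts that drive everything are the weighted positive-definiteness $\langle\bm{x},\bm{A}\bm{x}\rangle\geq(1-\gamma)\|\bm{x}\|_{\bm{\Sigma}}^{2}$ (from $\bm{A}=\bm{\Phi}^{\top}\bm{D}_{\mu}(\bm{I}-\gamma\bm{P}^{\pi})\bm{\Phi}$ and the $\bm{D}_{\mu}$-nonexpansiveness of $\bm{P}^{\pi}$) and its corollary $\|\bm{A}^{-1}\bm{z}\|_{\bm{\Sigma}}\leq(1-\gamma)^{-1}\|\bm{z}\|_{\bm{\Sigma}^{-1}}$, together with the crude bounds $\|\bm{A}_t\|,\|\bm{A}\|\leq 1+\gamma$ and $\|\bm{\Sigma}\|\leq1$ from Assumption~\ref{assumption:Phi}.

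For the dominant term $\mathsf{II}$, since $\bm{\xi}_t$ is i.i.d.\ mean zero the corollary gives $\|\mathsf{II}\|_{\bm{\Sigma}}\leq\frac{1}{(1-\gamma)T}\big\|\sum_t\bm{\Sigma}^{-1/2}\bm{\xi}_t\big\|_2$, a sum of i.i.d.\ mean-zero vectors. Because $s_t$ and $s_t'$ are both marginally distributed as $\mu$ (stationarity), one computes $\mathbb{E}[z_t^2]\lesssim(1+\|\bm{\theta}^{\star}\|_{\bm{\Sigma}})^2$, hence $\mathbb{E}\|\bm{\Sigma}^{-1/2}\bm{\xi}_t\|_2^2=\mathbb{E}\big[z_t^2\,\bm{\phi}(s_t)^{\top}\bm{\Sigma}^{-1}\bm{\phi}(s_t)\big]\lesssim\max_s\{\bm{\phi}(s)^{\top}\bm{\Sigma}^{-1}\bm{\phi}(s)\}(1+\|\bm{\theta}^{\star}\|_{\bm{\Sigma}})^2$, with an analogous almost-sure bound (using $\bm{\phi}(s)^{\top}\bm{\Sigma}^{-1}\bm{\phi}(s)\le\|\bm{\Sigma}^{-1}\|$). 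A vector Bernstein inequality then gives, with probability $1-\delta$, $\|\mathsf{II}\|_{\bm{\Sigma}}\lesssim\sqrt{\frac{\max_s\{\bm{\phi}(s)^{\top}\bm{\Sigma}^{-1}\bm{\phi}(s)\}\log(d/\delta)}{T(1-\gamma)^2}}(1+\|\bm{\theta}^{\star}\|_{\bm{\Sigma}})$ plus a $1/T$ remainder, which is exactly the leading term of the theorem.

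The other two terms are governed by the trajectory of the iterates, so the heart of the argument is a \emph{crude} high-probability bound on $\sup_{t\le T}\|\bm{\Delta}_t\|$. From the recursion, $\mathbb{E}\big[\|\bm{\Delta}_{t+1}\|^2\,\big|\,\mathcal{F}_{t-1}\big]\le\big(1-c\,\eta(1-\gamma)\lambda_{\mathsf{min}}(\bm{\Sigma})\big)\|\bm{\Delta}_t\|^2+c'\eta^2(1+\|\bm{\theta}^{\star}\|_{\bm{\Sigma}})^2$ once $\eta\lesssim(1-\gamma)/(\kappa\log(Td/\delta))$; iterating and controlling the accumulated noise by a Freedman inequality shows that $\|\bm{\Delta}_t\|$ decays geometrically from $\|\bm{\Delta}_0\|=\|\bm{\theta}^{\star}\|$ down to an $\widetilde{O}\big(\sqrt{\eta/((1-\gamma)\lambda_{\mathsf{min}}(\bm{\Sigma}))}\,(1+\|\bm{\theta}^{\star}\|_{\bm{\Sigma}})\big)$ floor (this, at $t=T$, is also the by-product guarantee for $\bm{\theta}_T$). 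The burn-in hypothesis $T\gtrsim\kappa(\|\bm{\theta}^{\star}\|_{\bm{\Sigma}}+1)^2\log^2(\cdot)/(\eta(1-\gamma)\lambda_{\mathsf{min}}(\bm{\Sigma}))$ is precisely what makes the transient negligible, after which $\sup_t\|\bm{\Delta}_t\|_{\bm{\Sigma}}\lesssim\|\bm{\theta}^{\star}\|_{\bm{\Sigma}}+1$, $\sum_{t\le T}\|\bm{\Delta}_t\|_{\bm{\Sigma}}^2\lesssim T(\|\bm{\theta}^{\star}\|_{\bm{\Sigma}}+1)^2$, and $\|\bm{\Delta}_{T+1}\|_{\bm{\Sigma}}$ sits at the floor. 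Plugging these into $\mathsf{I}$ (with $\|\bm{\Delta}_1\|\lesssim\|\bm{\theta}^{\star}\|+1$ since $\bm{\theta}_0=\bm{0}$) produces the $\|\bm{\Sigma}^{-1}\|\,\|\bm{\theta}^{\star}\|_{\bm{\Sigma}}/(\eta(1-\gamma)T)$ and $\|\bm{\Sigma}^{-1}\|(\|\bm{\theta}^{\star}\|_{\bm{\Sigma}}+1)\sqrt{\kappa\log/(\eta(1-\gamma)^3)}/T$ contributions. For $\mathsf{III}$, note that conditionally on $\mathcal{F}_{t-1}$ the increment $(\bm{A}_t-\bm{A})\bm{\Delta}_t$ has zero mean and conditional variance $\lesssim\max_s\{\bm{\phi}(s)^{\top}\bm{\Sigma}^{-1}\bm{\phi}(s)\}\|\bm{\Delta}_t\|_{\bm{\Sigma}}^2$ (again using $s_t,s_t'\sim\mu$), so a matrix Freedman inequality together with the two displayed iterate bounds controls $\|\mathsf{III}\|_{\bm{\Sigma}}$ by a copy of the leading $1/\sqrt{T}$ term plus a lower-order $1/T$ term. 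A union bound over the $O(1)$ events finishes the proof.

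The main obstacle is exactly this last ingredient: upgrading the in-expectation one-step contraction into a uniform high-probability bound on $\{\|\bm{\Delta}_t\|\}_{t\le T}$ while \emph{simultaneously} controlling the coupled martingale $\mathsf{III}$, whose conditional variance is itself a function of the very iterates one is trying to bound. This forces a self-bootstrapping Freedman argument, is the source both of the $\kappa\log(Td/\delta)$ restriction on the stepsize and of the $\log^2$ factor in the burn-in, and is where the analysis must go beyond the in-expectation arguments of \cite{bhandari2018finite,srikant2019finite}.
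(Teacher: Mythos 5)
Your proposal is correct and follows essentially the same route as the paper: your three-term decomposition is the telescoped form of the paper's expression for $\overline{\bm{\Delta}}_T$ in terms of $\bm{A}_i^{(T)}=\bm{A}^{-1}\big(\bm{I}-(\bm{I}-\eta\bm{A})^{T-i}\big)$, your bound on term $\mathsf{I}$ uses the same $\big\|\bm{\Sigma}^{1/2}\bm{A}^{-1}\bm{\Sigma}^{1/2}\big\|\le(1-\gamma)^{-1}$ fact together with the paper's Step 2--3 crude-then-refined high-probability floor on $\|\bm{\Delta}_{T+1}\|_{\bm{\Sigma}}$, and your "self-bootstrapping" induction-plus-truncated-Freedman control of the trajectory is exactly the paper's stated technical core. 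The only (cosmetic) difference is that you split the noise into the i.i.d.\ fixed-point part $\bm{b}_t-\bm{A}_t\bm{\theta}^{\star}$ plus the coupled martingale $(\bm{A}_t-\bm{A})\bm{\Delta}_t$, whereas the paper groups it as $(\bm{A}_t-\bm{A})\bm{\theta}_t$ and $(\bm{b}_t-\bm{b})$; both yield the same variance calculation and the same final bound.
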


{\paragraph{Proof sketch and technical novelty} An essential step in our proof of Theorem \ref{thm:ind-td} is to guarantee with high probability that the estimation error of the original TD estimator, $\bm{\Delta}_t$, is bounded by a time-invariant value. Towards this end, we combine the matrix Freedman's inequality with an induction argument. For the technical details, we refer the readers to Steps 2 and 3 in Section 6. 
		Controlling the norm of $\bm{\Delta}_t$ with high probability in a time-invariant manner, 
		paves way for bounding the norm of $\overline{\bm{\Delta}}_T$ with high probability without the need of  performing another projection step to restrict the norm of $\bm{\Delta}_t$ during the TD learning iterations.

\paragraph{Generalize to Markov samples} We give a brief introduction of how our results can be extended to Markov samples. The main difficulty towards this end lies in bounding the sequence of temporal difference errors
\begin{align*}
\{\bm{A}_t \bm{\theta}_t - \bm{b}_t\}_{t \geq 0}
\end{align*}
which is no longer a Martingale difference process, so the Freedman's inequality is not directly applicable anymore. In order to tackle this problem, the popular strategy is  to assume that the Markov chain mixes at a geometric rate. Specifically, for arbitrarily small $\varepsilon>0$, there exists a positive integer $t_{\mathsf{mix}}(\varepsilon) \asymp \log \left(\frac{1}{\varepsilon}\right)$, such that for any $t \geq t_{\mathsf{mix}}$,
\begin{align}\label{eq:mixing}
\left\|\mathbb{E}_{t-t_{\mathsf{mix}}}[\bm{A}_t] - \bm{A}\right\| < \varepsilon, \quad \text{and} \quad \left\|\mathbb{E}_{t-t_{\mathsf{mix}}}[\bm{b}_t] - \bm{b}\right\|_2 < \varepsilon.
\end{align}
Here, $\mathbb{E}_{i}[\cdot]$ represents the expectation conditioned
on the filtration $\mathcal{F}_{i}$ --- the $\sigma$-algebra generated
by $\{(s_{j},s_{j}')\}_{j\leq i}$. Under this assumption, the temporal difference error can be decomposed as
\begin{align*}
\bm{A}_t \bm{\theta}_t - \bm{b}_t &= \bm{A}_t (\bm{\theta}_t - \bm{\theta}_{t-t_{\mathsf{mix}}}) \\ 
&+ \left[\left(\bm{A}_t - \mathbb{E}_{t-t_{\mathsf{mix}}}[\bm{A}_t]\right)\bm{\theta}_{t-t_{\mathsf{mix}}} - (\bm{b}_t - \mathbb{E}_{t-t_{\mathsf{mix}}}[\bm{b}_t] )\right] \\ 
&+ \left[\left(\mathbb{E}_{t-t_{\mathsf{mix}}}[\bm{A}_t] - \bm{A}\right)\bm{\theta}_{t-t_{\mathsf{mix}}}-\left(\mathbb{E}_{t-t_{\mathsf{mix}}}[\bm{b}_t] - \bm{b}\right)\right].
\end{align*}
On the right hand side, the last term can be bounded by the mixing property \eqref{eq:mixing}; the first term can be further expanded as
\begin{align*}
\bm{\theta}_t - \bm{\theta}_{t-t_{\mathsf{mix}}} &= \sum_{j=t-t_{\mathsf{mix}}}^{t-1} (\bm{\theta}_{j+1} - \bm{\theta}_j)\\ 
&= \sum_{j=t-t_{\mathsf{mix}}}^{t-1} \eta (\bm{A}_t \bm{\theta}_j - \bm{b}_j),
\end{align*}
and bounded in terms of the step size $\eta$ and the mixing time $t_{\mathsf{mix}}$; the second term can be controlled by separating the sequence
\begin{align*}
\left\{\left(\bm{A}_t - \mathbb{E}_{t-t_{\mathsf{mix}}}[\bm{A}_t]\right)\bm{\theta}_{t-t_{\mathsf{mix}}} - (\bm{b}_t - \mathbb{E}_{t-t_{\mathsf{mix}}}[\bm{b}_t] )\right\}_{t \geq 0}
\end{align*}
into $t_{\mathsf{mix}}$ disjoint Martingale difference processes and invoking the matrix Freedman's inequality. We leave the details to our future work.

% \begin{proof} See Section~\ref{sec:proof-thm-ind-td}. \end{proof}
Theorem \ref{thm:ind-td} directly implies the following corollary, which gives an upper bound for the sample complexity of TD learning with independent samples.

\begin{corollary}[Sample complexity of TD learning]
\label{cor:sample-complexity-TD-iid}
%Fix any $0<\delta<1$ and  $0<\varepsilon<\max\{1,\|\bm{\theta}^{\star}\|_{\bm{\Sigma}}\}$.
%Recall the definition of $\bm{\theta}^{\star}$ in \eqref{eq:defn-theta-star}.
There exists a universal constant $c >0$ such that, for any $\varepsilon \in \left(0,\|\bm{\theta}^{\star}\|_{\bm{\Sigma}} \right)$  and $\delta \in (0,1)$, the averaged TD
estimator~\eqref{eq:TD-averaging} achieves 
\begin{align}
\label{eq:TD-converge}
\big\|\bm{V}_{\bar{\bm{\theta}}_T}-\bm{V}_{\bm{\theta}^{\star}}\big\|_{\bm{D}_{\mu}}=\big\|\bar{\bm{\theta}}_T-\bm{\theta}^{\star}\big\|_{\bm{\Sigma}}\leq\varepsilon
\end{align}
with probability exceeding $1-\delta$, provided that 
\begin{align}
T\geq\frac{c\big\{\max_{s}\bm{\phi}(s)^{\top}\bm{\Sigma}^{-1}\bm{\phi}(s)\big\}\big(1+\big\|\bm{\theta}^{\star}\big\|_{\bm{\Sigma}}^{2}\big)\log\big(\frac{d}{\delta}\big)}{(1-\gamma)^{2}\varepsilon^{2}}.
\label{eq:sample-complexity-TD-bound}
\end{align}
% for some large enough constant $c_2$.
\end{corollary}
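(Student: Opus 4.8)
This statement is a corollary of Theorem~\ref{thm:ind-td}: since the constant stepsize $\eta$ is a free parameter, the plan is to take it as large as the theorem allows and then force each of the two terms in the error bound of Theorem~\ref{thm:ind-td} to be at most $\varepsilon/2$. Concretely, I would apply Theorem~\ref{thm:ind-td} with $\bm\theta_0=\bm 0$ and $\eta = \tfrac{c_0(1-\gamma)}{2\kappa\log(Td/\delta)}$, so that the required constraint $\eta < \tfrac{c_0(1-\gamma)}{\kappa\log(Td/\delta)}$ holds automatically and $1/\eta \asymp \tfrac{\kappa\log(Td/\delta)}{1-\gamma}$.

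For the first (``variance'') term, using the elementary bound $(\|\bm\theta^\star\|_{\bm\Sigma}+1)^2 \le 2\big(1+\|\bm\theta^\star\|_{\bm\Sigma}^2\big)$, asking that
\[
C_0\sqrt{\frac{\max_{s}\bm\phi(s)^\top\bm\Sigma^{-1}\bm\phi(s)\,\log(d/\delta)}{T(1-\gamma)^2}}\,\big(\|\bm\theta^\star\|_{\bm\Sigma}+1\big)\;\le\;\frac{\varepsilon}{2}
\]
is, after squaring and rearranging, exactly a lower bound on $T$ of the form displayed in \eqref{eq:sample-complexity-TD-bound}; this is what pins down the universal constant $c$ there.

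It then remains to show that the second (``optimization'') term is also at most $\varepsilon/2$. Substituting the chosen $\eta$, both summands inside the bracket become of order $(\|\bm\theta^\star\|_{\bm\Sigma}+1)\tfrac{\kappa\log(Td/\delta)}{(1-\gamma)^2}$, so this term is of order $\tfrac{\|\bm\Sigma^{-1}\|\,\kappa\log(Td/\delta)\,(\|\bm\theta^\star\|_{\bm\Sigma}+1)}{(1-\gamma)^2\,T}$, i.e.\ it decays like $1/T$ rather than $1/\sqrt{T}$, hence is dominated by the variance term once $T$ exceeds a universal constant times $\tfrac{\kappa\log(Td/\delta)(\|\bm\theta^\star\|_{\bm\Sigma}+1)}{\lambda_{\min}(\bm\Sigma)(1-\gamma)^2\varepsilon}$. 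Because \eqref{eq:sample-complexity-TD-bound} scales as $1/\varepsilon^2$, for $\varepsilon$ small relative to the problem parameters it dominates both this $1/\varepsilon$ threshold and the ($\varepsilon$-independent) burn-in requirement of Theorem~\ref{thm:ind-td}; in comparing these parameter factors I would use $\|\bm\Sigma^{-1}\| = 1/\lambda_{\min}(\bm\Sigma)$, $\lambda_{\max}(\bm\Sigma) \le \mathrm{trace}(\bm\Sigma) \le 1$ (Assumption~\ref{assumption:Phi}), $\kappa\ge 1$, and the identity $\mathbb{E}_{s\sim\mu}[\bm\phi(s)^\top\bm\Sigma^{-1}\bm\phi(s)] = d$, which yields $\max_{s}\bm\phi(s)^\top\bm\Sigma^{-1}\bm\phi(s) \ge d$. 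Finally the first equality in \eqref{eq:TD-converge} is immediate from $\bm\Sigma = \bm\Phi^\top\bm D_\mu\bm\Phi$: $\|\bm V_{\overline{\bm\theta}_T}-\bm V_{\bm\theta^\star}\|_{\bm D_\mu}^2 = \|\bm\Phi(\overline{\bm\theta}_T-\bm\theta^\star)\|_{\bm D_\mu}^2 = \|\overline{\bm\theta}_T-\bm\theta^\star\|_{\bm\Sigma}^2$.

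The genuine difficulty has already been dealt with inside Theorem~\ref{thm:ind-td} --- the time-uniform, high-probability control of the raw TD iterate error $\bm\Delta_t$ via a matrix Freedman inequality combined with an induction. At the corollary level there is no real obstacle, only bookkeeping: one must choose $\eta$ at (a constant fraction of) its maximal admissible value so that the optimization error becomes genuinely lower order, and then verify that the single displayed sample-size requirement \eqref{eq:sample-complexity-TD-bound} simultaneously controls both error terms together with the burn-in condition of Theorem~\ref{thm:ind-td}.
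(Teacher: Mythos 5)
Your derivation is correct and is exactly the intended one: the paper gives no explicit proof of this corollary beyond asserting that Theorem~\ref{thm:ind-td} ``directly implies'' it, and your choices --- $\eta$ at a constant fraction of its maximal admissible value, the first term forced below $\varepsilon/2$ to produce \eqref{eq:sample-complexity-TD-bound}, and the observation that the optimization term then decays as $1/T$ and is lower order --- are precisely how that implication is meant to be made rigorous. Your caveat that \eqref{eq:sample-complexity-TD-bound} absorbs the burn-in condition of Theorem~\ref{thm:ind-td} and the $1/\varepsilon$ threshold for the optimization term only when $\varepsilon$ is small relative to problem parameters (since $\max_s \bm{\phi}(s)^\top\bm{\Sigma}^{-1}\bm{\phi}(s)$ can be far smaller than $\|\bm{\Sigma}^{-1}\|\kappa$) is well taken and correctly surfaces an $\varepsilon$-independent burn-in cost that the corollary statement glosses over but that the paper itself acknowledges in the $\lor$ term of Table~\ref{table-prior-TD}.
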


\paragraph{Comparisons to prior literature}  We remark that the best finite-sample results for TD learning obtained so far are given by \cite[Theorem 2(c)]{bhandari2018finite} and \cite[Corollary 1]{srikant2019finite}, with decaying stepsizes $\eta_t \asymp t^{-1}$ and sample size-related stepsizes $\eta_t \asymp T^{-1}$ respectively. Translated into our notation, they both prove that in order for the \emph{expected} estimation error to be controlled by $\varepsilon$, namely
\begin{align*}
\mathbb{E}\left\|\bm{\theta}_T - \bm{\theta}^\star\right\|_{\bm{\Sigma}}^2 \leq \varepsilon^2,
\end{align*}
it suffices to take (up to some logarithmic factors)
\begin{align}\label{eq:prior-TD}
T^{\textsf{prior}} \asymp \frac{\kappa \|\bm{\Sigma}^{-1}\|\left(\|\bm{\theta}^{\star}\|_{\bm{\Sigma}}^2+1\right)}{(1-\gamma)^2 } \frac{1}{\varepsilon^2} .
\end{align}
We refer readers to Appendix \ref{app:compare-Srikant} and \ref{app:compare-Russo} for a detailed translation of their results. Comparing \eqref{eq:sample-complexity-TD-bound} and \eqref{eq:prior-TD}, our result improves upon previous works by a multiplicative factor of
\begin{align*}
\frac{T^{\textsf{prior}}}{T^{\textsf{ours}}} = \kappa,
\end{align*}
the condition number of $\bm{\Sigma}$; $\kappa$ can be as large as $d$, the dimension of the features, which can scale with $|\mathcal{S}|.$

As for sample complexity with high-probability convergence guarantees, another recent work~\cite{dalal2018finite} shows that in order for \eqref{eq:TD-converge} to hold with probability at least $1-\delta$, it suffices to take
\begin{align}\label{eq:Dalal-TD}
T \asymp \max\Bigg\{&\left(\frac{1}{\varepsilon}\right)^2 \left(\log \frac{1}{\delta}\right)^3,  \\ 
&\left(\frac{1}{\varepsilon}\right)^{1+1/\lambda_{\min}(\bm{A})} \left(\log \frac{1}{\delta}\right)^{1+1/\lambda_{\min}(\bm{A})}\Bigg\}.
\end{align}
Comparing \eqref{eq:sample-complexity-TD-bound} and \eqref{eq:Dalal-TD}, we can see that our result improves on the dependence of both the error tolerance $\varepsilon$ and the probability tolerance $\delta$; in fact, our result is the first sample complexity for TD learning with high-probability convergence guarantee that matches the minimax-optimal dependence of $\varepsilon$ and displays a clear dependence on the problem-related parameters, as would be shown in the following section.

After the initial post of the current paper, we are pointed to the work~\cite{durmus2022finite}, which provides a general treatment of linear stochastic approximation with Polyak-Ruppert averaging. Their results lead to the same sample complexity as Corollary~\ref{cor:sample-complexity-TD-iid} with a slightly higher burn-in cost. We include the detailed comparisons of their result in Section~\ref{sec:durmus}. We also point out the works of \cite{mou2020optimal} and \cite{li2021accelerated}, which derived similar results regarding the error bound for averaged TD learning in expectation. Our result, as shown in Theorem 1, improves upon theirs in the sense that we provide high probability guarantees and offer explicit dependencies on problem-related parameters. Detailed comparisons can be found in Section D.3.

\subsection{Minimax lower bounds}
\label{sec:minimax-lb}

To assess the tightness of our upper bounds in Corollary \ref{cor:sample-complexity-TD-iid}, in this section, we provide a minimax lower bound for the value function estimation problem with linear approximation. More specifically, the question we intend to answer is: 
% \begin{center}
for any target accuracy level $\varepsilon$, do there exist estimators that achieve an $\varepsilon$-approximation of $\bm{V}_{\thetastar}$ with fewer samples?
% \end{center}
As shown in the following result, the answer is, by and large, negative.

\begin{theorem}[Minimax lower bound]
\label{thm:minimax}
Consider any $\frac{1}{2} < \gamma < 1$, $1 < d \le |\cS|$, and $0<\varepsilon<c_1\max\{1,\|\bm{\theta}^{\star}\|_{\bm{\Sigma}}\}$ for some universal constant $c_1 > 0$. 
%Recall the definition of $\bm{\theta}^{\star}$ in \eqref{eq:defn-theta-star}.
There exist universal constants $c_2, c_3 >0$ such that for any estimator $\hat{\bm{\theta}}$ based on $T$ independent pairs $\{ (s_t,s'_t)\}_{t=1}^T$ as in \eqref{eq:iid-samples-on-policy}, there exists a Markov reward process and a choice of the feature matrix $\bm{\Phi}$ such that 
\begin{align}\label{eq:minimax}
	\mathbb{P}\Big\{\big\|\widehat{\bm{\theta}}-\bm{\theta}^{\star}\big\|_{\bm{\Sigma}}>c_2\varepsilon\Big\} 
	\ge \frac{1}{4},
\end{align}
provided that the number of samples $T$ satisfies 
\begin{align}
T\leq\frac{c_{3}\big\{\max_{s}\bm{\phi}(s)^{\top}\bm{\Sigma}^{-1}\bm{\phi}(s)\big\}\big(1+\big\|\bm{\theta}^{\star}\big\|_{\bm{\Sigma}}^{2}\big)}{(1-\gamma)\varepsilon^{2}}.\label{eq:sample-complexity-minimax}
\end{align}
\end{theorem}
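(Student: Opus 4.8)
The plan is to prove the lower bound by reducing the estimation problem to a hypothesis-testing problem and invoking an information-theoretic bound --- Le Cam's two-point method, together with its multi-hypothesis refinement (Assouad/Fano) needed to capture the dimension- and feature-geometry factor $\max_s \bm{\phi}(s)^\top\bm{\Sigma}^{-1}\bm{\phi}(s)$. Concretely, for the prescribed $\gamma,d,|\cS|,\varepsilon$ it suffices to exhibit a single feature matrix $\bm{\Phi}$ and a finite family of Markov reward processes $\{\mathcal{M}^{(\ell)}\}$ --- all sharing the same $\bm{\Phi}$, the same state-visitation distribution $\bm{\mu}$ and the same transition structure, differing only in their reward distributions --- such that (i) the induced TD fixed points $\bm{\theta}^{\star,(\ell)}=(\bm{A}^{(\ell)})^{-1}\bm{b}^{(\ell)}$ are pairwise separated by $\gtrsim\varepsilon$ in $\|\cdot\|_{\bm{\Sigma}}$; (ii) the quantities $\max_s\bm{\phi}(s)^\top\bm{\Sigma}^{-1}\bm{\phi}(s)$ and $1+\|\bm{\theta}^{\star,(\ell)}\|_{\bm{\Sigma}}^2$ agree with those in \eqref{eq:sample-complexity-minimax}; and (iii) whenever $T$ is below the stated threshold, the laws of the $T$ observed transition-reward tuples are statistically indistinguishable across $\ell$, i.e.\ their pairwise Kullback--Leibler divergences are $O(1)$. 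Standard testing inequalities then yield \eqref{eq:minimax} with the constant $1/4$.

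For the construction I would take dynamics that are essentially a self-loop at every state, e.g.\ $\bm{P}^\pi=(1-q)\bm{I}+q\bm{1}\bm{\mu}^\top$ with $q\downarrow 0$, so that $\bm{\mu}$ is stationary and (after centering the features, $\mathbb{E}_{\bm{\mu}}[\bm{\phi}(s)]=\bm{0}$) one has $\bm{A}=\bm{\Phi}^\top\bm{D}_{\bm{\mu}}(\bm{I}-\gamma\bm{P}^\pi)\bm{\Phi}\approx(1-\gamma)\bm{\Sigma}$ and hence $\bm{\theta}^\star\approx\frac{1}{1-\gamma}\bm{\Sigma}^{-1}\bm{b}$ with $\bm{b}=\mathbb{E}_{\bm{\mu}}[\bm{\phi}(s)r(s)]$; this self-loop amplification is what injects the $\frac{1}{1-\gamma}$ factor. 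The feature matrix and $\bm{\mu}$ I would pick so that $\max_s\bm{\phi}(s)^\top\bm{\Sigma}^{-1}\bm{\phi}(s)$ realizes the target value (a tabular-type or scaled-orthonormal design works, with the ``hard'' directions carried by states of small stationary mass so that $\bm{\Sigma}$ is small, hence $\bm{\phi}(\cdot)^\top\bm{\Sigma}^{-1}\bm{\phi}(\cdot)$ large, there). The family is indexed by independent sign perturbations of the per-state reward probabilities around a base level, taking the rewards to be Bernoulli so that $r(s)\in[0,1]$ is automatic, and placing the perturbations along the hard directions so that $\bm{\theta}^\star$ moves proportionally to $\frac{\Delta}{1-\gamma}$; rescaling the reward magnitude and the perturbation size $\Delta$ (legitimate since we only need $\varepsilon<c_1\max\{1,\|\bm{\theta}^\star\|_{\bm{\Sigma}}\}$) interpolates between the ``$1$'' and the ``$\|\bm{\theta}^\star\|_{\bm{\Sigma}}^2$'' regimes in \eqref{eq:sample-complexity-minimax}.

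The quantitative execution then has three parts. First, from the explicit block form of $\bm{A}^{(\ell)},\bm{b}^{(\ell)}$ compute $\bm{\theta}^{\star,(\ell)}$ and verify both the $\gtrsim\varepsilon$ pairwise separation in $\|\cdot\|_{\bm{\Sigma}}$ and the claimed size of $\|\bm{\theta}^{\star,(\ell)}\|_{\bm{\Sigma}}$ --- pure linear algebra. Second, bound the KL divergence between the $T$-sample laws: since the instances differ only through the reward distributions, and each hard state is visited only $\asymp T\cdot\mu_{\min}$ times while two $\mathrm{Bernoulli}$ laws at distance $\Delta$ differ by $O(\Delta^2)$ in KL, the total KL is $O(T\mu_{\min}\Delta^2)$ per perturbed coordinate; choosing $\Delta$ as large as the constraint ``KL $=O(1)$'' permits and summing the per-coordinate $\|\cdot\|_{\bm{\Sigma}}$-separations via Assouad produces the $\frac{\max_s\bm{\phi}(s)^\top\bm{\Sigma}^{-1}\bm{\phi}(s)\,(1+\|\bm{\theta}^\star\|_{\bm{\Sigma}}^2)}{(1-\gamma)\varepsilon^2}$ threshold. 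Third, conclude by Le Cam/Fano that no estimator can beat half the separation with probability exceeding $3/4$, which is \eqref{eq:minimax}.

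The main obstacle I anticipate is the second step --- arranging that the three factors $\max_s\bm{\phi}(s)^\top\bm{\Sigma}^{-1}\bm{\phi}(s)$, $1+\|\bm{\theta}^\star\|_{\bm{\Sigma}}^2$ and $\frac{1}{1-\gamma}$ enter the final bound together rather than cancelling. The tension is that $\|\cdot\|_{\bm{\Sigma}}$ is precisely the norm in which large leverage tends to be neutralized (a reward perturbation at a state of mass $\mu_0$ moves $\bm{\theta}^\star$ by $\asymp\Delta/(1-\gamma)$ but contributes only $\mu_0\Delta^2/(1-\gamma)^2$ to the squared error, while the information it reveals also scales with $\mu_0$), so the multiplicative dependence has to be extracted by using genuinely many independent perturbed directions, by exploiting the high-probability (rather than in-expectation) nature of the claim, and possibly by also perturbing the transition out of the hard states so that the one-step look-ahead term $\gamma\bm{\phi}(s')^\top\bm{\theta}^\star$ --- of magnitude $\asymp\frac{1}{1-\gamma}$ --- contributes to the noise. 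Throughout one must check feasibility: $r(s)\in[0,1]$, $\|\bm{\phi}(s)\|_2\le 1$, $\bm{\Phi}$ of full column rank, and the stationary distribution well-defined; the hypotheses $\gamma>\tfrac12$, $d>1$ and the upper restriction on $\varepsilon$ are what make all of these simultaneously satisfiable.
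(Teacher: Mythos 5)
Your high-level framework (a packing of hard instances, Fano/Assouad, per-sample KL control, and a tabular-type feature design realizing $\max_s\bm{\phi}(s)^{\top}\bm{\Sigma}^{-1}\bm{\phi}(s)\asymp d$) matches the paper's, and you correctly anticipate the central obstacle. But the construction you actually commit to --- fixed transitions, perturbed Bernoulli \emph{rewards} --- cannot deliver the stated bound in the regime that makes the theorem meaningful. Run your own accounting: a reward perturbation of size $\Delta$ at a state of mass $\mu_0$ moves the corresponding coordinate of $\bm{\theta}^{\star}$ by $\asymp\Delta/(1-\gamma)$, contributing $\mu_0\Delta^2/(1-\gamma)^2$ to the squared $\bm{\Sigma}$-separation, while revealing $\asymp\mu_0\Delta^2$ of KL per sample (for base success probabilities bounded away from $0$ and $1$). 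Optimizing $\Delta$ under the Fano budget over $\asymp d$ coordinates yields $T\lesssim d/((1-\gamma)^2\varepsilon^2)$, whereas the target with $\|\bm{\theta}^{\star}\|_{\bm{\Sigma}}\asymp(1-\gamma)^{-1}$ (which is what a unit-reward, slow-mixing instance produces, and what the theorem is meant to certify) is $d/((1-\gamma)^3\varepsilon^2)$. You are short by a factor of $\|\bm{\theta}^{\star}\|_{\bm{\Sigma}}\asymp(1-\gamma)^{-1}$; rescaling the reward magnitude only trades this loss against a smaller $\|\bm{\theta}^{\star}\|_{\bm{\Sigma}}$ and never recovers the full $1+\|\bm{\theta}^{\star}\|_{\bm{\Sigma}}^{2}$ factor. (A secondary issue: the observation model in the theorem consists of the pairs $(s_t,s_t')$ only, with a deterministic reward function, so introducing randomized rewards changes the model.)

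The missing idea --- which you gesture at with ``possibly also perturbing the transition'' but do not develop --- is that the \emph{entire} perturbation must live in the transition kernel, and specifically in self-loop probabilities $q_s$ placed near $\gamma$. This is what the paper does: rewards are deterministic ($r(s)=\ind\{s\ge d\}$), the $d-1$ hard states self-loop with probability $q_s\in\{\gamma\pm(1-\gamma)^2\varepsilon\}$, and $\theta^{\star}(s)=\frac{\gamma(1-q_s)}{1-\gamma q_s}V^{\star}$ with $V^{\star}\asymp(1-\gamma)^{-1}$. The gain comes from a mismatch between sensitivity and information: the derivative of $\theta^{\star}(s)$ in $q_s$ is $\asymp(1-\gamma)^{-2}$ (one factor from $\partial_q\frac{1-q}{1-\gamma q}\asymp(1-\gamma)^{-1}$ at $q\asymp\gamma$, one from $V^{\star}$), while the per-visit $\chi^2$-information about $q_s$ is only $\asymp\delta q^2/(1-q_s)\asymp\delta q^2/(1-\gamma)$ because the Bernoulli variance $q_s(1-q_s)$ of the observed transition indicator is itself $\asymp 1-\gamma$. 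The ratio of squared signal to information is therefore $(1-\gamma)^{-3}$ per unit of squared perturbation, versus $(1-\gamma)^{-2}$ for reward noise, and this single extra factor is exactly the $\|\bm{\theta}^{\star}\|_{\bm{\Sigma}}^{2}/\|\bm{\theta}^{\star}\|_{\bm{\Sigma}}$ you are missing. With that mechanism in place, the rest of your plan (Gilbert--Varshamov packing of the sign pattern $\bm{q}\in\{q_+,q_-\}^{d-1}$ with a balancedness constraint so that $V^{\star}$ is instance-independent, then Fano) goes through as you describe.
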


\begin{remark}
We remark that minimax lower bounds are also previously investigated in a general framework in \citep{duan2021optimal} where the value function is approximated using a general reproducing kernel Hilbert space (RKHS). When it comes to linear function approximation, for completeness, we include in Section~\ref{sec:pf-lb} a different but simpler construction tailored to the linear space. 
Compared to the results of \cite{duan2021optimal}, our lower bound is stated in terms of different parameters, which allows us to evaluate the tightness of Corollary~\ref{cor:sample-complexity-TD-iid} directly. Instantiating both lower bounds, they do agree and equal to 
\begin{align}
\label{eq:minimax-agree}
O\left(\frac{d}{\varepsilon^2 (1-\gamma)^3}\right),
\end{align}
as one plugs in the exact parameters from our construction. 
% \vspace{2cm}
% Another minimax lower bound for policy evaluation is provided in  under a Reproducive Kernel Hilbert Space (RKHS) setting. With notation in our paper, in the linear approximation setting, they prove a minimax lower bound for the sample complexity of
% \begin{align*}
% O\left(\frac{d \sigma^2(\bm{\theta}^\star)}{ (1-\gamma)^2\varepsilon^2}\right),
% \end{align*}
% where $\sigma^2(\bm{\theta}^\star)$ is defined as the variance of the Bellman residual error evaluated as $\bm{\theta}^\star$, defined as
% \begin{align}\label{eq:defn-sigma^2-theta}
% \sigma^2(\bm{\theta}^\star) = \mathbb{E}_{\mu,\mathcal{P}}\left[(V_{\bm{\theta}^\star}(s)-r(s)- \gamma V_{\bm{\theta}^\star}(s'))^2\right].
% \end{align}
% With our notation, this variance can be represented as
% \begin{align}\label{eq:duan-minimax}
% \sigma^2(\bm{\theta}^\star) = \|\bm{V}_{\bm{\theta}^\star} - \mathcal{T}\bm{V}_{\bm{\theta}^\star}\|_{\bm{D}_{\mu}}^2 + \gamma^2 \left(\|\bm{V}_{\bm{\theta}^\star}\|_{\bm{D}_{\mu}}^2 - \|\bm{P}\bm{V}_{\bm{\theta}^\star}\|_{\bm{D}_{\mu}}^2\right).
% \end{align}
% The family of MDPs that we constructed during the proof of Theorem \ref{thm:minimax} satisfy that $\sigma^2(\bm{\theta}^2) \asymp \frac{1}{1-\gamma}$, $\max_{s} \left\{\bm{\phi}(s)^\top \bm{\Sigma}^{-1} \bm{\phi}(s)\right\} \asymp d$ and $\sigma^2(\bm{\theta}^\star) \asymp \frac{1}{1-\gamma}$. Therefore, our minimax lower bound agrees with the minimax lower bound from \cite{duan2021optimal} at the rate of 
% \begin{align}\label{eq:minimax-agree}
% O\left(\frac{d}{\varepsilon^2 (1-\gamma)^3}\right).
% \end{align}
\end{remark}

As asserted by this theorem, no algorithm whatsoever can attain an $\varepsilon$-approximation of the best linear coefficient --- in a minimax sense --- unless the total sample size exceeds
\begin{align*}
	O\Bigg(\frac{\big\{\max_{s}\bm{\phi}(s)^{\top}\bm{\Sigma}^{-1}\bm{\phi}(s)\big\}\big(1+\big\|\bm{\theta}^{\star}\big\|_{\bm{\Sigma}}^{2}\big)}{(1-\gamma)\varepsilon^{2}}\Bigg). 
\end{align*}
Consequently, the upper bounds developed in Corollary \ref{cor:sample-complexity-TD-iid} are sharp in terms of the accuracy level $\varepsilon$, the dependence of the feature map $\bm{\Phi}$, the underlying coefficient $\thetastar$, and the covariance matrix $\bm{\Sigma}$. 
Therefore, it implies that the performances of the TD learning algorithms can not be further improved in the minimax sense other than a factor of $\frac{1}{1-\gamma}$---the effective horizon. 

We believe that the gap in terms of $\frac{1}{1-\gamma}$ mainly comes from the function approximation paradigm.
In fact, as far as we know, with linear function approximation, there has been no minimax optimal results established for this problem either for the model based method (e.g. LSTD) or for the variance-reduced approach, both of which are known to be minimax optimal in the tabular setting; the latter is 
also proved to be instance-optimal from \cite{khamaru2020temporal} and \cite{li2021accelerated}. 
% Hence, we think it is different from the instance-optimality in the tabular case.
We conjecture the minimax optimal dependency of $\frac{1}{1-\gamma}$ to be the same as that of the tabular setting and TD with LFA to be minimax optimal. 
Establishing this result, however, requires developing completely new analysis tools, particularly in dealing with the structure of variance across different steps, which we leave as an interesting open direction.

\section{Off-policy evaluation with TDC learning}
\label{sec:TDC}

In this section, we aim to estimate the optimizer $\widetilde{\bm{\theta}}^{\star}$ of the optimization problem~\eqref{eq:MSPBE} in the off-policy setting by means of the TDC algorithm.
We continue to focus on the case when samples are generated in the i.i.d. fashion by the behavior policy $\pi_b$. At each time stamp $t$, one obtains
\begin{align}\label{eq:iid-samples-off-policy}
	&(s_t, a_t, s'_t),\quad \text{ where } \nonumber \\ 
	& s_t \stackrel{\text{i.i.d.}}{\sim} \mu_b, ~ a_t \sim \pi_{\mathsf{b}}(\cdot \mid s_t), ~ \text{ and } s_t' \sim \mathcal{P}(\cdot \mid s_t, a_t).
\end{align}
Here, recall that $\mu_{\mathsf{b}}$ is the stationary distribution corresponding to the behavior policy $\pi_{\mathsf{b}}$. We first provide some intuition behind the TDC algorithm before describing novel bounds on its sample complexity for obtaining an $\varepsilon$-accurate solution.

\subsection{The TDC algorithm}

The TDC algorithm is designed to solve the optimization problem~\eqref{eq:MSPBE} using a two-timescale linear TD with gradient correction \citep{sutton2009fast}. 
To provide some high-level ideas behind the design of this algorithm, it is helpful to rewrite the objective function in the following form by directly expanding the terms in expression~\eqref{eq:MSPBE}. 
\begin{claim}
\label{clm:MSPBE}
The quantity $\mathsf{MSPBE}(\bm{\theta})$ can be equivalently written as
\begin{align}
\mathsf{MSPBE}(\bm{\theta})&=\frac{1}{2}\mathbb{E}_{\mu_{\mathsf{b}},\pi,\mathcal{P}}\left[\bm{\phi}(s_t)\delta_{t}\right]^{\top}\nonumber \\ 
&\left\{ \mathbb{E}_{\mu_{\mathsf{b}}}\left[\bm{\phi}(s_t)\bm{\phi}(s_t)^{\top}\right]\right\} ^{-1} 
\mathbb{E}_{\mu_{\mathsf{b}},\pi,\mathcal{P}}\left[\bm{\phi}(s_t)\delta_{t}\right],\label{eq:MSPBE_new}
\end{align}
where $\delta_{t} \defn r_{t}+\gamma\bm{\phi}(s_t')^{\top}\bm{\theta}-\bm{\phi}(s_t)^{\top}\bm{\theta}$ is the temporal difference error. 
\end{claim}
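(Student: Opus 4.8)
The plan is to reduce the claim to a direct computation with the closed-form expression for the $\bm{D}_{\mu_b}$-orthogonal projection. Since the columns of $\bm{\Phi}$ are linearly independent (Assumption~\ref{assumption:Phi}) and $\bm{D}_{\mu_b}$ is positive definite, the matrix $\bm{\Phi}^{\top}\bm{D}_{\mu_b}\bm{\Phi}$ is invertible and the projection onto the column space of $\bm{\Phi}$ with respect to $\langle\cdot,\cdot\rangle_{\bm{D}_{\mu_b}}$ has the explicit form $\Pi_{\bm{D}_{\mu_b}} = \bm{\Phi}(\bm{\Phi}^{\top}\bm{D}_{\mu_b}\bm{\Phi})^{-1}\bm{\Phi}^{\top}\bm{D}_{\mu_b}$. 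Because $\bm{V}_{\bm{\theta}} = \bm{\Phi}\bm{\theta}$ already lies in that column space, $\Pi_{\bm{D}_{\mu_b}}\bm{V}_{\bm{\theta}} = \bm{V}_{\bm{\theta}}$, so by linearity of the projection
\[
\bm{V}_{\bm{\theta}} - \Pi_{\bm{D}_{\mu_b}}\mathcal{T}^{\pi}\bm{V}_{\bm{\theta}} = \Pi_{\bm{D}_{\mu_b}}\big(\bm{V}_{\bm{\theta}} - \mathcal{T}^{\pi}\bm{V}_{\bm{\theta}}\big) = -\Pi_{\bm{D}_{\mu_b}}\bm{u}, \qquad \bm{u} := \mathcal{T}^{\pi}\bm{V}_{\bm{\theta}} - \bm{V}_{\bm{\theta}} = \bm{r} + \gamma\bm{P}^{\pi}\bm{\Phi}\bm{\theta} - \bm{\Phi}\bm{\theta}.
\]

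Next I would identify $\bm{u}$ as the vector of \emph{expected} temporal-difference errors: its $s$-th entry is $r(s) + \gamma\sum_{s'}P^{\pi}_{ss'}\,\bm{\phi}(s')^{\top}\bm{\theta} - \bm{\phi}(s)^{\top}\bm{\theta} = \mathbb{E}_{\pi,\mathcal{P}}[\delta_t \mid s_t = s]$, where $\delta_t = r_t + \gamma\bm{\phi}(s_t')^{\top}\bm{\theta} - \bm{\phi}(s_t)^{\top}\bm{\theta}$ and $\bm{P}^{\pi}$ is the transition matrix induced by the \emph{target} policy. Since $\Pi_{\bm{D}_{\mu_b}}$ is a $\bm{D}_{\mu_b}$-orthogonal projection, $\Pi_{\bm{D}_{\mu_b}}^{\top}\bm{D}_{\mu_b}\Pi_{\bm{D}_{\mu_b}} = \bm{D}_{\mu_b}\Pi_{\bm{D}_{\mu_b}}$, so that
\[
\big\|\Pi_{\bm{D}_{\mu_b}}\bm{u}\big\|_{\bm{D}_{\mu_b}}^{2} = \bm{u}^{\top}\bm{D}_{\mu_b}\Pi_{\bm{D}_{\mu_b}}\bm{u} = \big(\bm{\Phi}^{\top}\bm{D}_{\mu_b}\bm{u}\big)^{\top}\big(\bm{\Phi}^{\top}\bm{D}_{\mu_b}\bm{\Phi}\big)^{-1}\big(\bm{\Phi}^{\top}\bm{D}_{\mu_b}\bm{u}\big).
\]

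Finally I would match the three factors with their probabilistic counterparts: $\bm{\Phi}^{\top}\bm{D}_{\mu_b}\bm{\Phi} = \sum_{s}\mu_{\mathsf{b}}(s)\bm{\phi}(s)\bm{\phi}(s)^{\top} = \mathbb{E}_{\mu_{\mathsf{b}}}[\bm{\phi}(s_t)\bm{\phi}(s_t)^{\top}]$, and, by the tower property, $\bm{\Phi}^{\top}\bm{D}_{\mu_b}\bm{u} = \sum_{s}\mu_{\mathsf{b}}(s)\bm{\phi}(s)\,\mathbb{E}_{\pi,\mathcal{P}}[\delta_t \mid s_t = s] = \mathbb{E}_{\mu_{\mathsf{b}},\pi,\mathcal{P}}[\bm{\phi}(s_t)\delta_t]$. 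Substituting these into $\mathsf{MSPBE}(\bm{\theta}) = \tfrac{1}{2}\|\Pi_{\bm{D}_{\mu_b}}\bm{u}\|_{\bm{D}_{\mu_b}}^{2}$ yields exactly~\eqref{eq:MSPBE_new}. The computation is essentially bookkeeping; the only step that needs care is the change of measure in the expectation defining $\bm{u}$ --- the outer state follows the behavior stationary distribution $\mu_{\mathsf{b}}$ while the action, and hence the next state, follows the target policy $\pi$, which is precisely what the subscript $\mathbb{E}_{\mu_{\mathsf{b}},\pi,\mathcal{P}}$ records. So I do not anticipate a genuine obstacle here, only the need to be scrupulous about which distribution governs each coordinate.
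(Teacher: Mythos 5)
Your proof is correct and is exactly the direct expansion the paper has in mind (the paper offers no separate proof of Claim~\ref{clm:MSPBE}, introducing it only as the result of "directly expanding the terms" in \eqref{eq:MSPBE}): you use the closed form $\Pi_{\bm{D}_{\mu_b}}=\bm{\Phi}(\bm{\Phi}^{\top}\bm{D}_{\mu_b}\bm{\Phi})^{-1}\bm{\Phi}^{\top}\bm{D}_{\mu_b}$, the idempotence identity $\Pi_{\bm{D}_{\mu_b}}^{\top}\bm{D}_{\mu_b}\Pi_{\bm{D}_{\mu_b}}=\bm{D}_{\mu_b}\Pi_{\bm{D}_{\mu_b}}$, and the identification of $\bm{\Phi}^{\top}\bm{D}_{\mu_b}\bm{u}$ with $\mathbb{E}_{\mu_{\mathsf{b}},\pi,\mathcal{P}}[\bm{\phi}(s_t)\delta_t]$, all of which check out. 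Your attention to the mixed measure (state from $\mu_{\mathsf{b}}$, action and transition from $\pi$ and $\mathcal{P}$) is the one point that genuinely needs care, and you handle it correctly.
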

%
%The basic idea behind the TDC algorithm is to find the minimizer for $\mathsf{MSPBE}(\bm{\theta})$ by the stochastic gradient method. 
In light of the above expression, the gradient of $\mathsf{MSPBE}(\bm{\theta})$ with respect to $\bm{\theta}$ equals to 
\begin{align}
\label{eq:MSPBE-gradient}
&\nabla_{\bm{\theta}}\mathsf{MSPBE}(\bm{\theta}) \nonumber \\ 
 & =\mathbb{E}_{\mu_{\mathsf{b}},\pi,\mathcal{P}}\left[\left(\gamma\bm{\phi}(s_t')-\bm{\phi}(s_{t})\right)\bm{\phi}(s_t)^{\top}\right] \nonumber \\ 
 &\left\{ \mathbb{E}_{\mu_{\mathsf{b}}}\left[\bm{\phi}(s_t)\bm{\phi}(s_t)^{\top}\right]\right\} ^{-1}\mathbb{E}_{\mu_{\mathsf{b}},\pi,\mathcal{P}}\left[\bm{\phi}(s_t)\delta_{t}\right]\nonumber \\
& =-\mathbb{E}_{\mu_{\mathsf{b}},\pi,\mathcal{P}}\left[\bm{\phi}(s_t)\delta_{t}\right]+\gamma\mathbb{E}_{\mu_{\mathsf{b}},\pi,\mathcal{P}}\left[\bm{\phi}(s_{t}')\bm{\phi}(s_t)^{\top}\right] \nonumber \\ 
&\left\{ \mathbb{E}_{\mu_{b}}\left[\bm{\phi}(s_t)\bm{\phi}(s_t)^{\top}\right]\right\} ^{-1}\mathbb{E}_{\mu_{\mathsf{b}},\pi,\mathcal{P}}\left[\bm{\phi}(s_t)\delta_{t}\right]\nonumber \\
& =-\mathbb{E}_{\mu_{\mathsf{b}},\pi_{\mathsf{b}},\mathcal{P}}\left[\rho_t\bm{\phi}(s_t))\delta_{t}\right]+\gamma\mathbb{E}_{\mu_{\mathsf{b}},\pi_{\mathsf{b}},\mathcal{P}}\left[\rho_t\bm{\phi}(s_t')\bm{\phi}(s_t)^{\top}\right] \bm{w}_t,
\end{align}
where in the last step we have defined
\begin{align}
\label{eq:defn-w}
\bm{w}_t &= \bm{w}(\bm{\theta}_t) \nonumber \\ 
&= \left\{ \mathbb{E}_{\mu_{\mathsf{b}}}\left[\bm{\phi}(s_t)\bm{\phi}(s_t)^{\top}\right]\right\} ^{-1}\mathbb{E}_{\mu_{\mathsf{b}},\pi_{\mathsf{b}},\mathcal{P}}\left[\rho_t\bm{\phi}(s_t)\delta_{t}\right].
\end{align} 
and have used the importance weights
\begin{align}
	\rho_t\coloneqq\frac{\pi(a_{t}|s_{t})}{\pi_{\mathsf{b}}(a_{t}|s_{t})}
\end{align}
 to replace the expectation w.r.t.~$\pi$ with the expectation w.r.t. $\pi_{\mathsf{b}}$.

The high-level idea of TDC is to estimate the right hand side of \eqref{eq:MSPBE-gradient} based on the sample trajectory~\eqref{eq:iid-samples-off-policy}, and then perform stochastic gradient updates for $\bm{\theta}_t.$
However, the challenge is that the second term in the gradient of MSPBE \eqref{eq:MSPBE-gradient} involves the product of two expectations. Simultaneously sampling and using the sample product is inappropriate due to their correlation. 
%\textcolor{red}{I am still a bit confused by this sentence} \ytw{how about now?}
In order to address this issue, \cite{sutton2008convergent} and \cite{sutton2009fast} introduced an auxiliary parameter $\bm{w}$ to estimate $\bm{w}(\bm{\theta}_t)$ by solving a linear stochastic approximation (SA) problem corresponding to the linear system
\begin{align}
\label{eqn:ode}
\mathbb{E}_{\mu_{\mathsf{b}}}\left[\bm{\phi}(s_t)\bm{\phi}(s_t)^{\top}\right] \bm{w} = \mathbb{E}_{\mu_{\mathsf{b}},\pi_{\mathsf{b}},\mathcal{P}}\left[\rho_t\bm{\phi}(s_t)\delta_{t}\right].
\end{align}
Putting these ideas together, TDC amounts to the following two-timescale linear stochastic  method
\begin{align*}
% \label{eq:TDC-iter}
\widetilde{\bm{\theta}}_{t+1} 
& =\widetilde{\bm{\theta}}_{t}-\alpha_t[\gamma\rho_{t}\bm{\phi}(s_t')\bm{\phi}(s_t)^{\top}\bm{w}_{t} - \rho_{t}\delta_{t}\bm{\phi}(s_t)];\\
\bm{w}_{t+1} & =\bm{w}_{t}-\beta_t \left[\bm{\phi}(s_t)\bm{\phi}(s_t)^{\top}\bm{w}_{t}-\rho_{t}\delta_{t}\bm{\phi}(s_t)\right].
\end{align*}
Here, the update of $\widetilde{\bm{\theta}}_{t}$ corresponds to a gradient step regarding~\eqref{eq:MSPBE_new}, the update of $\bm{w}_{t}$ corresponds to linear SA for solving \eqref{eqn:ode},
and $\delta_{t} \defn r_{t}+\gamma\bm{\phi}(s_t')^{\top}\widetilde{\bm{\theta}}_t-\bm{\phi}(s_t)^{\top}\widetilde{\bm{\theta}}_t$
is the temporal difference error. In addition, $\alpha_t$, $\beta_t$ are the corresponding stepsizes.
For notational convenience, let us denote 
\begin{equation}
\label{eqn:TDC-emp-para}
\begin{aligned}
&\widetilde{\bm{A}}_{t}=\rho_{t}\bm{\phi}(s_t)\left(\bm{\phi}(s_t)-\gamma\bm{\phi}(s_t')\right)^{\top},\\ 
&\widetilde{\bm{b}}_{t}\coloneqq\rho_{t}\bm{\phi}(s_t)r_{t},\\
&\bm{\Pi}_{t}\coloneqq\rho_t\bm{\phi}(s_t)\bm{\phi}(s_t')^{\top},\\ 
&\widetilde{\bm{\Sigma}}_{t}\coloneqq\bm{\phi}(s_t)\bm{\phi}(s_t)^{\top}.
\end{aligned}
\end{equation}
With these definitions, the TDC iterates can be written compactly as 
\begin{subequations}
\label{subeq:TDC}
\begin{align}
\widetilde{\bm{\theta}}_{t+1} & =\widetilde{\bm{\theta}}_{t}-\alpha_t(\widetilde{\bm{A}}_{t}\widetilde{\bm{\theta}}_{t}-\widetilde{\bm{b}}_{t}+\gamma\bm{\Pi}_{t}^{\top}\bm{w}_{t});\\
\bm{w}_{t+1} & =\bm{w}_{t}-\beta_t(\widetilde{\bm{A}}_{t}\widetilde{\bm{\theta}}_{t}-\widetilde{\bm{b}}_{t}+\widetilde{\bm{\Sigma}}_{t}\bm{w}_{t}).
\end{align}
\end{subequations}

\subsection{Sample complexity of TDC}

Our finite-sample characterization of TDC builds upon a careful analysis of the population dynamics of TDC, which we then show to be uniformly well approximated by the empirical dynamics of TDC via matrix concentration inequalities. Before stating our main result, we find it helpful to introduce some extra pieces of notation. Specifically, define the population parameters as 
\begin{subequations}
\begin{align}
\widetilde{\bm{A}} & \coloneqq\mathbb{E}_{\mu_{\mathsf{b}},\pi_{\mathsf{b}},\mathcal{P}}[\widetilde{\bm{A}}_{t}]=\mathbb{E}_{\mu_{\mathsf{b}},\pi_{\mathsf{b}},\mathcal{P}}[\rho_{t}\bm{\phi}(s_t)\left(\bm{\phi}(s_t)-\gamma\bm{\phi}(s_t')\right)^{\top}];\\
\widetilde{\bm{b}} & \coloneqq\mathbb{E}_{\mu_{\mathsf{b}}}[\widetilde{\bm{b}}_{t}]=\mathbb{E}_{\mu_{\mathsf{b}},\pi_{\mathsf{b}}}[\rho_{t}\bm{\phi}(s_t)r_{t}];\\
\bm{\Pi} & \coloneqq\mathbb{E}_{\mu_{\mathsf{b}},\pi_{\mathsf{b}},\mathcal{P}}[\bm{\Pi}_{t}]=\mathbb{E}_{\mu_{\mathsf{b}},\pi_{\mathsf{b}},\mathcal{P}}[\rho_{t}\bm{\phi}(s_t)\bm{\phi}(s_t')^{\top}];\\
\widetilde{\bm{\Sigma}} & \coloneqq\mathbb{E}_{\mu_{\mathsf{b}}}[\widetilde{\bm{\Sigma}}_{t}]=\mathbb{E}_{\mu_{\mathsf{b}}}[\bm{\phi}(s_t)\bm{\phi}(s_t)^{\top}].
\end{align}
\end{subequations}
In addition, denote the parameters  
\begin{equation}
\begin{aligned}
&\lambda_1 = \lambda_{\min}(\widetilde{\bm{A}}^{\top}\widetilde{\bm{\Sigma}}^{-1}\widetilde{\bm{A}}), \\
&\lambda_2 = \lambda_{\min}(\widetilde{\bm{\Sigma}}), \\
&\lambda_{\Sigma} = \|\widetilde{\bm{\Sigma}}^{-1}\| = 1/\lambda_2,  \\
&\widetilde{\kappa} = \lambda_{\Sigma} \cdot \|\widetilde{\bm{\Sigma}}\|, \\
&\rho_{\max} = \max_{s,a} [\pi(a | s)/\pi_{\mathsf{b}}(a | s)].
\end{aligned}
\end{equation}
%\textcolor{red}{Why do we write $\lambda_1$? What is $\theta$ here?}
With these notation in place, we are ready to state our main result for TDC learning, with its proof deferred to Section~\ref{sec:pf-TDC}.

\begin{theorem}
\label{thm:TDC-error}
% Recall the definition of $\widetilde{\bm{\theta}}^\star$ in \eqref{eq:MSPBE_new}. 
There exist universal constants $\widetilde{C}_0,\widetilde{c}_1 > 0$, such that for any given $0 \leq \delta \leq 1$, the output $\widetilde{\bm{\theta}}_T$ of the TDC learning iterate \eqref{subeq:TDC} at time $T$ satisfies the bound
\begin{align}
\label{eqn:sigma-error-tdc}
\|\widetilde{\bm{\theta}}_T - \widetilde{\bm{\theta}}^\star\|_{\widetilde{\bm{\Sigma}}} \leq \widetilde{C}_0 \frac{\rho_{\max}^2\|\widetilde{\bm{\Sigma}}\|}{\lambda_1} \sqrt{\frac{\beta}{\lambda_2}\log \frac{2dT}{\delta}}(\|\widetilde{\bm{\theta}}^\star\|_{\widetilde{\bm{\Sigma}}} + 2),
\end{align}
with probability at least $1-\delta$, provided that 
\begin{equation}
\begin{aligned}
\label{eq:TDC-conditions}
% \begin{cases}
&\widetilde{\bm{\theta}}_0 = \bm{0}, \\ 
&\alpha_0 = \ldots = \alpha_T = \alpha,\\ 
&\beta_0 = \ldots = \beta_T = \beta,\\ 
&0 < \alpha < \frac{1}{\lambda_1\lambda_{\Sigma}^2 \|\widetilde{\bm{\Sigma}}\|\log \frac{2dT}{\delta}},\\  
&\frac{\alpha}{\beta} = \frac{1}{128} \frac{\lambda_1 \lambda_2}{\rho_{\max}^2 (1+\lambda_{\Sigma}\rho_{\max})},\\
&T \geq \widetilde{c}_1 \frac{\log \|\widetilde{\bm{\theta}}^\star\|_2}{\alpha \lambda_1} 
\log \max \left\{\sqrt{\widetilde{\kappa}},~\|\widetilde{\bm{\theta}}^\star\|_{\widetilde{\bm{\Sigma}}} \sqrt{\frac{\alpha \lambda_1}{\log \frac{2dT}{\delta}}}\right\}.
% \end{cases}
\end{aligned}
\end{equation}
\end{theorem}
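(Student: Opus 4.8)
The plan is to pass to error coordinates, recognize the TDC recursion \eqref{subeq:TDC} as a perturbed two-timescale linear recursion that contracts in a suitable Lyapunov sense, and then upgrade the deterministic contraction to a high-probability bound by an induction-plus-matrix-Freedman argument in the spirit of the proof of Theorem~\ref{thm:ind-td}. Write $\bm{\Delta}_t \defn \widetilde{\bm{\theta}}_t - \widetilde{\bm{\theta}}^\star$ for the slow error and $\bm{\zeta}_t \defn \bm{w}_t + \widetilde{\bm{\Sigma}}^{-1}\widetilde{\bm{A}}\bm{\Delta}_t$ for the fast error (the deviation of $\bm{w}_t$ from the solution $\bm{w}^\star(\widetilde{\bm{\theta}}_t) = \widetilde{\bm{\Sigma}}^{-1}(\widetilde{\bm{b}}-\widetilde{\bm{A}}\widetilde{\bm{\theta}}_t)$ of \eqref{eqn:ode} at the current iterate); since $\widetilde{\bm{\theta}}^\star = \widetilde{\bm{A}}^{-1}\widetilde{\bm{b}}$ minimizes the MSPBE, $\bm{w}^\star(\widetilde{\bm{\theta}}^\star)=\bm{0}$. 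Using $\mathbb{E}[\rho_t\mid s_t]=1$ — which gives $\widetilde{\bm{A}}=\widetilde{\bm{\Sigma}}-\gamma\bm{\Pi}$ and hence the key identity $\bm{I}-\gamma\bm{\Pi}^\top\widetilde{\bm{\Sigma}}^{-1}=\widetilde{\bm{A}}^\top\widetilde{\bm{\Sigma}}^{-1}$ — one rewrites \eqref{subeq:TDC} as
\begin{align*}
\bm{\Delta}_{t+1} &= \big(\bm{I} - \alpha\,\widetilde{\bm{A}}^\top\widetilde{\bm{\Sigma}}^{-1}\widetilde{\bm{A}}\big)\bm{\Delta}_t - \alpha\gamma\,\bm{\Pi}^\top\bm{\zeta}_t - \alpha\,\bm{m}_t^{\bm{\theta}},\\
\bm{\zeta}_{t+1} &= \big(\bm{I} - \beta\,\widetilde{\bm{\Sigma}}\big)\bm{\zeta}_t + \widetilde{\bm{\Sigma}}^{-1}\widetilde{\bm{A}}\,(\bm{\Delta}_{t+1}-\bm{\Delta}_t) - \beta\,\bm{m}_t^{\bm{w}},
\end{align*}
where $\bm{m}_t^{\bm{\theta}},\bm{m}_t^{\bm{w}}$ are $\mathcal{F}_t$-martingale differences (i.i.d.\ sampling \eqref{eq:iid-samples-off-policy}) whose conditional second moments are at most $O(\rho_{\max}^2)\cdot(\|\bm{\Delta}_t\|_2^2+\|\bm{w}_t\|_2^2+\|\widetilde{\bm{\theta}}^\star\|_2^2+1)$, because every per-step quantity in \eqref{eqn:TDC-emp-para} is bounded in terms of $\rho_{\max}$ and $\|\bm{\phi}(s)\|_2\le1$. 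Crucially, the drift of $\bm{\Delta}_t$ is $\widetilde{\bm{A}}^\top\widetilde{\bm{\Sigma}}^{-1}\widetilde{\bm{A}}=\nabla^2\mathsf{MSPBE}\succeq\lambda_1\bm{I}$, that of $\bm{\zeta}_t$ is $\widetilde{\bm{\Sigma}}\succeq\lambda_2\bm{I}$, and $\bm{\Delta}_{t+1}-\bm{\Delta}_t=O(\alpha)$, so the two components are coupled only through $O(\alpha)$ cross-terms.

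\textbf{Step 1: population contraction.} First I would analyze the noiseless recursion ($\bm{m}_t^{\bm{\theta}}=\bm{m}_t^{\bm{w}}=\bm{0}$) through a composite Lyapunov function $\mathcal{L}_t \defn \|\bm{\Delta}_t\|_2^2 + c\,\|\bm{\zeta}_t\|_2^2$ with $c$ a numerical multiple of $\frac{\alpha\rho_{\max}^2}{\lambda_1\beta\lambda_2}$, and prove $\mathcal{L}_{t+1}\le(1-\tfrac14\alpha\lambda_1)\mathcal{L}_t$. The $\|\bm{\Delta}\|_2^2$-part contracts at rate $\alpha\lambda_1$ up to a $\frac{\alpha\rho_{\max}^2}{\lambda_1}\|\bm{\zeta}_t\|_2^2$ leakage from the $\gamma\bm{\Pi}^\top\bm{\zeta}_t$ term (using $\|\bm{\Pi}\|\le\rho_{\max}$); the $\|\bm{\zeta}\|_2^2$-part contracts at rate $\beta\lambda_2$ up to an $O(\alpha)$ leakage from $\widetilde{\bm{\Sigma}}^{-1}\widetilde{\bm{A}}(\bm{\Delta}_{t+1}-\bm{\Delta}_t)$ of size $\asymp\frac{\alpha^2}{\beta\lambda_2}\|\widetilde{\bm{A}}^\top\widetilde{\bm{\Sigma}}^{-1}\widetilde{\bm{A}}\|^2\|\bm{\Delta}_t\|_2^2+\alpha^2\rho_{\max}^2(1+\lambda_{\Sigma}\rho_{\max})^2\|\bm{\zeta}_t\|_2^2$. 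Balancing these forces $\alpha/\beta$ to be sufficiently small — this is exactly the purpose of the condition $\alpha/\beta=\frac{1}{128}\frac{\lambda_1\lambda_2}{\rho_{\max}^2(1+\lambda_{\Sigma}\rho_{\max})}$ — while the cap $\alpha<(\lambda_1\lambda_{\Sigma}^2\|\widetilde{\bm{\Sigma}}\|\log\frac{2dT}{\delta})^{-1}$ keeps $\alpha$ times the relevant operator norms below $\sim 1/\log\frac{2dT}{\delta}$, so that $\|\bm{I}-\alpha\widetilde{\bm{A}}^\top\widetilde{\bm{\Sigma}}^{-1}\widetilde{\bm{A}}\|\le1-\alpha\lambda_1$ and $\|\bm{I}-\beta\widetilde{\bm{\Sigma}}\|\le1-\beta\lambda_2$ and the one-step maps remain genuine contractions with room to spare for the Freedman step of Step~2.

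\textbf{Step 2: from contraction to a high-probability bound.} I would run an induction over $t\le T$: under the hypothesis $\mathcal{L}_s\le(1-\tfrac14\alpha\lambda_1)^s\mathcal{L}_0+B^2$ for all $s\le t$ — equivalently $\|\bm{\Delta}_s\|_2,\|\bm{w}_s\|_2$ stay of order $(\|\widetilde{\bm{\theta}}^\star\|_{\widetilde{\bm{\Sigma}}}+1)$ times problem-dependent constants, which is where the burn-in is used to dominate the transient term — the increments $\alpha\bm{m}_s^{\bm{\theta}},\beta\bm{m}_s^{\bm{w}}$ have bounded norms and controlled predictable quadratic variation, and the matrix Freedman inequality applied to the accumulated stochastic error (a weighted martingale sum of the $\bm{m}_s$'s with geometrically decaying weights, together with the $O(\alpha)$ cross-contamination $\widetilde{\bm{\Sigma}}^{-1}\widetilde{\bm{A}}\bm{m}_s^{\bm{\theta}}$ entering the $\bm{\zeta}$-channel) shows that, with probability at least $1-\delta$, its contribution to $\mathcal{L}_{t+1}$ does not exceed $B^2$, closing the induction. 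Summing the geometric series, the fluctuation floor is $B^2\asymp\frac{\beta}{\lambda_2}\cdot\big(\frac{\rho_{\max}^2\|\widetilde{\bm{\Sigma}}\|}{\lambda_1}\big)^2(\|\widetilde{\bm{\theta}}^\star\|_{\widetilde{\bm{\Sigma}}}+2)^2\log\frac{2dT}{\delta}$, the dominant contribution coming from the fast-variable fluctuation (of order $\frac{\beta}{\lambda_2}\rho_{\max}^2(\cdots)$) being fed into the slow recursion and integrated over its $\frac{1}{\alpha\lambda_1}$ relaxation time; the burn-in $T\ge\widetilde{c}_1\frac{\log\|\widetilde{\bm{\theta}}^\star\|_2}{\alpha\lambda_1}\log\max\{\sqrt{\widetilde{\kappa}},\|\widetilde{\bm{\theta}}^\star\|_{\widetilde{\bm{\Sigma}}}\sqrt{\alpha\lambda_1/\log\frac{2dT}{\delta}}\}$ is precisely what makes $(1-\tfrac14\alpha\lambda_1)^T\mathcal{L}_0\lesssim B^2$. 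Converting $\|\bm{\Delta}_T\|_2$ to the $\widetilde{\bm{\Sigma}}$-norm and collecting the problem-dependent constants yields \eqref{eqn:sigma-error-tdc}.

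\textbf{Main obstacle.} The delicate step is Step~1: the fast variable $\bm{w}_t$ never equilibrates exactly — whenever $\widetilde{\bm{\theta}}_t$ moves, its target $\bm{w}^\star(\widetilde{\bm{\theta}}_t)$ moves with it — so the $\bm{\zeta}$-recursion carries an irreducible $O(\alpha)$ drift that couples back into both $\bm{\Delta}_t$ and $\bm{\zeta}_t$, and the Lyapunov weight $c$ and the timescale ratio $\alpha/\beta$ must be chosen so that these cross-terms are simultaneously dominated by the two widely separated contraction rates $\alpha\lambda_1$ and $\beta\lambda_2$; pinning down the correct dependence on $\rho_{\max}$, $\lambda_{\Sigma}$ and $\|\widetilde{\bm{\Sigma}}\|$ here is exactly what produces the stepsize conditions \eqref{eq:TDC-conditions}. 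A secondary difficulty is that the two timescales' stochastic increments are correlated — the $\bm{w}$-noise enters the $\widetilde{\bm{\theta}}$-update through $\gamma\bm{\Pi}_t^\top\bm{w}_t$, and $\rho_t$ can be as large as $\rho_{\max}$ — so the matrix Freedman step must be carried out on the coupled system rather than on the two sequences separately, with the crude time-invariant bound from the induction supplying the a priori control on the increments that the concentration inequality requires.
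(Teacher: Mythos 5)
Your proposal follows essentially the same route as the paper: the same error coordinates $\bm{\Delta}_t$ and $\bm{\zeta}_t=\bm{w}_t+\widetilde{\bm{\Sigma}}^{-1}\widetilde{\bm{A}}\bm{\Delta}_t$, the same identity $\bm{I}-\gamma\bm{\Pi}^{\top}\widetilde{\bm{\Sigma}}^{-1}=\widetilde{\bm{A}}^{\top}\widetilde{\bm{\Sigma}}^{-1}$ giving the coupled linear recursion, a weighted composite norm whose weight $c\asymp\frac{\alpha\rho_{\max}^2}{\lambda_1\beta\lambda_2}$ coincides with the square of the paper's scaling parameter $\varkappa=8\rho_{\max}\sqrt{\alpha/(\lambda_1\beta\lambda_2)}$, and the same induction-plus-matrix-Freedman upgrade with the burn-in absorbing the transient. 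The only cosmetic difference is that you phrase the population step as a Lyapunov decrease while the paper bounds the spectral norm of the augmented block matrix $\bm{\Psi}$, which is the same argument.
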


\begin{remark}
% {Error bound under the $\ell_2$ norm.} 
A similar result in terms of the $\ell_{2}$ error (namely, $\|\widetilde{\bm{\theta}}_T - \widetilde{\bm{\theta}}^\star\|_2$) can be derived in the same way as in \eqref{eqn:sigma-error-tdc}.
In particular, under the same conditions as in \eqref{eq:TDC-conditions}, it can be derived with probability at least $1-\delta$ that
\begin{align}
\|\widetilde{\bm{\theta}}_T - \widetilde{\bm{\theta}}^\star\|_2 \lesssim \widetilde{C}_0 \frac{\rho_{\max}^2}{\lambda_1} \sqrt{\frac{\beta}{\lambda_2}\log \frac{2dT}{\delta}}(\|\widetilde{\bm{\theta}}^\star\|_2 + 2).
\end{align}
Since the proof follows in the similar fashion, we omit here for brevity. 

\end{remark}

\paragraph{Proof sketch and technical novelty} Our proof of Theorem \ref{thm:TDC-error} considers the convergence of the vector
\begin{align*}
\bm{x}_{t}\coloneqq\left[\begin{array}{c}
\tilde{\bm{\theta}}_{t} - \tilde{\bm{\theta}}^\star\\
\varkappa\left[\tilde{\bm{w}}_t + \tilde{\bm{\Sigma}}^{-1}\tilde{\bm{A}}(\tilde{\bm{\theta}}_{t} - \tilde{\bm{\theta}}^\star)\right]
\end{array}\right],
\end{align*}
where $\varkappa \in (0,1)$ is a constant to be specified. Firstly, we identify the conditions on $\alpha, \beta$ and $\varkappa$ that guarantee the exponential convergence of $\bm{x}_t$ in the noise-free scenario; after this, we again combine the matrix Freedman's inequality and an induction argument to bound the norm of $\bm{x}_t$ for $i.i.d.$ samples with high probability by a time-invariant value in terms of the step size. And finally, with a careful choice of $\alpha$, $\beta$ and $\varkappa$, we establish the finite-sample guarantee as is shown in Theorem \ref{thm:TDC-error}. The main technical novelty of this proof lies in the construction of the vector $\bm{x}_t$ and the choice of the paramter $\varkappa$.

Next, we state a direct consequence of Theorem \ref{thm:TDC-error} below, which gives an upper bound for the sample complexity of TDC.
\begin{corollary}\label{cor:TDC-sample-complexity}
% Recall the definition of $\widetilde{\bm{\theta}^\star}$ in \eqref{eq:MSPBE_new}. 
There exists a universal constant $\widetilde{c}$ such that, for any $\delta \in (0,1)$ and $\varepsilon \in (0,\|\widetilde{\bm{\theta}}^\star\|_{\widetilde{\bm{\Sigma}}})$, the TDC estimator $\widetilde{\bm{\theta}}_T$ at iterate $T$ satisfies the bound  
\begin{align}
\|\bm{V}_{\widetilde{\bm{\theta}}_T} - \bm{V}_{\widetilde{\bm{\theta}}^\star}\|_{\bm{D}_{\mu_b}} = \|\widetilde{\bm{\theta}}_T - \widetilde{\bm{\theta}}^\star\|_{\widetilde{\bm{\Sigma}}} \leq \varepsilon
\end{align}
with probability exceeding $1-\delta$, provided that 
\begin{align}
T \geq \widetilde{c} \frac{\rho_{\max}^7}{\lambda_1^4 \lambda_2^3 }\frac{\|\widetilde{\bm{\Sigma}}\|^2}{\varepsilon^2} (1+\|\widetilde{\bm{\theta}}^\star\|_{\widetilde{\bm{\Sigma}}}^2) \log \Big(\frac{d\|\widetilde{\bm{\theta}}^\star\|_{\widetilde{\bm{\Sigma}}}}{\delta}\Big),
\end{align}
and the stepsize parameters $\alpha_t$ and $\beta_t$ are chosen as 
\begin{align}
\alpha_t \asymp \frac{\log \|\widetilde{\bm{\theta}}^\star\|_{\widetilde{\bm{\Sigma}}}}{T\lambda_1}, \qquad
\beta_t = 128 \frac{\rho_{\max}^2 (1+\lambda_{\Sigma}\rho_{\max})}{\lambda_1 \lambda_2}\alpha.
\end{align}
\end{corollary}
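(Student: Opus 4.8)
The plan is to instantiate Theorem~\ref{thm:TDC-error} with the stepsizes prescribed in the statement of Corollary~\ref{cor:TDC-sample-complexity} and then solve the resulting error bound for $T$. First I would record the consequences of the stepsize choices: the ratio condition in \eqref{eq:TDC-conditions} forces $\beta = 128\,\frac{\rho_{\max}^2(1+\lambda_{\Sigma}\rho_{\max})}{\lambda_1\lambda_2}\,\alpha$, while the prescribed $\alpha_t \asymp \frac{\log\|\widetilde{\bm{\theta}}^\star\|_{\widetilde{\bm{\Sigma}}}}{T\lambda_1}$ gives $\beta \asymp \frac{\rho_{\max}^2(1+\lambda_{\Sigma}\rho_{\max})\log\|\widetilde{\bm{\theta}}^\star\|_{\widetilde{\bm{\Sigma}}}}{\lambda_1^2\lambda_2\,T}$. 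Substituting this value of $\beta$ into the right-hand side of \eqref{eqn:sigma-error-tdc} yields
\[
\|\widetilde{\bm{\theta}}_T - \widetilde{\bm{\theta}}^\star\|_{\widetilde{\bm{\Sigma}}} \lesssim \frac{\rho_{\max}^2\|\widetilde{\bm{\Sigma}}\|}{\lambda_1}(\|\widetilde{\bm{\theta}}^\star\|_{\widetilde{\bm{\Sigma}}}+2)\sqrt{\frac{\rho_{\max}^2(1+\lambda_{\Sigma}\rho_{\max})}{\lambda_1^2\lambda_2^2\,T}\,\log\tfrac{2dT}{\delta}\,\log\|\widetilde{\bm{\theta}}^\star\|_{\widetilde{\bm{\Sigma}}}} .
\]
Requiring the right-hand side to be at most $\varepsilon$ and solving for $T$ produces
\[
T \gtrsim \frac{\rho_{\max}^6(1+\lambda_{\Sigma}\rho_{\max})\,\|\widetilde{\bm{\Sigma}}\|^2\,(1+\|\widetilde{\bm{\theta}}^\star\|_{\widetilde{\bm{\Sigma}}}^2)}{\lambda_1^4\lambda_2^2\,\varepsilon^2}\,\log\Big(\tfrac{d\|\widetilde{\bm{\theta}}^\star\|_{\widetilde{\bm{\Sigma}}}}{\delta}\Big),
\]
where I absorb the $\log(2dT/\delta)$ and $\log\|\widetilde{\bm{\theta}}^\star\|_{\widetilde{\bm{\Sigma}}}$ terms into a single logarithmic factor and use $(\|\widetilde{\bm{\theta}}^\star\|_{\widetilde{\bm{\Sigma}}}+2)^2 \lesssim 1+\|\widetilde{\bm{\theta}}^\star\|_{\widetilde{\bm{\Sigma}}}^2$.

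Next I would simplify the parameter dependence to match the stated bound. Since $\lambda_{\Sigma} = 1/\lambda_2$, and since $\rho_{\max}\ge 1$ together with $\lambda_2 \le \|\widetilde{\bm{\Sigma}}\| \le 1$ (the latter because $\widetilde{\bm{\Sigma}}=\mathbb{E}_{\mu_{\mathsf{b}}}[\bm{\phi}(s)\bm{\phi}(s)^\top]$ and $\|\bm{\phi}(s)\|_2\le 1$ by Assumption~\ref{assumption:Phi}) yields $\lambda_{\Sigma}\rho_{\max} = \rho_{\max}/\lambda_2 \ge 1$, we have $1+\lambda_{\Sigma}\rho_{\max}\asymp \rho_{\max}/\lambda_2$. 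Plugging this in raises the exponent of $\rho_{\max}$ to $7$ and the exponent of $\lambda_2$ to $3$, giving exactly
\[
T \ge \widetilde{c}\,\frac{\rho_{\max}^7}{\lambda_1^4\lambda_2^3}\,\frac{\|\widetilde{\bm{\Sigma}}\|^2}{\varepsilon^2}\,(1+\|\widetilde{\bm{\theta}}^\star\|_{\widetilde{\bm{\Sigma}}}^2)\,\log\Big(\tfrac{d\|\widetilde{\bm{\theta}}^\star\|_{\widetilde{\bm{\Sigma}}}}{\delta}\Big).
\]
The identity $\|\bm{V}_{\widetilde{\bm{\theta}}_T}-\bm{V}_{\widetilde{\bm{\theta}}^\star}\|_{\bm{D}_{\mu_b}} = \|\widetilde{\bm{\theta}}_T - \widetilde{\bm{\theta}}^\star\|_{\widetilde{\bm{\Sigma}}}$ then follows directly from $\bm{V}_{\bm{\theta}} = \bm{\Phi}\bm{\theta}$ and $\widetilde{\bm{\Sigma}} = \bm{\Phi}^\top\bm{D}_{\mu_b}\bm{\Phi}$.

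Finally, I would verify that the triple $(T,\alpha,\beta)$ above satisfies every precondition of Theorem~\ref{thm:TDC-error} in \eqref{eq:TDC-conditions}: the ratio $\alpha/\beta$ holds by construction; the stepsize ceiling $\alpha < \bigl(\lambda_1\lambda_{\Sigma}^2\|\widetilde{\bm{\Sigma}}\|\log\tfrac{2dT}{\delta}\bigr)^{-1}$ holds because the chosen $T$ makes $\alpha \asymp \frac{\log\|\widetilde{\bm{\theta}}^\star\|_{\widetilde{\bm{\Sigma}}}}{T\lambda_1}$ small enough; and the burn-in requirement $T \ge \widetilde{c}_1\frac{\log\|\widetilde{\bm{\theta}}^\star\|_2}{\alpha\lambda_1}\log\max\{\sqrt{\widetilde{\kappa}},\,\|\widetilde{\bm{\theta}}^\star\|_{\widetilde{\bm{\Sigma}}}\sqrt{\alpha\lambda_1/\log\tfrac{2dT}{\delta}}\}$ becomes, after substituting $\alpha\lambda_1 \asymp \frac{\log\|\widetilde{\bm{\theta}}^\star\|_{\widetilde{\bm{\Sigma}}}}{T}$, a self-referential inequality in $T$ that is met once $T$ dominates the residual polylogarithmic factors, which it does under the displayed lower bound. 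The main obstacle is exactly this last step: because $\alpha$ is tied to $T$, the burn-in condition is mildly circular, so one must check that the logarithmic factors produced when $\alpha$ is eliminated are genuinely lower order than the $1/\varepsilon^2$ term and that the universal constants can be fixed consistently; the remaining algebra leading to the $\rho_{\max}^7/(\lambda_1^4\lambda_2^3)$ scaling is routine.
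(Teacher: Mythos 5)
Your proposal is correct and follows essentially the route the paper intends: Corollary~\ref{cor:TDC-sample-complexity} is presented as a direct consequence of Theorem~\ref{thm:TDC-error}, and your computation --- substitute $\beta = 128\,\rho_{\max}^2(1+\lambda_{\Sigma}\rho_{\max})\alpha/(\lambda_1\lambda_2)$ and $\alpha \asymp \log\|\widetilde{\bm{\theta}}^\star\|_{\widetilde{\bm{\Sigma}}}/(T\lambda_1)$ into \eqref{eqn:sigma-error-tdc}, solve for $T$, and use $1+\lambda_{\Sigma}\rho_{\max}\asymp \rho_{\max}/\lambda_2$ (valid since $\lambda_2\le\|\widetilde{\bm{\Sigma}}\|\le 1$ and $\rho_{\max}\ge 1$) to arrive at the $\rho_{\max}^7/(\lambda_1^4\lambda_2^3)$ scaling --- is exactly that instantiation. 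You also rightly flag the only delicate point, namely that with $\alpha\propto 1/T$ the burn-in condition of \eqref{eq:TDC-conditions} reduces to a constraint on the implicit constant in $\alpha_t\asymp\cdot$ rather than on the size of $T$ itself, a subtlety the paper glosses over in the same way.
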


\paragraph{Comparisons to other sample complexity bounds for TDC} 

Let us compare our results in Theorem \ref{thm:TDC-error} and Corollary \ref{cor:TDC-sample-complexity} with the state-of-the-art sample complexities for the TDC algorithm.
The result that is most comparable to ours is obtained by \cite{dalal2020tale}, where a projected version of TDC is considered with decaying stepsizes $\alpha_t = O(t^{-\alpha})$ and $\beta_t = O(t^{-\beta})$ for $0 < \beta < \alpha < 1$. The sample complexity therein, with high-probability convergence guarantee at tolerance level $\varepsilon$, is of order $O \left( \left(\frac{1}{\varepsilon}\right)^{2\alpha} \right)$  without explicit dependence on the problem-related parameters. If one chooses $\alpha = 1-\delta$ with $\delta$ sufficiently small, their sample complexity bound can be improved, but it cannot achieve the rate $\Theta\left(\frac{1}{\varepsilon^2}\right)$. Regarding finite-sample  in-expectation error control for TDC, the best result so far is developed by \cite{kaledin2020finite}, who shows that with the choice of $\alpha_t, \beta_t \asymp \frac{1}{T}$, the sample complexity for TDC with tolerance level $\varepsilon$ can be upper bounded by $O\left(\frac{1}{\varepsilon^2}\right)$. Our result in Corollary \ref{cor:TDC-sample-complexity} is the first sample complexity for the original TDC algorithm that guarantees high-probability convergence and achieves the minimax-optimal rate of $O\left(\frac{1}{\varepsilon^2}\right)$; it is also noteworthy that we display an explicit dependence on problem-related parameters. The key to achieving this again lies in our combination of the matrix Freedman's inequality with an induction argument; the details of the proof is postponed to Section \ref{sec:pf-TDC}.
We also remark that \cite{xu2021sample} considers a variant of TDC where $\widetilde{\bm{\theta}}_t$ is updated \emph{not} with every sample tuple $(s_t,a_t,s_t')$, but with every batch of samples, and obtains a sample complexity of order $O(\frac{1}{\varepsilon^2}\log(\frac{1}{\varepsilon})).$

\section{Numerical experiments}

In this section, we corroborate our theoretical results with illustrative numerical experiments. In what follows, we will consider the on-policy and off-policy settings respectively.

\subsection{On-policy evaluation: averaged TD learning}

In the on-policy setting, we will investigate the empirical performance of the averaged TD learning algorithm. 

\paragraph{MDP setting} 
We consider a member of the family of MDPs constructed in proof of Theorem \ref{thm:minimax}, which provides a minimax lower bound. This family of MDPs is designed to be difficult to distinguish between each other, and hence, is a natural instance for evaluating the performance of TD learning. 
For construction details of this MDP, we refer the reader to Appendix \ref{sec:pf-lb}. 
In these simulations, we set $|\mathcal{S}| = 10$, $\gamma = 0.2$, and choose the stepsize of TD as $\eta = 0.01$. We examine both the original and the averaged TD iterates when the feature dimension equals to $d = 3$ and $d = 9$. Under each setting, $100$ independent trials for $T = 10^5$ iterations were conducted, and we report the mean value as well as the $95\%$ confidence band for the estimation error $\|\bm{\theta}_t - \bm{\theta}^\star\|_{\bm{\Sigma}}$ for TD and $\|\overline{\bm{\theta}}_t - \bm{\theta}^\star\|_{\bm{\Sigma}}$ for averaged TD.

\paragraph{Experimental results} 
Figure~\ref{fig:tda_minimax}(a) compares the performances of TD and averaged TD of an MDP with feature dimension $d=3$. While the estimation error of TD levels off at around $5 \times 10^{-3}$ after $10^3$ iterations, the error of averaged TD keeps decreasing to below $5 \times 10^{-4}$ when $T = 10^5$. 
In addition, Figure~\ref{fig:tda_minimax}(b) demonstrates the estimation error of averaged TD for MDPs with feature dimension $d=3$ and $d=9$. The slopes of these curves on the right part of this log-log plot match our theoretical prediction: the estimation error decreases in the order of $O(t^{-1/2})$. Moreover, the difference between the two curves indicates that the lower-dimension problem enjoys a faster convergence rate.

\begin{figure*}[t]
\begin{center}
\begin{tabular}{cc}
\includegraphics[width = 0.48\textwidth]{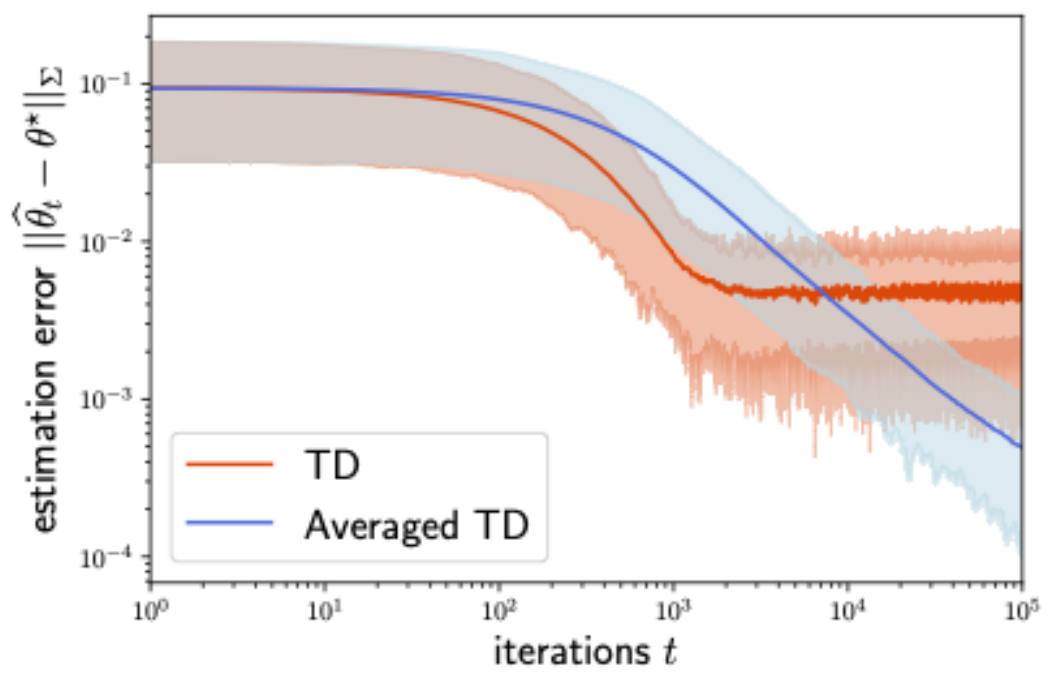} & \includegraphics[width = 0.48\textwidth]{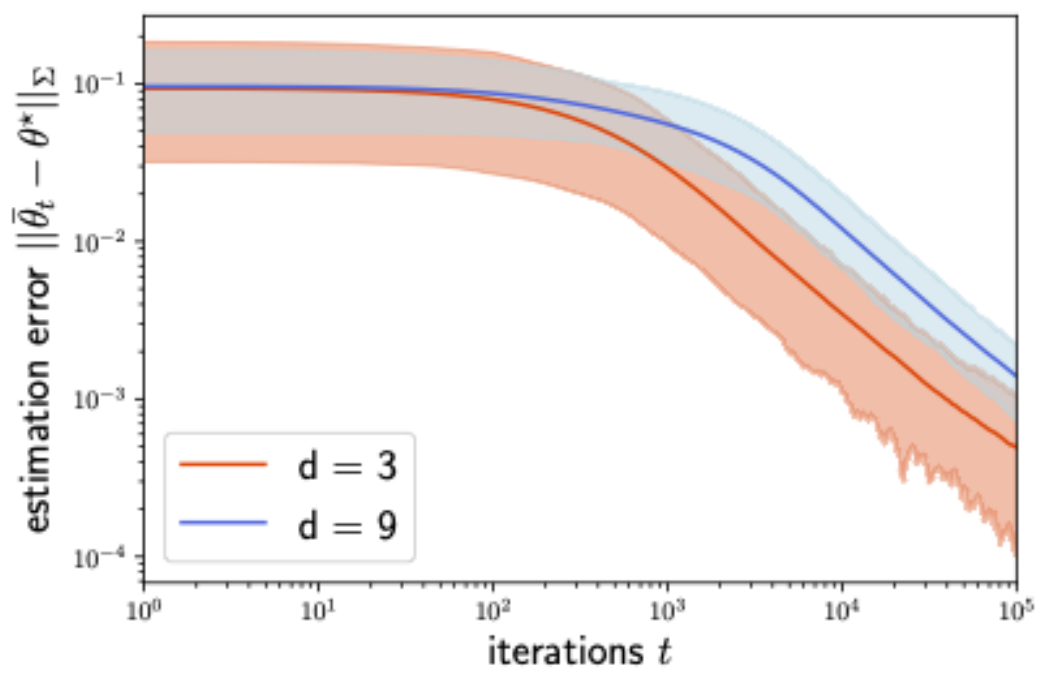}\\
\end{tabular}
\end{center}
\caption{(a) Comparisons of the estimation error of TD and averaged TD when $d=3$.  
% It can be seen that while the error for TD is stalked at around $5 \times 10^{-3}$ after $10^3$ iterations, the error for averaged TD keeps decreasing to below $5 \times 10^{-4}$. 
(b) Comparisons of the estimation error for averaged TD with $d=3$ and $d=9$. 
Two curves in the middle represent their average errors, while the shaded areas represent the $95\%$ confidence bands. 
% It can be seen that averaged TD converges quicker for the lower-dimension problem ($d=3$).
}
\label{fig:tda_minimax}
\end{figure*}

\subsection{Off-policy evaluation: TDC learning}

In order to demonstrate the efficiency of TDC for off-policy evaluation, we compare its performance with that of the off-policy TD learning on \emph{Baird's counterexample} \citep{baird1995residual}. 

\paragraph{Baird's counterexample}
We start by introducing Baird's counterexample, which was constructed to illustrate the instability of TD learning in the off-policy regime. Consider an MDP $(\mathcal{S},\mathcal{A},\mathcal{P},r,\gamma)$, with the discount factor $\gamma = 0.9$, state space $\mathcal{S} = [7]$, action space $\mathcal{A} = \{0,1\}$ and the reward function $r=0$ for all states and actions. 
The action $a=1$ transitions any initial state $s$ to $s'=7$, while the action $a=0$ transitions any initial state $s$ to $s' \in [6]$ with the same probability. 
The target policy $\pi$ selects $a=1$ at any given state $s$, while the behavior policy $\pi_{\mathsf{b}}$ takes $a = 0$ with probability $\frac{6}{7}$ and $a=1$ with probability $\frac{1}{7}$. 
Formally, the MDP satisfies the equations (see also Figure \ref{fig:Baird-MDP} for an illustration)
\begin{align*}
&\mathcal{P}(s'|s,1) = \ind\{s' = 7\}, \quad \forall s \in [7]; \\ 
&\mathcal{P}(s'|s,0) = \frac{1}{6} \ind\{1 \leq s' \leq 6\}, \quad \forall s \in [7];\\ 
& \pi(1|s) = 1, \quad \forall s \in [7];\\ 
& \pi_{\mathsf{b}}(0|s) = \frac{6}{7},  \quad \forall s \in [7]; \\ 
& \pi_{\mathsf{b}}(1|s) = \frac{1}{7}, \quad \forall s \in [7].
\end{align*}

\begin{figure*}[t]
\centering
\includegraphics[width = 0.6 \textwidth]{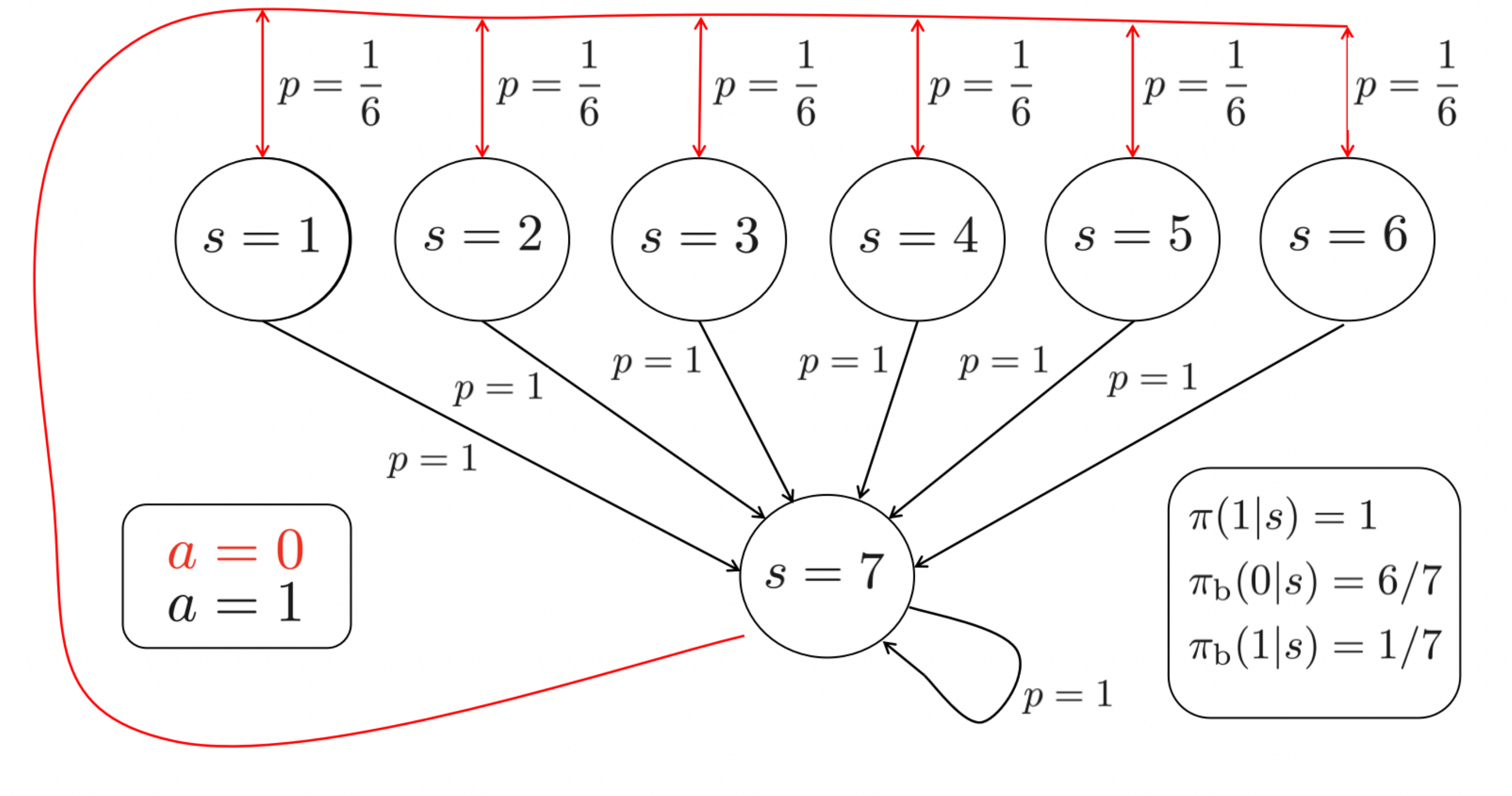}
\caption{Baird's counterexample. Taking action $a=1$ always leads to state $s = 7$, while taking $a = 0$ leads to one of the other six states with equal probability. The reward is set to be always zero.}
\label{fig:Baird-MDP}
\end{figure*}
% \begin{comment}
% \begin{figure}
% \centering
% \begin{subfigure}[b]{0.40\textwidth}\label{fig:Baird-MDP}
% \includegraphics[width = \textwidth]{Baird_MDP.png}
% \caption{Baird's counterexample. The action $a=1$ always results in the seventh state, while $a= 0$ results in one of the other six states, each with equal probability. The reward is always zero.}
% \end{subfigure}
% \hfill
% \begin{subfigure}[b]{0.55\textwidth}\label{fig:Baird-errs}
% \includegraphics[width = \textwidth]{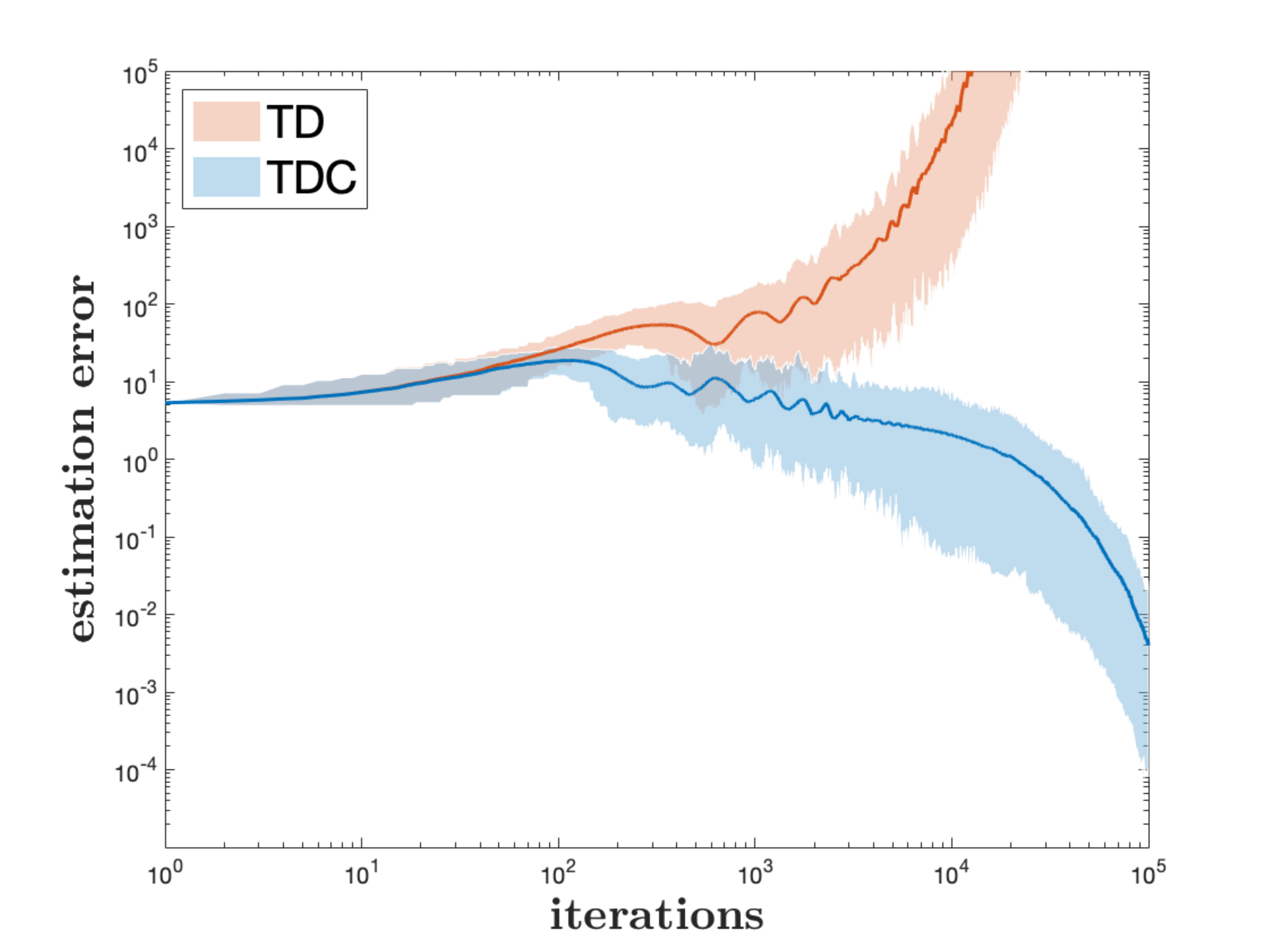}
% \caption{Change of estimation error with iteration $t$ for off-policy TD (red, $\eta = 0.02$) and TDC (blue, $\alpha = 0.02$, $\beta = 0.002$). Lines represent average errors, while shade represents $95\%$ confidence band.}
% \end{subfigure}
% \end{figure}
% \end{comment}
In this example, it is easy to check that the stationary distribution corresponding to the behavior policy $\pi_{\mathsf{b}}$ is the uniform distribution among all states, and that the value function is $0$ for all states. We apply the following linear approximation of the value function: for $\bm{\theta} \in \mathbb{R}^8$, 
\begin{align}
\label{eq:Baird-LFA}
\begin{cases}
& V(i)= 2\theta_i + \theta_8, \quad \text{for} \quad 1 \leq i \leq 6;\\
& V(7)  = \theta_7 + 2 \theta_8.
\end{cases}
\end{align}
We remark that with this approximation, the feature space has a higher dimension ($d=8$) than the state space ($|\mathcal{S}| = 7$). Consequently, the optimal estimator $\widetilde{\bm{\theta}}^\star$ is not unique, and instead can be any $\bm{\theta} \in \mathbb{R}^8$ such that the estimated value vector is $\bm{V}_{\bm{\theta}} = \bm{0}$. Technically, this issue can be circumvented by creating several identical states as state $s=7$; we omit this detail here for simplicity, since we use $\|\widehat{\bm{\theta}}_t - \widetilde{\bm{\theta}}^\star\|_{\widetilde{\bm{\Sigma}}} = \|\bm{V}_{\widehat{\bm{\theta}}_t} - \bm{V}^\star\|_{\bm{D}_{\mu_{\mathsf{b}}}}$ to evaluate the estimation error, and our experimental results would remain the same. 
% Notice here that since we are comparing two different algorithms, we will uniformly use $\widehat{\bm{\theta}}_t$ to represent the estimator after $t$ iterations.

\paragraph{Experimental results} 
We perform $100$ independent trials for both off-policy averaged TD learning (with stepsize $\eta = 0.02$) and TDC (with stepsizes $\alpha = 0.02,\beta = 0.002$), starting at $\widehat{\bm{\theta}}_0 = (1,1,1,1,1,1,10,1)^\top$, as suggested by \cite{baird1995residual}. 
In these experiments, we set $\alpha = \eta$ to ensure that the stepsize for $\bm{\theta}$-updates are the same between the two algorithms. 
Figure \ref{fig:Baird-errs} demonstrates how the estimation error $\|\widehat{\bm{\theta}}_t - \widetilde{\bm{\theta}}^\star\|_{\widetilde{\bm{\Sigma}}}$ changes as two algorithms execute. As can be seen in this figure, TDC converges to an error of below $0.01$ after $T = 10^5$ iterations
while the off-policy averaged TD diverges to infinity. 

\begin{figure*}[t]
\centering
\includegraphics[width = 0.48\textwidth]{}
\caption{Performances of off-policy averaged TD (red, $\eta = 0.02$) and TDC (blue, $\alpha = 0.02$, $\beta = 0.002$). Two curves in the middle represent their average errors, while the shaded areas correspond to $95\%$ confidence bands.}
\label{fig:Baird-errs}
\end{figure*}

\section{Proof of Theorem~\ref{thm:ind-td} (TD learning)}\label{sec:proof-TD}

\label{sec:proof-thm-ind-td}

For the sake of convenience, let us introduce the following notation 
\begin{align}
\Del{t}\defn\bm{\bmtheta}_{t}-\thetastar,\qquad\text{and}\qquad\Delbar{t}\defn\thetaavg{t}-\thetastar .
\end{align}

\paragraph{Step 1: a recursive relation}

To understand the convergence behavior of $\Delbar{t}$, the idea
is to first look at the following decomposition 
\begin{align*}
\Del{t+1} & =\bmtheta_{t+1}-\thetastar=\bmtheta_{t}-\thetastar-\eta(\bm{A}_{t}\bm{\bmtheta}_{t}-\bm{b}_{t})\\
 & =\bmtheta_{t}-\thetastar-\eta \big( \bm{A}_{t}\bm{\bmtheta}_{t}-\bm{b}_{t}-(\bm{A}\thetastar-\bm{b}) \big)\\
 & =\bmtheta_{t}-\thetastar-\eta\big(\bm{A}(\bmtheta_{t}-\thetastar)+(\bm{A}_{t}-\bm{A})\bmtheta_{t}-(\bm{b}_{t}-\bm{b})\big)\\
 & =(\Ind-\eta\bm{A})\Del{t}-\eta\bm{\xi}_{t},
\end{align*}
where we define 
\begin{align} 
\bm{\xi}_{t}\defn(\bm{A}_{t}-\bm{A})\bmtheta_{t}-(\bm{b}_{t}-\bm{b}).\label{eqn:defn-xi}
\end{align}
Here, the second line invokes the update rule \eqref{eq:TD-update-rule}
and the identity $\bm{A}\thetastar=\bm{b}$, whereas the third line is
obtained by properly rearranging terms. Applying the above relation
recursively, one arrives at 
\begin{align}
\Del{t}&=(\Ind-\eta\bm{A})\Del{t-1}-\eta\bm{\xi}_{t-1}\nonumber \\ 
&=(\Ind-\eta\bm{A})^{t}\Del{0}-\eta\sum_{i=0}^{t-1}(\Ind-\eta\bm{A})^{t-i-1}\bm{\xi}_{i}.\label{eqn:iterate-delt}
\end{align}

\paragraph{Step 2: a crude bound on $\|\Del{t}\|_{\bm{\Sigma}}$}

We aim to establish, via an induction argument, that with probability
at least $1-\delta$,
\begin{align}
\|\Del{t}\|_{\bm{\Sigma}}\leq32\sqrt{\frac{\eta\kappa\log\frac{2dT}{\delta}}{1-\gamma}}(1+\|\bmtheta^{\star}\|_{\bm{\Sigma}})+2\sqrt{\kappa}\|\Del{0}\|_{\bm{\Sigma}}\eqqcolon R_{0}
	\label{eq:bound_delta}
\end{align}
simultaneously over all $0\leq t\leq T$, 
as long as $0<\eta_t\le\frac{c_{3}(1-\gamma)}{\kappa\log\frac{dT}{\delta}}$
for some sufficiently small constant $c_{3}>0$. As a side remark, this boundedness
property saves us from enforcing additional projection steps as adopted in \cite{bhandari2018finite}. 

To start with, note that the inequality \eqref{eq:bound_delta} holds
trivially for the base case with $t=0$, given that $\kappa\geq1$.
Next, suppose that the hypothesis \eqref{eq:bound_delta} holds for
$\Del{0},\ldots,\Del{t-1}$, and we intend to establish it for $\Del{t}$
as well. Towards this end, invoking the decomposition~\eqref{eqn:iterate-delt}
and the triangle inequality yields 
\begin{align}
\lsigma{\Del{t}}\leq\big\|(\Ind-\eta\bm{A})^{t}\Del{0}\big\|_{\bm{\Sigma}}+\eta\,\lsigmaBig{\sum_{i=0}^{t-1}(\Ind-\eta\bm{A})^{t-i-1}\bm{\xi}_{i}}.\label{eqn:induction}
\end{align}
As for the first term of \eqref{eqn:induction}, it is seen that 
\begin{align}
&\big\|(\Ind-\eta\bm{A})^{t}\Del{0}\big\|_{\bm{\Sigma}} \nonumber \\ 
& =\big\|\bmSigma^{1/2}(\Ind-\eta\bm{A})^{t}\bmSigma^{-1/2}\bmSigma^{1/2}\Del{0}\big\|_{2} \nonumber \\ 
&\leq\big\|\bmSigma^{1/2}\big\|\cdot\big\|\bmSigma^{-1/2}\big\|\cdot\big\|\Ind-\eta\bm{A}\big\|^{t}\cdot\big\|\bmSigma^{1/2}\Del{0}\big\|_{2}\nonumber \\
 & \leq\sqrt{\kappa}\left(1-\frac{1}{2}\eta(1-\gamma)\lambdamin\right)^{t}\lsigma{\Del{0}}\leq\sqrt{\kappa}\lsigma{\Del{0}},\label{eqn:induction-termone}
\end{align}
where the last inequality arises from the definition of $\kappa$
and the property~\eqref{eq:I-eta-A-spectral-norm-bound} (with the
restriction that $\eta\leq(1-\gamma)/(4\|\bm{\Sigma}\|)$). When it
comes to the second term of \eqref{eqn:induction}, the following
lemma comes in handy. 

\begin{lemma}\label{lemma:exp-decay-sum} Fix any quantity $R>0$ and, for each $0\leq i\leq T-1$, define
the auxiliary random vector
\begin{align}
\bm{\widetilde{\xi}}_{i}\defn\bm{\xi}_{i}\,\ind\{\mathcal{H}_{i}\},\qquad\text{where}\quad\mathcal{H}_{i}\defn\Big\{\|\bm{\Delta}_{i}\|_{\bm{\Sigma}}\leq R\Big\}.\label{eqn:defn-xi-tilde}
\end{align}
Then,  with probability
at least $1-\delta/T$, simultaneously over the indices $(l,u,t)$ such that $0\leq l\leq u\leq t-1 < T$ it holds that
\begin{align*}
&\Big\|\sum_{i=l}^{u}(\Ind-\eta\bm{A})^{t-i-1}\bm{\widetilde{\xi}}_{i}\Big\|_{\bm{\Sigma}}\\
	&\leq 16 \Big(1-\frac{1}{2}\eta(1-\gamma)\lambda_{\min}(\bm{\Sigma}) \Big)^{t-u-1} \\ 
	&\cdot (\|\bm{\theta}^{\star}\|_{\bm{\Sigma}}+R+1)\sqrt{\frac{\kappa\log\frac{2dT}{\delta}}{\eta(1-\gamma)}},
\end{align*}
provided that $0<\eta\leq\frac{1-\gamma}{\kappa\log\frac{2dT}{\delta}}$. 
\end{lemma}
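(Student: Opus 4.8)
The plan is to treat, for each fixed admissible triple $(l,u,t)$, the partial sum $\sum_{i=l}^{u}(\Ind-\eta\bm{A})^{t-i-1}\bm{\widetilde{\xi}}_{i}$ as a $d$-dimensional martingale, control it by the matrix Freedman inequality, and then union bound over the at most $T^{3}$ choices of $(l,u,t)$. Throughout put $q\defn1-\tfrac12\eta(1-\gamma)\lambdamin$ for the contraction factor in \eqref{eq:I-eta-A-spectral-norm-bound}, and let $\mathcal{F}_{i}$ denote the $\sigma$-algebra generated by $\{(s_{j},s_{j}')\}_{j\le i}$. \emph{Martingale structure.} Fix $0\le l\le u\le t-1<T$ and set $\bm{Y}_{i}\defn\bmSigma^{1/2}(\Ind-\eta\bm{A})^{t-i-1}\bm{\widetilde{\xi}}_{i}$ for $l\le i\le u$, so that $\bigl\|\sum_{i}(\Ind-\eta\bm{A})^{t-i-1}\bm{\widetilde{\xi}}_{i}\bigr\|_{\bmSigma}=\bigl\|\sum_{i}\bm{Y}_{i}\bigr\|_{2}$. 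Both $\bmtheta_{i}$ and the event $\mathcal{H}_{i}$ of \eqref{eqn:defn-xi-tilde} depend only on $\{(s_{j},s_{j}')\}_{j\le i-1}$ and are hence $\mathcal{F}_{i-1}$-measurable, whereas $(s_{i},s_{i}')$ is drawn independently of $\mathcal{F}_{i-1}$ with $\mathbb{E}[\bm{A}_{i}]=\bm{A}$ and $\mathbb{E}[\bm{b}_{i}]=\bm{b}$; therefore $\mathbb{E}[\bm{\widetilde{\xi}}_{i}\mid\mathcal{F}_{i-1}]=\ind\{\mathcal{H}_{i}\}\bigl((\mathbb{E}[\bm{A}_{i}]-\bm{A})\bmtheta_{i}-(\mathbb{E}[\bm{b}_{i}]-\bm{b})\bigr)=\bm{0}$, so $\{\bm{Y}_{i}\}_{i=l}^{u}$ is a martingale difference sequence with respect to $\{\mathcal{F}_{i}\}_{i\ge l}$.

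\emph{Almost-sure and variance bounds.} On $\mathcal{H}_{i}$ one has $\|\bmtheta_{i}\|_{\bmSigma}\le\|\thetastar\|_{\bmSigma}+R$. For the almost-sure bound, Assumption~\ref{assumption:Phi} gives $\|\bm{A}_{i}-\bm{A}\|\le4$ and $\|\bm{b}_{i}-\bm{b}\|_{2}\le2$, hence on $\mathcal{H}_{i}$, $\|\bm{\widetilde{\xi}}_{i}\|_{2}\lesssim1+\|\bmtheta_{i}\|_{2}\lesssim\lambdamin^{-1/2}(\|\thetastar\|_{\bmSigma}+R+1)$; combining with $\|\bmSigma^{1/2}\|\le\lambdamax^{1/2}$ and $\|(\Ind-\eta\bm{A})^{t-i-1}\|\le q^{t-i-1}$ from \eqref{eq:I-eta-A-spectral-norm-bound} yields $\|\bm{Y}_{i}\|_{2}\le L\,q^{t-i-1}$ with $L\asymp\sqrt{\kappa}\,(\|\thetastar\|_{\bmSigma}+R+1)$. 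For the quadratic variation we must use the \emph{conditional} second moment rather than this crude bound: writing $\bm{A}_{i}\bmtheta_{i}=\bm{\phi}(s_{i})\,c_{i}$ with $c_{i}\defn V_{\bmtheta_{i}}(s_{i})-\gamma V_{\bmtheta_{i}}(s_{i}')$ and using that both $s_{i}$ and, by stationarity of $\mu$ under $\bm{P}^{\pi}$, $s_{i}'$ are marginally $\mu$-distributed, we get $\mathbb{E}[c_{i}^{2}\mid\mathcal{F}_{i-1}]\le4\|\bmtheta_{i}\|_{\bmSigma}^{2}$, whence $\mathbb{E}[\|\bm{\widetilde{\xi}}_{i}\|_{2}^{2}\mid\mathcal{F}_{i-1}]\lesssim\ind\{\mathcal{H}_{i}\}(\|\thetastar\|_{\bmSigma}+R+1)^{2}$ with \emph{no} $\lambdamin^{-1}$ factor. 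Since $i\le u$ implies $q^{2(t-i-1)}\le q^{2(t-u-1)}q^{2(u-i)}$ and $\sum_{j\ge0}q^{2j}=(1-q^{2})^{-1}\le 2/(\eta(1-\gamma)\lambdamin)$, summing gives $\bigl\|\sum_{i}\mathbb{E}[\bm{Y}_{i}\bm{Y}_{i}^{\top}\mid\mathcal{F}_{i-1}]\bigr\|\le v$ and $\sum_{i}\mathbb{E}[\|\bm{Y}_{i}\|_{2}^{2}\mid\mathcal{F}_{i-1}]\le v$ with $v\asymp q^{2(t-u-1)}\,\dfrac{\kappa\,(\|\thetastar\|_{\bmSigma}+R+1)^{2}}{\eta(1-\gamma)}$.

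\emph{Freedman and the union bound.} Applying the matrix Freedman inequality to the Hermitian dilations of $\bm{Y}_{l},\ldots,\bm{Y}_{u}$, for every $s>0$
\[
\mathbb{P}\Bigl(\bigl\|\textstyle\sum_{i=l}^{u}\bm{Y}_{i}\bigr\|_{2}\ge s\Bigr)\le 2d\exp\!\Bigl(-\frac{s^{2}/2}{v+L\,q^{t-u-1}s/3}\Bigr),
\]
where we used $\max_{l\le i\le u}\|\bm{Y}_{i}\|_{2}\le L\,q^{t-u-1}$. Choosing $s$ to be a suitable constant multiple of $q^{t-u-1}(\|\thetastar\|_{\bmSigma}+R+1)\sqrt{\kappa\log(2dT/\delta)/(\eta(1-\gamma))}$ and invoking the step-size restriction $\eta\le(1-\gamma)/(\kappa\log\tfrac{2dT}{\delta})$ — which forces the sub-Gaussian term $\sqrt{v\log(2dT/\delta)}$ to dominate the sub-exponential term $L\,q^{t-u-1}\log(2dT/\delta)$ — makes the per-triple failure probability at most $\delta/T^{4}$. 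A union bound over the at most $T^{3}$ admissible triples then gives the claimed inequality uniformly in $(l,u,t)$ with probability at least $1-\delta/T$; the extra $\log T$ from the union bound is absorbed into $\log\tfrac{2dT}{\delta}$ at the price of the constant $16$, so that $\|\bm{\widetilde\xi}_{i}\|_{2}$ being replaced by $\bm{\xi}_{i}$ on $\mathcal H_i$ later causes no loss.

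\emph{Main obstacle.} The delicate point is precisely the variance estimate in the second step. The naive almost-sure bound on $\|\bm{\widetilde{\xi}}_{i}\|_{2}$ carries a factor $\lambdamin^{-1/2}$, which would inflate $v$ — and hence the final bound — by $\kappa$, breaking the self-consistent induction \eqref{eq:bound_delta} that Lemma~\ref{lemma:exp-decay-sum} is designed to feed. One must instead bound the conditional second moment of $\bm{\widetilde\xi}_{i}$ and exploit the fact that, under the generative model, $s_{i}'$ is marginally $\mu$-distributed, converting $\mathbb{E}[c_{i}^{2}\mid\mathcal{F}_{i-1}]$ into $\|\bmtheta_{i}\|_{\bmSigma}^{2}$; one must then track the geometric factors $q^{t-i-1}$ carefully, factoring out $q^{2(t-u-1)}$ so that the exponent $t-u-1$ (not $t-i-1$) announced in the statement emerges. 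The remaining steps — the martingale property and the Freedman/union-bound bookkeeping — are routine.
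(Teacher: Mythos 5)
Your proposal is correct and follows essentially the same route as the paper's proof: view $\{(\Ind-\eta\bm{A})^{t-i-1}\widetilde{\bm{\xi}}_{i}\}$ as a martingale difference sequence, bound the predictable quadratic variation by passing to conditional second moments and using that $s_i$ and $s_i'$ are both marginally $\mu$-distributed (so the variance scales with $\|\bm{\theta}_i\|_{\bm{\Sigma}}^2$ rather than picking up an extra $\kappa$), factor out the geometric weight $q^{t-u-1}$, and apply matrix Freedman with the step-size condition forcing the Bernstein term to be dominated. The only cosmetic differences are that your almost-sure bound carries $\sqrt{\kappa}$ where the paper's $B_{\max}$ carries $\kappa$ (both suffice), and that you make the union bound over the triples $(l,u,t)$ explicit, which the paper glosses over but which your accounting handles correctly up to constants.
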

\begin{proof}See Section \ref{subsec:Proof-of-Lemma-exp-decay-sum}. 
%\cm{Check later. } \textcolor{red}{Suggestion: to make this uniform in $T$ using Aadi's generalization of Frrdman's matrix inequality}
\end{proof}

Under the induction hypothesis that
$\|\Del{i}\|_{\bm{\Sigma}}\leq R_{0}$ for $0\leq i\leq t-1$,
we can invoke Lemma \ref{lemma:exp-decay-sum} (with $R=R_{0},l=0$ and
$u=t-1$) to show that
\begin{align}
&\Big\|\sum_{i=0}^{t-1}(\Ind-\eta\bm{A})^{t-i-1}\bm{{\xi}}_{i}\Big\|_{\bm{\Sigma}}\nonumber \\ 
 & =\Big\|\sum_{i=0}^{t-1}(\Ind-\eta\bm{A})^{t-i-1}\bm{{\xi}}_{i}\ind\{\|\bm{\Delta}_{i}\|_{\bm{\Sigma}}\leq R_{0}\}\Big\|_{\bm{\Sigma}}\nonumber \\
 & \leq16(\|\bm{\theta}^{\star}\|_{\bm{\Sigma}}+R_{0}+1)\sqrt{\frac{\kappa\log\frac{2dT}{\delta}}{\eta(1-\gamma)}}\label{eqm:induction-termtwo}
\end{align}
holds with probability at least $1-\delta/T$, provided that $0<\eta\le\frac{1-\gamma}{\kappa\log\frac{dT}{\delta}}$.
Combining
\eqref{eqn:induction}, \eqref{eqn:induction-termone} and \eqref{eqm:induction-termtwo}
together and recalling the definition \eqref{eq:bound_delta} of $R_{0}$,
we can easily verify that 
\begin{align}
\lsigma{\Del{t}}&\leq\sqrt{\kappa}\lsigma{\Del{0}}%\left(1-\frac{1}{2}\eta(1-\gamma)\lambdamin\right)^{t}
+16(\|\bm{\theta}^{\star}\|_{\bm{\Sigma}}+R_{0}+1)\sqrt{\frac{\eta\kappa\log\frac{2dT}{\delta}}{1-\gamma}}\nonumber \\ 
&\leq R_{0},
\end{align}
with the proviso that $32\sqrt{\frac{\eta\kappa\log(2dT/\delta)}{1-\gamma}}\leq1$.
The induction argument coupled with the union bound
then establishes the claim~\eqref{eq:bound_delta}. %invoking the iterative assumption (inequality~\eqref{eq:bound_delta} holds for $\Del{0},\ldots \Del{t-1}$),
% it is straightforward to verify  for $\Del{t}$ with probability $1-t\delta.$

\paragraph{Step 3: a refined bound on $\|\Del{t}\|_{\bm{\Sigma}}$}

It turns out that the upper bound (\ref{eq:bound_delta}) is somewhat
loose due to the complete ignorance of the contraction effect of $\Ind-\eta\bm{A}$;
see (\ref{eqn:induction-termone}). In what follows, we develop a strengthened
 bound. Define 
%\cm{Check the choice of $t_{\mathsf{seg}}$ later.}
%
\begin{align}
	t_{\mathsf{seg}} := \frac{c_1\log \max\{ 4\sqrt{\kappa}, \frac{16\kappa\|\bm{\Delta}_{0}\|_{\bm{\Sigma}}}{\|\bm{\theta}^\star\|_{\bm{\Sigma}} + 1}, \|\bm{\Delta}_{0}\|_{\bm{\Sigma}} \sqrt\frac{1-\gamma}{\eta\kappa\log\frac{2dT}{\delta}}\}}{\eta(1-\gamma)\lambdamin}
	\label{defn:t-seg}
\end{align}
for some sufficiently large constant $c_{1}>0$. For any integer $k\geq1$,
we aim to establish that
\begin{align}
	\lsigma{\Del{t}}\leq32\sqrt{\frac{\eta\kappa\log\frac{2dT}{\delta}}{1-\gamma}}\Big(\|\bm{\theta}^{\star}\|_{\bm{\Sigma}}+\frac{\sqrt{\kappa}\lsigma{\Del{0}}}{2^{k-1}}+\frac{3}{2}\Big)\eqqcolon R_{k}
	\label{eq:Delta-t-refined-bound-iid}
\end{align}
for any $t$ obeying $kt_{\mathsf{seg}}\leq t\le T$, provided that
$0<\eta\leq\frac{c_{3}(1-\gamma)}{\kappa\log\frac{dT}{\delta}}$ for
some small enough constant $c_{3}>0$. %, where $k \coloneqq \lfloor \frac{t}{t_{\mathsf{seg}}}\rfloor$.

Because of relation \eqref{eqn:induction}, we claim that it suffices
to prove that 
\begin{align}
&\lsigmaBig{\sum_{i=0}^{t-1}(\Ind-\eta\bm{A})^{t-i-1}\bm{\xi}_{i}}\nonumber \\ 
&\leq32\Big(\|\bm{\theta}^{\star}\|_{\bm{\Sigma}}+\frac{2\sqrt{\kappa}\lsigma{\Del{0}}}{2^{k}}+1\Big)\sqrt{\frac{\kappa\log\frac{2dT}{\delta}}{\eta(1-\gamma)}},\\
& \text{when } kt_{\mathsf{seg}}\leq t\leq T.
	\label{eq:bound-2nd-term-refined}
\end{align}
To see this: note that the first term on the right-hand side of~\eqref{eqn:induction} 
has already been bounded in \eqref{eqn:induction-termone}, which combined with the definition \eqref{defn:t-seg} of $t_{\mathsf{seg}}$ indicates that
\begin{align}
&\big\|\bmSigma^{1/2}(\Ind-\eta\bm{A})^{t}\Del{0}\big\|_{2} \\ 
&\leq\sqrt{\kappa}\left(1-\frac{1}{2}\eta(1-\gamma)\lambdamin\right)^{t_{\mathsf{seg}}}\lsigma{\Del{0}} \\ 
&\leq\sqrt{\frac{\eta\kappa\log\frac{2dT}{\delta}}{1-\gamma}}
	\label{eq:bound-first-term-refined}
\end{align}
for any $t\ge t_{\mathsf{seg}}$. Clearly,
combining (\ref{eq:bound-2nd-term-refined}) with \eqref{eqn:induction} and \eqref{eq:bound-first-term-refined}
shall immediately lead to the claim \eqref{eq:Delta-t-refined-bound-iid}.
The remainder of this step is thus devoted to demonstrating (\ref{eq:bound-2nd-term-refined}) inductively. 

The base case (i.e.~$k=1$) follows immediately from our bounds \eqref{eq:bound_delta}
and \eqref{eqm:induction-termtwo} in Step 2, given that $\sqrt{\frac{\eta\kappa\log\frac{2dT}{\delta}}{1-\gamma}}$
is sufficiently small. Suppose now that the claim \eqref{eq:bound-2nd-term-refined}
holds for a given integer $k\geq1$ and any $t$ obeying $kt_{\mathsf{seg}}\leq t\leq T$,
and we intend to show that \eqref{eq:bound-2nd-term-refined} continues
to hold for $k+1$ and any $t$ obeying $(k+1)t_{\mathsf{seg}}\leq t\leq T$.
Towards this, we first single out the following straightforward decomposition
\begin{align*}
&\lsigmaBig{\sum_{i=0}^{t-1}(\Ind-\eta\bm{A})^{t-i-1}\bm{\xi}_{i}} \\ 
 & \leq\lsigmaBig{\sum_{i=t-t_{\mathsf{seg}}+1}^{t-1}(\Ind-\eta\bm{A})^{t-i-1}\bm{\xi}_{i}}+\lsigmaBig{\sum_{i=0}^{t-t_{\mathsf{seg}}}(\Ind-\eta\bm{A})^{t-i-1}\bm{\xi}_{i}},
\end{align*}
which allows us to upper bound the two terms on the right-hand side above
seperately. 
\begin{itemize}
\item Under the induction hypothesis that $\|\bm{\Delta}_{i}\|_{\bm{\Sigma}}\leq R_{k}$
for all $i$ obeying $ kt_{\mathsf{seg}} \leq i\leq T$, one can invoke Lemma \ref{lemma:exp-decay-sum} with $R=R_{k},l=t-t_{\mathsf{seg}}+1$
and $u=t-1$ to see that
\begin{align*}
&\lsigmaBig{\sum_{i=t-t_{\mathsf{seg}}+1}^{t-1}(\Ind-\eta\bm{A})^{t-i-1}\bm{\xi}_{i}}\\ 
 & =\lsigmaBig{\sum_{i=t-t_{\mathsf{seg}}+1}^{t-1}(\Ind-\eta\bm{A})^{t-i-1}\bm{\xi}_{i}\ind\{\|\bm{\Delta}_{i}\|_{\bm{\Sigma}}\leq R_{k}\}}\\
 & \leq16\big(\|\bm{\theta}^{\star}\|_{\bm{\Sigma}}+R_{k}+1\big)\sqrt{\frac{\kappa\log\frac{2dT}{\delta}}{\eta(1-\gamma)}}\\
 & \leq24\Big(\|\bm{\theta}^{\star}\|_{\bm{\Sigma}}+\frac{2\sqrt{\kappa}\lsigma{\Del{0}}}{2^{k+1}}+1\Big)\sqrt{\frac{\kappa\log\frac{2dT}{\delta}}{\eta(1-\gamma)}},
\end{align*}
where  the last line uses the definition \eqref{eq:Delta-t-refined-bound-iid}
of $R_{k}$ and holds as long as $\frac{\eta\kappa\log\frac{2dT}{\delta}}{1-\gamma}$
is sufficiently small.

\item In addition, we make the observation that: for any $t\geq t_{\mathsf{seg}}$,
\begin{align*}
&\Big\|\sum_{i=0}^{t-t_{\mathsf{seg}}}(\Ind-\eta\bm{A})^{t-i-1}\bm{\xi}_{i}\Big\|_{\bm{\Sigma}} \\ 
& =\Big\|\sum_{i=0}^{t-t_{\mathsf{seg}}}(\Ind-\eta\bm{A})^{t-i-1}\bm{\xi}_{i}\ind\{\|\bm{\Delta}_{i}\|_{\bm{\Sigma}}\leq R_{0}\}\Big\|_{\bm{\Sigma}}\\
 & \leq16(1-\frac{1}{2}\eta(1-\gamma)\lambda_{\min}(\bm{\Sigma}))^{t_{\mathsf{seg}}-1} \\ 
 &\cdot (\|\bm{\theta}^{\star}\|_{\bm{\Sigma}}+R_{0}+1)\sqrt{\frac{\kappa\log\frac{2dT}{\delta}}{\eta(1-\gamma)}}\\
 & \leq8\big(\|\bm{\theta}^{\star}\|_{\bm{\Sigma}}+1\big)\sqrt{\frac{\kappa\log\frac{2dT}{\delta}}{\eta(1-\gamma)}}.
\end{align*}
Here, the first equality uses the crude bound $\|\bm{\Delta}_{i}\|_{\bm{\Sigma}}\leq R_{0}$ for all $i$ (see \eqref{eq:bound_delta}), 
the second to last inequality utilizes Lemma \ref{lemma:exp-decay-sum}
with $R=R_{0},l=0$ and $u=t-t_{\mathsf{seg}}$, whereas the last
inequality relies on the definition \eqref{eq:bound_delta} of $R_{0}$
and invokes the fact that $\sqrt{\kappa}\left(1-\frac{1}{2}\eta(1-\gamma)\lambdamin\right)^{t_{\mathsf{seg}}-1}\leq\min\big\{\frac{1}{4},\frac{1}{4\sqrt{\kappa}\|\bm{\Delta}_{0}\|_{\bm{\Sigma}}}\big\}$
with our choice \eqref{defn:t-seg} of $t_{\mathsf{seg}}$. 

\end{itemize}
Combine the previous two bounds to reach 
\begin{align*}
&\lsigmaBig{\sum_{i=0}^{t-1}(\Ind-\eta\bm{A})^{t-i-1}\bm{\xi}_{i}} \\ 
 & \leq24\Big(\|\bm{\theta}^{\star}\|_{\bm{\Sigma}}+\frac{2\sqrt{\kappa}\lsigma{\Del{0}}}{2^{k+1}}+1\Big)\sqrt{\frac{\kappa\log\frac{2dT}{\delta}}{\eta(1-\gamma)}} \\ 
 &+8\big(\|\bm{\theta}^{\star}\|_{\bm{\Sigma}}+1\big)\sqrt{\frac{\kappa\log\frac{2dT}{\delta}}{\eta(1-\gamma)}}\\
 & \leq32\Big(\|\bm{\theta}^{\star}\|_{\bm{\Sigma}}+\frac{2\sqrt{\kappa}\lsigma{\Del{0}}}{2^{k}}+1\Big)\sqrt{\frac{\kappa\log\frac{2dT}{\delta}}{\eta(1-\gamma)}}.
\end{align*}
This finishes the induction step and in turn establishes \eqref{eq:bound-2nd-term-refined} (and hence \eqref{eq:Delta-t-refined-bound-iid}). 

As a straightforward consequence, the bounds \eqref{eq:bound_delta}
and \eqref{eq:Delta-t-refined-bound-iid} imply that 
\begin{align}
\lsigma{\Del{t}}\leq\begin{cases}
R_{0},\qquad & 0\leq t<t_{\mathsf{seg}}',\\
32\sqrt{\frac{\eta\kappa\log\frac{2dT}{\delta}}{1-\gamma}}(\|\bm{\theta}^{\star}\|_{\bm{\Sigma}}+2),\qquad & t_{\mathsf{seg}}'\leq t<T,
\end{cases}\label{eq:Delta-t-two-cases}
\end{align}
where 
\begin{align}
t_{\mathsf{seg}}'\coloneqq c_{2}t_{\mathsf{seg}}\log\big(\kappa(\|\bm{\Delta}_{0}\|_{2}+1)\big)\label{defn:tilde-t-seg-defn}
\end{align}
for some large enough constant $c_{2}>0$. To see this, note that
for any $t\geq t_{\mathsf{seg}}'$, it is guaranteed that
the second term on the right-hand side of \eqref{eq:Delta-t-refined-bound-iid} obeys $\frac{4\sqrt{\kappa}\lsigma{\Del{0}}}{2^{\lfloor t/t_{\mathsf{seg}}\rfloor}}\leq2$,
thus confirming the second case in \eqref{eq:Delta-t-two-cases}.

\paragraph{Step 4: controlling $\|\Delbar{T}\|_{\bm{\Sigma}}$}

Now we are positioned to control $\Delbar{T}$.
The key is to write $\Delbar{T}$ as a linear combination of $\{\bm{\xi}_{i}\}_{0\leq i\leq T-1}$
as follows, which is a direct consequence of the relation~\eqref{eqn:iterate-delt}:
\begin{align}
\Delbar{T} & =\frac{1}{T}\sum_{j=1}^{T}\Del{j}\nonumber \\ 
&=\frac{1}{T}\sum_{j=1}^{T}(\Ind-\eta\bm{A})^{j}\Del{0}-\frac{1}{T}\sum_{j=1}^{T}\eta\sum_{i=0}^{j-1}(\Ind-\eta\bm{A})^{j-i-1}\bm{\xi}_{i}\nonumber \\
 & =\frac{1}{T}\sum_{j=1}^{T}(\Ind-\eta\bm{A})^{j}\Del{0}-\frac{1}{T}\sum_{i=0}^{T-1}\eta\sum_{j=i+1}^{T}(\Ind-\eta\bm{A})^{j-i-1}\bm{\xi}_{i}\nonumber \\
 & =\frac{1}{T\eta}\bm{A}_{0}^{(T+1)}\Del{0}-\frac{1}{T}\Del{0}-\frac{1}{T}\sum_{i=0}^{T-1}\bm{A}_{i}^{(T)}\bm{\xi}_{i},\label{eq:Deltbar-decomposition-TD}
\end{align}
where the middle line follows from swapping the summation over $i$
and $j$, and in the last line we define 
\begin{align}
\bm{A}_{i}^{(t)}\defn\eta\sum_{j=i+1}^{t}(\Ind-\eta\bm{A})^{j-i-1}=\bm{A}^{-1}\big(\Ind-(\Ind-\eta\bm{A})^{t-i}\big).\label{eq:defn-Ait}
\end{align}
We claim that the following two inequalities hold, the first deterministically and the second with probability of at least $1-\delta$ (with their proofs
deferred to Section~\ref{sec:schubert}) \begin{subequations} 
\begin{align}
\big\|\bm{A}_{0}^{(T+1)}\bm{\Delta}_{0}\big\|_{\bm{\Sigma}} & \leq\frac{2\|\bm{\Sigma}^{-1}\|}{1-\gamma}\big\|\bm{\Delta}_{0}\|_{\bm{\Sigma}};
	\label{eqn:deltazeroterm}\\
\Big\|\sum_{i=0}^{T-1}\bm{A}_{i}^{(T)}\bm{\xi}_{i}\Big\|_{\bm{\Sigma}} & \lesssim \Bigg\{ \sqrt{\frac{\max_{s}\bm{\phi}(s)^{\top}\bm{\Sigma}^{-1}\bm{\phi}(s)\log\frac{2d}{\delta}}{T(1-\gamma)^{2}}}\\ 
&+\frac{\big\|\bm{\Sigma}^{-1}\big\|}{T}\sqrt{\frac{\kappa\log\frac{2dT}{\delta}}{\eta(1-\gamma)^{3}}}\Bigg\} \big(\|\bm{\theta}^{\star}\|_{\bm{\Sigma}}+1\big).\label{eqn:schubert}
\end{align}
\end{subequations} Putting the above two inequalities together with
\eqref{eq:Deltbar-decomposition-TD}, 
 we arrive at 
\begin{align*}
	 \lsigma{\Delbar{T}}  &\leq\Big\|\frac{1}{T\eta}\bm{A}_{0}^{(T+1)}\Del{0}\Big\|_{\bm{\Sigma}}+\Big\|\frac{1}{T}\Del{0}\Big\|_{\bm{\Sigma}}+\Big\|\frac{1}{T}\sum_{i=0}^{T-1}\bm{A}_{i}^{(T)}\bm{\xi}_{i}\Big\|_{\bm{\Sigma}}\\
	& \lesssim\frac{1}{\eta T}\frac{\big\|\bm{\Sigma}^{-1}\big\|}{1-\gamma}\big\|\bm{\Delta}_{0}\|_{\bm{\Sigma}}+ \frac{1}{T} \|\bm{\Delta}_0\|_{\bm{\Sigma}}  \\ 
	&+
	\Bigg\{ \sqrt{\frac{\max_{s}\bm{\phi}(s)^{\top}\bm{\Sigma}^{-1}\bm{\phi}(s)\log\frac{2d}{\delta}}{T(1-\gamma)^{2}}} \\ 
	&+\frac{\big\|\bm{\Sigma}^{-1}\big\|}{T}\sqrt{\frac{\kappa\log\frac{2dT}{\delta}}{\eta(1-\gamma)^{3}}}\Bigg\} \big(\|\bm{\theta}^{\star}\|_{\bm{\Sigma}}+1\big) \\
 & \asymp \frac{1}{\eta T}\frac{\big\|\bm{\Sigma}^{-1}\big\|}{1-\gamma}\big\|\bm{\Delta}_{0}\|_{\bm{\Sigma}}+\Bigg\{ \sqrt{\frac{\max_{s}\bm{\phi}(s)^{\top}\bm{\Sigma}^{-1}\bm{\phi}(s)\log \frac{2d}{\delta}}{T(1-\gamma)^{2}}} \\ 
 &+\frac{\big\|\bm{\Sigma}^{-1}\big\|}{T}\sqrt{\frac{\kappa\log\frac{2dT}{\delta}}{\eta(1-\gamma)^{3}}}\Bigg\} \big(\|\bm{\theta}^{\star}\|_{\bm{\Sigma}}+1\big),
\end{align*}
where the last line follows since $\|\bm{\Sigma}^{-1}\|\geq 1$ (see \eqref{eq:Sigma-norm-bound}) and $\eta < 1$. 
This finishes the proof of Theorem~\ref{thm:ind-td}.

\section{Proof of Theorem \ref{thm:TDC-error} (TDC learning)}
\label{sec:pf-TDC}

Firstly, let us analyze the population dynamics of TDC. It turns out that the convergence of this dynamics can be described via a contractive linear mapping. 
Given this nice property of population TDC, we shall decompose the empirical TDC into two parts: the first part can be controlled via the aforementioned population dynamics, and the rest is treated as a stochastic component, which is controlled via matrix martingale concentration.

\subsection{Population analysis}
\label{subsec:TDC-population}

First recall that the population parameters are defined as  
\begin{align*}
\widetilde{\bm{A}} & \coloneqq\mathbb{E}_{\mu_{\mathsf{b}},\pi_{\mathsf{b}},\mathcal{P}}[\widetilde{\bm{A}}_{t}]=\mathbb{E}_{\mu_{\mathsf{b}},\pi_{\mathsf{b}},\mathcal{P}}[\rho_{t}\bm{\phi}(s_t)\left(\bm{\phi}(s_t)-\gamma\bm{\phi}(s_t')\right)^{\top}];\\
\widetilde{\bm{b}} & \coloneqq\mathbb{E}_{\mu_{\mathsf{b}},\pi_{\mathsf{b}}}[\widetilde{\bm{b}}_{t}]=\mathbb{E}_{\mu_{\mathsf{b}},\pi_{\mathsf{b}}}[\rho_{t}\bm{\phi}(s_t)r_{t}];\\
\bm{\Pi} & \coloneqq\mathbb{E}_{\mu_{\mathsf{b}},\pi_{\mathsf{b}},\mathcal{P}}[\bm{\Pi}_{t}]=\mathbb{E}_{\mu_{\mathsf{b}},\pi_{\mathsf{b}},\mathcal{P}}[\rho_{t}\bm{\phi}(s_t)\bm{\phi}(s_t')^{\top}];\\
\widetilde{\bm{\Sigma}} & \coloneqq\mathbb{E}_{\mu_{\mathsf{b}}}[\bm{\Sigma}_{t}]=\mathbb{E}_{\mu_{\mathsf{b}}}[\bm{\phi}(s_t)\bm{\phi}(s_t)^{\top}].
\end{align*}
Corresponding to the empirical version of TDC as given in \eqref{subeq:TDC}, 
we can define its population analogue of TDC as 
\begin{align}
\label{subeq:TDC-population}
\breve{\bm{\theta}}_{t+1} & =\breve{\bm{\theta}}_{t}-\alpha(\widetilde{\bm{A}}\breve{\bm{\theta}}_{t}-\widetilde{\bm{b}}+\gamma\bm{\Pi}^{\top}\breve{\bm{w}}_{t}), \nonumber \\
\breve{\bm{w}}_{t+1} & =\breve{\bm{w}}_{t}-\beta(\widetilde{\bm{A}}\breve{\bm{\theta}}_{t}-\widetilde{\bm{b}}+\widetilde{\bm{\Sigma}}\breve{\bm{w}}_{t}),
\end{align}
where sampled parameters are replaced by their corresponding expectations. 
In this section, we analyze the population dynamics of TDC as given above; in order to control the finite-sample dynamics, we bound the difference of these two in the section to follow.

Since $\bm{\phi}(s_t)$ is independent of the transition, the expectation of $\widetilde{\bm{\Sigma}}_t$ is independent of which policy is being adopted. Hence, $\widetilde{\bm{\Sigma}}$ can also be presented as
\begin{align}
\widetilde{\bm{\Sigma}} &= \sum_{s_t \in \mathcal{S}} \mu_{\mathsf{b}}(s_t) \bm{\phi}(s_t)\bm{\phi}(s_t)^\top \nonumber \\ 
&= \sum_{s_t \in \mathcal{S}} \mu_{\mathsf{b}}(s_t) \left(\sum_{a_t \in \mathcal{A}} \pi(a_t |s_t)\right) \bm{\phi}(s_t)\bm{\phi}(s_t)^\top \nonumber \\
&= \sum_{s_t \in \mathcal{S}} \sum_{a_t \in \mathcal{A}} \mu_{\mathsf{b}}(s_t) \pi_{\mathsf{b}}(a_t|s_t)\left( \frac{\pi(a_t |s_t)}{\pi_{\mathsf{b}}(a_t|s_t)}\right)  \bm{\phi}(s_t)\bm{\phi}(s_t)^\top \\ 
&=\mathbb{E}_{\mu_{\mathsf{b}},\pi_{\mathsf{b}}}[\rho_t \bm{\phi}(s_t)\bm{\phi}(s_t)^{\top}].
\end{align}
In view of this relation, $\widetilde{\bm{A}}$ admits another characterization, namely 
\begin{align}
\widetilde{\bm{A}}=\widetilde{\bm{\Sigma}}-\gamma\bm{\Pi}.
\end{align}
% Consequently, we can write the population dynamics of TDC as 
% \begin{align}\label{subeq:TDC-population}
% \widetilde{\bm{\theta}}_{t} & =\widetilde{\bm{\theta}}_{t-1}-\alpha(\widetilde{\bm{A}}\widetilde{\bm{\theta}}_{t-1}-\widetilde{\bm{b}}+\gamma\bm{\Pi}^{\top}\bm{w}_{t-1}); \nonumber \\
% \bm{w}_{t} & =\bm{w}_{t-1}-\beta(\widetilde{\bm{A}}\widetilde{\bm{\theta}}_{t-1}-\widetilde{\bm{b}}+\widetilde{\bm{\Sigma}}\bm{w}_{t-1}).
% \end{align}
Consequently, the fixed point $(\breve{\bm{\theta}}^{\star},\bm{w}^{\star})$
of the population dynamics obeys 
\begin{align*}
\begin{cases}
\widetilde{\bm{A}}\breve{\bm{\theta}}^{\star}-\widetilde{\bm{b}}+\gamma\bm{\Pi}^{\top}\bm{w}^{\star}=\bm{0},\\
\widetilde{\bm{A}}\breve{\bm{\theta}}^{\star}-\widetilde{\bm{b}}+\widetilde{\bm{\Sigma}}\bm{w}^{\star}=\bm{0}.
\end{cases}
\end{align*}
As long as $\widetilde{\bm{A}}$ is invertible, this
set of conditions is equivalent to 
\begin{align*}
\widetilde{\bm{A}}\breve{\bm{\theta}}^{\star}=\widetilde{\bm{b}},\qquad\text{and}\qquad\bm{w}^{\star}=\bm{0}.
\end{align*}

In order to study the population dynamics, it is useful to consider two auxiliary parameters 
\begin{align*}
\breve{\bm{\Delta}}_{t} &\coloneqq\breve{\bm{\theta}}_{t}-\breve{\bm{\theta}}^{\star}, \\
\breve{\bm{z}}_{t} &\coloneqq\breve{\bm{w}}_{t}+\widetilde{\bm{\Sigma}}^{-1}\widetilde{\bm{A}}\breve{\bm{\Delta}}_{t};
\end{align*}
here $\breve{\bm{\Delta}}_{t}$ tracks the convergence of $\breve{\bm{\theta}}_{t}$ to $\breve{\bm{\theta}}^{\star}$, and $\breve{\bm{z}}_{t}$ tracks the size of the residual $\widetilde{\bm{A}}\breve{\bm{\theta}}_{t}-\widetilde{\bm{b}}+\widetilde{\bm{\Sigma}}\breve{\bm{w}}_{t}$. 
With these two parameters in place, the population dynamics satisfy 
\begin{align}
&\left[\begin{array}{c}
\breve{\bm{\Delta}}_{t}\\
\breve{\bm{z}}_{t}
\end{array}\right] \nonumber \\ 
&=\left[\begin{array}{cc}
\bm{I}-\alpha\widetilde{\bm{A}}^{\top}\widetilde{\bm{\Sigma}}^{-1}\widetilde{\bm{A}} & -\alpha\gamma\bm{\Pi}^{\top}\\
-\alpha\bm{X}\widetilde{\bm{A}}^{\top}\widetilde{\bm{\Sigma}}^{-1}\widetilde{\bm{A}} & \bm{I}-\beta\widetilde{\bm{\Sigma}}-\alpha\gamma\bm{X}\bm{\Pi}^{\top}
\end{array}\right] \cdot \left[\begin{array}{c}
\breve{\bm{\Delta}}_{t-1}\\
\breve{\bm{z}}_{t-1}
\end{array}\right],
\end{align}
in which we use $\bm{X}$ to denote $\bm{I}-\gamma \widetilde{\bm{\Sigma}}^{-1}\widetilde{\bm{\Pi}}$.
To analyze this optimization dynamics, for every positive constant $\varkappa \in (0,1)$, consider 
\begin{align*}
\breve{\bm{x}}_{t}\coloneqq\left[\begin{array}{c}
\breve{\bm{\Delta}}_{t}\\
\varkappa\breve{\bm{z}}_{t}
\end{array}\right]
\end{align*}
then $\breve{\bm{x}}_{t}$ yields
\begin{align}
\breve{\bm{x}}_{t} = \bm{\Psi}\breve{\bm{x}}_{t-1},
\end{align}
where $\bm{\Psi}$ represents the matrix
\begin{align}
\label{eq:defn-Psi}
\left[\begin{array}{cc}
\bm{I}-\alpha\widetilde{\bm{A}}^{\top}\widetilde{\bm{\Sigma}}^{-1}\widetilde{\bm{A}} & -\frac{1}{\varkappa}\alpha\gamma\bm{\Pi}^{\top}\\
-\varkappa \alpha\bm{X}\widetilde{\bm{A}}^{\top}\widetilde{\bm{\Sigma}}^{-1}\widetilde{\bm{A}} & \bm{I}-\beta\widetilde{\bm{\Sigma}}-\alpha\gamma\bm{X}\bm{\Pi}^{\top}
\end{array}\right].
\end{align}
It is known that how fast $\breve{\bm{x}}_{t}$ converges to $\bm{0}$ is determined by the spectral norm of $\bm{\Psi}$, which is characterized in the lemma below. 

\begin{lemma}
\label{lemma:psi-norm}
Suppose that 
\begin{align*}
\lambda_1  =\lambda_{\min}(\widetilde{\bm{A}}^{\top}\widetilde{\bm{\Sigma}}^{-1}\widetilde{\bm{A}}), \quad
\lambda_2  =\lambda_{\min}(\widetilde{\bm{\Sigma}}), \quad
\lambda_{\Sigma}  = \| \widetilde{\bm{\Sigma}}^{-1}\| = 1/\lambda_2.
\end{align*}
Then as long as the following conditions hold:
\begin{subequations}
\label{eq:TDC-step-conditions}
\begin{align}
&\beta \gtrsim\lambda_{\Sigma}\rho_{\max} \alpha, \\
&\varkappa \beta \gtrsim \alpha,\\
&\alpha\gamma(\rho_{\max}+\gamma\lambda_{\Sigma}\rho_{\max}^{2})  \ll\beta\lambda_{w},\\
&\frac{\alpha\gamma\rho_{\max}}{\varkappa}+\varkappa\alpha(1+\gamma\lambda_{\Sigma}\rho_{\max})\lambda_{\Sigma}(2\rho_{\max})^{2}  \ll \sqrt{\alpha\lambda_1\beta\lambda_{w}}
\end{align}
\end{subequations}
it holds true that  
\begin{align*}
\|\bm{\Psi}\|\leq 1-\frac{1}{2}\alpha\lambda_1.
\end{align*}
\end{lemma}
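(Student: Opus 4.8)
The plan is to bound the operator norm directly through $\|\bm{\Psi}\|=\sup_{\|\bm{x}\|_2=1}\|\bm{\Psi}\bm{x}\|_2$. Write a unit vector in block form as $\bm{x}=[\bm{u}^\top,\ \bm{v}^\top]^\top$ with $\|\bm{u}\|_2^2+\|\bm{v}\|_2^2=1$, and abbreviate $\bm{M}\defn\widetilde{\bm{A}}^{\top}\widetilde{\bm{\Sigma}}^{-1}\widetilde{\bm{A}}$. Using the identity $\widetilde{\bm{A}}=\widetilde{\bm{\Sigma}}-\gamma\bm{\Pi}=\widetilde{\bm{\Sigma}}\bm{X}$ derived in Section~\ref{subsec:TDC-population}, one has $\bm{M}=\bm{X}^{\top}\widetilde{\bm{\Sigma}}\bm{X}\succeq\bm{0}$ with $\lambda_{\min}(\bm{M})=\lambda_1$. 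Reading off the two block rows of $\bm{\Psi}$ from \eqref{eq:defn-Psi},
\[
\bm{\Psi}\bm{x}=\begin{bmatrix}(\bm{I}-\alpha\bm{M})\bm{u}-\tfrac{\alpha\gamma}{\varkappa}\bm{\Pi}^{\top}\bm{v}\\ -\varkappa\alpha\bm{X}\bm{M}\bm{u}+(\bm{I}-\beta\widetilde{\bm{\Sigma}}-\alpha\gamma\bm{X}\bm{\Pi}^{\top})\bm{v}\end{bmatrix},
\]
so $\|\bm{\Psi}\bm{x}\|_2^2=T_1+T_2$, where $T_1,T_2$ are the squared Euclidean norms of the first and second blocks. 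Throughout I would use the elementary estimates $\|\bm{\Pi}\|\le\rho_{\max}$, $\|\widetilde{\bm{\Sigma}}\|\le\rho_{\max}$ (hence $\|\widetilde{\bm{A}}\|\le 2\rho_{\max}$ and $\|\bm{M}\|\le\lambda_{\Sigma}(2\rho_{\max})^2$), $\|\bm{X}\|\le 1+\gamma\lambda_{\Sigma}\rho_{\max}$, together with the fact that — because the standing restriction $\alpha<(\lambda_1\lambda_{\Sigma}^2\|\widetilde{\bm{\Sigma}}\|\log\tfrac{2dT}{\delta})^{-1}$ and the conditions \eqref{eq:TDC-step-conditions} keep the stepsizes small — we have $\|\bm{I}-\alpha\bm{M}\|\le 1-\alpha\lambda_1$ and $\|\bm{I}-\beta\widetilde{\bm{\Sigma}}\|\le 1-\beta\lambda_w$ with $\alpha\lambda_1\le 1$, $\beta\lambda_w\le 1$, where $\lambda_w\defn\lambda_{\min}(\widetilde{\bm{\Sigma}})=\lambda_2$.

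Next I would bound $T_1$. Expanding the square and applying Cauchy--Schwarz,
\[
T_1\le(1-\alpha\lambda_1)^2\|\bm{u}\|_2^2+\tfrac{2\alpha\gamma\rho_{\max}}{\varkappa}\|\bm{u}\|_2\|\bm{v}\|_2+\tfrac{\alpha^2\gamma^2\rho_{\max}^2}{\varkappa^2}\|\bm{v}\|_2^2,
\]
where the last term is $\ll\alpha\lambda_1\beta\lambda_w\le\beta\lambda_w$ by the fourth inequality in \eqref{eq:TDC-step-conditions}. For $T_2$, the triangle inequality yields $\sqrt{T_2}\le(1-\beta\lambda_w)\|\bm{v}\|_2+\alpha\gamma\|\bm{X}\|\rho_{\max}\|\bm{v}\|_2+\varkappa\alpha\|\bm{X}\|\,\|\bm{M}\|\,\|\bm{u}\|_2$; squaring and keeping track of orders, the leading contribution is $(1-\beta\lambda_w)^2\|\bm{v}\|_2^2\le(1-\beta\lambda_w)\|\bm{v}\|_2^2$, the first-order cross contributions are at most $2\alpha\gamma\|\bm{X}\|\rho_{\max}\|\bm{v}\|_2^2+2\varkappa\alpha\|\bm{X}\|\,\|\bm{M}\|\,\|\bm{u}\|_2\|\bm{v}\|_2$ — the first of which is $\ll\beta\lambda_w$ by the third inequality in \eqref{eq:TDC-step-conditions} — and all remaining second-order (i.e. $\propto\alpha^2$) terms are dominated by the first-order ones, again via the third and fourth inequalities in \eqref{eq:TDC-step-conditions}.

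Finally I would combine the two blocks. Collecting terms gives $\|\bm{\Psi}\bm{x}\|_2^2\le a\|\bm{u}\|_2^2+b\|\bm{v}\|_2^2+2K\|\bm{u}\|_2\|\bm{v}\|_2$ with $a\le 1-\alpha\lambda_1+o(\alpha\lambda_1)$, $b\le 1-\beta\lambda_w+o(\beta\lambda_w)$, and $K\le\tfrac{\alpha\gamma\rho_{\max}}{\varkappa}+\varkappa\alpha(1+\gamma\lambda_{\Sigma}\rho_{\max})\lambda_{\Sigma}(2\rho_{\max})^2$, which is $\ll\sqrt{\alpha\lambda_1\beta\lambda_w}$ precisely by the fourth inequality in \eqref{eq:TDC-step-conditions} (after inserting the norm bounds for $\|\bm{X}\|$ and $\|\bm{M}\|$). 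Since $\|\bm{u}\|_2^2+\|\bm{v}\|_2^2=1$, the right-hand side is at most the largest eigenvalue of the $2\times 2$ symmetric matrix with diagonal entries $a,b$ and off-diagonal entry $K$, namely $\tfrac{a+b}{2}+\sqrt{\big(\tfrac{a-b}{2}\big)^2+K^2}$; using the first two inequalities of \eqref{eq:TDC-step-conditions} — which force the $\bm{z}$-block contraction rate $\beta\lambda_w$ to be commensurate with $\alpha\lambda_1$ — together with $K^2\ll\alpha\lambda_1\beta\lambda_w$, this eigenvalue is $\le(1-\tfrac12\alpha\lambda_1)^2$, once the universal constants hidden in the $\gtrsim/\ll$ signs of \eqref{eq:TDC-step-conditions} are fixed appropriately. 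Taking the supremum over $\bm{x}$ gives $\|\bm{\Psi}\|\le 1-\tfrac12\alpha\lambda_1$. The main obstacle is this last step: the contraction ``gain'' is only first order in the (tiny) stepsizes while several error terms are also first order, so one must balance carefully — choosing the auxiliary scaling $\varkappa$ to minimize the $\bm{\Delta}$--$\bm{z}$ cross-coupling $K$, spending only a sub-$\alpha\lambda_1$ share of the $\|\bm{u}\|_2^2$ budget on Young's inequality, and letting the fast $\bm{z}$-update ($\beta\lambda_w$) absorb both $K$ and the perturbation of the $\bm{w}$-dynamics by $\widetilde{\bm{\theta}}$. It is exactly this interplay that the four conditions in \eqref{eq:TDC-step-conditions} are designed to control.
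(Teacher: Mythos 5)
Your plan is essentially sound and rests on the same ingredients as the paper's proof: the same block structure of $\bm{\Psi}$ (a contractive "diagonal" part $\mathrm{diag}(\bm{I}-\alpha\bm{M},\,\bm{I}-\beta\widetilde{\bm{\Sigma}})$ plus an $O(\alpha)$ coupling), the same elementary norm bounds $\|\bm{\Pi}\|\leq\rho_{\max}$, $\|\bm{X}\|\leq 1+\gamma\lambda_{\Sigma}\rho_{\max}$, $\|\bm{M}\|\leq\lambda_{\Sigma}(2\rho_{\max})^{2}$, and the same role assigned to the four stepsize conditions. Where you differ is the last step: the paper simply applies the triangle inequality, $\|\bm{\Psi}\|\leq\max\{1-\alpha\lambda_1,\,1-\beta\lambda_2\}+\|\tfrac{\alpha\gamma}{\varkappa}\bm{\Pi}^{\top}\|+\|\varkappa\alpha\bm{X}\bm{M}\|+\|\alpha\gamma\bm{X}\bm{\Pi}^{\top}\|$, and reads off the conclusion from the conditions, whereas you bound $\|\bm{\Psi}\bm{x}\|_2^2$ by a $2\times 2$ quadratic form in $(\|\bm{u}\|_2,\|\bm{v}\|_2)$ and compute its top eigenvalue. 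Your route is actually the more faithful one to the stated hypotheses: the condition $K\ll\sqrt{\alpha\lambda_1\beta\lambda_{w}}$ is exactly what a $2\times 2$ eigenvalue bound needs (the coupling only has to be small relative to the \emph{geometric mean} of the two contraction rates), while the plain triangle inequality would instead require the coupling to be small relative to $\alpha\lambda_1$ itself, which the conditions do not literally guarantee when $\beta\lambda_{w}\gg\alpha\lambda_1$. So your approach buys a cleaner justification of why the geometric-mean condition is the right one.

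One point to fix before this is airtight: when you "collect terms" you record the diagonal entries as $a\leq 1-\alpha\lambda_1+o(\alpha\lambda_1)$ and $b\leq 1-\beta\lambda_{w}+o(\beta\lambda_{w})$. With those values the final eigenvalue estimate does \emph{not} close: for $\beta\lambda_{w}\asymp\alpha\lambda_1$ the top eigenvalue would be $1-\alpha\lambda_1+K+o(\alpha\lambda_1)$, and $K\ll\sqrt{\alpha\lambda_1\beta\lambda_{w}}\asymp\alpha\lambda_1$ is far too weak to push this below $(1-\tfrac12\alpha\lambda_1)^2=1-\alpha\lambda_1+\tfrac14\alpha^2\lambda_1^2$. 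What saves you is that your own bound on $T_1$ produces the \emph{squared} contraction factor, $a\leq(1-\alpha\lambda_1)^2+o(\alpha\lambda_1)\approx 1-2\alpha\lambda_1$, and likewise $b\approx 1-2\beta\lambda_{w}$; with these, $\tfrac{a+b}{2}+\sqrt{(\tfrac{a-b}{2})^2+K^2}\leq 1-(\alpha\lambda_1+\beta\lambda_{w})+\sqrt{(\beta\lambda_{w}-\alpha\lambda_1)^2+K^2}\leq 1-\alpha\lambda_1$ whenever $\beta\lambda_{w}\geq\alpha\lambda_1$ and $K^2\leq\alpha\lambda_1\beta\lambda_{w}$, which is what you want. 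So keep the squares in the diagonal entries; as written, the summary sentence understates them by a factor of two in the exponent and would break the argument.
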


Therefore, the mapping $\bm{\Psi}$ is contractive, thus ensuring the linear convergence of $\bm{x}_{t}$, with the proviso that $\alpha\lambda_1 < 2.$

\subsection{Finite-sample analysis}\label{subsec:TDC-finite}

Armed with the population analysis, the proof for Theorem \ref{thm:TDC-error} is completed if we can make a connection of the finite-sample performances to that of the population ones.  
% Indeed, the practically used TDC (\ref{subeq:TDC}) can be viewed
% as the finite-sample version of the population dynamics (\ref{subeq:TDC-population}), and we will apply the same technique as in Section \ref{sec:proof-TD}.

\paragraph{Step 1: a recursive relation} Firstly, we define two noise variables
\begin{align*}
\bm{\nu}_{t} & \coloneqq(\widetilde{\bm{A}}_{t}-\widetilde{\bm{A}})\widetilde{\bm{\theta}}_{t}-(\widetilde{\bm{b}}_{t}-\widetilde{\bm{b}})+\gamma(\bm{\Pi}_{t}-\bm{\Pi})^{\top}\bm{w}_{t},\\
\bm{\eta}_{t} & \coloneqq(\widetilde{\bm{A}}_{t}-\widetilde{\bm{A}})\widetilde{\bm{\theta}}_{t}-(\widetilde{\bm{b}}_{t}-\widetilde{\bm{b}})+(\widetilde{\bm{\Sigma}}_{t}-\widetilde{\bm{\Sigma}})\bm{w}_{t}.
\end{align*}
As a result, TDC can be rewritten as 
\begin{align*}
\widetilde{\bm{\theta}}_{t+1} & =\widetilde{\bm{\theta}}_{t}-\alpha(\widetilde{\bm{A}}\widetilde{\bm{\theta}}_{t}-\widetilde{\bm{b}}+\gamma\bm{\Pi}^{\top}\bm{w}_{t})-\alpha\bm{\nu}_{t};\\
\bm{w}_{t+1} & =\bm{w}_{t}-\beta(\widetilde{\bm{A}}\widetilde{\bm{\theta}}_{t}-\widetilde{\bm{b}}+\widetilde{\bm{\Sigma}}\bm{w}_{t})-\beta\bm{\eta}_{t}.
\end{align*}
Using the same notations as in Section \ref{subsec:TDC-population}, we observe that the following iteration holds true for finite-sample TDC:
\begin{align*}
\bm{x}_{t+1} = \bm{\Psi}\bm{x}_{t} - \bm{\zeta}_t,
\end{align*}
in which
\begin{align}\label{eq:defn-zeta}
\bm{\zeta}_t = \left[\begin{array}{c}
\alpha \bm{\nu}_t\\
\varkappa(\alpha(1-\gamma \widetilde{\bm{\Sigma}}^{-1}\bm{\Pi})\bm{\nu}_t + \beta \bm{\eta}_t)
\end{array}\right].
\end{align}

Hence,
\begin{align}\label{eq:TDC-xt}
\bm{x}_t =\bm{\Psi}^t \bm{x}_0 - \sum_{i=0}^{t-1} \bm{\Psi}^{t-i-1} \bm{\zeta}_i,
\end{align}
where $\bm{x}_0 = [\bm{\Delta}_0^\top,\varkappa \bm{z}_0^\top]^\top$. Since the norm of $\bm{\Psi}$ has been bounded by Lemma \ref{lemma:psi-norm}, bounding the norm of $\bm{x}_t$ boils down to bounding the second term of \eqref{eq:TDC-xt}. In the following, with a slight abuse of notation, for any $\bm{x} = (\bm{x}_1,\bm{x}_2) \in \mathbb{R}^{2d}$ with $\bm{x}_1,\bm{x}_2 \in \mathbb{R}^{d}$, we will define $\|\bm{x}\|_{\widetilde{\bm{\Sigma}}}^2$ as
\begin{align*}
\|\bm{x}\|_{\widetilde{\bm{\Sigma}}}^2 = \|\bm{x}_1\|_{\widetilde{\bm{\Sigma}}}^2 + \|\bm{x}_2\|_{\widetilde{\bm{\Sigma}}}^2.
\end{align*}
with this definition, it is easy to see that
\begin{align*}
\|\bm{x}_{t}\|_{\widetilde{\bm{\Sigma}}}^2 = \|\tilde{\bm{\Delta}}_t\|_{\widetilde{\bm{\Sigma}}}^2 + \varkappa^2 \|\bm{w}_t + \widetilde{\bm{\Sigma}}^{-1}\widetilde{\bm{A}} \widetilde{\bm{\Delta}}_t\|_{\widetilde{\bm{\Sigma}}}^2.
\end{align*}
Hence, the norms of $\widetilde{\bm{\Delta}}_t$, $\bm{w}_t$ and $\bm{x}_t$ can be related by the  inequalities
\begin{align}\label{eq:norm-relation}
\begin{cases}
&\|\widetilde{\bm{\Delta}}_t\|_{\widetilde{\bm{\Sigma}}} \leq \|\bm{x}_t\|_{\widetilde{\bm{\Sigma}}};\\
&\|\bm{w}_t\|_{\widetilde{\bm{\Sigma}}} \lesssim \frac{1}{\varkappa}\|\bm{x}_t\|_{\widetilde{\bm{\Sigma}}};\\
&\|\bm{x}_t\|_{\widetilde{\bm{\Sigma}}} \lesssim \|\widetilde{\bm{\Delta}}_t\|_{\widetilde{\bm{\Sigma}}} + |\bm{w}_t\|_{\widetilde{\bm{\Sigma}}}.
\end{cases}
\end{align}
\paragraph{Step 2: crude bound for $\|\bm{x}_t\|_{\widetilde{\bm{\Sigma}}}$} 
We first aim to establish, via an induction argument, that with probability at least $1-\delta$,
\begin{align}\label{eq:defn-tilde-R0}
\|\bm{x}_t\|_{\widetilde{\bm{\Sigma}}} &\leq  2\|\widetilde{\bm{\Delta}}_0\|_{\widetilde{\bm{\Sigma}}} + 80\varkappa \beta \rho_{\max} \sqrt{\frac{1}{\alpha \lambda_1}\log \frac{2dT}{\delta}} (\|\widetilde{\bm{\theta}}^\star\|_{\widetilde{\bm{\Sigma}}} + 1)\nonumber \\ 
&=: \widetilde{R}_0
\end{align}
holds simulatanesouly for all $0 \leq t \leq T$.
To start with, note that the inequality \eqref{eq:defn-tilde-R0} holds trivially for the base case with $t=0$. Next, suppose that the hypothesis \eqref{eq:defn-tilde-R0} holds for $\bm{x}_0,\bm{x}_1,\ldots,,\bm{x}_{t-1}$, and we intend to establish it for $\bm{x}_t$ as well. Towards this end, involking the decomposition \eqref{eq:TDC-xt} and the triangle inequality yields
\begin{align}
\label{eq:xt-induction}
\|\bm{x}_t\|_{\widetilde{\bm{\Sigma}}} \leq \left\|\bm{\Psi}^t \bm{x}_0\right\|_{\widetilde{\bm{\Sigma}}} + \left\| \sum_{i=0}^{t-1} \bm{\Psi}^{t-i-1} \bm{\zeta}_i \right\|_{\widetilde{\bm{\Sigma}}}.
\end{align}
As for the first term of \eqref{eq:xt-induction}, it is seen that
\begin{align}\label{eq:xt-1st}
\left\|\bm{\Psi}^t \bm{x}_0\right\|_{\widetilde{\bm{\Sigma}}} \leq \|\bm{x}_0\|_{\widetilde{\bm{\Sigma}}} = \|\widetilde{\bm{\Delta}}_0\|_{\widetilde{\bm{\Sigma}}}.
\end{align}
When it comes to the second term of \eqref{eq:xt-induction}, the following lemma comes in handy.
\begin{lemma}\label{lemma:exp-decay-sum-xt}
Fix any quantity $\widetilde{R}>0$ and, for each $0 \leq i \leq T-1$, define the random vector
\begin{align}\label{eq:defn-x-tilde}
\widetilde{\bm{\zeta}}_i:= \bm{\zeta}_i \ind\{\widetilde{\mathcal{H}}_i\}, \quad \text{where} \quad \widetilde{\mathcal{H}}_i := \left\{\|\bm{x}_i\|_{\widetilde{\bm{\Sigma}}} \leq \widetilde{R}\right\}.
\end{align}
Then, with probability at least $1-\delta/T$,
\begin{align}
\label{eqn:lemma-part-2}
&\left\|\sum_{i=0}^{t-1} \bm{\Psi}^{t-i-1} \widetilde{\bm{\zeta}}_i\right\|_{\widetilde{\bm{\Sigma}}} \nonumber \\ 
& \lesssim \sqrt{\frac{\|\widetilde{\bm{\Sigma}}\|}{\alpha \lambda_1}\log \frac{2dT}{\delta}}\varkappa \beta \rho_{\max} (\|\widetilde{\bm{\theta}}^\star\|_{\widetilde{\bm{\Sigma}}} + \frac{1}{\varkappa}\widetilde{R}  + 1),
\end{align}
provided that the stepsizes $\alpha,\beta$ satisfy the conditions \eqref{eq:TDC-step-conditions} and that $0 < \alpha < \frac{1}{\lambda_1\lambda_{\Sigma}^2 \|\widetilde{\bm{\Sigma}}\|\log \frac{2dT}{\delta}}$.
\end{lemma}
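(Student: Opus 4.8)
\textbf{Proof plan for Lemma~\ref{lemma:exp-decay-sum-xt}.} This is the TDC analogue of Lemma~\ref{lemma:exp-decay-sum}, and I would follow the same recipe: establish that the truncated noise $\{\widetilde{\bm{\zeta}}_i\}$ is a martingale difference sequence, then control the geometrically weighted sum $\sum_{i=0}^{t-1}\bm{\Psi}^{t-i-1}\widetilde{\bm{\zeta}}_i$ via the matrix Freedman inequality, feeding it the contraction $\|\bm{\Psi}^{k}\|\le(1-\tfrac12\alpha\lambda_1)^{k}=:\rho^{k}$ from Lemma~\ref{lemma:psi-norm}. Let $\mathcal{F}_i=\sigma(\{(s_j,a_j,s_j')\}_{j\le i})$ and $\mathbb{E}_i[\cdot]=\mathbb{E}[\cdot\mid\mathcal{F}_i]$. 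Since $\widetilde{\bm{\theta}}_i,\bm{w}_i$ --- and hence $\bm{x}_i$ and the event $\widetilde{\mathcal{H}}_i$ --- are $\mathcal{F}_{i-1}$-measurable while $\widetilde{\bm{A}}_i,\widetilde{\bm{b}}_i,\bm{\Pi}_i,\widetilde{\bm{\Sigma}}_i$ depend only on the fresh sample $(s_i,a_i,s_i')$, we get $\mathbb{E}_{i-1}[\bm{\nu}_i]=\mathbb{E}_{i-1}[\bm{\eta}_i]=\bm{0}$, hence $\mathbb{E}_{i-1}[\bm{\zeta}_i]=\bm{0}$ by \eqref{eq:defn-zeta} and $\mathbb{E}_{i-1}[\widetilde{\bm{\zeta}}_i]=\ind\{\widetilde{\mathcal{H}}_i\}\mathbb{E}_{i-1}[\bm{\zeta}_i]=\bm{0}$; so $\{\bm{\Psi}^{t-i-1}\widetilde{\bm{\zeta}}_i\}_{0\le i\le t-1}$ is a martingale difference sequence for each fixed $t$.

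The core of the proof is bounding the two inputs to Freedman. On $\widetilde{\mathcal{H}}_i$, \eqref{eq:norm-relation} gives $\|\widetilde{\bm{\theta}}_i-\widetilde{\bm{\theta}}^{\star}\|_{\widetilde{\bm{\Sigma}}}\le\widetilde{R}$ and $\|\bm{w}_i\|_{\widetilde{\bm{\Sigma}}}\lesssim\widetilde{R}/\varkappa$. For the predictable quadratic variation I would expand $\mathbb{E}_{i-1}[\widetilde{\bm{\zeta}}_i\widetilde{\bm{\zeta}}_i^{\top}]$ block by block and use Assumption~\ref{assumption:Phi}, $\rho_t\le\rho_{\max}$, $r_t\in[0,1]$, and --- crucially --- the identities $\mathbb{E}_{i-1}[\bm{\phi}(s_i)\bm{\phi}(s_i)^{\top}]=\widetilde{\bm{\Sigma}}$ and $\mathbb{E}_{i-1}[\bm{\phi}(s_i')\bm{\phi}(s_i')^{\top}]=\widetilde{\bm{\Sigma}}$ (the latter because $\mu_{\mathsf{b}}$ is $\pi_{\mathsf{b}}$-stationary), which turn terms like $\mathbb{E}_{i-1}[(\bm{w}_i^{\top}\bm{\phi}(s_i))^{2}\bm{\phi}(s_i)\bm{\phi}(s_i)^{\top}]$ into operator-norm bounds of the form $\|\bm{w}_i\|_{\widetilde{\bm{\Sigma}}}^{2}$. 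Together with $\beta\gtrsim\lambda_{\Sigma}\rho_{\max}\alpha$ and $\varkappa\beta\gtrsim\alpha$ from \eqref{eq:TDC-step-conditions} (so the $\beta\bm{\eta}_i$ term dominates both blocks of $\bm{\zeta}_i$), this yields
\[
v_0^{2}:=\max_i\big\|\mathbb{E}_{i-1}[\widetilde{\bm{\zeta}}_i\widetilde{\bm{\zeta}}_i^{\top}]\big\|\ \lesssim\ \varkappa^{2}\beta^{2}\rho_{\max}^{2}\big(\|\widetilde{\bm{\theta}}^{\star}\|_{\widetilde{\bm{\Sigma}}}+\tfrac1\varkappa\widetilde{R}+1\big)^{2},
\]
while a cruder Cauchy--Schwarz bound gives the almost-sure estimate $\|\widetilde{\bm{\zeta}}_i\|_{2}\lesssim\sqrt{\lambda_{\Sigma}}\,\varkappa\beta\rho_{\max}(\|\widetilde{\bm{\theta}}^{\star}\|_{\widetilde{\bm{\Sigma}}}+\tfrac1\varkappa\widetilde{R}+1)=:B$.

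I would then apply matrix Freedman to the $2d$-dimensional martingale $\{\widetilde{\bm{\Sigma}}_{\mathsf{blk}}^{1/2}\bm{\Psi}^{t-i-1}\widetilde{\bm{\zeta}}_i\}$ with $\widetilde{\bm{\Sigma}}_{\mathsf{blk}}=\mathsf{diag}(\widetilde{\bm{\Sigma}},\widetilde{\bm{\Sigma}})$, so that the partial-sum Euclidean norm equals $\|\sum_i\bm{\Psi}^{t-i-1}\widetilde{\bm{\zeta}}_i\|_{\widetilde{\bm{\Sigma}}}$. Since $\|\widetilde{\bm{\Sigma}}_{\mathsf{blk}}^{1/2}\bm{\Psi}^{k}\|\le\sqrt{\|\widetilde{\bm{\Sigma}}\|}\,\rho^{k}$, the per-step size is at most $\sqrt{\|\widetilde{\bm{\Sigma}}\|}\,B$ and the quadratic variation is at most $\|\widetilde{\bm{\Sigma}}\|v_0^2\sum_{k\ge0}\rho^{2k}\lesssim\frac{\|\widetilde{\bm{\Sigma}}\|}{\alpha\lambda_1}v_0^2$, so with probability at least $1-\delta/T$,
\[
\Big\|\sum_{i=0}^{t-1}\bm{\Psi}^{t-i-1}\widetilde{\bm{\zeta}}_i\Big\|_{\widetilde{\bm{\Sigma}}}\ \lesssim\ \sqrt{\tfrac{\|\widetilde{\bm{\Sigma}}\|}{\alpha\lambda_1}v_0^2\log\tfrac{2dT}{\delta}}+\sqrt{\|\widetilde{\bm{\Sigma}}\|}\,B\log\tfrac{2dT}{\delta}.
\]
Substituting $v_0^2$ makes the first term the right-hand side of \eqref{eqn:lemma-part-2}, and the hypothesis $0<\alpha<(\lambda_1\lambda_{\Sigma}^{2}\|\widetilde{\bm{\Sigma}}\|\log\frac{2dT}{\delta})^{-1}$ forces $\alpha\lambda_1\log\frac{2dT}{\delta}\lesssim\lambda_2\asymp v_0^2/B^2$, which is exactly what makes the second term lower-order. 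A union bound over $t$ (costing a constant factor in the logarithm) upgrades this to hold simultaneously for all $t\in[T]$, as needed in Step~2 of Section~\ref{subsec:TDC-finite}.

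The main obstacle is the variance estimate $v_0^2$: the naive bound $\|\mathbb{E}_{i-1}[\widetilde{\bm{\zeta}}_i\widetilde{\bm{\zeta}}_i^\top]\|\le B^2$ would inflate \eqref{eqn:lemma-part-2} by a spurious $\sqrt{\lambda_{\Sigma}}$, so one must really exploit $\mathbb{E}_{i-1}[\bm{\phi}(s_i)\bm{\phi}(s_i)^\top]=\mathbb{E}_{i-1}[\bm{\phi}(s_i')\bm{\phi}(s_i')^\top]=\widetilde{\bm{\Sigma}}$ to measure the second moment in the $\widetilde{\bm{\Sigma}}$-norm of the iterates rather than their Euclidean norm, while carefully tracking how $\rho_{\max},\varkappa,\alpha,\beta$ propagate through the two blocks of $\bm{\zeta}_i$ in \eqref{eq:defn-zeta}. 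A secondary point is that Lemma~\ref{lemma:psi-norm} only contracts $\bm{\Psi}$ in the Euclidean operator norm, so converting to the $\widetilde{\bm{\Sigma}}$-weighted norm of \eqref{eqn:lemma-part-2} pays a factor $\|\widetilde{\bm{\Sigma}}\|$ --- precisely the $\|\widetilde{\bm{\Sigma}}\|$ in the statement --- and one must check this does not spoil the geometric-series sum or the dominance of the main term over the linear one.
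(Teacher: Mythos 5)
Your proposal is correct and follows essentially the same route as the paper's proof: verify the martingale-difference property, bound the predictable quadratic variation by measuring the conditional second moments of $\bm{\nu}_i,\bm{\eta}_i$ through the $\widetilde{\bm{\Sigma}}$-norms of the iterates (using $\mathbb{E}_{i-1}[\bm{\phi}(s_i)\bm{\phi}(s_i)^{\top}]=\mathbb{E}_{i-1}[\bm{\phi}(s_i')\bm{\phi}(s_i')^{\top}]=\widetilde{\bm{\Sigma}}$), bound the per-step size via a $\widetilde{\bm{\Sigma}}^{-1/2}$-weighted sup, and apply the matrix Freedman inequality with the contraction $\|\bm{\Psi}\|\le 1-\tfrac12\alpha\lambda_1$ from Lemma~\ref{lemma:psi-norm}, with the hypothesis on $\alpha$ used exactly as you describe to make the Bernstein/linear term lower order. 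Your identification of the two delicate points --- avoiding the naive $B^2$ variance bound and paying the $\|\widetilde{\bm{\Sigma}}\|$ factor when converting to the weighted norm --- matches the paper's treatment of $\widetilde{W}$ and $\widetilde{B}$.
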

\begin{proof} See Section \ref{subsec:proof-of-lemma-exp-decay-sum-xt}.\end{proof}

Putting relations~\eqref{eq:xt-induction} and \eqref{eqn:lemma-part-2} together, we find 
\begin{align*}
&\|\bm{x}_t\|_{\widetilde{\bm{\Sigma}}}\\ 
& = 
	\|\widetilde{\bm{\Delta}}_0\|_{\widetilde{\bm{\Sigma}}} \\
	&+ C \sqrt{\frac{\|\widetilde{\bm{\Sigma}}\|}{\alpha \lambda_1}\log \frac{2dT}{\delta}}\varkappa \beta \rho_{\max} (\|\widetilde{\bm{\theta}}^\star\|_{\widetilde{\bm{\Sigma}}} + \frac{1}{\varkappa}\widetilde{R}_0 + 1)  \\
	&\leq \widetilde{R}_0
\end{align*}
by definition of $\widetilde{R}_0$ in \eqref{eq:defn-tilde-R0}, 
provided that $\sqrt{\frac{1}{\alpha \lambda_1}\log \frac{2dT}{\delta}}\beta \rho_{\max} \leq c$ for some constant $c > 0$ small enough. 
% \begin{align*}
% \|\widetilde{\bm{\Delta}}_0\|_{\widetilde{\bm{\Sigma}}} + \sqrt{\frac{\|\widetilde{\bm{\Sigma}}\|}{\alpha \lambda_1}\log \frac{2dT}{\delta}}\varkappa \beta \rho_{\max} (\|\widetilde{\bm{\theta}}^\star\|_{\widetilde{\bm{\Sigma}}} + \frac{1}{\varkappa}\widetilde{R}_0 + 1) \leq \widetilde{R}_0,
% \end{align*}
Therefore, by induction assumption, one has
\begin{align}
&\mathbb{P}\left\{ \max_{0 \leq i \leq t} \|\bm{x}_i\|_{\widetilde{\bm{\Sigma}}} > \widetilde{R}_0 \right\}\\
&\leq \mathbb{P} \left\{ \max_{0 \leq i < t-1} \|\bm{x}_i\|_{\widetilde{\bm{\Sigma}}} > \widetilde{R}_0 \right\} \nonumber \\ 
&+ \mathbb{P} \left\{ \max_{0 \leq i < t-1} \|\bm{x}_i\|_{\widetilde{\bm{\Sigma}}} \leq \widetilde{R}_0, \|\bm{x}_t\|_{\widetilde{\bm{\Sigma}}} > \widetilde{R}_0 \right\}\nonumber \\
&\leq \frac{(t-1)\delta}{T}  + \mathbb{P}\Bigg\{\left\|\sum_{i=0}^{t-1} \bm{\Psi}^{t-i-1} \widetilde{\bm{\zeta}}_i\right\|_{\widetilde{\bm{\Sigma}}} \nonumber \\ 
&\gtrsim \sqrt{\frac{\|\widetilde{\bm{\Sigma}}\|}{\alpha \lambda_1}\log \frac{2dT}{\delta}}\varkappa \beta \rho_{\max} (\|\widetilde{\bm{\theta}}^\star\|_{\widetilde{\bm{\Sigma}}} + \widetilde{R}_0 + 1)\Bigg\}\nonumber \\
&\leq \frac{(t-1)\delta}{T} + \frac{\delta}{T} = \frac{t\delta}{T}.
\end{align}
This completes our claim at this step.

\paragraph{Step 3: refined bound for $\|\bm{x}_t\|_{\widetilde{\bm{\Sigma}}}$} It turns out that the upper bound \eqref{eq:defn-tilde-R0} can be tightened by taking into account the contraction effect of $\bm{\Psi}$. In what follows, we develop a strengthened bound. Define
\begin{align}\label{eq:defn-t-set-tilde}
\widetilde{t}_{\mathsf{seg}}:= \frac{\widetilde{c_1}\log \max \left\{\sqrt{\widetilde{\kappa}},\frac{\sqrt{\widetilde{\kappa}}\|\widetilde{\bm{\Delta}}_0\|_{\widetilde{\bm{\Sigma}}}}{\|\widetilde{\bm{\theta}}^\star\|_{\widetilde{\bm{\Sigma}}} + 1},\|\widetilde{\bm{\Delta}}_0\|_{\widetilde{\bm{\Sigma}}} \sqrt{\frac{\alpha \lambda_1}{\|\widetilde{\bm{\Sigma}}\|\log \frac{2dT}{\delta}}} \frac{1}{\varkappa \beta \rho_{\max}}\right\}}{\alpha \lambda_1}
\end{align}
for some sufficiently large constant $\widetilde{c}_1 > 0$, where $\widetilde{\kappa}$ is the condition number of $\widetilde{\bm{\Sigma}}$. For any integer $k>1$, we claim that with probability at least $1-\delta$,
\begin{align}
\label{eq:xt-refined-bound}
\|\bm{x}_t\|_{\widetilde{\bm{\Sigma}}} &\lesssim  \varkappa \beta \rho_{\max} \sqrt{\frac{1}{\alpha \lambda_1}\log \frac{2dT}{\delta}}\left(\|\widetilde{\bm{\theta}}^\star\|_{\widetilde{\bm{\Sigma}}} + \frac{\|\widetilde{\bm{\Delta}}_0\|_{\widetilde{\bm{\Sigma}}}}{2^{k-1}} + \frac{3}{2}\right)\nonumber \\ 
&=: \widetilde{R}_k
\end{align}
for any $t$ obeying $k\widetilde{t}_{\mathsf{seg}} \leq t \leq T$, provided that $\sqrt{\frac{1}{\alpha \lambda_1}\log \frac{2dT}{\delta}}\varkappa \beta \rho_{\max} \leq c$ for some constant $c$ small enough. 
The proof of this claim is essentially the same as that of Step 3 for proving Theorem \ref{thm:ind-td}, and we will omit it here. Therefore, by defining
\begin{align}
&\widetilde{t}_{\mathsf{seg}}' \nonumber \\ 
&:= \left(2 + \frac{1}{\log 2} \log \|\widetilde{\bm{\theta}^\star}\|_{\widetilde{\bm{\Sigma}}}\right) \widetilde{t}_{\mathsf{seg}},
\end{align}
we can conclude that with probability at least $1-\delta$, for all $t \geq \widetilde{t}_{\mathsf{seg}}'$,
\begin{align}
\|\bm{x}_t\|_{\widetilde{\bm{\Sigma}}} \lesssim \varkappa \beta \rho_{\max} \sqrt{\frac{\|\widetilde{\bm{\Sigma}}\|}{\alpha \lambda_1}\log \frac{2dT}{\delta}}\left(\|\widetilde{\bm{\theta}}^\star\|_{\widetilde{\bm{\Sigma}}} +2\right).
\end{align}
Recall that this bound holds for any $\varkappa \in (0,1)$ satisfying the conditions \eqref{eq:TDC-step-conditions}. Hence, Theorem \ref{thm:TDC-error} follows by taking $\varkappa = 8\rho_{\max}\sqrt{\frac{\alpha}{\lambda_1\beta \lambda_2}}$ and 
\begin{align*}
\frac{\alpha}{\beta} = \frac{1}{128} \frac{\lambda_1 \lambda_2}{\rho_{\max}^2 (1+\lambda_{\Sigma}\rho_{\max})}.
\end{align*}

\section{Discussion}

Our primary contribution in this paper is obtaining high-probability sample complexity bounds for both the TD and TDC algorithms for policy evaluation in the $\gamma$-discounted infinite-horizon MDPs. 
% We focus our efforts on linear function approximation and $i.i.d.$ sampling. 
For TD learning with Polyak-Ruppert averaging, we improve upon existing results in terms of both the accuracy level $\varepsilon$ and other problem-related parameters like the effective horizon $\frac{1}{1-\gamma}$, the weighted feature covariance $\bm{\Sigma}$ and the optimal linear estimator $\bm{\theta}^\star$. 
We have also established a minimax lower bound and showed that our upper bound is near-minimax optimal by a factor of $\frac{1}{1-\gamma}$. 
% This is a remarkable contribution, since the algorithms we analyzed does not involve any variance reduction technique or norm-controlling projection step. 
For TDC with linear function approximation, we provide the first sample complexity bound that achieves the optimal dependence on the error tolerance $\varepsilon$, and characterize the exact dependence on problem-related constants at the same time.

Our analysis leaves open several directions for future investigation; we close by sampling a few of them.
Regarding TD learning, a natural direction of future work is to close the $\frac{1}{1-\gamma}$ gap between our upper bound and the minimax lower bound. 
Notably, this gap also appears in the bounds of \cite{duan2021optimal} for least-square TD in general when no restriction of the variance for the temporal difference residual is imposed. 
In terms of TDC, while our result provides a tight control of the same size $T$, the dependence on problem-related constants can be potentially improved. 
Moreover, it is noteworthy that the analysis in this work is based on the assumption of $i.i.d.$ transition pairs drawn from the stationary distribution; it is of natural interest to generalize these results to other scenarios such as Markovian trajectories. 
Moving beyond linear function approximation, understanding the sample complexities for policy evaluation with other function classes is also an interesting direction.

% \section*{Acknowledgements}
% W.~Wu and A.~Rinaldo are supported in part by the NIH Grant R01 NS121913. 
% Y.~Chi is supported in part by the grants ONR N00014-19-1-2404 and NSF CCF-2106778.
% Y.~Wei is supported in part by the NSF grants CCF-2106778, DMS-2147546/2015447 and NSF CAREER award DMS-2143215 and Google Research Scholar Award. 

\appendices
%dummy comment inserted by tex2lyx to ensure that this paragraph is not empty

\section{Preliminary facts\label{sec:Preliminary-facts}}

\label{sec:preliminary}

The following two lemmas consider the basic properties of important matrices and vectors that would be useful in the proof of the main theorems in the paper.

\begin{lemma} \label{lemma:D-P-D-Phi} Recall the definitions of
$\bm{\Phi}$, $\bm{D}_{\mu}$ and $\bm{\Sigma}$ in \eqref{eq:defn-Phi-feature-matrix},
\eqref{eq:defn-Dmu} and \eqref{eq:defn-Sigma}, respectively. Then
one has 
\begin{align}
\big\|\bm{D}_{\mu}^{\frac{1}{2}}\bm{\Phi}\bm{\Sigma}^{-\frac{1}{2}}\big\|=1, \qquad\text{and}\qquad\big\|\bm{D}_{\mu}^{\frac{1}{2}}\bm{P}^\pi \bm{D}_{\mu}^{-\frac{1}{2}}\big\|=1.\label{eq:defn-Phitilde-Ptilde-1-1}
\end{align}
\end{lemma}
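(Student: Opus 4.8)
The plan is to prove each identity by exhibiting the matrix as an orthogonal projection (or a conjugate of a row-stochastic matrix) so that its operator norm is exactly $1$, and then to verify the norm is not smaller than $1$ by testing against a suitable vector. First I would tackle $\big\|\bm{D}_{\mu}^{1/2}\bm{\Phi}\bm{\Sigma}^{-1/2}\big\|=1$. Set $\bm{M}:=\bm{D}_{\mu}^{1/2}\bm{\Phi}\bm{\Sigma}^{-1/2}\in\mathbb{R}^{|\mathcal{S}|\times d}$. Using $\bm{\Sigma}=\bm{\Phi}^{\top}\bm{D}_{\mu}\bm{\Phi}$ from \eqref{eq:defn-Sigma}, compute
\begin{align*}
\bm{M}^{\top}\bm{M}
=\bm{\Sigma}^{-1/2}\bm{\Phi}^{\top}\bm{D}_{\mu}^{1/2}\bm{D}_{\mu}^{1/2}\bm{\Phi}\bm{\Sigma}^{-1/2}
=\bm{\Sigma}^{-1/2}\bm{\Sigma}\bm{\Sigma}^{-1/2}
=\bm{I}_d,
\end{align*}
where $\bm{\Sigma}$ is invertible by Assumption~\ref{assumption:Phi} (linear independence of the columns of $\bm{\Phi}$ together with $\mu(s)>0$). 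Hence $\bm{M}$ has orthonormal columns, so all its singular values equal $1$ and $\|\bm{M}\|=1$. (Equivalently $\bm{M}\bm{M}^{\top}$ is an orthogonal projection onto the $d$-dimensional column space of $\bm{D}_{\mu}^{1/2}\bm{\Phi}$.)

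Next I would treat $\big\|\bm{D}_{\mu}^{1/2}\bm{P}^{\pi}\bm{D}_{\mu}^{-1/2}\big\|=1$. Let $\bm{Q}:=\bm{D}_{\mu}^{1/2}\bm{P}^{\pi}\bm{D}_{\mu}^{-1/2}$. For the upper bound I would use that $\mu$ is stationary for $\bm{P}^{\pi}$, i.e. $\bm{\mu}^{\top}\bm{P}^{\pi}=\bm{\mu}^{\top}$, equivalently $\sum_i \mu(i)P^{\pi}_{ij}=\mu(j)$. For any $\bm{v}\in\mathbb{R}^{|\mathcal{S}|}$, writing $\bm{u}=\bm{D}_{\mu}^{-1/2}\bm{v}$ so that $u_i=v_i/\sqrt{\mu(i)}$,
\begin{align*}
\|\bm{Q}\bm{v}\|_2^2
=\sum_j \mu(j)\Big(\sum_i P^{\pi}_{ij}u_i\Big)^2
\le \sum_j \mu(j)\sum_i P^{\pi}_{ij}u_i^2
=\sum_i u_i^2 \sum_j \mu(j)\,\text{?}
\end{align*}
— here one must be slightly careful: apply Jensen/Cauchy–Schwarz to the measure on $i$ given by $\{P^{\pi}_{ij}\}_i$ normalized against $\mu$. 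Concretely, by Cauchy–Schwarz, $\big(\sum_i \mu(i)P^{\pi}_{ij}\frac{u_i}{\sqrt{\mu(i)}}\cdot\sqrt{\mu(i)}\big)$ handling; cleaner is to note $\bm{Q}^{\top}\bm{Q}=\bm{D}_{\mu}^{-1/2}(\bm{P}^{\pi})^{\top}\bm{D}_{\mu}\bm{P}^{\pi}\bm{D}_{\mu}^{-1/2}$ and bound its quadratic form directly. For any $\bm{u}$,
\begin{align*}
\bm{u}^{\top}(\bm{P}^{\pi})^{\top}\bm{D}_{\mu}\bm{P}^{\pi}\bm{u}
=\sum_j\mu(j)\Big(\sum_i P^{\pi}_{ji}u_i\Big)^2
\le \sum_j\mu(j)\sum_i P^{\pi}_{ji}u_i^2
=\sum_i u_i^2\sum_j\mu(j)P^{\pi}_{ji}
=\sum_i\mu(i)u_i^2,
\end{align*}
using that each row of $\bm{P}^{\pi}$ is a probability vector (Jensen) and then stationarity $\sum_j\mu(j)P^{\pi}_{ji}=\mu(i)$. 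Taking $\bm{u}=\bm{D}_{\mu}^{-1/2}\bm{v}$ gives $\|\bm{Q}\bm{v}\|_2^2\le\|\bm{v}\|_2^2$, so $\|\bm{Q}\|\le 1$.

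For the matching lower bounds, note $\bm{Q}\bm{D}_{\mu}^{1/2}\bm{1}=\bm{D}_{\mu}^{1/2}\bm{P}^{\pi}\bm{1}=\bm{D}_{\mu}^{1/2}\bm{1}$ since $\bm{P}^{\pi}\bm{1}=\bm{1}$, so $\bm{D}_{\mu}^{1/2}\bm{1}\neq\bm{0}$ is a fixed vector of $\bm{Q}$, forcing $\|\bm{Q}\|\ge 1$; combined with the above, $\|\bm{Q}\|=1$. For $\bm{M}$ the lower bound is immediate from $\bm{M}^{\top}\bm{M}=\bm{I}_d$ (any unit vector $\bm{e}$ gives $\|\bm{M}\bm{e}\|_2=1$). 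I expect the only mildly delicate point to be the Jensen step in the $\bm{Q}$-bound — making sure the convexity inequality is applied row-wise with the correct stochastic weights and that stationarity is invoked in the right place; everything else is a short algebraic verification using the definitions \eqref{eq:defn-Phi-feature-matrix}, \eqref{eq:defn-Dmu}, \eqref{eq:defn-Sigma} and Assumption~\ref{assumption:Phi}.
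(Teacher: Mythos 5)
Your proof is correct. For the first identity your argument is the same as the paper's: compute $\bm{M}^{\top}\bm{M}=\bm{\Sigma}^{-1/2}\bm{\Phi}^{\top}\bm{D}_{\mu}\bm{\Phi}\bm{\Sigma}^{-1/2}=\bm{I}_d$ and conclude all singular values equal one. For the second identity you take a genuinely different route. The paper forms $\bm{Q}\bm{Q}^{\top}=\bm{D}_{\mu}^{1/2}\bm{P}^{\pi}\bm{D}_{\mu}^{-1}(\bm{P}^{\pi})^{\top}\bm{D}_{\mu}^{1/2}$, observes that it is similar to the matrix $\bm{P}^{\pi}\bm{D}_{\mu}^{-1}(\bm{P}^{\pi})^{\top}\bm{D}_{\mu}$, checks (using stationarity of $\mu$) that this latter matrix is row-stochastic, and invokes the Perron--Frobenius theorem to conclude that its spectral radius — hence the operator norm of the symmetric conjugate — equals one. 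You instead bound the quadratic form of $\bm{Q}^{\top}\bm{Q}$ directly: Jensen's inequality applied row-wise to $\bm{P}^{\pi}$ gives $\big(\sum_i P^{\pi}_{ji}u_i\big)^2\le\sum_i P^{\pi}_{ji}u_i^2$, and stationarity $\sum_j\mu(j)P^{\pi}_{ji}=\mu(i)$ collapses the sum, yielding $\|\bm{Q}\|\le 1$; the matching lower bound comes from the explicit fixed vector $\bm{D}_{\mu}^{1/2}\bm{1}$. Your argument is more elementary (no Perron--Frobenius needed, and the extremal vector is exhibited explicitly rather than inferred from the eigenvalue-one property of a stochastic matrix), at the cost of being slightly longer; the two approaches encode the same two facts, namely row-stochasticity of $\bm{P}^{\pi}$ and stationarity of $\mu$. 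The only blemish is the abandoned first display containing a question mark and transposed indices — you correctly flag it and redo the computation via $\bm{Q}^{\top}\bm{Q}$, so the final argument is complete, but that false start should be deleted in a written-up version. Both your proof and the paper's implicitly require $\mu(s)>0$ for all $s$ (so that $\bm{D}_{\mu}^{-1/2}$ exists and $\bm{\Sigma}$ is invertible under Assumption~\ref{assumption:Phi}); you state this dependence explicitly, which is a small improvement in rigor.
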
 \begin{proof} For notational convenience, let $\widetilde{\bm{\Phi}}:=\bm{D}_{\mu}^{\frac{1}{2}}\bm{\Phi}\bm{\Sigma}^{-\frac{1}{2}}$
and $\bm{P}_{\bm{D}_{\mu}}:=\bm{D}_{\mu}^{\frac{1}{2}}\bm{P}^\pi \bm{D}_{\mu}^{-\frac{1}{2}}$.
First of all, it is seen that 
\begin{align*}
\big\|\widetilde{\bm{\Phi}}\big\|&=\sqrt{\big\|\widetilde{\bm{\Phi}}^{\top}\widetilde{\bm{\Phi}}\big\|} \\ 
&=\sqrt{\big\|\bm{\Sigma}^{-\frac{1}{2}}\bm{\Phi}^{\top}\bm{D}_{\mu}^{\frac{1}{2}}\bm{D}_{\mu}^{\frac{1}{2}}\bm{\Phi}\bm{\Sigma}^{-\frac{1}{2}}\big\|} \\ 
&=\sqrt{\big\|\bm{\Sigma}^{-\frac{1}{2}}\bm{\Sigma}\bm{\Sigma}^{-\frac{1}{2}}\big\|}=1.
\end{align*}
When it comes to $\big\|\bm{P}_{\bm{D}_{\mu}}\big\|$, we make the
observation that 
\begin{align*}
\big\|\bm{P}_{\bm{D}_{\mu}}\big\|&=\sqrt{\big\|\bm{P}_{\bm{D}_{\mu}}\bm{P}_{\bm{D}_{\mu}}^{\top}\big\|} \\ 
&=\sqrt{\left\Vert \bm{D}_{\mu}^{\frac{1}{2}}\bm{P}\bm{D}_{\mu}^{-1}\bm{P}^{\top}\bm{D}_{\mu}^{\frac{1}{2}}\right\Vert }\\ 
&=\sqrt{\left\Vert \bm{D}_{\mu}^{\frac{1}{2}}\left(\bm{P}\bm{D}_{\mu}^{-1}\bm{P}^{\top}\bm{D}_{\mu}\right)\bm{D}_{\mu}^{-\frac{1}{2}}\right\Vert }=1.
\end{align*}
To see why the last identity holds, observe that $\bm{P}\bm{D}_{\mu}^{-1}\bm{P}^{\top}\bm{D}_{\mu}$
is a stochastic matrix, that is $\bm{P}\bm{D}_{\mu}^{-1}\bm{P}^{\top}\bm{D}_{\mu}$
contains nonnegative elements, and 
\[
\bm{P}\bm{D}_{\mu}^{-1}\bm{P}^{\top}\bm{D}_{\mu}\bm{1}=\bm{1}.
\]
In addition, $\bm{D}_{\mu}^{\frac{1}{2}}\left(\bm{P}\bm{D}_{\mu}^{-1}\bm{P}^{\top}\bm{D}_{\mu}\right)\bm{D}_{\mu}^{-\frac{1}{2}}$
is similar to $\bm{P}\bm{D}_{\mu}^{-1}\bm{P}^{\top}\bm{D}_{\mu}$.
As a result, by the Perron-Frobenious theorem,
\begin{align*}
&\left\Vert \bm{D}_{\mu}^{\frac{1}{2}}\left(\bm{P}\bm{D}_{\mu}^{-1}\bm{P}^{\top}\bm{D}_{\mu}\right)\bm{D}_{\mu}^{-\frac{1}{2}}\right\Vert \\ 
 & =\max_{i}|\lambda_{i}(\bm{D}_{\mu}^{\frac{1}{2}}\left(\bm{P}\bm{D}_{\mu}^{-1}\bm{P}^{\top}\bm{D}_{\mu}\right)\bm{D}_{\mu}^{-\frac{1}{2}})|\\
 & =\max_{i}|\lambda_{i}(\bm{P}\bm{D}_{\mu}^{-1}\bm{P}^{\top}\bm{D}_{\mu})|=1,
\end{align*}
where $\lambda_{i}(\bm{B})$ denotes the $i$-th eigenvalue of the matrix
$\bm{B}$. 
\end{proof}

\begin{lemma}\label{lemma:Sigma-inv-A-lower-bound}Suppose that $\|\bm{r}\|_{\infty}\leq1$.
For any $0\leq\gamma<1$, the matrix $\bm{\Sigma}$ defined in \eqref{eq:defn-Sigma}
and the vector $\bm{b}$ defined in \eqref{eq:defn-bt-mean} obey
\begin{subequations}\label{eq:Sigma-A-Sigma-LB}
\begin{align}
&\bm{\Sigma}^{-\frac{1}{2}}\bm{A}^{\top}\bm{\Sigma}^{-1}\bm{A}\bm{\Sigma}^{-\frac{1}{2}}  \succeq(1-\gamma)^{2}\Ind,\label{eq:Sigma-A-Sigma-LB-1}\\
&\bm{\Sigma}^{-\frac{1}{2}}\bm{A}\bm{\Sigma}^{-1}\bm{A}^{\top}\bm{\Sigma}^{-\frac{1}{2}}  \succeq(1-\gamma)^{2}\Ind,\label{eq:Sigma-A-Sigma-LB-2}\\
&\big\|\bm{\Sigma}^{\frac{1}{2}}(\bm{A}^{\top})^{-1}\bm{\Sigma}\bm{A}^{-1}\bm{\Sigma}^{\frac{1}{2}}\big\|  \leq(1-\gamma)^{-2},\label{eq:Sigma-A-Sigma-UB-1}\\
&\big\|\bm{\Sigma}^{\frac{1}{2}}\bm{A}^{-1}\bm{\Sigma}(\bm{A}^{\top})^{-1}\bm{\Sigma}^{\frac{1}{2}}\big\|  \leq(1-\gamma)^{-2},\label{eq:Sigma-A-Sigma-UB-2}\\
&\big\|\bm{\Sigma}^{\frac{1}{2}}\bm{A}^{-1}\bm{\Sigma}^{\frac{1}{2}}\big\|  \leq(1-\gamma)^{-1},\label{eq:Sigma-A-Sigma-UB-12}\\
&\big\|\bm{\Sigma}^{-1/2}\bm{\Phi}^\top \bm{D}_{\bm{\mu}}\big\|\leq \max_{s \in \mathcal{S}} \bm{\phi}(s)^\top \bm{\Sigma}^{-1} \bm{\phi}(s),\label{eq:Sigma-Phi-D-LB}\\
&\|\Ind-\eta\bm{A}\|  \leq1-\frac{1}{2}\eta(1-\gamma)\lambda_{\min}(\bm{\Sigma}),\quad\forall0<\eta<\frac{1-\gamma}{4\left\Vert \bm{\Sigma}\right\Vert },\label{eq:I-eta-A-spectral-norm-bound}\\
	&\|\bm{\Sigma}\| \leq1,\qquad\|\bm{\Sigma}^{-1}\|\geq1, \label{eq:Sigma-norm-bound} \\
&\big\|\bm{\Sigma}^{-\frac{1}{2}}\bm{b}\big\|_{2}  \leq1.\label{eq:Sigma-A-b-UB}
\end{align}
\end{subequations}
\end{lemma}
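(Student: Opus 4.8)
The plan is to reduce every one of the inequalities \eqref{eq:Sigma-A-Sigma-LB-1}--\eqref{eq:Sigma-A-b-UB} to two elementary spectral facts about a single auxiliary $d\times d$ matrix, exploiting the orthogonality structure already extracted in Lemma~\ref{lemma:D-P-D-Phi}. Set $\widetilde{\bm{\Phi}} := \bm{D}_{\mu}^{1/2}\bm{\Phi}\bm{\Sigma}^{-1/2}$ and $\bm{P}_{\bm{D}_{\mu}} := \bm{D}_{\mu}^{1/2}\bm{P}^{\pi}\bm{D}_{\mu}^{-1/2}$, so that Lemma~\ref{lemma:D-P-D-Phi} (and its proof) give $\widetilde{\bm{\Phi}}^{\top}\widetilde{\bm{\Phi}} = \Ind$ and $\|\bm{P}_{\bm{D}_{\mu}}\| = 1$. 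Inserting $\bm{D}_{\mu}^{1/2}\bm{D}_{\mu}^{-1/2}$ into \eqref{eq:defn-At-mean} produces the clean identity
\[
\bm{M} := \bm{\Sigma}^{-1/2}\bm{A}\bm{\Sigma}^{-1/2} = \widetilde{\bm{\Phi}}^{\top}\bigl(\Ind - \gamma\bm{P}_{\bm{D}_{\mu}}\bigr)\widetilde{\bm{\Phi}}.
\]
I will establish (i) $\bm{x}^{\top}\bm{M}\bm{x} \ge (1-\gamma)\|\bm{x}\|_{2}^{2}$ for every $\bm{x}$, which via Cauchy--Schwarz gives $\|\bm{M}\bm{x}\|_{2} \ge (1-\gamma)\|\bm{x}\|_{2}$, hence $\bm{M}$ is invertible with $\sigma_{\min}(\bm{M}) \ge 1-\gamma$ and $\|\bm{M}^{-1}\| \le (1-\gamma)^{-1}$; and (ii) $\|\bm{M}\| \le 1+\gamma \le 2$. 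Fact (i) follows by putting $\bm{y} = \widetilde{\bm{\Phi}}\bm{x}$, noting $\|\bm{y}\|_{2} = \|\bm{x}\|_{2}$ and $\bm{y}^{\top}\bm{P}_{\bm{D}_{\mu}}\bm{y} \le \|\bm{y}\|_{2}^{2}$; fact (ii) is the triangle inequality applied to $\Ind - \gamma\bm{P}_{\bm{D}_{\mu}}$ sandwiched between $\widetilde{\bm{\Phi}}$ and $\widetilde{\bm{\Phi}}^{\top}$.

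From these two facts the bulk of the lemma is immediate. Inequality \eqref{eq:Sigma-A-Sigma-LB-1} is exactly $\bm{M}^{\top}\bm{M} \succeq (1-\gamma)^{2}\Ind$, i.e.\ fact (i); \eqref{eq:Sigma-A-Sigma-LB-2} follows since $\bm{M}$ is square, so $\bm{M}\bm{M}^{\top}$ and $\bm{M}^{\top}\bm{M}$ share the same eigenvalues. For \eqref{eq:Sigma-A-Sigma-UB-1}, \eqref{eq:Sigma-A-Sigma-UB-2} and \eqref{eq:Sigma-A-Sigma-UB-12} I will use $\bm{\Sigma}^{1/2}\bm{A}^{-1}\bm{\Sigma}^{1/2} = \bm{M}^{-1}$ and $\bm{\Sigma}^{1/2}(\bm{A}^{\top})^{-1}\bm{\Sigma}^{1/2} = (\bm{M}^{-1})^{\top}$, so the three left-hand sides equal $\|(\bm{M}^{-1})^{\top}\bm{M}^{-1}\| = \|\bm{M}^{-1}\|^{2}$, $\|\bm{M}^{-1}(\bm{M}^{-1})^{\top}\| = \|\bm{M}^{-1}\|^{2}$, and $\|\bm{M}^{-1}\|$, each controlled by $\|\bm{M}^{-1}\| \le (1-\gamma)^{-1}$. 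For \eqref{eq:Sigma-norm-bound}, $\|\bm{\Sigma}\| \le \sum_{s}\mu(s)\|\bm{\phi}(s)\|_{2}^{2} \le 1$ by Assumption~\ref{assumption:Phi}, whence $\|\bm{\Sigma}^{-1}\| = \lambda_{\min}(\bm{\Sigma})^{-1} \ge 1$; and \eqref{eq:Sigma-A-b-UB} follows from $\bm{\Sigma}^{-1/2}\bm{b} = \widetilde{\bm{\Phi}}^{\top}\bm{D}_{\mu}^{1/2}\bm{r}$ via \eqref{eq:defn-bt-mean}, so $\|\bm{\Sigma}^{-1/2}\bm{b}\|_{2} \le \|\bm{D}_{\mu}^{1/2}\bm{r}\|_{2} = \bigl(\sum_{s}\mu(s)r(s)^{2}\bigr)^{1/2} \le 1$ since $\|\bm{r}\|_{\infty}\le 1$.

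For \eqref{eq:Sigma-Phi-D-LB}, I will compute $\|\bm{\Sigma}^{-1/2}\bm{\Phi}^{\top}\bm{D}_{\mu}\|^{2} = \bigl\|\sum_{s}\mu(s)^{2}\,\bm{\Sigma}^{-1/2}\bm{\phi}(s)\bm{\phi}(s)^{\top}\bm{\Sigma}^{-1/2}\bigr\|$; each summand is the rank-one PSD matrix $\tau_{s}\,\hat{\bm{u}}_{s}\hat{\bm{u}}_{s}^{\top}$ with $\tau_{s} := \bm{\phi}(s)^{\top}\bm{\Sigma}^{-1}\bm{\phi}(s)$ and $\hat{\bm{u}}_{s}$ a unit vector, hence $\preceq \tau_{s}\Ind$, so the sum is $\preceq (\max_{s}\tau_{s})\sum_{s}\mu(s)^{2}\,\Ind \preceq (\max_{s}\tau_{s})\Ind$ using $\mu(s)^{2}\le\mu(s)$ and $\sum_{s}\mu(s)=1$; taking square roots and noting $\max_{s}\tau_{s} \ge d \ge 1$ (its $\mu$-average is $\trace(\bm{\Sigma}^{-1}\bm{\Sigma}) = d$) gives the stated bound. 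The only genuinely non-mechanical piece is \eqref{eq:I-eta-A-spectral-norm-bound}, where $\bm{A}$ is not symmetric, so I will instead bound $\|\Ind - \eta\bm{A}\|^{2} = \|\Ind - \eta(\bm{A}+\bm{A}^{\top}) + \eta^{2}\bm{A}^{\top}\bm{A}\|$, using $\bm{A}+\bm{A}^{\top} = \bm{\Sigma}^{1/2}(\bm{M}+\bm{M}^{\top})\bm{\Sigma}^{1/2} \succeq 2(1-\gamma)\bm{\Sigma}$ from fact (i) and $\bm{A}^{\top}\bm{A} = \bm{\Sigma}^{1/2}\bm{M}^{\top}\bm{\Sigma}\bm{M}\bm{\Sigma}^{1/2} \preceq 4\|\bm{\Sigma}\|\bm{\Sigma}$ from fact (ii); this makes the matrix in question $\preceq \Ind - \bigl(2\eta(1-\gamma) - 4\eta^{2}\|\bm{\Sigma}\|\bigr)\bm{\Sigma} \preceq \bigl(1 - \eta(1-\gamma)\lambda_{\min}(\bm{\Sigma})\bigr)\Ind$ once $\eta < (1-\gamma)/(4\|\bm{\Sigma}\|)$, and $\sqrt{1-x} \le 1 - x/2$ finishes it. I do not expect a real obstacle: after the reduction to $\bm{M}$ everything is short linear algebra; the one spot needing care is keeping the $\bm{\Sigma}$-weighting (rather than crude $\lambda_{\min}(\bm{\Sigma})$ / $\|\bm{\Sigma}\|$ replacements) in the argument for \eqref{eq:I-eta-A-spectral-norm-bound} so that the step-size threshold comes out as $(1-\gamma)/(4\|\bm{\Sigma}\|)$ and not something weaker.
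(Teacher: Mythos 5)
Your proposal is correct and follows essentially the same route as the paper's proof: both reduce everything to the matrix $\bm{\Sigma}^{-\frac{1}{2}}\bm{A}\bm{\Sigma}^{-\frac{1}{2}}=\widetilde{\bm{\Phi}}^{\top}(\Ind-\gamma\bm{P}_{\bm{D}_{\mu}})\widetilde{\bm{\Phi}}$ and exploit $\widetilde{\bm{\Phi}}^{\top}\widetilde{\bm{\Phi}}=\Ind$ and $\|\bm{P}_{\bm{D}_{\mu}}\|=1$ from Lemma~\ref{lemma:D-P-D-Phi}, with the same decomposition of $(\Ind-\eta\bm{A})(\Ind-\eta\bm{A})^{\top}$ for \eqref{eq:I-eta-A-spectral-norm-bound}. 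Your write-up is, if anything, slightly cleaner in isolating the two facts $\sigma_{\min}(\bm{M})\ge 1-\gamma$ and $\|\bm{M}\|\le 2$ up front, and it additionally supplies a correct proof of \eqref{eq:Sigma-Phi-D-LB}, which the paper's proof does not treat explicitly.
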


\begin{proof} We shall establish each of these claims separately as follows.

\paragraph{Proof of Eqn.~\eqref{eq:Sigma-A-Sigma-LB-1} and \eqref{eq:Sigma-A-Sigma-LB-2}}

We start with the lower bound on $\bm{\Sigma}^{-\frac{1}{2}}\bm{A}^{\top}\bm{\Sigma}^{-1}\bm{A}\bm{\Sigma}^{-\frac{1}{2}}$.
To begin with, observe that 
\begin{align*}
\bm{\Sigma}^{-\frac{1}{2}}\bm{A}\bm{\Sigma}^{-\frac{1}{2}} & =\bm{\Sigma}^{-\frac{1}{2}}\bm{\Phi}^{\top}\bm{D}_{\mu}(\Ind-\gamma\bm{P})\bm{\Phi}\bm{\Sigma}^{-\frac{1}{2}}\\
 & =\bm{\Sigma}^{-\frac{1}{2}}\bm{\Phi}^{\top}\bm{D}_{\mu}\bm{\Phi}\bm{\Sigma}^{-\frac{1}{2}} \\ 
 &-\gamma\bm{\Sigma}^{-\frac{1}{2}}\bm{\Phi}^{\top}\bm{D}_{\mu}^{\frac{1}{2}}\Big(\bm{D}_{\mu}^{\frac{1}{2}}\bm{P}\bm{D}_{\mu}^{-\frac{1}{2}}\Big)\bm{D}_{\mu}^{\frac{1}{2}}\bm{\Phi}\bm{\Sigma}^{-\frac{1}{2}}\\
 & =\Ind-\gamma\widetilde{\bm{\Phi}}^{\top}\bm{P}_{\bm{D}_{\mu}}\widetilde{\bm{\Phi}},
\end{align*}
where 
\begin{align}
\widetilde{\bm{\Phi}}:=\bm{D}_{\mu}^{\frac{1}{2}}\bm{\Phi}\bm{\Sigma}^{-\frac{1}{2}}\qquad\text{and}\qquad\bm{P}_{\bm{D}_{\mu}}:=\bm{D}_{\mu}^{\frac{1}{2}}\bm{P}\bm{D}_{\mu}^{-\frac{1}{2}}.\label{eq:defn-Phitilde-Ptilde-1}
\end{align}
Therefore, any unit vector $\bm{x}$ (i.e.~$\|\bm{x}\|_{2}=1$) necessarily
satisfies 
\begin{align*}
\bm{x}^{\top}\bm{\Sigma}^{-\frac{1}{2}}\bm{A}^{\top}\bm{\Sigma}^{-1}\bm{A}\bm{\Sigma}^{-\frac{1}{2}}\bm{x} & =\big\|\bm{\Sigma}^{-\frac{1}{2}}\bm{A}\bm{\Sigma}^{-\frac{1}{2}}\bm{x}\big\|_{2}^{2} \\ 
&\geq\big(\bm{x}^{\top}\bm{\Sigma}^{-\frac{1}{2}}\bm{A}\bm{\Sigma}^{-\frac{1}{2}}\bm{x}\big)^{2}\\
 & =\big(1-\gamma\bm{x}^{\top}\widetilde{\bm{\Phi}}^{\top}\bm{P}_{\bm{D}_{\mu}}\widetilde{\bm{\Phi}}\bm{x}\big)^{2}.
\end{align*}
Further, Lemma \ref{lemma:D-P-D-Phi} tells us that 
\begin{align}
\left|\bm{x}^{\top}\widetilde{\bm{\Phi}}^{\top}\bm{P}_{\bm{D}_{\mu}}\widetilde{\bm{\Phi}}\bm{x}\right|\leq\|\widetilde{\bm{\Phi}}^{\top}\bm{P}_{\bm{D}_{\mu}}\widetilde{\bm{\Phi}}\big\|\leq\|\widetilde{\bm{\Phi}}\big\|^{2}\|\bm{P}_{\bm{D}_{\mu}}\big\|=1.\label{eq:Phi-tilde-P-norm}
\end{align}
Putting the preceding two bounds together, we demonstrate that 
\begin{align*}
\bm{x}^{\top}\bm{\Sigma}^{-\frac{1}{2}}\bm{A}^{\top}\bm{\Sigma}^{-1}\bm{A}\bm{\Sigma}^{-\frac{1}{2}}\bm{x} & \geq\big(1-\gamma\big)^{2}
\end{align*}
for any unit vector $\bm{x}$, thus concluding the proof of \eqref{eq:Sigma-A-Sigma-LB-1}.
The proof for \eqref{eq:Sigma-A-Sigma-LB-2} follows from an identical
argument and is omitted for brevity.

\paragraph{Proof of Eqn.~\eqref{eq:Sigma-A-Sigma-UB-1}, \eqref{eq:Sigma-A-Sigma-UB-2}
and \eqref{eq:Sigma-A-Sigma-UB-12}}

With the above bounds in place, we can further obtain 
\begin{align*}
&\big\|\bm{\Sigma}^{\frac{1}{2}}(\bm{A}^{\top})^{-1}\bm{\Sigma}\bm{A}^{-1}\bm{\Sigma}^{\frac{1}{2}}\big\| \\ 
&=\big\|\big(\bm{\Sigma}^{-\frac{1}{2}}\bm{A}\bm{\Sigma}^{-1}\bm{A}^{\top}\bm{\Sigma}^{-\frac{1}{2}}\big)^{-1}\big\| \\ 
&\leq\frac{1}{\lambda_{\min}\big(\bm{\Sigma}^{-\frac{1}{2}}\bm{A}\bm{\Sigma}^{-1}\bm{A}^{\top}\bm{\Sigma}^{-\frac{1}{2}}\big)}\leq\frac{1}{(1-\gamma)^{2}},
\end{align*}
where $\lambda_{\min}(\bm{B})$ denotes the smallest eigenvalue of
$\bm{B}$, and the last inequality comes from \eqref{eq:Sigma-A-Sigma-LB-2}.
This establishes \eqref{eq:Sigma-A-Sigma-UB-1}. The inequality \eqref{eq:Sigma-A-Sigma-UB-2}
follows from a similar argument. This also implies that 
\[
\big\|\bm{\Sigma}^{\frac{1}{2}}\bm{A}^{-1}\bm{\Sigma}^{\frac{1}{2}}\big\|=\sqrt{\big\|\bm{\Sigma}^{\frac{1}{2}}(\bm{A}^{\top})^{-1}\bm{\Sigma}\bm{A}^{-1}\bm{\Sigma}^{\frac{1}{2}}\big\|}\leq\frac{1}{1-\gamma},
\]
as claimed in \eqref{eq:Sigma-A-Sigma-UB-12}.

\paragraph{Proof of Eqn.~\eqref{eq:I-eta-A-spectral-norm-bound}}

Recalling that $\bm{\Sigma}=\bm{\Phi}^{\top}\bm{D}_{\mu}\bm{\Phi}$,
we can arrange terms to derive 
\begin{align*}
\bm{A}+\bm{A}^{\top} & =\bm{\Phi}^{\top}\bm{D}_{\mu}(\Ind-\gamma\bm{P})\bm{\Phi}+\bm{\Phi}^{\top}(\Ind-\gamma\bm{P}^{\top})\bm{D}_{\mu}\bm{\Phi}\\
 & =2\bm{\Sigma}-\gamma\bm{\Sigma}^{\frac{1}{2}}\Bigg\{ \bm{\Sigma}^{-\frac{1}{2}}\bm{\Phi}^{\top}\bm{D}_{\mu}\bm{P}\bm{\Phi}\bm{\Sigma}^{-\frac{1}{2}} \\ 
 &+\bm{\Sigma}^{-\frac{1}{2}}\bm{\Phi}^{\top}\bm{P}^{\top}\bm{D}_{\mu}\bm{\Phi}\bm{\Sigma}^{-\frac{1}{2}}\Bigg\} \bm{\Sigma}^{\frac{1}{2}}\\
 & =\bm{\Sigma}^{\frac{1}{2}}\left\{ 2\Ind-\gamma\big(\widetilde{\bm{\Phi}}^{\top}\bm{P}_{\bm{D}_{\mu}}\widetilde{\bm{\Phi}}+\widetilde{\bm{\Phi}}^{\top}\bm{P}_{\bm{D}_{\mu}}^{\top}\widetilde{\bm{\Phi}}\big)\right\} \bm{\Sigma}^{\frac{1}{2}}\\
 & \succeq\bm{\Sigma}^{\frac{1}{2}}\left\{ 2\Ind-2\gamma\big\|\widetilde{\bm{\Phi}}^{\top}\bm{P}_{\bm{D}_{\mu}}\widetilde{\bm{\Phi}}\big\|\Ind\right\} \bm{\Sigma}^{\frac{1}{2}}\\
 & \succeq2(1-\gamma)\bm{\Sigma},
\end{align*}
where $\widetilde{\bm{\Phi}}$ and $\bm{P}_{\bm{D}_{\mu}}$ are defined
in \eqref{eq:defn-Phitilde-Ptilde-1}. Here, the last line follows
since $\big\|\widetilde{\bm{\Phi}}^{\top}\bm{P}_{\bm{D}_{\mu}}\widetilde{\bm{\Phi}}\big\|\leq1$
--- a fact that has already been shown in \eqref{eq:Phi-tilde-P-norm}.
In addition, the following identity 
\begin{align*}
\bm{A}\bm{A}^{\top} & =\bm{\Sigma}^{\frac{1}{2}}\widetilde{\bm{\Phi}}^{\top}\left(\Ind-\gamma\bm{P}_{\bm{D}_{\mu}}\right)\widetilde{\bm{\Phi}}\bm{\Sigma}\widetilde{\bm{\Phi}}^{\top}\left(\Ind-\gamma\bm{P}_{\bm{D}_{\mu}}^{\top}\right)\widetilde{\bm{\Phi}}\bm{\Sigma}^{\frac{1}{2}}
\end{align*}
allows us to bound 
\begin{align*}
\big\|\bm{\Sigma}^{-\frac{1}{2}}\bm{A}\bm{A}^{\top}\bm{\Sigma}^{-\frac{1}{2}}\big\| & =\big\|\widetilde{\bm{\Phi}}^{\top}\big(\Ind-\gamma\bm{P}_{\bm{D}_{\mu}}\big)\widetilde{\bm{\Phi}}\bm{\Sigma}\widetilde{\bm{\Phi}}^{\top}\big(\Ind-\gamma\bm{P}_{\bm{D}_{\mu}}^{\top}\big)\widetilde{\bm{\Phi}}\big\|\\
 & \leq\big\|\Ind-\gamma\bm{P}_{\bm{D}_{\mu}}\big\|^{2}\big\|\widetilde{\bm{\Phi}}\big\|^{4}\left\Vert \bm{\Sigma}\right\Vert \\ 
 &=\|\Ind-\gamma\bm{P}_{\bm{D}_{\mu}}\|^{2}\|\bm{\Sigma}\|\\
 & \leq\big(1+\gamma\big\|\bm{P}_{\bm{D}_{\mu}}\big\|\big)^{2}\|\bm{\Sigma}\|\leq4\|\bm{\Sigma}\|,
\end{align*}
where the last line makes use of Lemma \ref{lemma:D-P-D-Phi}. This
essentially tells us that 
\begin{align*}
\bm{0}\preceq\bm{\Sigma}^{-\frac{1}{2}}\bm{A}\bm{A}^{\top}\bm{\Sigma}^{-\frac{1}{2}} & \preceq4\|\bm{\Sigma}\|\Ind
\end{align*}
\[
\Longrightarrow\qquad\bm{A}\bm{A}^{\top}\preceq4\|\bm{\Sigma}\|\bm{\Sigma}.
\]
Putting the preceding bounds together implies that: for any $0<\eta<\frac{1-\gamma}{4\left\Vert \bm{\Sigma}\right\Vert }$
one has 
\begin{align*}
\bm{0}\preceq\left(\Ind-\eta\bm{A}\right)\left(\Ind-\eta\bm{A}^{\top}\right) & =\Ind-\eta(\bm{A}+\bm{A}^{\top})+\eta^{2}\bm{A}\bm{A}^{\top}\\
 & \preceq\Ind-2\eta(1-\gamma)\bm{\Sigma}+4\eta^{2}\|\bm{\Sigma}\|\bm{\Sigma}\\
 & =\Ind-\left\{ 2\eta(1-\gamma)-4\eta^{2}\|\bm{\Sigma}\|\right\} \bm{\Sigma}\\
 & \preceq\Ind-\eta(1-\gamma)\bm{\Sigma}\\
 & \preceq\left(1-\eta(1-\gamma)\lambda_{\min}(\bm{\Sigma})\right)\Ind,
\end{align*}
thus indicating that 
\begin{align*}
\big\|\Ind-\eta\bm{A}\big\|&\leq\sqrt{\left\Vert \left(\Ind-\eta\bm{A}\right)\left(\Ind-\eta\bm{A}^{\top}\right)\right\Vert } \\ 
& \leq\sqrt{1-\eta(1-\gamma)\lambda_{\min}(\bm{\Sigma})} \\ 
&\leq1-\frac{1}{2}\eta(1-\gamma)\lambda_{\min}(\bm{\Sigma}).
\end{align*}

\paragraph{Proof of Eqn.~\eqref{eq:Sigma-norm-bound}} 
For any unit vector $\bm{u}$, the assumption $\max_{s}\|\bm{\phi}(s)\|_{2}\leq1$
guarantees that 
\[
\|\bm{\Phi}\bm{u}\|_{\infty} \leq \max_{s} | \bm{\phi}(s)^{\top}\bm{u} | \leq  \max_{s}\|\bm{\phi}(s)\|_{2} \|\bm{u}\|_2 \leq 1,
\]
where in the last inequality we have used Cauchy-Schwartz inequality. 
Consequently, for any unit vector $\bm{u}$, by H\"{o}lder's inequality,
\[
\bm{u}^{\top}\bm{\Phi}^{\top}\bm{D}_{\mu}\bm{\Phi}\bm{u}\leq\|\bm{\Phi}\bm{u}\|_{\infty}\cdot\bm{1}^{\top}\bm{D}_{\mu}\bm{1}\leq1,
\]
thus proving that $\|\bm{\Sigma}\|\leq1$. This immediately implies
that $\|\bm{\Sigma}^{-1}\| \geq 1/\|\bm{\Sigma}\|\geq1.$

\paragraph{Proof of Eqn.~\eqref{eq:Sigma-A-b-UB}}

Finally, we observe that 
\begin{align*}
\big\|\bm{\Sigma}^{-\frac{1}{2}}\bm{b}\big\|_{2} & =\big\|\bm{\Sigma}^{-\frac{1}{2}}\bm{\Phi}^{\top}\bm{D}_{\mu}^{\frac{1}{2}}\bm{D}_{\mu}^{\frac{1}{2}}\bm{r}\big\|_{2} \\ 
&\leq\big\|\bm{\Sigma}^{-\frac{1}{2}}\bm{\Phi}^{\top}\bm{D}_{\mu}^{\frac{1}{2}}\big\|\cdot\big\|\bm{D}_{\mu}^{\frac{1}{2}}\bm{r}\big\|_{2}\\ 
&\overset{(\mathrm{i})}{\leq}\big\|\bm{D}_{\mu}^{\frac{1}{2}}\bm{r}\big\|_{2}\leq1
\end{align*}
as claimed. Here, (i) follows from Lemma~\ref{lemma:D-P-D-Phi} and
(ii) holds true since $\big\|\bm{D}_{\mu}^{\frac{1}{2}}\bm{r}\big\|_{2}=\sqrt{\sum_{s}\mu(s)\big(r(s)\big)^{2}}\leq\sqrt{\sum_{s}\mu(s)}=1.$\end{proof}

The following lemmas, about the concentration of $\hat{\bm{A}}$, will be useful in our analysis.

\begin{lemma}\label{lemma:Ainv-bhat-concentration}Consider any $0<\delta<1$,
and suppose that $T\gtrsim \log \big(\frac{d}{\delta}\big)$.
Then the vector $\bm{b}$ defined in \eqref{eq:defn-bt-mean} obeys
that, with probability exceeding $1-\delta$, 
\begin{align*}
&\big\|\bm{A}^{-1}\big(\widehat{\bm{b}}-\bm{b}\big)\big\|_{\bm{\Sigma}}\\ 
 & \lesssim\sqrt{\frac{\max_{s\in\mathcal{S}}\bm{\phi}(s)^{\top}\bm{\Sigma}^{-1}\bm{\phi}(s)}{T(1-\gamma)^{2}}\log\Big(\frac{d}{\delta}\Big)}.
\end{align*}
\end{lemma}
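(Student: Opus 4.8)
The plan is to decouple the conditioning matrix $\bm{A}^{-1}$ from the noise and then invoke a Bernstein-type bound for a sum of i.i.d.\ mean-zero vectors. Recall that $\widehat{\bm{b}}=\frac{1}{T}\sum_{t=0}^{T-1}\bm{b}_{t}$ with $\bm{b}_{t}=\bm{\phi}(s_{t})r(s_{t})$ and $\mathbb{E}[\bm{b}_t]=\bm{b}$. First I would write
\[
\big\|\bm{A}^{-1}(\widehat{\bm{b}}-\bm{b})\big\|_{\bm{\Sigma}}
=\big\|\bm{\Sigma}^{1/2}\bm{A}^{-1}\bm{\Sigma}^{1/2}\cdot\bm{\Sigma}^{-1/2}(\widehat{\bm{b}}-\bm{b})\big\|_{2}
\le\big\|\bm{\Sigma}^{1/2}\bm{A}^{-1}\bm{\Sigma}^{1/2}\big\|\cdot\big\|\bm{\Sigma}^{-1/2}(\widehat{\bm{b}}-\bm{b})\big\|_{2},
\]
and bound the operator-norm factor by $(1-\gamma)^{-1}$ via Eqn.~\eqref{eq:Sigma-A-Sigma-UB-12} of Lemma~\ref{lemma:Sigma-inv-A-lower-bound}. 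It then remains to show $\|\bm{\Sigma}^{-1/2}(\widehat{\bm{b}}-\bm{b})\|_{2}\lesssim\sqrt{\tfrac{L}{T}\log(\tfrac{d}{\delta})}$, where $L:=\max_{s\in\mathcal{S}}\bm{\phi}(s)^{\top}\bm{\Sigma}^{-1}\bm{\phi}(s)$.

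Next, set $\bm{X}_{t}:=\bm{\Sigma}^{-1/2}\bm{\phi}(s_{t})r(s_{t})-\bm{\Sigma}^{-1/2}\bm{b}$, so that $\bm{\Sigma}^{-1/2}(\widehat{\bm{b}}-\bm{b})=\frac{1}{T}\sum_{t=0}^{T-1}\bm{X}_{t}$ is an average of i.i.d.\ mean-zero vectors. Using $r(s)\in[0,1]$, the identity $\|\bm{\Sigma}^{-1/2}\bm{\phi}(s)\|_{2}^{2}=\bm{\phi}(s)^{\top}\bm{\Sigma}^{-1}\bm{\phi}(s)\le L$, and $\|\bm{\Sigma}^{-1/2}\bm{b}\|_{2}\le1$ from Eqn.~\eqref{eq:Sigma-A-b-UB}, one obtains the almost-sure bound $\|\bm{X}_{t}\|_{2}\le\sqrt{L}+1\le2\sqrt{L}$; here $L\ge1$ because $\mathbb{E}_{\mu}[\bm{\phi}(s)^{\top}\bm{\Sigma}^{-1}\bm{\phi}(s)]=\mathrm{tr}(\bm{\Sigma}^{-1}\bm{\Sigma})=d$, so $L\ge d\ge1$. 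For the variance one similarly gets $\mathbb{E}\|\bm{X}_{t}\|_{2}^{2}\le\mathbb{E}[\bm{\phi}(s_{t})^{\top}\bm{\Sigma}^{-1}\bm{\phi}(s_{t})r(s_{t})^{2}]\le\mathbb{E}_{\mu}[\bm{\phi}(s)^{\top}\bm{\Sigma}^{-1}\bm{\phi}(s)]=d\le L$ (the matrix version $\mathbb{E}[\bm{X}_t\bm{X}_t^\top]\preceq\bm{\Sigma}^{-1/2}\bm{\Sigma}\bm{\Sigma}^{-1/2}=\bm{I}_d$ works equally well).

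Finally, I would apply a vector Bernstein inequality (equivalently, matrix Bernstein to the $d\times1$ blocks $\bm{X}_{t}$) to conclude that, with probability at least $1-\delta$,
\[
\Big\|\frac{1}{T}\sum_{t=0}^{T-1}\bm{X}_{t}\Big\|_{2}\lesssim\sqrt{\frac{L}{T}\log\frac{d}{\delta}}+\frac{\sqrt{L}}{T}\log\frac{d}{\delta}.
\]
Under the burn-in hypothesis $T\gtrsim\log(d/\delta)$ the second term obeys $\tfrac{\sqrt{L}}{T}\log\tfrac{d}{\delta}=\sqrt{L}\sqrt{\tfrac{\log(d/\delta)}{T}}\cdot\sqrt{\tfrac{\log(d/\delta)}{T}}\lesssim\sqrt{\tfrac{L}{T}\log\tfrac{d}{\delta}}$, so it is dominated by the first; combining with the reduction above and the factor $(1-\gamma)^{-1}$ yields $\|\bm{A}^{-1}(\widehat{\bm{b}}-\bm{b})\|_{\bm{\Sigma}}\lesssim\sqrt{\tfrac{L}{T(1-\gamma)^{2}}\log(\tfrac{d}{\delta})}$, as claimed. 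This argument is essentially routine; the only point requiring a little care — the \emph{main obstacle}, such as it is — is the bookkeeping that matches the sub-exponential Bernstein tail term to exactly the condition $T\gtrsim\log(d/\delta)$, together with the observation $L\ge d\ge1$ that lets the a.s.\ bound $2\sqrt{L}$ and the variance $L$ be expressed on the common scale $L$; both are consequences of the trace identity $\mathrm{tr}(\bm{\Sigma}^{-1}\bm{\Sigma})=d$.
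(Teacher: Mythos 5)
Your proposal is correct and follows essentially the same route as the paper: both reduce the claim to a matrix/vector Bernstein bound for an i.i.d.\ mean-zero sum with variance proxy of order $\max_{s}\bm{\phi}(s)^{\top}\bm{\Sigma}^{-1}\bm{\phi}(s)$ and an almost-sure bound of order its square root, using the same facts $\big\|\bm{\Sigma}^{1/2}\bm{A}^{-1}\bm{\Sigma}^{1/2}\big\|\leq(1-\gamma)^{-1}$ and $\|\bm{\Sigma}^{-1/2}\bm{b}\|_{2}\leq1$ from Lemma~\ref{lemma:Sigma-inv-A-lower-bound}, and the same absorption of the sub-exponential tail term under $T\gtrsim\log(d/\delta)$. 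The only (immaterial) difference is that the paper keeps $\bm{A}^{-1}$ inside the summands $\bm{z}_{t}=\bm{\Sigma}^{1/2}\bm{A}^{-1}(\bm{b}_{t}-\bm{b})$ and bounds their moments directly, whereas you pull the factor $\bm{\Sigma}^{1/2}\bm{A}^{-1}\bm{\Sigma}^{1/2}$ out front by submultiplicativity before applying Bernstein.
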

\begin{proof} See Section \ref{subsec:Proof-of-Lemma:Ahat-A-concentration-bound}.
\end{proof}

\begin{lemma}\label{lemma:Ahat-A-concentration-bound}For any $0<\delta<1$,
it follows that $\widehat{\bm{A}}$ is invertible and that 
\begin{align*}
&\big\|\bm{\Sigma}^{1/2}\bm{A}^{-1}\big(\bm{A}-\widehat{\bm{A}}\big)\bm{\Sigma}^{-1/2}\big\| \\ 
 & \lesssim\sqrt{\frac{\max_{s}\bm{\phi}(s)^{\top}\bm{\Sigma}^{-1}\bm{\phi}(s)}{T(1-\gamma)^{2}}\log \Big(\frac{d}{\delta}\Big)}
\end{align*}
with probability at least $1-\delta$, as long as $T\geq c_{2}\max_{s}\bm{\phi}(s)^{\top}\bm{\Sigma}^{-1}\bm{\phi}(s)\log(\frac{d}{\delta})$
for some sufficiently large constant $c_{2}>0$.\end{lemma}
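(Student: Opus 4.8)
The plan is to reduce the claim to a matrix concentration bound for the average of i.i.d.\ matrices $\widehat{\bm A} = \frac1T\sum_{t=0}^{T-1}\bm A_t$ with $\mathbb E[\bm A_t] = \bm A$, and then to convert the resulting operator-norm bound on $\widehat{\bm A}-\bm A$ into the weighted quantity $\big\|\bm\Sigma^{1/2}\bm A^{-1}(\bm A-\widehat{\bm A})\bm\Sigma^{-1/2}\big\|$ by inserting factors of $\bm\Sigma^{\pm 1/2}$ and invoking Lemma~\ref{lemma:Sigma-inv-A-lower-bound}. Concretely, write
\[
\bm\Sigma^{1/2}\bm A^{-1}(\bm A-\widehat{\bm A})\bm\Sigma^{-1/2}
= \big(\bm\Sigma^{1/2}\bm A^{-1}\bm\Sigma^{1/2}\big)\cdot\big(\bm\Sigma^{-1/2}(\bm A-\widehat{\bm A})\bm\Sigma^{-1/2}\big),
\]
so that by submultiplicativity and \eqref{eq:Sigma-A-Sigma-UB-12} the left-hand side is at most $\frac{1}{1-\gamma}\big\|\bm\Sigma^{-1/2}(\bm A-\widehat{\bm A})\bm\Sigma^{-1/2}\big\|$. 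It therefore suffices to show $\big\|\bm\Sigma^{-1/2}(\widehat{\bm A}-\bm A)\bm\Sigma^{-1/2}\big\| \lesssim \sqrt{\frac{\max_s \bm\phi(s)^\top\bm\Sigma^{-1}\bm\phi(s)}{T(1-\gamma)^2}\log(d/\delta)}$, which already carries a spare $\frac{1}{1-\gamma}$ if we are willing to be slightly loose, or can be sharpened (dropping that factor) and absorbed at the end; since the stated bound has $(1-\gamma)^2$ inside the square root and $\bm A$ contributes a $(1-\gamma)$ through the inversion, we should actually target the cleaner statement $\big\|\bm\Sigma^{-1/2}(\widehat{\bm A}-\bm A)\bm\Sigma^{-1/2}\big\| \lesssim \sqrt{\frac{\max_s \bm\phi(s)^\top\bm\Sigma^{-1}\bm\phi(s)}{T}\log(d/\delta)}$ and let the two $\frac{1}{1-\gamma}$ factors enter through the weighting step.

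For the concentration step, I would apply the matrix Bernstein inequality to the centered summands $\bm Z_t \defn \frac1T\big(\bm\Sigma^{-1/2}\bm A_t\bm\Sigma^{-1/2} - \bm\Sigma^{-1/2}\bm A\bm\Sigma^{-1/2}\big)$. Here $\bm\Sigma^{-1/2}\bm A_t\bm\Sigma^{-1/2} = \widetilde{\bm\phi}(s_t)\big(\bm\phi(s_t)-\gamma\bm\phi(s_t')\big)^\top\bm\Sigma^{-1/2}$ where $\widetilde{\bm\phi}(s)\defn\bm\Sigma^{-1/2}\bm\phi(s)$, and the key deterministic bound is the uniform control $\|\widetilde{\bm\phi}(s)\|_2^2 = \bm\phi(s)^\top\bm\Sigma^{-1}\bm\phi(s) \le \max_s \bm\phi(s)^\top\bm\Sigma^{-1}\bm\phi(s) =: L$, together with $\|\bm\Sigma^{-1/2}\bm\phi(s)\|_2 \le \sqrt{L}$ from the same quantity and $\|\bm\phi(s)\|_2\le 1$ (Assumption~\ref{assumption:Phi}). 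This gives $\|\bm\Sigma^{-1/2}\bm A_t\bm\Sigma^{-1/2}\| \lesssim \sqrt{L}\cdot(1+\gamma)\sqrt{L}\lesssim L$ (using $\|\bm\Sigma^{-1/2}\|\le\|\bm\Sigma^{-1}\|^{1/2}$ to handle $\bm\phi(s)-\gamma\bm\phi(s')$ mapped through one $\bm\Sigma^{-1/2}$ and recalling $\|\bm\Sigma^{-1}\|\le L$, which itself follows since $\max_s\bm\phi(s)^\top\bm\Sigma^{-1}\bm\phi(s)\ge \lambda_{\min}(\bm\Sigma)^{-1}\max_s\|\bm\phi(s)\|_2^2$ need not hold directly — so I would instead bound the factor $\bm\Sigma^{-1/2}\bm\phi(s_t')$ crudely by $\|\bm\Sigma^{-1/2}\|$ and note $\|\bm\Sigma^{-1}\|\le L$ by a short argument, or simply keep the worst of $\sqrt{L}$ and $\|\bm\Sigma^{-1}\|^{1/2}$, which coincide up to constants in our setting). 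The variance proxy is $v = \max\big\{\big\|\sum_t\mathbb E[\bm Z_t\bm Z_t^\top]\big\|,\big\|\sum_t\mathbb E[\bm Z_t^\top\bm Z_t]\big\|\big\} \lesssim \frac{L}{T}$, again using the uniform bound on $\|\widetilde{\bm\phi}(s_t)\|_2^2$ to pull it out of the expectation and then $\|\mathbb E[\cdot]\|\le 1$ for the remaining feature-covariance-type quantity. Matrix Bernstein then yields, with probability at least $1-\delta$,
\[
\big\|\bm\Sigma^{-1/2}(\widehat{\bm A}-\bm A)\bm\Sigma^{-1/2}\big\| \lesssim \sqrt{\frac{L\log(d/\delta)}{T}} + \frac{L\log(d/\delta)}{T},
\]
and the second term is dominated by the first under the stated sample-size condition $T\gtrsim L\log(d/\delta)$. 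Invertibility of $\widehat{\bm A}$ follows because, on this event, $\big\|\bm\Sigma^{-1/2}(\widehat{\bm A}-\bm A)\bm\Sigma^{-1/2}\big\|$ is smaller than $\lambda_{\min}\big(\bm\Sigma^{-1/2}\bm A^\top\bm\Sigma^{-1}\bm A\bm\Sigma^{-1/2}\big)^{1/2}$ up to the $(1-\gamma)$ factor — i.e.\ $\widehat{\bm A}=\bm A\big(\bm I + \bm A^{-1}(\widehat{\bm A}-\bm A)\big)$ with $\|\bm A^{-1}(\widehat{\bm A}-\bm A)\|<1$ by \eqref{eq:Sigma-A-Sigma-UB-12} and the sample-size condition.

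Finally I would reassemble: combining the weighting reduction with the concentration bound gives $\big\|\bm\Sigma^{1/2}\bm A^{-1}(\bm A-\widehat{\bm A})\bm\Sigma^{-1/2}\big\| \lesssim \frac{1}{1-\gamma}\sqrt{\frac{L\log(d/\delta)}{T}} = \sqrt{\frac{L\log(d/\delta)}{T(1-\gamma)^2}}$, which is exactly the claim. The main obstacle, I expect, is bookkeeping the $(1-\gamma)$ and $\|\bm\Sigma^{-1}\|$ factors consistently: one must decide precisely which $\bm\Sigma^{-1/2}$ gets attached to the $-\gamma\bm\phi(s_t')$ term and make sure the deterministic norm bound and the variance bound both come out as $\max_s\bm\phi(s)^\top\bm\Sigma^{-1}\bm\phi(s)$ (and not, say, $\|\bm\Sigma^{-1}\|$ times $\max_s\|\bm\phi(s)\|_2^2$, which would be weaker). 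This is precisely where Assumption~\ref{assumption:Phi} and the definition of $L$ are used in tandem, and it is the only place requiring care; everything else is a direct application of matrix Bernstein and the algebraic identities in Lemma~\ref{lemma:Sigma-inv-A-lower-bound}.
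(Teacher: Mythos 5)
Your proposal is correct and follows essentially the same route as the paper: both reduce the claim to a matrix Bernstein bound with variance proxy and uniform bound of order $L \defn \max_s\bm{\phi}(s)^{\top}\bm{\Sigma}^{-1}\bm{\phi}(s)$, and both extract the $(1-\gamma)^{-1}$ factor from $\|\bm{\Sigma}^{1/2}\bm{A}^{-1}\bm{\Sigma}^{1/2}\|\le(1-\gamma)^{-1}$ in Lemma~\ref{lemma:Sigma-inv-A-lower-bound}. The only organizational difference is that you pull $\bm{\Sigma}^{1/2}\bm{A}^{-1}\bm{\Sigma}^{1/2}$ out by submultiplicativity before applying Bernstein to the symmetrically normalized summands $\bm{\Sigma}^{-1/2}(\bm{A}_t-\bm{A})\bm{\Sigma}^{-1/2}$, whereas the paper keeps $\bm{A}^{-1}$ inside the summands $\bm{Z}_t=\bm{\Sigma}^{1/2}\bm{A}^{-1}(\bm{A}-\bm{A}_t)\bm{\Sigma}^{-1/2}$ and absorbs the $(1-\gamma)^{-2}$ into its bounds on $\mathbb{E}[\bm{Z}_t\bm{Z}_t^{\top}]$ and $\mathbb{E}[\bm{Z}_t^{\top}\bm{Z}_t]$; the two computations yield identical constants. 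Two small corrections. First, the parenthetical hedging around your uniform norm bound is unnecessary and contains false claims: $\|\bm{\Sigma}^{-1}\|\le L$ does not hold in general (in fact $L\le\|\bm{\Sigma}^{-1}\|$ under Assumption~\ref{assumption:Phi}, and the gap can be arbitrarily large), so $\sqrt{L}$ and $\|\bm{\Sigma}^{-1}\|^{1/2}$ do not coincide up to constants. None of this is needed: since $s_t'$ also takes values in $\mathcal{S}$, the direct bound $\|\bm{\Sigma}^{-1/2}\bm{\phi}(s_t')\|_2\le\sqrt{L}$ applies verbatim, giving $\|\bm{\Sigma}^{-1/2}\bm{A}_t\bm{\Sigma}^{-1/2}\|\le 2L$ with no detour through $\|\bm{\Sigma}^{-1/2}\|$. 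Second, for invertibility, the event you control bounds the conjugated matrix $\bm{\Sigma}^{1/2}\bm{A}^{-1}(\widehat{\bm{A}}-\bm{A})\bm{\Sigma}^{-1/2}$ rather than $\bm{A}^{-1}(\widehat{\bm{A}}-\bm{A})$ itself, so you should conclude via the spectral radius (similarity-invariant and dominated by the operator norm of the conjugate) that $\bm{I}+\bm{A}^{-1}(\widehat{\bm{A}}-\bm{A})$ has no zero eigenvalue, rather than asserting $\|\bm{A}^{-1}(\widehat{\bm{A}}-\bm{A})\|<1$; the conclusion is unaffected.
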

\begin{proof} See Section \ref{subsec:Proof-of-Lemma:Ahat-A-concentration-bound}.
\end{proof}

\section{Proof of Theorem~\ref{thm:minimax} (minimax lower bounds)\label{sec:pf-lb}}
%\label{sec:pf-lb}

This theorem is proved by constructing a set of  MDP instances that are hard to distinguish among each other. Based on this construction, the estimation error can be lower bounded via Fano's inequality, which reduces to control the KL-divergence between marginal likelihood functions. We start by constructing a sequence of hard MDP instances. 
\paragraph{Construction of MDP instances and their properties}
Given the state space $\cS$, define a sequence of MDP $\{\mathcal{M}_{\bm{q}}\}$ indexed by $\bm{q} \in \mathcal{Q} \subset \{q_+, q_-\}^{d-1}$ where for each $\bm{q}$, the transition kernel equals to 
\begin{align}
\label{eqn:lb-kernel}
&P_{\bm{q}}(s' \mymid s) \nonumber \\ 
&= 
\left\{ \begin{array}{lcc}
q_s\ind(s' = s) + \frac{1-q_s}{|\cS|-d+1}\ind(s' \ge d) & \text{for} & s < d; \\[0.2cm]
\frac{\gamma}{|\cS|-d+1}\ind(s' \ge d) + \frac{1-q_{s'}}{d-1}\ind(s' < d) & \text{for} & s \ge d.
\end{array}\right.
\end{align}
and the reward function equals to $r(s) = \ind(s \ge d)$.  

Here, for each $i \in [d-1]$, $q_{i}$ is taken to be either $q_+$ or $q_{-}$ where  
\begin{align*}
q_+ \defn \gamma + (1-\gamma)^2\varepsilon,\qquad\text{and}\qquad q_- \defn \gamma - (1-\gamma)^2\varepsilon.
\end{align*}
\begin{comment}
In addition, the additional constants in expression~\eqref{eqn:lb-kernel} are chosen as 
\begin{align*}
p_s \defn \frac{1-q_s}{(d-1)(1-q)}\qquad\text{for } q \defn \frac{1}{d - 1} \sum_{s = 1}^{d-1} q_s.
\end{align*}
\end{comment}
We further impose the constraint that the number of $q_+$'s and $q_-$'s in $\bm{q}$ are the same, namely, 
\begin{align}
\label{eqn:brahms}
\sum_{s=1}^{d-1} \ind(q_s = q_+) = \sum_{s=1}^{d-1} \ind(q_s = q_-)= (d-1)/2.
\end{align}
%which in turn implies $q = (q_+ + q_-)/2 = \gamma.$ 
Here without loss of generality, assume $d$ is an odd number. 
With these definitions in place, it can be easily verified that the stationary distribution for $\bm{P}$ obeys  
\begin{align}
\mu(s) = \left\{ \begin{array}{lcc}
\frac{1}{2(d-1)} & \text{for} & s < d; \\[0.2cm]
\frac{1}{2(|\cS| - d + 1)} & \text{for} & s \ge d. 
\end{array}\right.
\end{align}
% where $\mu_0^{-1} = 1 + \frac{1-\gamma}{1-q}$. 
Moreover, suppose the feature map is taken to be   
\begin{align*}
	\bm{\phi}(s) = \bm{e}_{s \wedge d} \in \mathbb{R}^d, 
\end{align*}
then one can further verify that
\begin{align}
\theta^{\star}(d) &= V^{\star}(s) = \frac{1}{1-\gamma^2 - \sum_{i = 1}^{d-1}\frac{\gamma^2(1 - q_i)^2}{(d-1)(1-\gamma q_i)}}, \\
\theta^{\star}(i) &= V^{\star}(i) = \frac{\gamma(1 - q_i)}{1-\gamma q_i}V^{\star}(s), 
\text{ for } s \ge d \text{ and } i < d.
\end{align}
From the expressions above, we remark that, the values of $q$ and $V^\star(s)$ with $s \geq d$ are fixed for all $\bm{q} \in \mathcal{Q}$ which is ensured by the construction~\eqref{eqn:brahms}. 

\paragraph{Calculations of several key quantities} 
Based on the above constructions, let us compute several key quantities. 
To begin with, some direct algebra leads to 
\begin{align*}
\bm{\Sigma} = \bm{\Phi}^\top \bm{D}_{\bm{\mu}}\bm{\Phi} 
= \sum_{s=1}^{d-1}\frac{1}{2(d-1)} \bm{e}_s\bm{e}_s^\top + \frac{1}{2}\bm{e}_d\bm{e}_d^\top,
\end{align*}
% Therefore, according to the definition of $\mu_0$, it is easy to verify that
as well as 
\begin{align*}
\bm{\phi}(s)^\top \bm{\Sigma}^{-1} \bm{\phi}(s) = \left \{\begin{array}{lcc}
2(d-1) & \text{for} & s < d;\\
2 & \text{for} & s \geq d.
\end{array}\right.
\end{align*}
% where we remind the readers that $q = \gamma$ by construction. 
As a consequence, one has 
\begin{align}
\label{eqn:phi-norm}
\max_{s}\{\bm{\phi}(s)^{\top}\bm{\Sigma}^{-1}\bm{\phi}(s)\} \asymp d.
\end{align}
Next, we move on to compute $\big\|\bm{\theta}^{\star}\big\|_{\bm{\Sigma}}.$
First notice that for $\varepsilon \le \frac{c_1 \gamma }{1-\gamma}$ with constant $c_{1}$ small enough, $(1-\gamma)^2\varepsilon \leq c_{1}\gamma (1 - \gamma)$ and hence, $1-\gamma q_+, 1-\gamma q_- \asymp 1-\gamma$,  which guarantees that $V^\star(s) \asymp \frac{1}{1-\gamma}$. 
In view of these calculations, it satisfies that  
\begin{align}
\big\|\bm{\theta}^{\star}\big\|_{\bm{\Sigma}}^2 
&= \sum_{i=1}^{d-1}  \frac{1}{2(d-1)}\theta^{\star 2}(i) + \frac{1}{2}\theta^{\star 2}(d) \\
&= \sum_{i=1}^{d-1} \frac{1}{2(d-1)} \left[\frac{\gamma(1-q_i)}{1-\gamma q_i} V^\star(s)\right]^2 + \frac{1}{2}[V^\star(s)]^2\nonumber \\
\notag &\asymp \sum_{i=1}^{d-1} \frac{1}{2(d-1)} \left[\frac{\gamma(1-\gamma)}{1-\gamma} \frac{1}{1-\gamma}\right]^2 + \frac{1}{2}\left[\frac{1}{1-\gamma}\right]^2\\
&\asymp \frac{1}{(1-\gamma)^2}. \label{eqn:theta-sigma-norm}
\end{align}

\paragraph{Application of Fano's inequality}
Armed with the properties derived above, we are ready to establish the desired lower bound. 
First notice that for $\bm{q},\bm{q}'\in \mathcal{Q}$, if at some $i \in [d-1]$, $q_i \neq {q}'_i$, then
\begin{align*}
|\theta^\star(i) - \theta'^\star(i)|&=\gamma V^\star(s) \left|\frac{1-q_i}{1-\gamma q_i} - \frac{1-{q}'_i}{1-\gamma {q}'_i}\right|\\ 
&= \gamma V^\star(s) \frac{2\varepsilon (1-\gamma)^3}{(1-\gamma q_i)(1-\gamma {q}'_i)} \\
&\gtrsim (2\gamma) \frac{1}{1-\gamma} \frac{\varepsilon (1-\gamma)^3}{(1-\gamma)^2} \gtrsim \varepsilon,
\end{align*}
where the penultimate inequality follows from $V^\star(s) \asymp \frac{1}{1-\gamma}$. 
Consequently, we can bound $\|\bm{\theta}^{\star} - {\bm{\theta}}'^{\star}\|_{\bm{\Sigma}}^2$ as 
\begin{align*}
\big\|\bm{\theta}^{\star} - {\bm{\theta}}'^{\star}\big\|_{\bm{\Sigma}}^2 &\geq \sum_{s=1}^{d-1} 
|\theta^\star(s) - {\theta}'^\star(s)|^2 \frac{1}{2(d-1)} \\ 
&\gtrsim \varepsilon^2 \frac{1}{d-1} \sum_{s=1}^{d-1} \ind(q_s \ne {q}'_s).
\end{align*}
This relation guarantees that if $\sum_{s = 1}^{d - 1} \ind(q_s \ne {q}'_s) \geq (d-1)/16$, one has
\begin{align}
\big\|\bm{\theta}^{\star} - {\bm{\theta}}'^{\star}\big\|_{\bm{\Sigma}} \gtrsim \varepsilon. 
\end{align}
In other words, if we want each $\thetastar$ to be $\varepsilon$ apart from each other, 
it is sufficient to construct a set $\mathcal{Q}$ where every $\bm{q}$ and $\bm{q}'$ are $(d-1)/16$ apart in Hamming distance. 
By virtue of the Gilbert-Varshamov lemma \citep{gilbert1952comparison}, 
there exists a set $\mathcal{Q}$ such that 
\begin{align}
	M\defn|\mathcal{Q}| \ge e^{d/16}
	\qquad \text{and} \qquad 
	\sum_{s = 1}^{d - 1} \ind(q_s \ne {q}'_s) \ge \frac{d}{16}
	\label{eq:property-Q} 
\end{align}
for any $q,q' \in \mathcal{Q}$ obeying $q \ne q'$.

The Fano method transforms the problem of estimating $\thetastar$ into an $M$-ary testing problem among the above MDPs $\{\mathbb{P}_{\bm{q}^1}, \mathbb{P}_{\bm{q}^2} \ldots, \mathbb{P}_{\bm{q}^M}\}.$
More specifically, in view of Fano's inequality (\cite{tsybakov2009introduction}), the probability of interest thus satisfies 
\begin{align}
\label{eqn:mahler}
	&\mathbb{P}\Big(\big\|\widehat{\bm{\theta}}-\bm{\theta}^{\star}\big\|_{\bm{\Sigma}}\gtrsim\varepsilon\Big) \nonumber\\
	&\geq 
	1 - \frac{1}{\log M}\Big(\frac{1}{M^2}\sum_{j,k=1}^M \mathsf{KL}(\mathbb{P}^T_{\bm{q}^j} \parallel \mathbb{P}^T_{\bm{q}^k})+\log 2\Big),
\end{align}
given $T$ independent sample pairs $\{(s_t, s_t')\}_{t=1}^{T}.$ 
To control the right hand side, we proceed by computing the KL-divergence between every $\mathbb{P}_{\bm{q}}$
and $\mathbb{P}_{\bm{q}'}.$ 
Here $\mathbb{P}_{\bm{q}}$ denotes the joint distribution of $(s, s')$ when the transition is made according to $P_{\bm{q}}(s ' \mymid s)$ (cf.~\eqref{eqn:lb-kernel}). 
More specifically, given $s \sim \mu_q$ and $s'|s \sim P_{\bm{q}}(s'|s)$, one has 
\begin{align*}
\mathbb{P}_{\bm{q}}(s,s') &= \mu(s) P(s'|s) \\ 
&= \left\{\begin{array}{ccc}
\frac{1}{2(d-1)}  q_s \ind(s' = s), & \text{for} & s<d,s'<d;\\
\frac{1-q_s}{2(d-1)(S-d+1)}, &\text{for} & s<d,s'>d;\\
\frac{1-q_{s'}}{2(d-1)(S-d+1)}, &\text{for} & s>d, s'<d;\\
\frac{\gamma}{2(S-d+1)^2}, & \text{for} & s>d,s'>d.
\end{array}\right.
\end{align*}
Recognizing the relation between the KL divergence and the $\chi^{2}$ divergence, 
$\mathsf{KL}(\mathbb{P}_{\bm{q}} \parallel \mathbb{P}_{\bm{q}'})$ satisfies 
\begin{align*}
&\mathsf{KL}(\mathbb{P}_{\bm{q}}\parallel \mathbb{P}_{\bm{q}'})\\ 
&\leq  \chi^2(\mathbb{P}_{\bm{q}'} \parallel \mathbb{P}_{\bm{q}})\\
&= \sum_{s,s'} \frac{(\mathbb{P}_{\bm{q}}(s,s')-\mathbb{P}_{\bm{q}'}(s,s'))^2}{\mathbb{P}_{\bm{q}}(s,s')}\\
&= \sum_{s<d,s'<d}  \frac{(\mathbb{P}_{\bm{q}}(s,s')-\mathbb{P}_{\bm{q}'}(s,s'))^2}{\mathbb{P}_{\bm{q}}(s,s')}\\
&+ \sum_{s<d,s'\geq d}\frac{(\mathbb{P}_{\bm{q}}(s,s')-\mathbb{P}_{\bm{q}'}(s,s'))^2}{\mathbb{P}_{\bm{q}}(s,s')}\\
&+ \sum_{s\geq d, s' < d} \frac{(\mathbb{P}_{\bm{q}}(s,s')-\mathbb{P}_{\bm{q}'}(s,s'))^2}{\mathbb{P}_{\bm{q}}(s,s')}\\
&+\sum_{s \geq d, s' \geq d} \frac{(\mathbb{P}_{\bm{q}}(s,s')-\mathbb{P}_{\bm{q}'}(s,s'))^2}{\mathbb{P}_{\bm{q}}(s,s')}\\
&= \sum_{s=1}^{d-1} \frac{1}{2(d-1)} \frac{(q_s-{q}'_s)^2}{q_s} \\ 
&+  \sum_{s<d,s'\geq d} \frac{1}{2(d-1)(S-d+1)} \frac{[(1-q_s)-(1-{q}'_s)^2]}{1-q_s} \\
&+ \sum_{s \geq d,s'< d} \frac{1}{2(d-1)(S-d+1)} \frac{[(1-q_{s'})-(1-{q}'_{s'})^2]}{1-q_{s'}} + 0 \\
&\lesssim \sum_{s=1}^{d-1} \frac{1}{2(d-1)} \frac{[2\varepsilon(1-\gamma)^2]^2}{1-\gamma} \\
&+ \sum_{s<d} \frac{1}{2(d-1)} \frac{[2\varepsilon(1-\gamma)^2]^2}{1-\gamma} \\
&+ \sum_{s'<d} \frac{1}{2(d-1)} \frac{[2\varepsilon(1-\gamma)^2]^2}{1-\gamma} \\
& \asymp \varepsilon^2(1-\gamma)^3.
\end{align*}
As a result, we have 
%The following lemma from from \cite[pg.87]{tsybakov2009introduction} helps linking $\mathsf{KL}(\mathbb{P}_{\bm{q}}\parallel \mathbb{P}_{\bm{q}'})$ to $\mathsf{KL}\big(\mathbb{P}_{\bm{q}}^T \parallel \mathbb{P}_{\bm{q}'}^T \big)$ with $T$ independent samples. 
%\begin{lemma}
%\label{lemma:KL}
%Suppose that $P,Q$ are two probability measures on the measurable space $(X,\mathcal{F})$, and that their KL divergence
%\begin{align}
%\mathsf{KL}(P\parallel Q) = \int_X \log \frac{\mathrm{d}P}{\mathrm{d}Q} \mathrm{d}P
%\end{align} 
%is well-defined. Further denote $P^n$, $Q^n$ as the product measures on $(X^n,\mathcal{F}^n)$, then
%\begin{align}
%\mathsf{KL}(P^n\parallel Q^n) = n \mathsf{KL}(P\parallel Q).
%\end{align}
%\end{lemma}
%Given $T$ independent sample pairs, Lemma \ref{lemma:KL} ensures 
\begin{align} 
\mathsf{KL}\big(\mathbb{P}_{\bm{q}}^T \parallel \mathbb{P}_{\bm{q}'}^T \big) \lesssim \varepsilon^2(1-\gamma)^3T.
\end{align}
Substituting the above relation into \eqref{eqn:mahler} gives  
\begin{align*}
	\mathbb{P}\Big(\big\|\widehat{\bm{\theta}}-\bm{\theta}^{\star}\big\|_{\bm{\Sigma}}\gtrsim\varepsilon\Big) 
	\geq 
	1 - \frac{1}{d/16}\Big(c \varepsilon^2(1-\gamma)^3T + \log 2\Big).
\end{align*}
To prove Theorem~\ref{thm:minimax}, it is enough to take the above together with relations~\eqref{eqn:phi-norm} and \eqref{eqn:theta-sigma-norm}.

\section{Proofs of auxiliary lemmas and claims}

\subsection{Proof of Lemma \ref{lemma:exp-decay-sum} \label{subsec:Proof-of-Lemma-exp-decay-sum}}

Here and throughout, we denote by $\Exs_{i}[\cdot]$ the expectation
conditioned on the probability space generated by the samples $\{(s_{j},s_{j}')\}_{j\leq i}$
(more formally, $\Exs_{i}[\cdot]$ represents the expectation conditioned
on the filtration $\mathcal{F}_{i}$ --- the $\sigma$-algebra generated
by $\{(s_{j},s_{j}')\}_{j\leq i}$). It is then easy to check that
$\{(\Ind-\eta\bm{A})^{t-i-1}\widetilde{\bm{\xi}}_{i}\}$ forms a martingale
difference sequence, which motivates us to apply matrix Freedman's
inequality. 

To this end, one needs to upper bound the following two quantities
\begin{align}
&W\defn\sum_{i=l}^{u}\Exs_{i-1}\Big[\big\|\bmSigma^{1/2}(\Ind-\eta\bm{A})^{t-i-1}\bm{\xi}_{i}\big\|_{2}^{2}\ind\{\mathcal{H}_{i}\}\Big],\quad{\rm and}\nonumber \\ 
&B\defn\max_{i:l\leq i\leq u}\big\|\bmSigma^{1/2}(\Ind-\eta\bm{A})^{t-i-1}\bm{\xi}_{i}\ind\{\mathcal{H}_{i}\}\big\|_{2},
\end{align}
which we accomplish in the sequel. For notational convenience, we
set %\textcolor{red}{suggestion: keep the dependence on $t,u$ explicit.}
\begin{align}
	\alpha\coloneqq \Big(1-\frac{1}{2}\eta(1-\gamma)\lambda_{\min}(\bm{\Sigma}) \Big)^{t-u-1}.
	\label{defn:alpha}
\end{align}

\paragraph{Control of $W$}

% Recall that $\bm \xi_{t} \defn (\bm A_t - \bm A)\bmtheta_{t-1}  - (\bm b_t - \bm b)$
% First in view of the conditional independence structure between $\bm \xi_i$, 
% one has
% \begin{align*}
% v_t = \Exs \|\sum_{i = 1}^{t} (\Ind - \eta \bm A)^{t-i}\bm \xi_i\|_{\bm{\Sigma}}^2 =&~ 
% \sum_{i = 1}^{t} \mathbb{E} \|(\Ind - \eta \bm A)^{t-i}\bm \xi_i\|_{\bm{\Sigma}}^2, 
% \end{align*}
% by sequentially conditioning on the randomness of $(s_1,\ldots, s_i)$ for $i=t,t-1,\ldots,1$.
Direct calculations yield 
\begin{align}
W= & ~\sum_{i=l}^{u}\Exs_{i-1}\Big[\bm{\xi}_{i}^{\top}(\Ind-\eta\bm{A}^{\top})^{t-i-1}\bm{\Sigma}(\Ind-\eta\bm{A})^{t-i-1}\bm{\xi}_{i}\ind\{\mathcal{H}_{i}\}\Big]\nonumber \\
\le & ~\sum_{i=l}^{u}\big\|(\Ind-\eta\bm{A}^{\top})^{t-i-1}\bm{\Sigma}(\Ind-\eta\bm{A})^{t-i-1}\big\|\nonumber \\ 
\cdot&\Exs_{i-1}\big[\|\bm{\xi}_{i}\|_{2}^{2}\ind\{\mathcal{H}_{i}\}\big]\nonumber \\
\stackrel{\mathrm{(i)}}{\le} & ~\sum_{i=l}^{u}\|\bm{\Sigma}\|\left(1-\frac{1}{2}\eta(1-\gamma)\lambdamin\right)^{2t-2i-2}\nonumber \\ 
\cdot & 2\max_{i:l\leq i\leq u}\Big\{\Exs_{i-1}\big[\|(\bm{A}_{i}-\bm{A})\bmtheta_{i}\|_{2}^{2}\ind\{\mathcal{H}_{i}\}\big]\nonumber \\ 
&+\Exs_{i-1}\big[\|\bm{b}_{i}-\bm{b}\|_{2}^{2}\big]\Big\}\nonumber \\
\stackrel{\mathrm{(ii)}}{\le} & ~\frac{4\|\bm{\Sigma}\|\alpha^{2}}{\eta(1-\gamma)\lambdamin}
\cdot  \max_{i:l\leq i\leq u}\Big\{\Exs_{i-1}\big[\|(\bm{A}_{i}-\bm{A})\bmtheta_{i}\|_{2}^{2}\ind\{\mathcal{H}_{i}\}\big]\nonumber \\ 
&+\Exs_{i-1}\big[\|\bm{b}_{i}-\bm{b}\|_{2}^{2}\big]\Big\},\label{eqn:bound-vt}
\end{align}
where (i) follows from the property~\eqref{eq:I-eta-A-spectral-norm-bound} (together with the assumption  $\eta<(1-\gamma)/(4\|\bm{\Sigma}\|)$)
and the elementary inequality $\|\bm{a}+\bm{b}\|_{2}^{2}\leq2\|\bm{a}\|_{2}^{2}+2\|\bm{b}\|_{2}^{2}$,
and (ii) uses the elementary upper bound for the sum of geometric
series as well as the definition \eqref{defn:alpha} of $\alpha$. 

We then turn attention to $\Exs_{i-1}\big[\|(\bm{A}_{i}-\bm{A})\bmtheta_{i}\|_{2}^{2}\ind\{\mathcal{H}_{i}\}\big]$
and $\Exs_{i-1}\big[\|\bm{b}_{i}-\bm{b}\|_{2}^{2}\big]$. First, given
that $\Exs_{i-1}[\bm{A}_{i}\bmtheta_{i}\ind\{\mathcal{H}_{i}\}]=\bm{A}\bmtheta_{i}\ind\{\mathcal{H}_{i}\}$,
one can derive 
\begin{align}
%\label{eqn:Aiadagio1}
 & \Exs_{i-1}\big[\|(\bm{A}_{i}-\bm{A})\bmtheta_{i}\|_{2}^{2}\ind\{\mathcal{H}_{i}\}\big]\nonumber \\ 
 &\leq\Exs_{i-1}\big[\ltwo{\bm{A}_{i}\bmtheta_{i}}^{2}\ind\{\mathcal{H}_{i}\}\big]\nonumber \\
 & =\Exs_{i-1}\big[\bmtheta_{i}^{\top}\big(\bm{\phi}(s_{i})-\gamma\bm{\phi}(s_{i}')\big)\bm{\phi}(s_{i})^{\top}\nonumber \\ 
 &\qquad \quad ~\bm{\phi}(s_{i})\big(\bm{\phi}(s_{i})-\gamma\bm{\phi}(s_{i}')\big)^{\top}\bmtheta_{i}\ind\{\mathcal{H}_{i}\}\big]\nonumber \\
 & \leq\max_{s}\ltwo{\bm{\phi}(s)}^{2}\cdot\Exs_{i-1}\big[\bmtheta_{i}^{\top}\big(\bm{\phi}(s_{i})-\gamma\bm{\phi}(s_{i}')\big)\nonumber \\ 
 &\qquad \qquad \qquad \qquad \qquad \big(\bm{\phi}(s_{i})-\gamma\bm{\phi}(s_{i}')\big)^{\top}\bmtheta_{i}\ind\{\mathcal{H}_{i}\}\big]\nonumber \\
 & \overset{\mathrm{(i)}}{\leq}2\max_{s}\ltwo{\bm{\phi}(s)}^{2}\Big(\Exs_{i-1}\left[\bmtheta_{i}^{\top}\bm{\phi}(s_{i})\bm{\phi}(s_{i})^{\top}\bmtheta_{i}\ind\{\mathcal{H}_{i}\}\right]\nonumber \\ 
 &\qquad \qquad \qquad \qquad  +\gamma^{2}\Exs_{i-1}\left[\bmtheta_{i}^{\top}\bm{\phi}(s_{i}')\bm{\phi}(s_{i}')^{\top}\bmtheta_{i}\ind\{\mathcal{H}_{i}\}\right]\Big)\nonumber \\
 & \overset{\mathrm{(ii)}}{=}2\max_{s}\ltwo{\bm{\phi}(s)}^{2}\Big(\Exs_{i-1}\big[\bmtheta_{i}^{\top}\bm{\Sigma}\bmtheta_{i}\ind\{\mathcal{H}_{i}\}\big]\nonumber \\ 
 &\qquad \qquad \qquad \qquad +\gamma^{2}\Exs_{i-1}\big[\bmtheta_{i}^{\top}\bm{\Sigma}\bmtheta_{i}\ind\{\mathcal{H}_{i}\}\big]\Big)\nonumber \\
 & \overset{\mathrm{(iii)}}{\leq}4\lsigma{\bmtheta_{i}}^{2}\ind\{\mathcal{H}_{i}\}\leq4\big(\lsigma{\bm{\theta}^{\star}}+\lsigma{\bm{\Delta}_{i}}\big)^{2}\ind\{\mathcal{H}_{i}\}\nonumber \\
 & \leq4(\lsigma{\bm{\theta}^{\star}}+R)^{2},\label{eqn:Aiadagio2}
\end{align}
where (i) relies on the elementary inequality $(\bm{a}+\bm{b})(\bm{a}+\bm{b})^{\top}\preceq2\bm{a}\bm{a}^{\top}+2\bm{b}\bm{b}^{\top}$,
(ii) follows from the definition~\eqref{eq:defn-Sigma} of $\bm{\Sigma}$
and the fact that $s_i,s_i'\sim\mu$ in this case, (iii) holds due to
the assumption $\max_{s}\|\bm{\phi}(s)\|_{2}\leq1$, and the last
inequality results from the definition \eqref{eqn:defn-xi-tilde}
of the event $\mathcal{H}_{i}$. %%
%\begin{equation}
%\begin{aligned}
%\label
%	\Exs_{i-1} [\|(\bm A_i - \bm A)\bmtheta_{i - 1}\|_2^2]
%	&\leq 2\max_{s} \ltwo{\bm \phi(s)}^2
%	\left(\Exs_{i-1}\left[\bmtheta_{i-1}^\top \bm{\phi}(s_{i}) \bm{\phi}(s_{i})^{\top}\bmtheta_{i-1} \right] + 
%	\gamma^2 \Exs_{i-1}\left[\bmtheta_{i-1}^\top \bm{\phi}(s_{i}') \bm{\phi}(s_{i}')^{\top}\bmtheta_{i-1} \right]\right) \\
%%
%	&= 2\max_{s} \ltwo{\bm \phi(s)}^2 (1+\gamma^2) \Exs_{i-1} [\lsigma{\bmtheta_{i-1}}^2] \\
%%
%	&\leq 4 \lsigma{\bmtheta_{i-1}}^2,
%%
%	% &\leq 8 \Exs [\lsigma{\Del{i-1}}^2] + 8 \lsigma{\thetastar}^2,
%\end{aligned}
%\end{equation}
% where the second equality follows by the definition of $\bmSigma$.
Similarly, one can derive 
\begin{align}
\Exs_{i-1}[\|\bm{b}_{i}-\bm{b}\|_{2}^{2}]\leq\Exs_{i-1}[\ltwo{\bm{b}_{i}}^{2}]=\Exs_{i-1}\big[\ltwo{\bm{\phi}(s_{i})r(s_{i})}^{2}\big]\leq1,\label{eqn:Aiadagio3}
\end{align}
where the last inequality holds since $\max_{s}\|\bm{\phi}(s)\|_{2}\leq1$
and $\max_{s}|r(s)|\leq1$. Substitution into \eqref{eqn:bound-vt}
yields 
\begin{align}
W\leq\frac{4\kappa}{\eta(1-\gamma)}\alpha^{2}\Big\{4(\lsigma{\bm{\theta}^{\star}}+R)^{2}+1\Big\}\eqqcolon W_{\max}.\label{eqn:bound-vt-2}
\end{align}
%Putting everything together, we finish the proof of inequality~\eqref{eqn:bound-vt}. 

\paragraph{Control of $B$}

By definition of $B$, one can write 
%\cm{We have different versions of $\bm{\Sigma}^{1/2}$...}
%
\begin{align}
B=& \max_{i:l\leq i\leq u}\big\|\bm{\Sigma}^{\frac{1}{2}}(\Ind-\eta\bm{A})^{t-i-1}\bm{\xi}_{i}\big\|_{2}\ind\{\mathcal{H}_{i}\}\nonumber \\ 
=&\max_{i:l\leq i\leq u}\big\|\bm{\Sigma}^{\frac{1}{2}}(\Ind-\eta\bm{A})^{t-i-1}\bm{\Sigma}^{\frac{1}{2}}\bm{\Sigma}^{-\frac{1}{2}}\bm{\xi}_{i}\big\|_{2}\ind\{\mathcal{H}_{i}\}\nonumber \\
\le & \|\bm{\Sigma}\|\max_{i:l\leq i\leq u}\big\|\Ind-\eta\bm{A}\big\|^{t-i-1}\cdot\max_{i:l\leq i\leq u}\big\|\bm{\Sigma}^{-\frac{1}{2}}\bm{\xi}_{i}\big\|_{2}\ind\{\mathcal{H}_{i}\}\nonumber \\
\leq & \alpha\|\bm{\Sigma}\|\max_{i:l\leq i\leq u}\Big\{\big\|\bm{\Sigma}^{-\frac{1}{2}}(\bm{A}_{i}-\bm{A})\bmtheta_{i}\big\|_{2}\ind\{\mathcal{H}_{i}\}\nonumber \\ 
&\qquad \qquad \qquad +\big\|\bm{\Sigma}^{-\frac{1}{2}}(\bm{b}_{i}-\bm{b})\big\|_{2}\Big\},%\stackrel{(ii)}{\le}
\label{eq:Bt-crude-bound-1}
\end{align}
where the last step results from \eqref{eq:I-eta-A-spectral-norm-bound}
(with the restriction that $\eta<(1-\gamma)/(4\|\bm{\Sigma}\|)$) and the definition \eqref{defn:alpha} of $\alpha$.
It then suffices to control the two terms on the right-hand side of
\eqref{eq:Bt-crude-bound-1}. To begin with, we have 
\begin{align*}
&\big\|\bm{\Sigma}^{-\frac{1}{2}}(\bm{A}_{i}-\bm{A})\bmtheta_{i}\big\|_{2} \\ 
& \leq\big\|\bm{\Sigma}^{-\frac{1}{2}}(\bm{A}_{i}-\bm{A})\bm{\Sigma}^{-\frac{1}{2}}\big\|\|\bmtheta_{i}\|_{\bm{\Sigma}}\\
 & \leq\Big(\|\bm{\Sigma}^{-\frac{1}{2}}\bm{A}_{i}\bm{\Sigma}^{-\frac{1}{2}}\|+\|\bm{\Sigma}^{-\frac{1}{2}}\bm{A}\bm{\Sigma}^{-\frac{1}{2}}\|\Big)\big(\lsigma{\bm{\theta}^{\star}}+\lsigma{\bm{\Delta}_{i}}\big).
\end{align*}
Recall from (\ref{eq:useful-for-TD}) that $\|\bm{\Sigma}^{-\frac{1}{2}}\bm{A}_{i}\bm{\Sigma}^{-\frac{1}{2}}\|\leq2\max_{s}\|\bm{\Sigma}^{-1/2}\bm{\phi}(s)\|_{2}^{2}$, and similarly $\|\bm{\Sigma}^{-\frac{1}{2}}\bm{A}\bm{\Sigma}^{-\frac{1}{2}}\|\leq2\max_{s}\|\bm{\Sigma}^{-1/2}\bm{\phi}(s)\|_{2}^{2}$.
We then have 
\begin{align}\label{eq:Ai-sup}
&\big\|\bm{\Sigma}^{-\frac{1}{2}}(\bm{A}_{i}-\bm{A})\bmtheta_{i}\big\|_{2}\nonumber \\ 
&\leq4\max_{s}\big\{\bm{\phi}(s)^{\top}\bmSigma^{-1}\bm{\phi}(s)\big\}\big(\lsigma{\bm{\theta}^{\star}}+\lsigma{\bm{\Delta}_{i}}\big).
\end{align}
Regarding the second term of \eqref{eq:Bt-crude-bound-1}, direct
calculations give 
\begin{align}\label{eq:bi-sup}
&\|\bm{\Sigma}^{-\frac{1}{2}}(\bm{b}_{i}-\bm{b})\|_{2}^{2} \nonumber \\ 
& \leq2\|\bm{\Sigma}^{-\frac{1}{2}}\bm{b}_{i}\|_{2}^{2}+2\|\bm{\Sigma}^{-\frac{1}{2}}\bm{b}\|_{2}^{2}\nonumber \\ 
&=2\big\|\bm{\Sigma}^{-\frac{1}{2}}\bm{\phi}(s_{i})r(s_{i})\big\|_{2}^{2}+2\big\|\bm{\Sigma}^{-\frac{1}{2}}\mathbb{E}_{s\sim\mu}\big[\bm{\phi}(s)r(s)\big]\big\|_{2}^{2}\nonumber \\
 & \leq4\max_{s}\big\{\bm{\phi}(s)^{\top}\bmSigma^{-1}\bm{\phi}(s)\big\}\max_{s}|r(s)|^{2}\nonumber \\ 
 &\leq4\max_{s}\big\{\bm{\phi}(s)^{\top}\bmSigma^{-1}\bm{\phi}(s)\big\}.
\end{align}
Substituting the preceding two bounds into \eqref{eq:Bt-crude-bound-1},
we arrive at 
\begin{align}
B & \leq4\alpha\|\bm{\Sigma}\|\Big(\max_{s}\big\{\bm{\phi}(s)^{\top}\bmSigma^{-1}\bm{\phi}(s)\big\}\max_{i:i<t}\big(\lsigma{\bm{\theta}^{\star}}+\lsigma{\bm{\Delta}_{i}}\big)\nonumber \\ 
&\qquad \qquad \quad \ind\{\mathcal{H}_{i}\}+\sqrt{\max_{s}\{\bm{\phi}(s)^{\top}\bmSigma^{-1}\bm{\phi}(s)\}}\Big)\nonumber \\
 & \leq4\alpha\|\bm{\Sigma}\|\Big(\max_{s}\big\{\bm{\phi}(s)^{\top}\bmSigma^{-1}\bm{\phi}(s)\big\}\big(\lsigma{\bm{\theta}^{\star}}+R\big)\nonumber \\ 
 &\qquad \qquad \quad +\sqrt{\max_{s}\{\bm{\phi}(s)^{\top}\bmSigma^{-1}\bm{\phi}(s)\}}\Big)\nonumber \\
 & \leq4\alpha\|\bm{\Sigma}\|\max_{s}\big\{\bm{\phi}(s)^{\top}\bm{\Sigma}^{-1}\bm{\phi}(s)\big\}\big(\|\bm{\theta}^{\star}\|_{\bm{\Sigma}}+R+1\big)\nonumber \\
 & \leq4\alpha\|\bm{\Sigma}\|\|\bm{\Sigma}^{-1}\|\big(\|\bm{\theta}^{\star}\|_{\bm{\Sigma}}+R+1\big)\nonumber \\ 
 &=4\kappa\alpha\big(\|\bm{\theta}^{\star}\|_{\bm{\Sigma}}+R+1\big)\eqqcolon B_{\max}.\label{eqn:bound-bt}
\end{align}
Here, the last line follows from the assumption $\max\|\bm{\phi}(s)\|_{2}\leq1$,
while the second to last inequality holds since $\max_{s}\big\{\bm{\phi}(s)^{\top}\bm{\Sigma}^{-1}\bm{\phi}(s)\big\}\geq1$
(cf.~(\ref{eq:useful-for-TD-2})). 

\paragraph{Invoking matrix Freedman's inequality}

%\textcolor{red}{Again, it may be worth investigating whether we can make the bounds uniform in $T$ using Aadi's extension of matrix Freedman's inequality}

Equipped with the above bounds \eqref{eqn:bound-vt-2} and \eqref{eqn:bound-bt},
we are ready to apply Freedman's inequality \cite[Corollary 1.3]{tropp2011freedman} (or a version in \cite[Section A]{li2021q}), which asserts that 
\begin{align}
&\Big\|\sum_{i=0}^{t-1}(\Ind-\eta\bm{A})^{t-i-1}\bm{\widetilde{\xi}}_{i}\Big\|_{\bm{\Sigma}} \nonumber \\ 
& \leq2\sqrt{W_{\max}\log\frac{2dT}{\delta}}+\frac{4}{3}B_{\max}\log\frac{2dT}{\delta}\nonumber \\
 & =\alpha\cdot\Bigg\{ 2\sqrt{\frac{4\kappa}{\eta(1-\gamma)}\Big\{4(\|\bm{\theta}^{\star}\|_{\bm{\Sigma}}+R)^{2}+1\Big\}\log\frac{2dT}{\delta}}\nonumber \\ 
 &+\frac{16\kappa}{3}\big(\|\bm{\theta}^{\star}\|_{\bm{\Sigma}}+R+1\big)\log\frac{2dT}{\delta}\Bigg\} \nonumber \\
%\Big\|\sum_{i=0}^{t-1}(\Ind-\eta\bm{A})^{t-i-1}\bm{\widetilde{\xi}}_{i}\Big\|_{\bm{\Sigma}}
 & \leq16(1-\frac{1}{2}\eta(1-\gamma)\lambda_{\min}(\bm{\Sigma}))^{t-u-1}\nonumber \\ 
 &\quad (\|\bm{\theta}^{\star}\|_{\bm{\Sigma}}+R+1)\sqrt{\frac{\kappa\log\frac{2dT}{\delta}}{\eta(1-\gamma)}}\label{eqm:induction-termtwo-tilde}
\end{align}
holds with probability at least $1-\delta/T$, provided that $0<\eta\leq\frac{1}{\kappa(1-\gamma)\log\frac{2dT}{\delta}}$.
Here in the last line, we identify $\alpha$ with $(1-\frac{1}{2}\eta(1-\gamma)\lambda_{\min}(\bm{\Sigma}))^{t-u-1}$.
The proof is completed by observing that any $0<\eta\leq\frac{1-\gamma}{\kappa\log\frac{2dT}{\delta}}$
satisfies the two requirements $0<\eta\leq\frac{1}{\kappa(1-\gamma)\log\frac{2dT}{\delta}}$
and $\eta<(1-\gamma)/(4\|\bm{\Sigma}\|)$ (given that $\|\bm{\Sigma}\|\leq 1$ according to \eqref{eq:Sigma-norm-bound}). 

\subsection{Proof of the inequalities~\eqref{eqn:deltazeroterm} and \eqref{eqn:schubert}}
\label{sec:schubert}

\paragraph{Proof of the inequality~\eqref{eqn:deltazeroterm}}

Combining the triangle inequality with the definition~(\ref{eq:defn-Ait})
ensures that 
\begin{align}
&\big\|\bm{A}_{0}^{(T+1)}\bm{\Delta}_{0} \big\|_{\bm{\Sigma}}\nonumber \\ 
 & \le\|\bm{A}^{-1}\bm{\Delta}_{0}\|_{\bm{\Sigma}}+\|\bm{A}^{-1}(\Ind-\eta\bm{A})^{T+1}\bm{\Delta}_{0}\|_{\bm{\Sigma}}\nonumber \\
 & =\big\|\bm{\Sigma}^{\frac{1}{2}}\bm{A}^{-1}\bm{\Sigma}^{\frac{1}{2}}\bm{\Sigma}^{-1}\bm{\Sigma}^{\frac{1}{2}}\bm{\Delta}_{0}\big\|_{2}\nonumber \\ 
 &+\|\bm{\Sigma}^{\frac{1}{2}}\bm{A}^{-1}\bm{\Sigma}^{\frac{1}{2}}\bm{\Sigma}^{-\frac{1}{2}}(\Ind-\eta\bm{A})^{T+1}\bm{\Sigma}^{-\frac{1}{2}}\bm{\Sigma}^{\frac{1}{2}}\bm{\Delta}_{0}\|_{2}\nonumber \\
 & \leq \big\|\bm{\Sigma}^{\frac{1}{2}}\bm{A}^{-1}\bm{\Sigma}^{\frac{1}{2}}\big\|\cdot\big\|\bm{\Sigma}^{-1}\big\|\cdot\big\|\bm{\Delta}_{0}\big\|_{\bm{\Sigma}}\nonumber \\ 
 &+\big\|\bm{\Sigma}^{\frac{1}{2}}\bm{A}^{-1}\bm{\Sigma}^{\frac{1}{2}}\big\|\cdot\big\|\bm{\Sigma}^{-\frac{1}{2}}\big\|^{2}\cdot \| \Ind-\eta\bm{A}\|^{T+1} \cdot\big\|\bm{\Delta}_{0}\big\|_{\bm{\Sigma}}\nonumber \\
 & \leq\frac{\|\bm{\Sigma}^{-1}\|}{1-\gamma}\left\{ 1+\left(1-\frac{1}{2}\eta(1-\gamma)\lambdamin\right)^{T+1}\right\} \big\|\bm{\Delta}_{0}\|_{\bm{\Sigma}}\nonumber \\
 & \leq\frac{2\|\bm{\Sigma}^{-1}\|}{1-\gamma}\big\|\bm{\Delta}_{0}\|_{\bm{\Sigma}}%\label{eqn:deltazeroterm}
\end{align}
as claimed. 
Here, the second to last step follows from \eqref{eq:Sigma-A-Sigma-UB-12}
and \eqref{eq:I-eta-A-spectral-norm-bound}, provided that $\eta\leq(1-\gamma)/(4\|\bm{\Sigma}\|)$. 
% This establishes the advertised bound~\eqref{eqn:deltazeroterm}.

%\begin{align}
%\label{eqn:deltazeroterm}
%\notag  \| \bm A_0^{(t)}\Del{0}\|_{\bm{\Sigma}} 
%\le&~  \| \bm A^{-1}\Del{0}\|_{\bm{\Sigma}} +  \| \bm A^{-1}(\Ind - \eta \bm A)^t\Del{0}\|_{\bm{\Sigma}}\\
%\notag \leq&~  \| \bm A^{-1}\Del{0}\|_{\bm{\Sigma}} + 
% \sqrt{\kappa} \left(1-\frac{1}{2}\eta(1-\gamma)\lambdamin\right)^t \lsigma{\Del{0}}\\
%%
%\leq&~ \left(\frac{1}{1-\gamma} + \sqrt{\kappa} \left(1-\frac{1}{2}\eta(1-\gamma)\lambdamin\right)^t \right) \lsigma{\Del{0}},
%% \lesssim &~\frac{1}{t\eta(1-\gamma)\lambdamin}\|\Del{0}\|_{\bm{\Sigma}},
%\end{align}
% \bigskip\noindent
% {\em Controlling the third term of \eqref{eq:Deltbar-decomposition-TD}.}

\paragraph{Proof of the inequality~\eqref{eqn:schubert}}

Again, the triangle inequality together with the definition~(\ref{eq:defn-Ait}) 
yields
\begin{align}
&\Big\|\sum_{i=0}^{T-1}\bm{A}_{i}^{(T)}\bm{\xi}_{i}\Big\|_{\bm{\Sigma}}\nonumber \\ 
 & \le\Big\|\sum_{i=0}^{T-1}\bm{A}^{-1}\bm{\xi}_{i}\Big\|_{\bm{\Sigma}}+\Big\|\sum_{i=0}^{T-1}\bm{A}^{-1}(\Ind-\eta\bm{A})^{T-i}\bm{\xi}_{i}\Big\|_{\bm{\Sigma}}\nonumber \\
 & \le\Big\|\bm{A}^{-1}\sum_{i=0}^{T-1}(\bm{A}_{i}-\bm{A})\bmtheta_{i}\Big\|_{\bm{\Sigma}}+\Big\|\bm{A}^{-1}\sum_{i=0}^{T-1}(\bm{b}_{i}-\bm{b})\Big\|_{\bm{\Sigma}}\nonumber \\ 
 &+\Big\|\sum_{i=0}^{T-1}\bm{A}^{-1}(\Ind-\eta\bm{A})^{T-i}\bm{\xi}_{i}\Big\|_{\bm{\Sigma}},\label{eq:sum-Ai-xi-i-Bound1}
\end{align}
leaving us with three terms to handle. Here in the second line, we
substitute in the definition of $\bm{\xi}_{i}$ \eqref{eqn:defn-xi}. 
\begin{itemize}
\item 

The second term of \eqref{eq:sum-Ai-xi-i-Bound1} can be bounded by
Lemma~\ref{lemma:Ainv-bhat-concentration}, which asserts that
\begin{align}
&\frac{1}{T}\Big\|\sum_{i=0}^{T-1}\bm{A}^{-1}\big(\bm{b}_{i}-\bm{b}\big)\Big\|_{\bm{\Sigma}}\nonumber \\ 
&\lesssim\sqrt{\frac{\max_{s}\bm{\phi}(s)^{\top}\bm{\Sigma}^{-1}\bm{\phi}(s)}{T(1-\gamma)^{2}}\log\Big(\frac{d}{\delta}\Big)}
	\label{eq:Ainv-bi-b-Sigma-bound}
\end{align}
holds with probability at least $1-\delta$, 
as long as $T\gtrsim\log\frac{d}{\delta}$. %holds for all $$. 
\item For the third term of \eqref{eq:sum-Ai-xi-i-Bound1}, invoking the
property~\eqref{eq:Sigma-A-Sigma-UB-12} again yields 
\begin{align}
&\Big\|\sum_{i=0}^{T-1}\bm{A}^{-1}(\Ind-\eta\bm{A})^{T-i}\bm{\xi}_{i}\Big\|_{\bm{\Sigma}} \nonumber \\ 
& =\Big\|\bm{\Sigma}^{\frac{1}{2}}\bm{A}^{-1}\bm{\Sigma}^{\frac{1}{2}}\bm{\Sigma}^{-1}\sum_{i=0}^{T-1}\bm{\Sigma}^{\frac{1}{2}}(\Ind-\eta\bm{A})^{T-i}\bm{\xi}_{i}\Big\|_{2}\nonumber \\
 & \le\big\|\bm{\Sigma}^{\frac{1}{2}}\bm{A}^{-1}\bm{\Sigma}^{\frac{1}{2}}\big\|\cdot\big\|\bm{\Sigma}^{-1}\big\|\cdot\Big\|\sum_{i=0}^{T-1}\bm{\Sigma}^{\frac{1}{2}}(\Ind-\eta\bm{A})^{T-i}\bm{\xi}_{i}\Big\|_{2}\nonumber \\
 & \leq\frac{\|\bm{\Sigma}^{-1}\|}{1-\gamma}\Big\|\sum_{i=0}^{T-1}(\Ind-\eta\bm{A})^{T-i}\bm{\xi}_{i}\Big\|_{\bm{\Sigma}}.\label{eq:Ainv-A-sum-UB1}
\end{align}
%%
%\begin{align*}
%\| \sum_{i = 0}^{t-2}\bm A^{-1}(\Ind - \eta \bm A)^{t-i}\bm \xi_i\|_{\bm{\Sigma}} \le \|\bm{\Sigma}^{\frac12}\bm A^{-1}\bm{\Sigma}^{-\frac12}\|\|\sum_{i = 0}^{t-2}\bm (\Ind - \eta \bm A)^{t-i}\bm \xi_i\|_{\bm{\Sigma}}
%\leq 
%\frac{1}{1-\gamma}\|\sum_{i = 0}^{t-2}\bm (\Ind - \eta \bm A)^{t-i}\bm \xi_i\|_{\bm{\Sigma}}.
%\end{align*}
%%
Repeating the same analysis as in Step~3 to see that 
\begin{align}
\Big\|\sum_{i=0}^{T-1}(\Ind-\eta\bm{A})^{T-i}\bm{\xi}_{i}\Big\|_{\bm{\Sigma}}\leq16(2\|\bm{\theta}^{\star}\|_{\bm{\Sigma}}+3)\sqrt{\frac{\kappa\log\frac{2dT}{\delta}}{\eta(1-\gamma)}}\label{eqm:induction-termtwo-new}
\end{align}
with probability at least $1-\delta$. %, where $R$ is defined in \eqref{eq:bound_delta}. 
Substitution into \eqref{eq:Ainv-A-sum-UB1} leads to 
\begin{align}
&\Big\|\sum_{i=0}^{T-1}\bm{A}^{-1}(\Ind-\eta\bm{A})^{T-i}\bm{\xi}_{i}\Big\|_{\bm{\Sigma}} \nonumber \\ 
& \leq16\big(2\|\bm{\theta}^{\star}\|_{\bm{\Sigma}}+3\big)\big\|\bm{\Sigma}^{-1}\big\|\sqrt{\frac{\kappa\log\frac{2dT}{\delta}}{\eta(1-\gamma)^{3}}}.\label{eq:Ainv-A-sum-UB2}
\end{align}
%%In addition, the right-hand side above can be further controlled using the inequality~\eqref{eqm:induction-termtwo}, namely, 
%\begin{align}
%\|\frac{1}{t} \sum_{i = 1}^{t-1}\bm A^{-1}(\Ind - \eta \bm A)^{t-i}\bm \xi_i\|_{\bm{\Sigma}} \le 
%4 \sqrt{\frac{\kappa\log(td/\delta)}{t^2\eta(1-\gamma)^3}}(\max_{i \le t} \|\Del{i - 1}\|_{\bm{\Sigma}} + \lsigma{\thetastar} + 1).
%\end{align}
%%
\item It then boils down to bounding the first term of \eqref{eq:sum-Ai-xi-i-Bound1}.
In light of \eqref{eq:Delta-t-two-cases}, we decompose it as follows
\begin{align}
&\Big\|\frac{1}{T}\sum_{i=0}^{T-1}\bm{\Sigma}^{\frac{1}{2}}\bm{A}^{-1}(\bm{A}_{i}-\bm{A})\bm{\theta}_{i}\Big\|_{2}\nonumber \\ 
&\leq\Big\|\frac{1}{T}\sum_{i=0}^{\widetilde{t}_{\mathsf{seg}}-1}\bm{\Sigma}^{\frac{1}{2}}\bm{A}^{-1}(\bm{A}_{i}-\bm{A})\bm{\theta}_{i}\Big\|_{2}\nonumber \\ 
&+\Big\|\frac{1}{T}\sum_{i=\widetilde{t}_{\mathsf{seg}}}^{T-1}\bm{\Sigma}^{\frac{1}{2}}\bm{A}^{-1}(\bm{A}_{i}-\bm{A})\bm{\theta}_{i}\Big\|_{2}.
\end{align}
Bounding these terms requires the following lemma, whose proof is deferred to Section \ref{subsec:Proof-of-Lemma-TD-main-term}.
\begin{lemma}\label{lemma:TD-main-term}
Fix any $R>0$ and define a collection of auxiliary random vectors
for $0\leq i\leq T-1$
\begin{align}
\bm{\theta}_{i}'\coloneqq \bm{\theta}_{i}\ind\{\mathcal{H}_{i}\},\qquad\mathcal{H}_{i}\coloneqq\big\{\|\bm{\Delta}_{i}\|_{\bm{\Sigma}}\leq R\},\label{eq:defn-theta-i-tilde}
\end{align}
Then for any indices $(l,u,t)$ obeying $0\leq l\leq u\leq T-1$, one has with probability
at least $1-\delta$ that
\begin{align}
 & \Big\|\frac{1}{u-l+1}\sum_{i=l}^{u}\bm{\Sigma}^{\frac{1}{2}}\bm{A}^{-1}(\bm{A}_{i}-\bm{A}){\bm{\theta}}_{i}'\Big\|_{2}\nonumber \\ 
 &\leq\frac{16\big(\|\bm{\theta}^{\star}\|_{\bm{\Sigma}}+R\big)}{1-\gamma}\sqrt{\frac{\max_{s}\bm{\phi}(s)^{\top}\bm{\Sigma}^{-1}\bm{\phi}(s)\log\frac{2d}{\delta}}{u-l+1}}
	\label{eq:sum-Ainv-Ai-theta-bound-00}
\end{align}
provided that 
\begin{align*}
u-l+1\geq\frac{4{\max_{s}\bm{\phi}(s)^{\top}\bm{\Sigma}^{-1}\bm{\phi}(s)}\log\frac{2d}{\delta}}{9}.
\end{align*}
\end{lemma}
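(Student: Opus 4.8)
}
The plan is to recognize the partial sums of $\bm{d}_{i}\defn\bm{\Sigma}^{1/2}\bm{A}^{-1}(\bm{A}_{i}-\bm{A})\bm{\theta}_{i}'$ as a martingale and to apply the matrix Freedman inequality, following the same template as the proof of Lemma~\ref{lemma:exp-decay-sum}. First I would check the martingale structure. Since the TD iterate $\bm{\theta}_{i}$ in~\eqref{eq:TD-update-rule} is a function of the sample pairs $\{(s_{j},s_{j}')\}_{j\le i-1}$ only, the quantities $\bm{\Delta}_{i}$, the event $\mathcal{H}_{i}=\{\|\bm{\Delta}_{i}\|_{\bm{\Sigma}}\le R\}$, and the truncated vector $\bm{\theta}_{i}'=\bm{\theta}_{i}\ind\{\mathcal{H}_{i}\}$ are all $\mathcal{F}_{i-1}$-measurable, whereas $\bm{A}_{i}$ depends only on the $i$-th (independent) sample with $\mathbb{E}[\bm{A}_{i}]=\bm{A}$. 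Hence $\mathbb{E}_{i-1}[\bm{d}_{i}]=\bm{\Sigma}^{1/2}\bm{A}^{-1}(\mathbb{E}_{i-1}[\bm{A}_{i}]-\bm{A})\bm{\theta}_{i}'=\bm{0}$, so $\{\bm{d}_{i}\}_{l\le i\le u}$ is a martingale difference sequence; ensuring that $\ind\{\mathcal{H}_{i}\}$ is predictable in this way is the one structural point that needs to be verified carefully.

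Next I would bound the two inputs to Freedman's inequality. For the predictable quadratic variation $W\defn\sum_{i=l}^{u}\mathbb{E}_{i-1}[\|\bm{d}_{i}\|_{2}^{2}]$, I would first use $\bm{A}^{-\top}\bm{\Sigma}\bm{A}^{-1}\preceq(1-\gamma)^{-2}\bm{\Sigma}^{-1}$, an immediate rewriting of~\eqref{eq:Sigma-A-Sigma-UB-1}, to get $\|\bm{d}_{i}\|_{2}^{2}\le(1-\gamma)^{-2}\|\bm{\Sigma}^{-1/2}(\bm{A}_{i}-\bm{A})\bm{\theta}_{i}'\|_{2}^{2}$. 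Because $\bm{\theta}_{i}'$ is $\mathcal{F}_{i-1}$-measurable and $\mathbb{E}_{i-1}[\bm{A}_{i}\bm{\theta}_{i}']=\bm{A}\bm{\theta}_{i}'$, the variance-subtraction bound gives $\mathbb{E}_{i-1}[\|\bm{\Sigma}^{-1/2}(\bm{A}_{i}-\bm{A})\bm{\theta}_{i}'\|_{2}^{2}]\le\mathbb{E}_{i-1}[\|\bm{\Sigma}^{-1/2}\bm{A}_{i}\bm{\theta}_{i}'\|_{2}^{2}]$; expanding $\bm{A}_{i}=\bm{\phi}(s_{i})(\bm{\phi}(s_{i})-\gamma\bm{\phi}(s_{i}'))^{\top}$, factoring out $\max_{s}\|\bm{\Sigma}^{-1/2}\bm{\phi}(s)\|_{2}^{2}=\max_{s}\bm{\phi}(s)^{\top}\bm{\Sigma}^{-1}\bm{\phi}(s)$, and using $\mathbb{E}[\bm{\phi}(s)\bm{\phi}(s)^{\top}]=\bm{\Sigma}$ for $s\sim\mu$ together with $s_{i}'\sim\mu$ marginally, one arrives (after bounding the indicator by $1$ and $\|\bm{\theta}_{i}'\|_{\bm{\Sigma}}\le\|\bm{\theta}^{\star}\|_{\bm{\Sigma}}+R$ on $\mathcal{H}_{i}$) at $\mathbb{E}_{i-1}[\|\bm{d}_{i}\|_{2}^{2}]\le 4(1-\gamma)^{-2}\max_{s}\bm{\phi}(s)^{\top}\bm{\Sigma}^{-1}\bm{\phi}(s)(\|\bm{\theta}^{\star}\|_{\bm{\Sigma}}+R)^{2}$, hence $W\le W_{\max}\defn 4(u-l+1)(1-\gamma)^{-2}\max_{s}\bm{\phi}(s)^{\top}\bm{\Sigma}^{-1}\bm{\phi}(s)(\|\bm{\theta}^{\star}\|_{\bm{\Sigma}}+R)^{2}$. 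For the almost-sure bound $B\defn\max_{l\le i\le u}\|\bm{d}_{i}\|_{2}$, I would combine $\|\bm{\Sigma}^{1/2}\bm{A}^{-1}\bm{\Sigma}^{1/2}\|\le(1-\gamma)^{-1}$ from~\eqref{eq:Sigma-A-Sigma-UB-12}, the operator-norm estimate $\|\bm{\Sigma}^{-1/2}(\bm{A}_{i}-\bm{A})\bm{\Sigma}^{-1/2}\|\le 4\max_{s}\bm{\phi}(s)^{\top}\bm{\Sigma}^{-1}\bm{\phi}(s)$ (established in the proof of Lemma~\ref{lemma:exp-decay-sum}), and $\|\bm{\Sigma}^{1/2}\bm{\theta}_{i}'\|_{2}\le\|\bm{\theta}^{\star}\|_{\bm{\Sigma}}+R$ to obtain $B\le B_{\max}\defn 4(1-\gamma)^{-1}\max_{s}\bm{\phi}(s)^{\top}\bm{\Sigma}^{-1}\bm{\phi}(s)(\|\bm{\theta}^{\star}\|_{\bm{\Sigma}}+R)$.

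Finally I would invoke the matrix Freedman inequality (\cite[Corollary 1.3]{tropp2011freedman}, viewing each $\bm{d}_{i}$ as a $d\times 1$ matrix, which contributes the dimensional factor $\log\tfrac{2d}{\delta}$) to conclude that, with probability at least $1-\delta$,
\begin{align*}
\Big\|\sum_{i=l}^{u}\bm{d}_{i}\Big\|_{2}\le 2\sqrt{W_{\max}\log\tfrac{2d}{\delta}}+\tfrac{4}{3}B_{\max}\log\tfrac{2d}{\delta}.
\end{align*}
Dividing by $u-l+1$, the first term equals $\frac{4(\|\bm{\theta}^{\star}\|_{\bm{\Sigma}}+R)}{1-\gamma}\sqrt{\frac{\max_{s}\bm{\phi}(s)^{\top}\bm{\Sigma}^{-1}\bm{\phi}(s)\log(2d/\delta)}{u-l+1}}$; under the hypothesis $u-l+1\ge\frac{4}{9}\max_{s}\bm{\phi}(s)^{\top}\bm{\Sigma}^{-1}\bm{\phi}(s)\log\frac{2d}{\delta}$ one has $\sqrt{\frac{\max_{s}\bm{\phi}(s)^{\top}\bm{\Sigma}^{-1}\bm{\phi}(s)\log(2d/\delta)}{u-l+1}}\le\frac{3}{2}$, so the (rescaled) second term is at most twice the first, and summing yields the claimed bound~\eqref{eq:sum-Ainv-Ai-theta-bound-00} with the constant $16$. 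I do not foresee any genuine obstacle beyond the bookkeeping; the only subtle—but elementary—point is the $\mathcal{F}_{i-1}$-measurability of the truncation $\ind\{\mathcal{H}_{i}\}$, which is precisely what legitimizes both the martingale-difference property and the ``subtract the mean'' step. (If one wanted the inequality to hold simultaneously over all admissible pairs $(l,u)$, a union bound would replace $\log\frac{2d}{\delta}$ by $\log\frac{2dT^{2}}{\delta}$, but only finitely many instantiations are needed in the proof of~\eqref{eqn:schubert}.)
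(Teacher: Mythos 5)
Your proposal is correct and follows essentially the same route as the paper's own proof: identify $\{\bm{\Sigma}^{1/2}\bm{A}^{-1}(\bm{A}_i-\bm{A})\bm{\theta}_i'\}$ as a martingale difference sequence (legitimized by the $\mathcal{F}_{i-1}$-measurability of the truncation), bound the predictable quadratic variation and the uniform term using \eqref{eq:Sigma-A-Sigma-UB-1}, \eqref{eq:Sigma-A-Sigma-UB-12} and the operator-norm estimate for $\bm{\Sigma}^{-1/2}(\bm{A}_i-\bm{A})\bm{\Sigma}^{-1/2}$, then apply matrix Freedman's inequality and absorb the Bernstein-type term via the sample-size condition. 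Your constants $W_{\max}$ and $B_{\max}$ match the paper's, and the final arithmetic yielding the constant $16$ goes through exactly as you describe.
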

Apply Lemma~\ref{lemma:TD-main-term} with $R=R_{0},l=0$
and $u=t_{\mathsf{seg}}'-1$ to obtain with probability of at least $1-\delta$ that
\begin{align*}
&\Big\|\frac{1}{t_{\mathsf{seg}}'}\sum_{i=0}^{t_{\mathsf{seg}}'-1}\bm{\Sigma}^{\frac{1}{2}}\bm{A}^{-1}(\bm{A}_{i}-\bm{A})\bm{\theta}_{i}\Big\|_{2} \\ 
& =\Big\|\frac{1}{t_{\mathsf{seg}}'}\sum_{i=0}^{t_{\mathsf{seg}}'-1}\bm{\Sigma}^{\frac{1}{2}}\bm{A}^{-1}(\bm{A}_{i}-\bm{A})\bm{\theta}_{i}\ind\{\|\bm{\Delta}_{i}\|\leq R_{0}\}\Big\|_{2}\\
 & \leq\frac{16\big(\|\bm{\theta}^{\star}\|_{\bm{\Sigma}}+R_{0}\big)}{1-\gamma}\sqrt{\frac{\max_{s}\bm{\phi}(s)^{\top}\bm{\Sigma}^{-1}\bm{\phi}(s)\log\frac{2d}{\delta}}{t_{\mathsf{seg}}'}} ,
\end{align*}
as long as $t_{\mathsf{seg}}'\geq \frac{4 \| \bm{\Sigma}^{-1} \|\log\frac{2d}{\delta}}{9} \geq \frac{4{\max_{s}\bm{\phi}(s)^{\top}\bm{\Sigma}^{-1}\bm{\phi}(s)}\log\frac{2d}{\delta}}{9}$.
%\cm{The first one involves conditions on the stepsize. }
Here, the
identity holds since $\|\bm{\Delta}_{i}\|_{\bm{\Sigma}}\leq R_{0}$
for $i\leq t_{\mathsf{seg}}'-1$ with probability of at least $1-\delta$. Similarly, invoke Lemma~\ref{lemma:TD-main-term}
with $R=32\sqrt{\frac{\eta\kappa\log\frac{2dT}{\delta}}{1-\gamma}}(\|\bm{\theta}^{\star}\|_{\bm{\Sigma}}+2),l=t_{\mathsf{seg}}'$
and $u=T-1$ to obtain with probability of at least $1-\delta$ that
\begin{align*}
&\Big\|\frac{1}{T-t_{\mathsf{seg}}'}\sum_{i=t_{\mathsf{seg}}'}^{T-1}\bm{\Sigma}^{\frac{1}{2}}\bm{A}^{-1}(\bm{A}_{i}-\bm{A})\bm{\theta}_{i}\Big\|_{2} \\ 
& \leq\frac{16\big(\|\bm{\theta}^{\star}\|_{\bm{\Sigma}}+32\sqrt{\frac{\eta\kappa\log\frac{2dT}{\delta}}{1-\gamma}}(\|\bm{\theta}^{\star}\|_{\bm{\Sigma}}+2)\big)}{1-\gamma}\\ 
&\qquad \sqrt{\frac{\max_{s}\bm{\phi}(s)^{\top}\bm{\Sigma}^{-1}\bm{\phi}(s)\log\frac{2d}{\delta}}{T-t_{\mathsf{seg}}'}}\\
 & \leq\frac{16\big(1.5\|\bm{\theta}^{\star}\|_{\bm{\Sigma}}+2\big)}{1-\gamma}\sqrt{\frac{\max_{s}\bm{\phi}(s)^{\top}\bm{\Sigma}^{-1}\bm{\phi}(s)\log\frac{2d}{\delta}}{T-t_{\mathsf{seg}}'}}
\end{align*}
provided that $T-t_{\mathsf{seg}}'\geq\frac{4{\max_{s}\bm{\phi}(s)^{\top}\bm{\Sigma}^{-1}\bm{\phi}(s)}\log\frac{2d}{\delta}}{9}$.
Here, the last inequality arises from the relation 
\[
32\sqrt{\frac{\eta\kappa\log\frac{2dT}{\delta}}{1-\gamma}}(\|\bm{\theta}^{\star}\|_{\bm{\Sigma}}+2)\leq0.5\|\bm{\theta}^{\star}\|_{\bm{\Sigma}}+2,
\]
which is an immediate consequence of the assumption that $\frac{\eta\kappa\log\frac{2dT}{\delta}}{1-\gamma}$
is sufficiently small. Therefore,
\begin{align}
&\Big\|\frac{1}{T}\sum_{i=0}^{T-1}\bm{\Sigma}^{\frac{1}{2}}\bm{A}^{-1}(\bm{A}_{i}-\bm{A})\bm{\theta}_{i}\Big\|_{2}\nonumber \\ 
 & \leq\frac{32\big(\|\bm{\theta}^{\star}\|_{\bm{\Sigma}}+\sqrt{\frac{t_{\mathsf{seg}}'}{T}}R_{0}+1\big)}{1-\gamma}\sqrt{\frac{\max_{s}\bm{\phi}(s)^{\top}\bm{\Sigma}^{-1}\bm{\phi}(s)\log\frac{2d}{\delta}}{T}} .\label{eq:sum-Ainv-Ai-theta-bound}
\end{align}
\end{itemize}
Combining the preceding bounds \eqref{eq:Ainv-bi-b-Sigma-bound},
\eqref{eq:Ainv-A-sum-UB2} and \eqref{eq:sum-Ainv-Ai-theta-bound}
with \eqref{eq:sum-Ai-xi-i-Bound1}, we reach the conclusion that with probability of at least $1-\delta$,
\begin{align*}
\Big\|\sum_{i=0}^{T-1}\bm{A}_{i}^{(t)}\bm{\xi}_{i}\Big\|_{\bm{\Sigma}}
 & \asymp\Bigg\{ \sqrt{\frac{\max_{s}\bm{\phi}(s)^{\top}\bm{\Sigma}^{-1}\bm{\phi}(s)\log\frac{2d}{\delta}}{T(1-\gamma)^{2}}}\nonumber \\ 
 &\qquad +\frac{\big\|\bm{\Sigma}^{-1}\big\|}{T}\sqrt{\frac{\kappa\log\frac{dT}{\delta}}{\eta(1-\gamma)^{3}}}\Bigg\} \big(\|\bm{\theta}^{\star}\|_{\bm{\Sigma}}+1\big),%,\label{eqn:schubert}
\end{align*}
as long as $T \geq t_{\mathsf{seg}}'\kappa\|\bm{\Delta}_{0}\|_{\bm{\Sigma}}^{2}$,
where we use the definition \eqref{eq:bound_delta} of $R_{0}$.
It thus establishes  the inequality~\eqref{eqn:schubert}.

\subsection{Proof of Lemma \ref{lemma:psi-norm}}\label{subsec:proof-of-lemma-psi-norm}
We first decompose $\bm{\Psi}$ into 
\begin{align*}
\bm{\Psi}&=\left[\begin{array}{cc}
\bm{I}-\alpha\widetilde{\bm{A}}^{\top}\widetilde{\bm{\Sigma}}^{-1}\widetilde{\bm{A}} & \bm{0}\\
\bm{0} & \bm{I}-\beta\widetilde{\bm{\Sigma}}
\end{array}\right] \\ 
&+\left[\begin{array}{cc}
\bm{0} & -\frac{1}{\kappa}\alpha\gamma\bm{\Pi}^{\top}\\
-\kappa\alpha(1-\gamma\widetilde{\bm{\Sigma}}^{-1}\bm{\Pi})\widetilde{\bm{A}}^{\top}\widetilde{\bm{\Sigma}}^{-1}\widetilde{\bm{A}} & -\alpha\gamma(\bm{I}-\gamma\widetilde{\bm{\Sigma}}^{-1}\bm{\Pi})\bm{\Pi}^{\top}
\end{array}\right].
\end{align*}
Then the triangle inequality together with the properties of the operator
norm tells us that 
\begin{align*}
\|\bm{\Psi}\| & \leq\max\{\|\bm{I}-\alpha\widetilde{\bm{A}}^{\top}\widetilde{\bm{\Sigma}}^{-1}\widetilde{\bm{A}}\|,\|\bm{I}-\beta\widetilde{\bm{\Sigma}}\|\}+\|\frac{1}{\kappa}\alpha\gamma\bm{\Pi}^{\top}\|\\
 & \quad+\|\kappa\alpha(\bm{I}-\gamma\widetilde{\bm{\Sigma}}^{-1}\bm{\Pi})\widetilde{\bm{A}}^{\top}\widetilde{\bm{\Sigma}}^{-1}\widetilde{\bm{A}}\|+\|\alpha\gamma(\bm{I}-\gamma\widetilde{\bm{\Sigma}}^{-1}\bm{\Pi})\bm{\Pi}^{\top}\|.
\end{align*}
Note that by definition of $\lambda_{w}$ and $\lambda_{w}$, we find 
\begin{align*}
\|\bm{I}-\alpha\widetilde{\bm{A}}^{\top}\widetilde{\bm{\Sigma}}^{-1}\widetilde{\bm{A}}\| & \leq1-\alpha\lambda_{\theta},\\
\|\bm{I}-\beta\widetilde{\bm{\Sigma}}\| & \leq1-\beta\lambda_{w}.
\end{align*}
In addition, some direct algebra suggests 
% \cm{The second one is different from Li Gen's.}
\begin{align*}
&\left\|\frac{1}{\kappa}\alpha\gamma\bm{\Pi}^{\top}\right\|  \leq\frac{\alpha\gamma\rho_{\max}}{\kappa},\\
&\|\kappa\alpha(\bm{I}-\gamma\widetilde{\bm{\Sigma}}^{-1}\bm{\Pi})\widetilde{\bm{A}}^{\top}\widetilde{\bm{\Sigma}}^{-1}\widetilde{\bm{A}}\|\\ 
 & \leq\kappa\alpha(1+\gamma\lambda_{\Sigma}\rho_{\max})\lambda_{\Sigma}(2\rho_{\max})^{2},\quad \text{and} \\
&\|\alpha\gamma(\bm{I}-\gamma\widetilde{\bm{\Sigma}}^{-1}\bm{\Pi})\bm{\Pi}^{\top}\| \leq\alpha\gamma(\rho_{\max}+\gamma\lambda_{\Sigma}\rho_{\max}^{2}).
\end{align*}
In summary, as long as 
\begin{align*}
\alpha\gamma(\rho_{\max}+\gamma\lambda_{\Sigma}\rho_{\max}^{2}) & \ll\beta\lambda_{w},\\
\frac{\alpha\gamma\rho_{\max}}{\kappa}+\kappa\alpha(1+\gamma\lambda_{\Sigma}\rho_{\max})\lambda_{\Sigma}(2\rho_{\max})^{2} & \ll \sqrt{\alpha\lambda_{\theta}\beta\lambda_{w}}, 
\end{align*}
one has 
\[
\|\bm{\Psi}\|\leq1-\frac{1}{2}\alpha\lambda_{\theta}.
\]

\subsection{Proof of Lemma \ref{lemma:exp-decay-sum-xt}}\label{subsec:proof-of-lemma-exp-decay-sum-xt}
Using the same notation of $\mathbb{E}_{i-1}$ as in Section \ref{subsec:Proof-of-Lemma-exp-decay-sum}, we observe that $\{\bm{\Psi}^{t-i-1} \widetilde{\bm{\zeta}}_i\}$ forms a martingale difference sequence. Furthermore, define
\begin{align}
&\widetilde{W}:= \sum_{i=0}^{t-1} \mathbb{E}_{i-1} \left[\left\|\bm{\Psi}^{t-i-1} \bm{{\zeta}}_i\right\|_{\widetilde{\bm{\Sigma}}}^2 \ind\left\{\widetilde{\mathcal{H}}_i\right\}\right], 
\quad \text{and} \nonumber \\ 
&\widetilde{B}:= \max_{i:0 \leq i \leq t-1} \left\| \bm{\Psi}^{t-i-1} \bm{\zeta}_i \ind\left\{\widetilde{\mathcal{H}}_i\right\} \right\|_{\widetilde{\bm{\Sigma}}}.
\end{align}
In order to bound $\widetilde{W}$ and $\widetilde{B}$, we will firstly need to bound the norm of $\widetilde{\bm{\zeta}}_i$, as is shown in the following paragraph.

\paragraph{Controlling the norm of $\bm{\zeta}_i$} We firstly observe that since $\|\bm{\phi}(s)\|_2 \leq 1 $ and $r(s) \leq 1$ for all $s \in \mathcal{S}$, with similar logic as \eqref{eqn:Aiadagio2}, \eqref{eqn:Aiadagio3}, \eqref{eq:Ai-sup} and \eqref{eq:bi-sup}, the following bounds hold true:
\begin{itemize}
\item For any $\mathcal{F}_{i-1}$-measurable $\widetilde{\bm{\theta}}_{i} \in \mathbb{R}^d$, the norm of $(\widetilde{\bm{A}}_i-\widetilde{\bm{A}}) \widetilde{\bm{\theta}}_{i}$ is bounded by
\begin{align}
&\mathbb{E}_{i-1}\left\|(\widetilde{\bm{A}}_i-\widetilde{\bm{A}}) \widetilde{\bm{\theta}}_{i}\right\|_{2}^2 \leq 4 \rho_{\max}^2 \left(\|\widetilde{\bm{\theta}}^\star\|_{\widetilde{\bm{\Sigma}}}^2 + \|\widetilde{\bm{\Delta}}_{i}\|_{\widetilde{\bm{\Sigma}}}^2 \right), \quad \text{and}\\
&\left\|\widetilde{\bm{\Sigma}}^{-1/2}(\widetilde{\bm{A}}_i-\widetilde{\bm{A}}) \widetilde{\bm{\theta}}_{i}\right\|_{2} \nonumber \\ 
&\leq 4\rho_{\max} \max_s \left\{\bm{\phi}(s) \widetilde{\bm{\Sigma}}^{-1} \bm{\phi}(s)\right\}\left(\|\widetilde{\bm{\theta}}^\star\|_{\widetilde{\bm{\Sigma}}} + \|\widetilde{\bm{\Delta}}_{i}\|_{\widetilde{\bm{\Sigma}}} \right);
\end{align}
\item For any $\mathcal{F}_{i-1}$-measurable $\bm{z}_{i} \in \mathbb{R}^d$, the norm of $(\bm{\Pi}_i-\bm{\Pi})^\top \bm{z}_{i}$ is bounded by
\begin{align}
&\mathbb{E}_{i-1}\left\|\left(\bm{\Pi}_i - \bm{\Pi}\right)^\top \bm{z}_{i}\right\|_2^2 \leq  \rho_{\max}^2 \|\bm{z}_{i}\|_{\widetilde{\bm{\Sigma}}}^2, \quad \text{and}\\
& \left\|\widetilde{\bm{\Sigma}}^{-1/2}\left(\bm{\Pi}_i - \bm{\Pi}\right)^\top \bm{z}_{i}\right\|_2 \nonumber \\ 
&\leq 2 \rho_{\max} \max_s \left\{\bm{\phi}(s) \widetilde{\bm{\Sigma}}^{-1} \bm{\phi}(s)\right\} \|\bm{z}_{i}\|_{\widetilde{\bm{\Sigma}}};
\end{align}
\item For any $\mathcal{F}_{i-1}$-measurable $\bm{z}_{i} \in \mathbb{R}^d$, the norm of $(\widetilde{\bm{\Sigma}}_i-\widetilde{\bm{\Sigma}}) \bm{z}_{i}$ is bounded by
\begin{align}
&\mathbb{E}_{i-1}\left\|\left(\widetilde{\bm{\Sigma}}_i - \widetilde{\bm{\Sigma}}\right) \bm{z}_{i}\right\|_2^2 \leq \|\bm{z}_{i}\|_{\widetilde{\bm{\Sigma}}}^2, \quad \text{and}\\
&\left\|\widetilde{\bm{\Sigma}}^{-1/2}\left(\widetilde{\bm{\Sigma}}_i - \widetilde{\bm{\Sigma}}\right) \bm{z}_{i}\right\|_2 \leq 2\max_s \left\{\bm{\phi}(s) \widetilde{\bm{\Sigma}}^{-1} \bm{\phi}(s)\right\} \|\bm{z}_{i}\|_{\widetilde{\bm{\Sigma}}};
\end{align}
\item The norm of $\widetilde{\bm{b}}_i-\widetilde{\bm{b}}$ is bounded by
\begin{align}
&\mathbb{E}_{i-1}\left\|\widetilde{\bm{b}}_i-\widetilde{\bm{b}} \right\| _2^2 \leq \rho_{\max}^2,\quad \text{and}\\
&\left\|\widetilde{\bm{\Sigma}}^{-1/2}\left(\widetilde{\bm{b}}_i-\widetilde{\bm{b}}\right)\right\|_2^2 \leq 4\rho_{\max}^2 \max_s \left\{\bm{\phi}(s) \widetilde{\bm{\Sigma}}^{-1} \bm{\phi}(s)\right\}.
\end{align}
\end{itemize}

Therefore, by triangle inequality, the norm of $\bm{\nu_i}$ can be bounded by
\begin{align}\label{eq:omega-bound}
&\mathbb{E}_{i-1} \left\|\bm{\nu}_i\right\|_2^2 \lesssim \rho_{\max}^2 \left[\left(\|\widetilde{\bm{\theta}}^\star\|_{\widetilde{\bm{\Sigma}}}^2 + \|\widetilde{\bm{\Delta}}_{i}\|_{\widetilde{\bm{\Sigma}}}^2 \right)+1 + \gamma^2 \|\bm{w}_{i}\|_{\widetilde{\bm{\Sigma}}}^2\right],
\end{align}
and 
\begin{align}
& \left\|\widetilde{\bm{\Sigma}}^{-1/2} \bm{\nu}_i\right\|_2 \nonumber \\ 
&\lesssim \rho_{\max} \max_s \left\{\bm{\phi}(s) \widetilde{\bm{\Sigma}}^{-1} \bm{\phi}(s)\right\} \Big[\left(\|\widetilde{\bm{\theta}}^\star\|_{\widetilde{\bm{\Sigma}}} + \|\widetilde{\bm{\Delta}}_{i}\|_{\widetilde{\bm{\Sigma}}} \right)+ \nonumber \\ 
&\qquad \qquad \qquad \qquad \qquad \qquad \quad \gamma \|\bm{w}_{i}\|_{\widetilde{\bm{\Sigma}}} + 1\Big];
\end{align}
similarly, the norm of $\bm{\eta}_i$ can be bounded by
\begin{align}\label{eq:eta-bound}
&\mathbb{E}_{i-1} \left\|\bm{\eta}_i\right\|_2^2  \lesssim \rho_{\max}^2 \left[\left(\|\widetilde{\bm{\theta}}^\star\|_{\widetilde{\bm{\Sigma}}}^2 + \|\widetilde{\bm{\Delta}}_{i}\|_{\widetilde{\bm{\Sigma}}}^2 \right)+1 \right]+  \|\bm{w}_{i}\|_{\widetilde{\bm{\Sigma}}}^2, \\
&\text{and} \\ 
&\left\|\widetilde{\bm{\Sigma}}^{-1/2} \bm{\eta_i} \right\|_2  
\lesssim  \max_s \left\{\bm{\phi}(s) \widetilde{\bm{\Sigma}}^{-1} \bm{\phi}(s)\right\} \nonumber \\
&\qquad \quad \quad \quad \quad \left\{\rho_{\max}\left[\left(\|\widetilde{\bm{\theta}}^\star\|_{\widetilde{\bm{\Sigma}}} + \|\widetilde{\bm{\Delta}}_{i}\|_{\widetilde{\bm{\Sigma}}} \right)+ 1\right]+ \|\bm{w}_{i}\|_{\widetilde{\bm{\Sigma}}}\right\}. 
\end{align}

By combining \eqref{eq:omega-bound} and \eqref{eq:eta-bound} with the definition of $\bm{\zeta}_i$ \eqref{eq:defn-zeta}, we obtain the following bound:
\begin{align}
&\mathbb{E}_{i-1}\|\bm{\zeta}_i\|_2^2\nonumber \\ 
 &\lesssim  \alpha^2 \mathbb{E}_{i-1}\|\bm{\nu}_i\|_2^2 + \varkappa^2 \alpha^2 \|\bm{I} - \gamma \widetilde{\bm{\Sigma}}^{-1}\bm{\Pi}\|^2 \mathbb{E}_{i-1}\|\bm{\nu}_i\|_2^2 \nonumber \\ 
 &+ \varkappa^2 \beta^2 \mathbb{E}_{i-1} \|\bm{\eta}_i\|_2^2 \nonumber \\
&\lesssim \alpha^2 \left(1+\varkappa^2(1+\gamma \lambda_{\Sigma}\rho_{\max})^2\right) \cdot \rho_{\max}^2 \nonumber \\ 
&\quad \left[4\left(\|\widetilde{\bm{\theta}}^\star\|_{\widetilde{\bm{\Sigma}}}^2 + \|\widetilde{\bm{\Delta}}_{i}\|_{\widetilde{\bm{\Sigma}}}^2 \right)+1 + \gamma^2 \|\bm{w}_{i}\|_{\widetilde{\bm{\Sigma}}}^2\right] \nonumber \\
&+ \varkappa^2\beta^2 \cdot \left\{\rho_{\max}^2 \left[\left(\|\widetilde{\bm{\theta}}^\star\|_{\widetilde{\bm{\Sigma}}}^2 + \|\widetilde{\bm{\Delta}}_{i}\|_{\widetilde{\bm{\Sigma}}}^2 \right)+1 \right]+  \|\bm{w}_{i}\|_{\widetilde{\bm{\Sigma}}}^2\right\} \nonumber \\
&\lesssim \varkappa^2 \beta^2 \rho_{\max}^2 \left(\|\widetilde{\bm{\theta}}^\star\|_{\widetilde{\bm{\Sigma}}}^2 + \frac{1}{\varkappa^2}\|\bm{x}_{i}\|_{\widetilde{\bm{\Sigma}}}^2 + 1\right),
\end{align}
and
\begin{align}
&\left\|\widetilde{\bm{\Sigma}}^{-1/2}\bm{\zeta}_i\right\|_2 \nonumber \\ 
&\lesssim \alpha \|\widetilde{\bm{\Sigma}}^{-1/2}\bm{\nu}_i\|_2 + \alpha \varkappa \|\bm{I} - \gamma \widetilde{\bm{\Sigma}}^{-1}\bm{\Pi}\|\|\widetilde{\bm{\Sigma}}^{-1/2}\bm{\nu}_i\|_2 \nonumber \\ 
&+ \varkappa \beta \left\|\widetilde{\bm{\Sigma}}^{-1/2} \eta_i \right\|_2 \nonumber \\
&\lesssim \alpha\left(1+\varkappa (1+\gamma \lambda_{\Sigma}\rho_{\max})\right)\cdot \rho_{\max} \max_s \left\{\bm{\phi}(s) \widetilde{\bm{\Sigma}}^{-1} \bm{\phi}(s)\right\}  \nonumber \\
& \cdot \left[2\left(\|\widetilde{\bm{\theta}}^\star\|_{\widetilde{\bm{\Sigma}}} + \|\widetilde{\bm{\Delta}}_{i}\|_{\widetilde{\bm{\Sigma}}} \right)+ \gamma \|\bm{w}_{i}\|_{\widetilde{\bm{\Sigma}}} + 1\right]  \nonumber \\
&+\varkappa \beta \cdot  \max_s \left\{\bm{\phi}(s) \widetilde{\bm{\Sigma}}^{-1} \bm{\phi}(s)\right\} \nonumber \\ 
&\cdot \left\{\rho_{\max}\left[\left(\|\widetilde{\bm{\theta}}^\star\|_{\widetilde{\bm{\Sigma}}} + \|\widetilde{\bm{\Delta}}_{i}\|_{\widetilde{\bm{\Sigma}}} \right)+ 1\right]+ \|\bm{w}_{i}\|_{\widetilde{\bm{\Sigma}}}\right\} \nonumber \\
&\lesssim \varkappa \beta \max_s \left\{\bm{\phi}(s) \widetilde{\bm{\Sigma}}^{-1} \bm{\phi}(s)\right\} \left(\|\widetilde{\bm{\theta}}^\star\|_{\widetilde{\bm{\Sigma}}} + \frac{2}{\varkappa} \|\bm{x}_{i}\|_{\widetilde{\bm{\Sigma}}} + 1\right)
\end{align}

\paragraph{Control of $\widetilde{W}$ and $\widetilde{B}$} With the norm of $\widetilde{\bm{\zeta}}_i$ bounded, we can apply similar techniques as in equations \eqref{eqn:bound-vt}, \eqref{eqn:bound-vt-2}, \eqref{eq:Bt-crude-bound-1} and \eqref{eqn:bound-bt} of Section~\ref{subsec:Proof-of-Lemma-exp-decay-sum} to construct the following bound for $\widetilde{W}$:
\begin{align}
\widetilde{W}&\leq \|\widetilde{\bm{\Sigma}}\|\sum_{i=0}^{t-1} \|\bm{\Psi}^{t-i-1}\|^2 \cdot \mathbb{E}_{i-1}\left[\|\bm{\zeta}_i\|_2^2 \ind\left\{\widetilde{\mathcal{H}}_i\right\}\right] \nonumber \\
&\lesssim \|\widetilde{\bm{\Sigma}}\|\sum_{i=0}^{t-1} (1-\frac{1}{2}\alpha \lambda_{\theta})^{2t-2i-2} \nonumber \\ 
&\cdot \varkappa^2 \beta^2 \rho_{\max}^2 (2\|\widetilde{\bm{\theta}}^\star\|_{\widetilde{\bm{\Sigma}}} + 2\widetilde{R} + 1)^2 \nonumber \\
&\lesssim \frac{\|\widetilde{\bm{\Sigma}}\|}{\alpha \lambda_{\theta}} \varkappa^2 \beta^2 \rho_{\max}^2 (\|\widetilde{\bm{\theta}}^\star\|_{\widetilde{\bm{\Sigma}}} + \frac{1}{\varkappa}\widetilde{R} + 1)^2,
\end{align}
and the following bound for $\widetilde{B}$:
\begin{align}
\widetilde{B}&\leq \|\widetilde{\bm{\Sigma}}\| \max_{i:0 \leq i \leq t-1} \left\| \widetilde{\bm{\Sigma}}^{-1/2} \bm{\zeta}_i \ind\left\{\mathcal{H}_i\right\} \right\|_2 \nonumber \\
&\lesssim  \|\widetilde{\bm{\Sigma}}\|\varkappa \beta \rho_{\max} \max_s \left\{\bm{\phi}(s) \widetilde{\bm{\Sigma}}^{-1} \bm{\phi}(s)\right\}(\|\widetilde{\bm{\theta}}^\star\|_2 + \widetilde{R}  + 1)\nonumber \\ 
&=:\widetilde{B}_{\max}.
\end{align}

\paragraph{Invoking the matrix Freedman's inequality} With $\widetilde{W}$ and $\widetilde{B}$ bounded, we again invoke the matrix Freedman's inequality \cite[Corollary 1.3]{tropp2011freedman} to assert that
\begin{align}
&\left\|\sum_{i=0}^{t-1} \bm{\Psi}^{t-i-1} \widetilde{\bm{\zeta}}_i \right\|_2 \nonumber \\ 
&\leq 2 \sqrt{\widetilde{W}_{\max} \log \frac{2dT}{\delta}}+\frac{4}{3}\widetilde{B}_{\max} \log \frac{2dT}{\delta}\nonumber\\
&\lesssim \sqrt{\frac{\|\widetilde{\bm{\Sigma}}\|}{\alpha \lambda_{\theta}}\log \frac{2dT}{\delta}}\varkappa \beta \rho_{\max} (\|\widetilde{\bm{\theta}}^\star\|_{\widetilde{\bm{\Sigma}}} + \frac{1}{\varkappa}\widetilde{R}  + 1)
\end{align}
holds with probability at least $1-\delta/T$, proveded that $0 < \alpha < \frac{1}{\lambda_{\theta}\lambda_{\Sigma}^2 \|\widetilde{\bm{\Sigma}}\|\log \frac{2dT}{\delta}}$.

\subsection{Proof of Lemma \ref{lemma:Ainv-bhat-concentration} and Lemma \ref{lemma:Ahat-A-concentration-bound}}\label{subsec:Proof-of-Lemma:Ahat-A-concentration-bound}

\paragraph{Proof of Lemma \ref{lemma:Ahat-A-concentration-bound}: controlling $\|\bm{\Sigma}^{\frac{1}{2}}\bm{A}^{-1}\big(\bm{A}-\widehat{\bm{A}}\big)\bm{\Sigma}^{-\frac{1}{2}}\|$}

We intend to invoke the matrix Bernstein inequality to establish the advertised bound
\cite{Tropp:2015:IMC:2802188.2802189}. Note that 
\begin{align}
\bm{\Sigma}^{\frac{1}{2}}\bm{A}^{-1}\big(\bm{A}-\widehat{\bm{A}}\big)\bm{\Sigma}^{-\frac{1}{2}}=\frac{1}{T}\sum_{t=0}^{T-1}\underset{=:\bm{Z}_{t}}{\underbrace{\bm{\Sigma}^{\frac{1}{2}}\bm{A}^{-1}\big(\bm{A}-\bm{A}_{t}\big)\bm{\Sigma}^{-\frac{1}{2}}}}.\label{eq:defn-Zt-Lemma-Anorm}
\end{align}
In order to control it, we need to first control the following two
quantities: 
\begin{align*}
&v:=\max_{t}\left\{ \max\left\{ \big\|\mathbb{E}\left[\bm{Z}_{t}\bm{Z}_{t}^{\top}\right]\big\|,\,\big\|\mathbb{E}\left[\bm{Z}_{t}^{\top}\bm{Z}_{t}\right]\big\|\right\} \right\} \quad\text{and} \\ 
& B:=\max_{t}\|\bm{Z}_{t}\|.
\end{align*}

\medskip{}
\textit{Step 1: controlling $\big\|\mathbb{E}\left[\bm{Z}_{t}\bm{Z}_{t}^{\top}\right]\big\|$.}
Towards this, we first make the observation that 
\begin{align}
&\mathbb{E}\left[\bm{Z}_{t}\bm{Z}_{t}^{\top}\right] \nonumber \\ 
& =\mathbb{E}\left[\bm{\Sigma}^{\frac{1}{2}}\bm{A}^{-1}\big(\bm{A}-\bm{A}_{t}\big)\bm{\Sigma}^{-1}\big(\bm{A}-\bm{A}_{t}\big)^{\top}\big(\bm{A}^{\top}\big)^{-1}\bm{\Sigma}^{\frac{1}{2}}\right]\nonumber \\
 & \preceq\mathbb{E}\left[\bm{\Sigma}^{\frac{1}{2}}\bm{A}^{-1}\bm{A}_{t}\bm{\Sigma}^{-1}\bm{A}_{t}^{\top}\big(\bm{A}^{\top}\big)^{-1}\bm{\Sigma}^{\frac{1}{2}}\right]\nonumber \\
 & =\mathop{\mathbb{E}}\limits _{s\sim\mu,\,s'\sim P(\cdot|s)}\Big[\bm{\Sigma}^{\frac{1}{2}}\bm{A}^{-1}\bm{\phi}(s)\big(\bm{\phi}(s)-\gamma\bm{\phi}(s')\big)^{\top}\bm{\Sigma}^{-1}\nonumber \\ 
 &\qquad \qquad \qquad \big(\bm{\phi}(s)-\gamma\bm{\phi}(s')\big)\bm{\phi}(s)^{\top}\big(\bm{A}^{\top}\big)^{-1}\bm{\Sigma}^{\frac{1}{2}}\Big]\nonumber \\
 & \preceq\max_{s,s'}\left\{ \big(\bm{\phi}(s)-\gamma\bm{\phi}(s')\big)^{\top}\bm{\Sigma}^{-1}\big(\bm{\phi}(s)-\gamma\bm{\phi}(s')\big)\right\} \nonumber \\ 
 &\cdot \mathop{\mathbb{E}}\limits _{s\sim\mu}\left[\bm{\Sigma}^{\frac{1}{2}}\bm{A}^{-1}\bm{\phi}(s)\bm{\phi}(s)^{\top}\big(\bm{A}^{\top}\big)^{-1}\bm{\Sigma}^{\frac{1}{2}}\right]\nonumber \\
 & \preceq\max_{s,s'}\left\{ 2\bm{\phi}(s)^{\top}\bm{\Sigma}^{-1}\bm{\phi}(s)+2\gamma^{2}\bm{\phi}(s')^{\top}\bm{\Sigma}^{-1}\bm{\phi}(s')\right\} \nonumber \\ 
 &\cdot\left\{ \bm{\Sigma}^{\frac{1}{2}}\bm{A}^{-1}\bm{\Sigma}\big(\bm{A}^{\top}\big)^{-1}\bm{\Sigma}^{\frac{1}{2}}\right\} \nonumber \\
 & \preceq\frac{4\max_{s}\bm{\phi}(s)^{\top}\bm{\Sigma}^{-1}\bm{\phi}(s)}{(1-\gamma)^{2}}\Ind,\label{eq:E-Zt-ZtT-1}
\end{align}
where the second line holds since $\mathbb{E}\big[(\bm{M}-\mathbb{E}[\bm{M}])(\bm{M}-\mathbb{E}[\bm{M}])^{\top}\big]\preceq\mathbb{E}[\bm{M}\bm{M}^{\top}]$
for any random matrix $\bm{M}$, the second to last inequality holds
since $(\bm{a}-\bm{b})^{\top}\bm{\Sigma}^{-1}(\bm{a}-\bm{b})\leq2\bm{a}^{\top}\bm{\Sigma}^{-1}\bm{a}+2\bm{b}^{\top}\bm{\Sigma}^{-1}\bm{b}$,
and the last inequality comes from the assumption $\gamma<1$ and
Lemma \ref{lemma:Sigma-inv-A-lower-bound}.

% \noindent %
% \begin{comment}
% For any vector $\bm{x}$ with $\|\bm{x}\|_{2}=1$, one has 
% \begin{align*}
% \bm{x}^{\top}\bm{\Sigma}^{-\frac{1}{2}}\bm{A}^{\top}\bm{\Sigma}^{-1}\bm{A}\bm{\Sigma}^{-\frac{1}{2}}\bm{x} & =\big\|\bm{\Sigma}^{-\frac{1}{2}}\bm{A}\bm{\Sigma}^{-\frac{1}{2}}\bm{x}\big\|_{2}^{2}\geq\big(\bm{x}^{\top}\bm{\Sigma}^{-\frac{1}{2}}\bm{A}\bm{\Sigma}^{-\frac{1}{2}}\bm{x}\big)^{2}\\
%  & \geq(1-\gamma)^{2},
% \end{align*}
% and hence 
% \[
% \Big\|\big(\bm{\Sigma}^{-\frac{1}{2}}\bm{A}^{\top}\bm{\Sigma}^{-1}\bm{A}\bm{\Sigma}^{-\frac{1}{2}}\big)^{-1}\Big\|\leq\frac{1}{(1-\gamma)^{2}}.
% \]
% In conclusion, 
% \[
% \left\Vert \mathbb{E}\left[\bm{\Sigma}^{\frac{1}{2}}\bm{A}^{-1}\big(\bm{A}-\widehat{\bm{A}}\big)\bm{\Sigma}^{-\frac{1}{2}}\left(\bm{\Sigma}^{\frac{1}{2}}\bm{A}^{-1}\big(\bm{A}-\widehat{\bm{A}}\big)\bm{\Sigma}^{-\frac{1}{2}}\right)^{\top}\right]\right\Vert \leq\frac{4\|\bm{\Sigma}^{-1}\|}{(1-\gamma)^{2}}.
% \]
% \end{comment}

\medskip{}
\textit{Step 2: controlling $\big\|\mathbb{E}\left[\bm{Z}_{t}^{\top}\bm{Z}_{t}\right]\big\|$.}
Similarly, one can obtain 
\begin{align*}
&\mathbb{E}\left[\bm{Z}_{t}^{\top}\bm{Z}_{t}\right]\\ 
 & =\mathbb{E}\left[\bm{\Sigma}^{-\frac{1}{2}}\big(\bm{A}-\bm{A}_{t}\big)^{\top}(\bm{A}^{\top})^{-1}\bm{\Sigma}\bm{A}^{-1}\big(\bm{A}-\bm{A}_{t}\big)\bm{\Sigma}^{-\frac{1}{2}}\right]\\
 & \preceq\mathbb{E}\left[\bm{\Sigma}^{-\frac{1}{2}}\bm{A}_{t}^{\top}(\bm{A}^{\top})^{-1}\bm{\Sigma}\bm{A}^{-1}\bm{A}_{t}\bm{\Sigma}^{-\frac{1}{2}}\right]\\
 & =\mathbb{E}\Big[\bm{\Sigma}^{-\frac{1}{2}}\big(\bm{\phi}(s_{t})-\gamma\bm{\phi}(s_{t}')\big)\bm{\phi}(s_{t})^{\top}(\bm{A}^{\top})^{-1}\bm{\Sigma}\bm{A}^{-1}\\ 
 &\quad \quad \bm{\phi}(s_{t})\big(\bm{\phi}(s_{t})-\gamma\bm{\phi}(s_{t}')\big)^{\top}\bm{\Sigma}^{-\frac{1}{2}}\Big]\\
 & \preceq\max_{s}\Big\{\bm{\phi}(s)^{\top}(\bm{A}^{\top})^{-1}\bm{\Sigma}\bm{A}^{-1}\bm{\phi}(s)\Big\}\\ 
 &\cdot \mathbb{E}\left[\bm{\Sigma}^{-\frac{1}{2}}\big(\bm{\phi}(s_{t})-\gamma\bm{\phi}(s_{t}')\big)\big(\bm{\phi}(s_{t})-\gamma\bm{\phi}(s_{t}')\big)^{\top}\bm{\Sigma}^{-\frac{1}{2}}\right]\\
 & \preceq\max_{s}\Big\{\bm{\phi}(s)^{\top}(\bm{A}^{\top})^{-1}\bm{\Sigma}\bm{A}^{-1}\bm{\phi}(s)\Big\}\\ 
 &\cdot2\mathbb{E}\left[\bm{\Sigma}^{-\frac{1}{2}}\big(\bm{\phi}(s_{t})\bm{\phi}(s_{t})^{\top}+\bm{\phi}(s_{t}')\bm{\phi}(s_{t}')^{\top}\big)\bm{\Sigma}^{-\frac{1}{2}}\right]\\
 & \preceq4\max_{s}\Big\{\bm{\phi}(s)^{\top}(\bm{A}^{\top})^{-1}\bm{\Sigma}\bm{A}^{-1}\bm{\phi}(s)\Big\}\Ind.
\end{align*}
Here, the second to last bound follows from the elementary inequality
$(\bm{a}-\bm{b})(\bm{a}-\bm{b})^{\top}\preceq2\bm{a}\bm{a}^{\top}+2\bm{b}\bm{b}^{\top}$
and the assumption $\gamma<1$, whereas the last line makes use of
the facts $s_{t}\sim\mu$, $s_{t}'\sim\mu$ and the definition (\ref{eq:defn-Sigma})
of $\bm{\Sigma}$. It then boils down to upper bounding $\max_{s}\big\{\bm{\phi}(s)^{\top}(\bm{A}^{\top})^{-1}\bm{\Sigma}\bm{A}^{-1}\bm{\phi}(s)\big\}$,
which can be accomplished as follows 
\begin{align*}
&\bm{\phi}(s)^{\top}(\bm{A}^{\top})^{-1}\bm{\Sigma}\bm{A}^{-1}\bm{\phi}(s)\\ 
 & =\bm{\phi}(s)^{\top}\bm{\Sigma}^{-\frac{1}{2}}\left\{ \bm{\Sigma}^{\frac{1}{2}}(\bm{A}^{\top})^{-1}\bm{\Sigma}\bm{A}^{-1}\bm{\Sigma}^{\frac{1}{2}}\right\} \bm{\Sigma}^{-\frac{1}{2}}\bm{\phi}(s)\\
 & \leq\big\|\bm{\Sigma}^{-\frac{1}{2}}\bm{\phi}(s)\big\|_{2}^{2}\cdot\big\|\bm{\Sigma}^{\frac{1}{2}}(\bm{A}^{\top})^{-1}\bm{\Sigma}\bm{A}^{-1}\bm{\Sigma}^{\frac{1}{2}}\big\|\\
 & \leq\frac{\max_{s}\bm{\phi}(s)^{\top}\bm{\Sigma}^{-1}\bm{\phi}(s)}{(1-\gamma)^{2}}.
\end{align*}
Here, the last line arises from Lemma \ref{lemma:Sigma-inv-A-lower-bound}.
Putting the above bounds together yields

\noindent 
\begin{align}
\mathbb{E}\left[\bm{Z}_{t}^{\top}\bm{Z}_{t}\right]\preceq\frac{4\max_{s}\bm{\phi}(s)^{\top}\bm{\Sigma}^{-1}\bm{\phi}(s)}{(1-\gamma)^{2}}\Ind.\label{eq:E-Zt-ZtT-2}
\end{align}

\begin{comment}
Further, 
\[
\bm{\Sigma}^{\frac{1}{2}}(\bm{A}^{\top})^{-1}\bm{\Sigma}\bm{A}^{-1}\bm{\Sigma}^{\frac{1}{2}}=\left\{ \bm{\Sigma}^{-\frac{1}{2}}\bm{A}\bm{\Sigma}^{-1}\bm{A}^{\top}\bm{\Sigma}^{-\frac{1}{2}}\right\} ^{-1}.
\]
Observing that for any unit vector $\bm{x}$, one has 
\begin{align*}
\bm{x}^{\top}\left\{ \bm{\Sigma}^{-\frac{1}{2}}\bm{A}\bm{\Sigma}^{-1}\bm{A}^{\top}\bm{\Sigma}^{-\frac{1}{2}}\right\} \bm{x} & =\big\|\bm{\Sigma}^{-\frac{1}{2}}\bm{A}^{\top}\bm{\Sigma}^{-\frac{1}{2}}\bm{x}\big\|_{2}^{2}\\
 & \geq\left(\bm{x}^{\top}\bm{\Sigma}^{-\frac{1}{2}}\bm{A}^{\top}\bm{\Sigma}^{-\frac{1}{2}}\bm{x}\right)^{2}\\
 & =\left(\bm{x}^{\top}\bm{\Sigma}^{-\frac{1}{2}}\bm{A}\bm{\Sigma}^{-\frac{1}{2}}\bm{x}\right)^{2}\\
 & \geq(1-\gamma)^{2}.
\end{align*}
Consequently, 
\begin{align*}
\bm{\phi}(s)^{\top}(\bm{A}^{\top})^{-1}\bm{\Sigma}\bm{A}^{-1}\bm{\phi}(s) & =\bm{\phi}(s)^{\top}\bm{\Sigma}^{-\frac{1}{2}}\left\{ \bm{\Sigma}^{\frac{1}{2}}(\bm{A}^{\top})^{-1}\bm{\Sigma}\bm{A}^{-1}\bm{\Sigma}^{\frac{1}{2}}\right\} \bm{\Sigma}^{-\frac{1}{2}}\bm{\phi}(s)\\
 & \leq\frac{1}{(1-\gamma)^{2}}\bm{\phi}(s)^{\top}\bm{\Sigma}^{-1}\bm{\phi}(s)\\
 & \leq\frac{\|\bm{\Sigma}^{-1}\|}{(1-\gamma)^{2}},
\end{align*}
thus indicating that 
\end{comment}

\medskip{}
\textit{Step 3: controlling $\|\bm{Z}_{t}\|$.} Our starting point
is the following triangle inequality 
\begin{align*}
\|\bm{Z}_{t}\| & =\big\|\bm{\Sigma}^{\frac{1}{2}}\bm{A}^{-1}\big(\bm{A}-\bm{A}_{t}\big)\bm{\Sigma}^{-\frac{1}{2}}\big\|\\ 
&\leq\big\|\bm{\Sigma}^{\frac{1}{2}}\bm{A}^{-1}\bm{A}_{t}\bm{\Sigma}^{-\frac{1}{2}}\big\|+\big\|\bm{\Sigma}^{\frac{1}{2}}\bm{A}^{-1}\bm{A}\bm{\Sigma}^{-\frac{1}{2}}\big\|\\
 & \leq\big\|\bm{\Sigma}^{\frac{1}{2}}\bm{A}^{-1}\bm{\Sigma}^{\frac{1}{2}}\big\|\cdot\big\|\bm{\Sigma}^{-\frac{1}{2}}\bm{A}_{t}\bm{\Sigma}^{-\frac{1}{2}}\big\|+1\\
 & \leq\frac{1}{1-\gamma}\big\|\bm{\Sigma}^{-\frac{1}{2}}\bm{A}_{t}\bm{\Sigma}^{-\frac{1}{2}}\big\|+1,
\end{align*}
where the last inequality follows from Lemma \ref{lemma:Sigma-inv-A-lower-bound}.
In addition, we see that 
\begin{align}
\big\|\bm{\Sigma}^{-\frac{1}{2}}\bm{A}_{t}\bm{\Sigma}^{-\frac{1}{2}}\big\|
&\leq\max_{s}\big\|\bm{\Sigma}^{-\frac{1}{2}}\bm{\phi}(s)\bm{\phi}(s)^{\top}\bm{\Sigma}^{-\frac{1}{2}}\big\|\nonumber \\ 
&+\gamma\max_{s,s'}\big\|\bm{\Sigma}^{-\frac{1}{2}}\bm{\phi}(s')\bm{\phi}(s)^{\top}\bm{\Sigma}^{-\frac{1}{2}}\big\|\nonumber \\ 
&\leq2\max_{s}\big\|\bm{\Sigma}^{-\frac{1}{2}}\bm{\phi}(s)\big\|_{2}^{2}.\label{eq:useful-for-TD}
\end{align}
This combined with the preceding bounds yields 
\begin{align}
\|\bm{Z}_{t}\| & \leq\frac{2\max_{s}\big\|\bm{\Sigma}^{-\frac{1}{2}}\bm{\phi}(s)\big\|_{2}^{2}}{1-\gamma}+1 \nonumber \\ 
&\leq\frac{4\max_{s}\big\|\bm{\Sigma}^{-\frac{1}{2}}\bm{\phi}(s)\big\|_{2}^{2}}{1-\gamma}\nonumber \\ 
&=\frac{4\max_{s}\bm{\phi}(s)^{\top}\bm{\Sigma}^{-1}\bm{\phi}(s)}{1-\gamma}.\label{eq:Zt-bound-1}
\end{align}
Here, the inequality follows since 
\begin{align}
\max_{s}\big\|\bm{\Sigma}^{-\frac{1}{2}}\bm{\phi}(s)\big\|_{2}^{2} & \geq\mathop{\mathbb{E}}\limits _{s\sim\mu}\left[\bm{\phi}(s)^{\top}\bm{\Sigma}^{-1}\bm{\phi}(s)\right]\nonumber \\ 
&=\mathop{\mathbb{E}}\limits _{s\sim\mu}\left[\mathrm{tr}\big(\bm{\Sigma}^{-1}\bm{\phi}(s)\bm{\phi}(s)^{\top}\big)\right]\nonumber \\ 
&=\mathrm{tr}(\Ind_{d})=d\geq1.\label{eq:useful-for-TD-2}
\end{align}

\noindent \medskip{}
\textit{Step 4: invoking the matrix Bernstein inequality.} With the
above bounds in mind, we are ready to apply the matrix Bernstein inequality
\citep{Tropp:2015:IMC:2802188.2802189} to obtain that: with probability
at least $1-\delta$ one has 
\begin{align}
&\big\|\bm{\Sigma}^{\frac{1}{2}}\bm{A}^{-1}\big(\bm{A}-\widehat{\bm{A}}\big)\bm{\Sigma}^{-\frac{1}{2}}\big\|\nonumber \\ 
 & \lesssim\sqrt{\frac{1}{T^{2}}\sum_{t=0}^{T-1}\max\left\{ \big\|\mathbb{E}\left[\bm{Z}_{t}\bm{Z}_{t}^{\top}\right]\big\|,\,\big\|\mathbb{E}\left[\bm{Z}_{t}^{\top}\bm{Z}_{t}\right]\big\|\right\} \log\Big(\frac{d}{\delta}\Big)}\nonumber \\ 
 &+\frac{\max_{t}\|\bm{Z}_{t}\|\log(\frac{d}{\delta})}{T}.\nonumber \\
 & \overset{(\mathrm{i})}{\lesssim}\sqrt{\frac{\max_{s}\bm{\phi}(s)^{\top}\bm{\Sigma}^{-1}\bm{\phi}(s)}{T(1-\gamma)^{2}}\log\Big(\frac{d}{\delta}\Big)}\nonumber \\ 
 &+\frac{\max_{s}\bm{\phi}(s)^{\top}\bm{\Sigma}^{-1}\bm{\phi}(s)\log(\frac{d}{\delta})}{T(1-\gamma)}\nonumber \\
 & \overset{(\mathrm{ii})}{\asymp}\sqrt{\frac{\max_{s}\bm{\phi}(s)^{\top}\bm{\Sigma}^{-1}\bm{\phi}(s)}{T(1-\gamma)^{2}}\log\Big(\frac{d}{\delta}\Big)}.\label{eq:Sigma-A-perturb-bound-iid}
\end{align}
Here, (i) results from the bounds \eqref{eq:E-Zt-ZtT-1}, \eqref{eq:E-Zt-ZtT-2}
and \eqref{eq:Zt-bound-1}, while (ii) holds as long as $T\gtrsim\max_{s}\bm{\phi}(s)^{\top}\bm{\Sigma}^{-1}\bm{\phi}(s)\log\big(\frac{d}{\delta}\big)$.

\noindent In addition, if $T\geq \frac{c_{2}\max_{s}\bm{\phi}(s)^{\top}\bm{\Sigma}^{-1}\bm{\phi}(s)\log\big(\frac{d}{\delta}\big)}{(1-\gamma)^2}$
for some constant $c_{2}$ large enough, then one has $\big\|\bm{\Sigma}^{\frac{1}{2}}\bm{A}^{-1}\big(\bm{A}-\widehat{\bm{A}}\big)\bm{\Sigma}^{-\frac{1}{2}}\big\|<1$.
Suppose that $\widehat{\bm{A}}$ is not invertible. Given that $\bm{A}$
and $\bm{\Sigma}$ are both invertible, this means that one can find
a unit vectors $\bm{u}$ obeying $\bm{A}^{-1}\widehat{\bm{A}}\bm{\Sigma}^{-\frac{1}{2}}\bm{u}=\bm{0}$,
which in turn implies 
\begin{align*}
&\bm{u}^{\top}\bm{\Sigma}^{\frac{1}{2}}\bm{A}^{-1}\big(\bm{A}-\widehat{\bm{A}}\big)\bm{\Sigma}^{-\frac{1}{2}}\bm{u}\\ 
 & =\bm{u}^{\top}\bm{\Sigma}^{\frac{1}{2}}\bm{A}^{-1}\bm{A}\bm{\Sigma}^{-\frac{1}{2}}\bm{u}-\bm{u}^{\top}\bm{\Sigma}^{\frac{1}{2}}\bm{A}^{-1}\widehat{\bm{A}}\bm{\Sigma}^{-\frac{1}{2}}\bm{u}\\
 & =1-0=1
\end{align*}
and hence contradicts the condition $\big\|\bm{\Sigma}^{\frac{1}{2}}\bm{A}^{-1}\big(\bm{A}-\widehat{\bm{A}}\big)\bm{\Sigma}^{-\frac{1}{2}}\big\|<1$.
As a result, we conclude that $\widehat{\bm{A}}$ is invertible as
long as $\big\|\bm{\Sigma}^{\frac{1}{2}}\bm{A}^{-1}\big(\bm{A}-\widehat{\bm{A}}\big)\bm{\Sigma}^{-\frac{1}{2}}\big\|<1$.

\paragraph{Proof of Lemma \ref{lemma:Ainv-bhat-concentration}: controlling $\big\|\bm{A}^{-1}\big(\widehat{\bm{b}}-\bm{b}\big)\big\|_{\bm{\Sigma}}$}

First of all, it is seen that 
\[
\big\|\bm{A}^{-1}\big(\widehat{\bm{b}}-\bm{b}\big)\big\|_{\bm{\Sigma}}=\Big\|\frac{1}{T}\sum_{t=0}^{T-1}\bm{\Sigma}^{\frac{1}{2}}\bm{A}^{-1}\big(\bm{b}_{t}-\bm{b}\big)\Big\|_{2}=\Big\|\frac{1}{T}\sum_{t=0}^{T-1}\bm{z}_{t}\Big\|_{2},
\]
where we define the vector $\bm{z}_{t}:=\bm{\Sigma}^{\frac{1}{2}}\bm{A}^{-1}\big(\bm{b}_{t}-\bm{b}\big)$.
Therefore, we need to look at the properties of $\bm{z}_{t}$. Towards
this end, we observe that 
\begin{align*}
\mathbb{E}\big[\bm{z}_{t}^{\top}\bm{z}_{t}\big] & =\mathbb{E}\left[\big(\bm{b}_{t}-\bm{b}\big)^{\top}\big(\bm{A}^{\top}\big)^{-1}\bm{\Sigma}\bm{A}^{-1}\big(\bm{b}_{t}-\bm{b}\big)\right]\\ 
&\preceq\mathbb{E}\left[\bm{b}_{t}^{\top}\big(\bm{A}^{\top}\big)^{-1}\bm{\Sigma}\bm{A}^{-1}\bm{b}_{t}\right]\\
 & \overset{(\mathrm{i})}{\leq}\Big\{\max_{s\in\mathcal{S}}\left|r(s)\right|^{2}\Big\}\mathbb{E}\left[\bm{\phi}(s_{t})^{\top}\big(\bm{A}^{\top}\big)^{-1}\bm{\Sigma}\bm{A}^{-1}\bm{\phi}(s_{t})\right]\\
 & \overset{(\mathrm{ii})}{\leq}\mathbb{E}\left[\bm{\phi}(s_{t})^{\top}\big(\bm{A}^{\top}\big)^{-1}\bm{\Sigma}\bm{A}^{-1}\bm{\phi}(s_{t})\right]\\
 & =\mathbb{E}\left[\bm{\phi}(s_{t})^{\top}\bm{\Sigma}^{-\frac{1}{2}}\bm{\Sigma}^{\frac{1}{2}}\big(\bm{A}^{\top}\big)^{-1}\bm{\Sigma}\bm{A}^{-1}\bm{\Sigma}^{\frac{1}{2}}\bm{\Sigma}^{-\frac{1}{2}}\bm{\phi}(s_{t})\right]\\
 & \leq\Big\{\max_{s\in\mathcal{S}}\big\|\bm{\Sigma}^{-\frac{1}{2}}\bm{\phi}(s)\big\|_{2}^{2}\Big\}\cdot\big\|\bm{\Sigma}^{\frac{1}{2}}\big(\bm{A}^{\top}\big)^{-1}\bm{\Sigma}\bm{A}^{-1}\bm{\Sigma}^{\frac{1}{2}}\big\|\\
 & \overset{(\mathrm{iii})}{\leq}\frac{1}{(1-\gamma)^{2}}\max_{s\in\mathcal{S}}\big\|\bm{\Sigma}^{-\frac{1}{2}}\bm{\phi}(s)\big\|_{2}^{2},
\end{align*}
where (i) holds since $\bm{b}_{t}=\bm{\phi}(s_{t})r(s_{t})$, (ii)
follows from the assumption $\max_{s}|r(s)|\leq1$, and (iii) arises
from Lemma \ref{lemma:Sigma-inv-A-lower-bound}. Additionally, 
\begin{align*}
\max_{t}\big\|\bm{z}_{t}\big\|_{2} & \leq\max_{t}\big\|\bm{\Sigma}^{\frac{1}{2}}\bm{A}^{-1}\bm{b}_{t}\big\|_{2}+\big\|\bm{\Sigma}^{\frac{1}{2}}\bm{A}^{-1}\bm{b}\big\|_{2}\\
 & \overset{(\mathrm{iv})}{\leq}2\max_{s}\big\|\bm{\Sigma}^{\frac{1}{2}}\bm{A}^{-1}\bm{\phi}(s)r(s)\big\|_{2}\\
 & \overset{(\mathrm{v})}{\leq}2\max_{s\in\mathcal{S}}\big\|\bm{\Sigma}^{\frac{1}{2}}\bm{A}^{-1}\bm{\phi}(s)\big\|_{2}\\
 & \leq2\big\|\bm{\Sigma}^{\frac{1}{2}}\bm{A}^{-1}\bm{\Sigma}^{\frac{1}{2}}\big\|\cdot\max_{s\in\mathcal{S}}\big\|\bm{\Sigma}^{-\frac{1}{2}}\bm{\phi}(s)\big\|_{2}\\
 & \leq\frac{2}{1-\gamma}\max_{s\in\mathcal{S}}\big\|\bm{\Sigma}^{-\frac{1}{2}}\bm{\phi}(s)\big\|_{2},
\end{align*}
where (iv) holds since $\bm{b}_{t}=\bm{\phi}(s_{t})r(s_{t})$ and
$\bm{b}=\mathbb{E}_{s\sim\mu}\big[\bm{\phi}(s)r(s)\big]$, (v) comes
from the assumption $\max_{s}|r(s)|\leq1$, and the last line is due
to Lemma \ref{lemma:Sigma-inv-A-lower-bound}. Consequently, the matrix
Bernstein inequality \cite{Tropp:2015:IMC:2802188.2802189} yields
\begin{align}
&\big\|\bm{A}^{-1}\big(\widehat{\bm{b}}-\bm{b}\big)\big\|_{\bm{\Sigma}} \nonumber \\ 
& =\Big\|\frac{1}{T}\sum_{t=1}^{T}\bm{z}_{t}\Big\|_{2}\nonumber \\ 
&\lesssim\sqrt{\frac{1}{T^{2}}\sum_{t=0}^{T-1}\mathbb{E}\big[\bm{z}_{t}^{\top}\bm{z}_{t}\big]\log\Big(\frac{d}{\delta}\Big)}+\frac{1}{T}\max_{t}\big\|\bm{z}_{t}\big\|_{2}\log\Big(\frac{d}{\delta}\Big)\nonumber \\
 & \lesssim\frac{\max_{s\in\mathcal{S}}\big\|\bm{\Sigma}^{-\frac{1}{2}}\bm{\phi}(s)\big\|_{2}}{1-\gamma}\sqrt{\frac{1}{T}\log\Big(\frac{d}{\delta}\Big)}\nonumber \\ 
 &+\frac{\max_{s\in\mathcal{S}}\big\|\bm{\Sigma}^{-\frac{1}{2}}\bm{\phi}(s)\big\|_{2}}{1-\gamma}\cdot\frac{1}{T}\log\Big(\frac{d}{\delta}\Big)\nonumber \\
 & \asymp\frac{\max_{s\in\mathcal{S}}\big\|\bm{\Sigma}^{-\frac{1}{2}}\bm{\phi}(s)\big\|_{2}}{1-\gamma}\sqrt{\frac{1}{T}\log\Big(\frac{d}{\delta}\Big)}\label{eq:Ainv-b-perturb-iid-bound}
\end{align}
with probability at least $1-\delta$, as long as $T\gtrsim\log\big(\frac{d}{\delta}\big)$.

\subsection{Proof of Lemma \ref{lemma:TD-main-term} \label{subsec:Proof-of-Lemma-TD-main-term}}

Recall from the proof of Lemma~\ref{lemma:exp-decay-sum} that $\Exs_{i}[\cdot]$
represents the expectation conditioned on the probability space generated
by the samples $\{(s_{j},s_{j}')\}_{j\leq i}$. It is easy to check
that $\{\bm{\Sigma}^{\frac{1}{2}}\bm{A}^{-1}(\bm{A}_{i}-\bm{A})\bm{\theta}'_{i}\}$
forms a martingale difference sequence, and  we seek to bound
$\Big\|\frac{1}{u-l+1}\sum_{i=l}^{u}\bm{\Sigma}^{\frac{1}{2}}\bm{A}^{-1}(\bm{A}_{i}-\bm{A})\bm{\theta}'_{i}\Big\|_{2}$
via matrix Freedman's inequality. The key is to control the following
quantities (here, we abuse  notation whenever it is clear from
 context): 
\begin{align}
&W\coloneqq\sum_{i=l}^{u}\mathbb{E}_{i-1}\Big[\big\|\bm{\Sigma}^{1/2}\bm{A}^{-1}(\bm{A}_{i}-\bm{A})\bm{\theta}'_{i}\big\|_{2}^{2}\Big]\quad\text{and}\nonumber \\ 
&B\coloneqq\max_{i:l\leq i\leq u} \big\|\bm{\Sigma}^{1/2}\bm{A}^{-1}(\bm{A}_{i}-\bm{A})\bm{\theta}'_{i}\big\|_{2}.\label{eq:defn-Wi-Bi-new}
\end{align}

\paragraph{Control of $B$}

To begin with, observe that 
\begin{align*}
B & =\max_{i:l\leq i\leq u}\big\|\bm{\Sigma}^{1/2}\bm{A}^{-1}(\bm{A}_{i}-\bm{A})\bm{\Sigma}^{-1/2}\big\|\cdot\big\|\bm{\theta}'_{i}\big\|_{\bm{\Sigma}}\\
 & \leq\frac{4\max_{s}\bm{\phi}(s)^{\top}\bm{\Sigma}^{-1}\bm{\phi}(s)}{1-\gamma}\max_{i:l\leq i\leq u}\left\{ \|\bm{\theta}^{\star}\|_{\bm{\Sigma}}+\|\bm{\Delta}_{i}\|_{\bm{\Sigma}}\right\} \ind\{\mathcal{H}_{i}\}\\
 & \leq\frac{4\max_{s}\bm{\phi}(s)^{\top}\bm{\Sigma}^{-1}\bm{\phi}(s)}{1-\gamma}\big(\|\bm{\theta}^{\star}\|_{\bm{\Sigma}}+R\big)\eqqcolon B_{\max},
\end{align*}
where the second to last inequality comes from (\ref{eq:Zt-bound-1})
and the triangle inequality, and the last line is due to the definition
of $\mathcal{H}_{i}$. 

\paragraph{Control of $W$}

Moreover, one can derive 
\begin{align*}
W & \coloneqq\sum_{i=l}^{u}\mathbb{E}_{i-1}\Big[\|\bm{\Sigma}^{1/2}\bm{A}^{-1}(\bm{A}_{i}-\bm{A})\bm{\Sigma}^{-1/2}\bm{\Sigma}^{1/2}\bm{\theta}_{i}'\big\|_{2}^{2}\Big]\\
 & =\sum_{i=l}^{u}\bm{\theta}_{i}'^{\top}\bm{\Sigma}^{1/2}\mathbb{E}_{i-1}\Big[\bm{\Sigma}^{-1/2}(\bm{A}_{i}-\bm{A})^{\top}(\bm{A}^{\top})^{-1}\bm{\Sigma}\bm{A}^{-1}\\ 
 &\qquad \qquad \qquad \qquad \quad (\bm{A}_{i}-\bm{A})\bm{\Sigma}^{-1/2}\Big]\bm{\Sigma}^{1/2}\bm{\theta}_{i}'\\
 & \leq\sum_{i=l}^{u}\frac{4\max_{s}\bm{\phi}(s)^{\top}\bm{\Sigma}^{-1}\bm{\phi}(s)}{(1-\gamma)^{2}}\big\|\bm{\Sigma}^{1/2}\bm{\theta}_{i}'\big\|_{2}^{2}\\
 & \leq\frac{4\max_{s}\bm{\phi}(s)^{\top}\bm{\Sigma}^{-1}\bm{\phi}(s)}{(1-\gamma)^{2}}\sum_{i=l}^{u}\big(\|\bm{\theta}^{\star}\|_{\bm{\Sigma}}+\|\bm{\Delta}_{i}\|_{\bm{\Sigma}}\big)^{2}\ind\{\mathcal{H}_{i}\}\\
 & \leq\frac{4(u-l+1)\max_{s}\bm{\phi}(s)^{\top}\bm{\Sigma}^{-1}\bm{\phi}(s)}{(1-\gamma)^{2}}\big(\|\bm{\theta}^{\star}\|_{\bm{\Sigma}}+R\big)^{2}\\ 
 &\eqqcolon W_{\max},
\end{align*}
where the first inequality arises from (\ref{eq:E-Zt-ZtT-2}), and
the last inequality makes use of the definition of $\mathcal{H}_{i}$.

With the above bounds in place, we can apply Freedman's inequality
\cite[Corollary 1.3]{tropp2011freedman} for matrix martingales
to demonstrate that 
\begin{align*}
 & \Big\|\frac{1}{u-l+1}\sum_{i=l}^{u}\bm{\Sigma}^{\frac{1}{2}}\bm{A}^{-1}(\bm{A}_{i}-\bm{A})\bm{\theta}'_{i}\Big\|_{2}\\ 
 &\leq\frac{2}{u-l+1}\sqrt{W_{\max}\log\frac{2d}{\delta}}+\frac{4}{3u-l+1}B_{\max}\log\frac{2d}{\delta}\\
 & \leq\frac{8\big(\|\bm{\theta}^{\star}\|_{\bm{\Sigma}}+R\big)}{1-\gamma}\sqrt{\frac{\max_{s}\bm{\phi}(s)^{\top}\bm{\Sigma}^{-1}\bm{\phi}(s)\log\frac{2d}{\delta}}{u-l+1}}\\ 
 &+\frac{16{\max_{s}\bm{\phi}(s)^{\top}\bm{\Sigma}^{-1}\bm{\phi}(s)}\log\frac{2d}{\delta}}{3(1-\gamma)(u-l+1)}\big(\|\bm{\theta}^{\star}\|_{\bm{\Sigma}}+R\big)\\
 & \leq\frac{16\big(\|\bm{\theta}^{\star}\|_{\bm{\Sigma}}+R\big)}{1-\gamma}\sqrt{\frac{\max_{s}\bm{\phi}(s)^{\top}\bm{\Sigma}^{-1}\bm{\phi}(s)\log\frac{2d}{\delta}}{u-l+1}}
\end{align*}
with probability at least $1-\delta$, as long as $u-l+1\geq\frac{4{\max_{s}\bm{\phi}(s)^{\top}\bm{\Sigma}^{-1}\bm{\phi}(s)}\log\frac{2d}{\delta}}{9}$.

\section{Comparisons with previous works}

\subsection{Comparisons with \cite{srikant2019finite}}
\label{app:compare-Srikant}

\cite{srikant2019finite} bounded the expectation of TD estimation error $\Exs\|\bm{\theta}_T-\bm{\theta}^\star\|_2^2$ with Markov samples by an iterative relation. 
For fair comparisons, we apply their ideas to bounding the error in $\bm{\Sigma}$-norm with independent samples. 
\paragraph{Iterative relation on $\mathbb{E} \|\bm{\Delta}_t\|_{\Sigma}^2$} Recall from the TD update rule \eqref{eq:TD-update-all} that
\begin{align*}
\bm{\Delta}_{t+1} &= \bm{\Delta}_t - \eta_t(\bm{A}_t \bm{\theta}_t - \bm{b}_t) \\ 
&=  (\bm{I} - \eta_t \bm{A}_t) \bm{\Delta}_t -\eta_t (\bm{A}_t \bm{\theta}^\star - \bm{b}_t).
\end{align*}
Therefore, the $\bm{\Sigma}$-norm of $\bm{\Delta}_{t+1}$ can be expressed as 
\begin{align*}
\left\|\bm{\Delta}_{t+1}\right\|_{\bm{\Sigma}}^2 &= \|\bm{\Delta}_t\|_{\bm{\Sigma}}^2 -2\eta_t \langle \bm{\Delta}_t, \bm{A}_t \bm{\Delta}_t \rangle_{\bm{\Sigma}} + \eta_t^2 \|\bm{A}_t \bm{\Delta}_t \|_{\bm{\Sigma}}^2\\ 
 &- 2\eta_t \langle \bm{\Delta}_t,\bm{A}_t \bm{\theta}^\star - \bm{b}_t\rangle_{\bm{\Sigma}} + 2\eta_t^2 \langle \bm{A}_t \bm{\Delta}_t, \bm{A}_t \bm{\theta}^\star - \bm{b}_t\rangle_{\bm{\Sigma}}\\ 
 & + \eta_t^2 \|\bm{A}_t \bm{\theta}^\star - \bm{b}_t\|_{\bm{\Sigma}}^2.
\end{align*}
Notice that by definition,
\begin{align*}
\mathbb{E}_t \langle \bm{\Delta}_t,\bm{A}_t \bm{\theta}^\star - \bm{b}_t\rangle_{\bm{\Sigma}} = \langle \bm{\Delta}_t, \bm{A} \bm{\theta}^\star - \bm{b} \rangle = 0,
\end{align*}
and that a basic property of inner product yields
\begin{align*}
2\langle \bm{A}_t \bm{\Delta}_t, \bm{A}_t \bm{\theta}^\star - \bm{b}_t\rangle_{\bm{\Sigma}}  \leq \|\bm{A}_t \bm{\Delta}_t\|_{\bm{\Sigma}}^2 +\|\bm{A}_t \bm{\theta}^\star - \bm{b}_t\|_{\bm{\Sigma}}^2.
\end{align*}
Therefore, we can apply the law of total expectations to obtain the following iterative relation:
\begin{align}\label{eq:Sigma-norm-iter}
\mathbb{E}\|\bm{\Delta}_{t+1}\|_{\bm{\Sigma}}^2  &= \mathbb{E}\|\bm{\Delta}_t\|_{\bm{\Sigma}}^2 - \underset{I_1}{\underbrace{2\eta_t \mathbb{E}[\bm{\Delta}_t^\top (\bm{A}^{\top} \bm{\Sigma} + \bm{\Sigma A})\bm{\Delta}_t}]} \nonumber \\ 
&+  \underset{I_2}{\underbrace{2\eta_t^2\mathbb{E}\|\bm{A}_t \bm{\Delta}_t\|_{\bm{\Sigma}}^2}} + \underset{I_3}{\underbrace{2\eta_t^2 \mathbb{E}\|\bm{A}_t \bm{\theta}^\star - \bm{b}_t\|_{\bm{\Sigma}}^2}}.
\end{align}
We now turn to bounding $I_1$, $I_2$ and $I_3$ in order.
\paragraph{Bounding $I_1$} In order to lower bound $I_1$ as a function of $\|\bm{\Delta}_t\|_{\bm{\Sigma}}^2$, we firstly express it as 
\begin{align*}
&\bm{\Delta}_t^\top (\bm{A}^{\top} \bm{\Sigma} + \bm{\Sigma A})\bm{\Delta}_t\\ 
&= \bm{\Delta}_t^\top \bm{\Sigma}^{1/2}\bm{\Sigma}^{-1/2}(\bm{A}^{\top} \bm{\Sigma} + \bm{\Sigma A})\bm{\Sigma}^{-1}\bm{\Sigma}^{1/2}\bm{\Delta}_t \\ 
&\geq \|\bm{\Sigma}^{1/2}\bm{\Delta}_t\|_2^2 \lambda_{\min}\left(\bm{\Sigma}^{-1/2}\bm{A}^\top \bm{\Sigma}^{1/2} + \bm{\Sigma}^{1/2}\bm{A}\bm{\Sigma}^{-1/2}\right) \\ 
&= \|\bm{\Delta}_t\|_{\bm{\Sigma}}^2 \lambda_{\min}\left(\bm{\Sigma}^{-1/2}\bm{A}^\top \bm{\Sigma}^{1/2} + \bm{\Sigma}^{1/2}\bm{A}\bm{\Sigma}^{-1/2}\right) .
\end{align*}
Recall from \eqref{eq:Sigma-A-Sigma-UB-12} that
\begin{align*}
\|\bm{\Sigma}^{\frac{1}{2}}\bm{A}^{-1}\bm{\Sigma}^{\frac{1}{2}}\| \leq (1-\gamma)^{-1},
\end{align*}
so the minimal eigenvalue of $\bm{\Sigma}^{-1/2}\bm{A}^\top \bm{\Sigma}^{1/2} + \bm{\Sigma}^{1/2}\bm{A}\bm{\Sigma}^{-1/2}$ is lower bounded by
\begin{align*}
&\lambda_{\min}\left(\bm{\Sigma}^{-1/2}\bm{A}^\top \bm{\Sigma}^{1/2} + \bm{\Sigma}^{1/2}\bm{A}\bm{\Sigma}^{-1/2}\right) \\ 
& \geq \lambda_{\min}(\bm{\Sigma}) \cdot \left[\gamma_{\min}\left(\bm{\Sigma}^{-\frac{1}{2}}\bm{A}^\top \bm{\Sigma}^{-\frac{1}{2}}\right) + \gamma_{\min}\left(\bm{\Sigma}^{-\frac{1}{2}}\bm{A} \bm{\Sigma}^{-\frac{1}{2}}\right)\right] \\ 
&\geq \frac{2\lambda_{\min}(\bm{\Sigma})}{\|\bm{\Sigma}^{\frac{1}{2}}\bm{A}^{-1}\bm{\Sigma}^{\frac{1}{2}}\|} \geq 2\lambda_{\min}(\bm{\Sigma})(1-\gamma).
\end{align*}
This directly implies that $I_1$ is lower bounded by
\begin{align}\label{Srikant-I1-bound}
I_1 \geq 2\eta_t (1-\gamma) \lambda_{\min}(\bm{\Sigma}) \mathbb{E}\|\bm{\Delta}_t\|_{\bm{\Sigma}}^2.
\end{align}
\paragraph{Bounding $I_2$} We aim to upper bound $I_2$ as a function of $\eta_t^2$ and $\|\bm{\Delta}_t\|_{\bm{\Sigma}}^2$, so that when $\eta_t$ is sufficiently small, $I_2$ is negligible compared to $I_1$. Specifically, for any $\bm{A}_t$ generated by \eqref{eq:defn-At} and any $\bm{\Delta}_t \in \mathbb{R}^d$, we observe
\begin{align*}
\|\bm{A}_t \bm{\Delta}_t\|_{\bm{\Sigma}}^2&= \bm{\Delta}_t^\top \bm{A}_t \bm{\Sigma} \bm{A}_t \bm{\Delta}_t  \\ 
&\leq \|\bm{\Delta}_t\|_2^2 \|\bm{A}\|^2 \|\bm{\Sigma}\| \leq 4 \|\bm{\Sigma}\| \|\bm{\Delta}_t\|_2^2 \\ 
&\leq 4 \|\bm{\Sigma}\| \|\bm{\Sigma}^{-1}\| \|\bm{\Sigma}^{\frac{1}{2}}\bm{\Delta}_t\|_2^2 = 4\kappa \|\bm{\Delta}_t\|_{\bm{\Sigma}}^2,
\end{align*}
where we recall $\kappa$ as the condition number of $\bm{\Sigma}$. Therefore, as long as
\begin{align*}
\eta_t \leq \frac{(1-\gamma) \lambda_{\min}(\bm{\Sigma})}{4\kappa},
\end{align*}
it can be guaranteed that $I_2 \leq \frac{1}{2}I_1$. 
\paragraph{Bounding $I_3$} In order to compare with our result (Theorem \ref{thm:ind-td} and Corollary \ref{cor:sample-complexity-TD-iid}), we aim to bound $I_3$ as a function of $\|\bm{\theta}^\star\|_{\bm{\Sigma}}$. Towards this end, we firstly notice that
\begin{align*}
\bm{A}_t \bm{\theta}^\star - \bm{b}_t = \bm{\phi}(s_t) \bm{\phi}(s_t)^\top \bm{\theta}^\star - \gamma \bm{\phi}(s_t) \bm{\phi}(s_t')^\top \bm{\theta}^\star -r(s_t) \bm{\phi}(s_t).
\end{align*}
Therefore, we can upper bound $\mathbb{E}\|\bm{A}_t \bm{\theta}^\star - \bm{b}_t\|_{\bm{\Sigma}}^2$ by
\begin{align*}
\mathbb{E}\|\bm{A}_t \bm{\theta}^\star - \bm{b}_t\|_{\bm{\Sigma}}^2 &\leq 3 \underset{s\sim \mu}{\mathbb{E}} \|\bm{\phi}(s) \bm{\phi}(s)^\top \bm{\theta}^\star\|_{\bm{\Sigma}}^2  \\ 
&+ 3 \underset{s \sim \mu, s' \sim \mathcal{P}(\cdot \mid s)}{\mathbb{E}}\|\bm{\phi}(s) \bm{\phi}(s')^\top \bm{\theta}^\star \|_{\bm{\Sigma}}^2 \\ 
&+ 3\underset{s\sim \mu}{\mathbb{E}} \|r(s) \bm{\phi}(s)\|_{\bm{\Sigma}}^2,
\end{align*}
where the three terms on the right-hand-side can be bounded respectively by
\begin{align*}
&\underset{s\sim \mu}{\mathbb{E}} \|\bm{\phi}(s) \bm{\phi}(s)^\top \bm{\theta}^\star\|_{\bm{\Sigma}}^2  \\ 
&= \underset{s\sim \mu}{\mathbb{E}} \left[\bm{\theta}^{\star \top} \bm{\phi}(s) \left(\bm{\phi}(s)^\top \bm{\Sigma} \bm{\phi}(s) \right)\bm{\phi}(s)^\top \bm{\theta}^\star \right] \\ 
&\leq \underset{s\sim \mu}{\mathbb{E}} \left[\bm{\theta}^{\star \top} \bm{\phi}(s) \|\bm{\Sigma}\| \bm{\phi}(s)^\top \bm{\theta}^\star \right]\\ 
&= \|\bm{\Sigma}\| \bm{\theta}^{\star \top} \underset{s\sim \mu}{\mathbb{E}} [\bm{\phi}(s) \bm{\phi}(s)^\top]\bm{\theta}^\star\\ 
&= \|\bm{\Sigma}\| \bm{\theta}^{\star \top} \bm{\Sigma} \bm{\theta}^\star = \|\bm{\Sigma}\| \|\bm{\theta}^\star\|_{\bm{\Sigma}}^2;
\end{align*} 
\begin{align*}
&\underset{s \sim \mu, s' \sim \mathcal{P}(\cdot \mid s)}{\mathbb{E}}\|\bm{\phi}(s) \bm{\phi}(s')^\top \bm{\theta}^\star \|_{\bm{\Sigma}}^2\\ 
 & = \underset{s\sim \mu,s' \sim \mathcal{P}(\cdot \mid s)}{\mathbb{E}} \left[\bm{\theta}^{\star \top} \bm{\phi}(s') \left(\bm{\phi}(s)^\top \bm{\Sigma} \bm{\phi}(s) \right)\bm{\phi}(s')^\top \bm{\theta}^\star \right]\\
&\leq \underset{s\sim \mu,s' \sim \mathcal{P}(\cdot \mid s)}{\mathbb{E}} \left[\bm{\theta}^{\star \top} \bm{\phi}(s') \|\bm{\Sigma}\| \bm{\phi}(s')^\top \bm{\theta}^\star \right]\\ 
&= \|\bm{\Sigma}\| \bm{\theta}^{\star \top} \underset{s'\sim \mu}{\mathbb{E}} [\bm{\phi}(s') \bm{\phi}(s')^\top]\bm{\theta}^\star\\ 
&= \|\bm{\Sigma}\| \bm{\theta}^{\star \top} \bm{\Sigma} \bm{\theta}^\star = \|\bm{\Sigma}\| \|\bm{\theta}^\star\|_{\bm{\Sigma}}^2, 
\end{align*}
\begin{align*}
\text{and}~~\underset{s\sim \mu}{\mathbb{E}} \|r(s) \bm{\phi}(s)\|_{\bm{\Sigma}}^2 &\leq \max_{s \in \mathcal{S}} r^2(s) \|\bm{\phi}(s)\|_2^2 \|\bm{\Sigma}\| \leq \|\bm{\Sigma}\|.
\end{align*}
Consequently, $I_3$ can be upper bounded by
\begin{align}\label{eq:Srikant-I3-bound}
I_3 \leq 6\eta_t^2 \|\bm{\Sigma}\|\left(2\|\bm{\theta}^{\star}\|_{\bm{\Sigma}}^2 + 1\right).
\end{align}
\paragraph{Bounding $\mathbb{E}\|\bm{\Delta}_T\|_{\bm{\Sigma}}^2$} By combining \eqref{eq:Sigma-norm-iter}, \eqref{Srikant-I1-bound} and \eqref{eq:Srikant-I3-bound} and recalling that $I_2 \leq \frac{1}{2}I_1$ when $\eta_t$ is sufficiently small, we obtain 
\begin{align}\label{eq:Srikant-iter}
\mathbb{E}\|\bm{\Delta}_{t+1}\|_{\bm{\Sigma}}^2 &\leq (1-(1-\gamma)\lambda_{\min}(\bm{\Sigma})\eta_t )\mathbb{E}\|\bm{\Delta}_t\|_{\bm{\Sigma}}^2\nonumber \\ 
& + 6\eta_t^2 \|\bm{\Sigma}\|\left(2\|\bm{\theta}^{\star}\|_{\bm{\Sigma}}^2 + 1\right).
\end{align}
Therefore, for constant stepsizes $\eta_0 = \eta_1 = \ldots = \eta_T = \eta$, it is easy to verify by induction that
\begin{align*}
\mathbb{E}\|\bm{\Delta}_T\|_{\bm{\Sigma}}^2 &\leq (1-(1-\gamma)\lambda_{\min}(\bm{\Sigma})\eta)^T \|\bm{\Delta}_0\|_{\bm{\Sigma}}^2 \\ 
&+ \frac{6\eta \|\bm{\Sigma}\|\left(2\|\bm{\theta}^{\star}\|_{\bm{\Sigma}}^2 + 1\right)}{(1-\gamma)\lambda_{\min}(\bm{\Sigma})}.
\end{align*}
Hence, in order to guarantee $\mathbb{E}\|\bm{\Delta}_T\|_{\bm{\Sigma}}^2\leq \varepsilon^2$, it suffices to take
\begin{align*}
&\frac{\eta \|\bm{\Sigma}\|\left(\|\bm{\theta}^{\star}\|_{\bm{\Sigma}}^2 + 1\right)}{(1-\gamma)\lambda_{\min}(\bm{\Sigma})} \lesssim \varepsilon^2; \quad \text{and} \\ 
&\exp \left(-(1-\gamma)\lambda_{\min}(\bm{\Sigma})\eta T \right)\|\bm{\Delta}_0\|_{\bm{\Sigma}}^2 \lesssim \varepsilon^2.
\end{align*}
This implies the following upper bound for the sample complexity:
\begin{align}\label{eq:Srikant-sample-complexity}
T \asymp \frac{\kappa \|\bm{\Sigma}^{-1}\|\left(\|\bm{\theta}^{\star}\|_{\bm{\Sigma}}^2 + 1\right)}{(1-\gamma)^2 } \frac{1}{\varepsilon^2} \log \frac{1}{\varepsilon},
\end{align}
with the proviso that we take the stepsize $\eta \asymp \frac{\|\bm{\Sigma}^{-1}\|}{1-\gamma}\frac{1}{T}$ and that $T \gtrsim \|\bm{\Sigma}^{-2}\|(1-\gamma)^{-2}$.

\subsection{Comparisons with \cite{bhandari2018finite}}\label{app:compare-Russo}
Theorem 2(c) in \citep{bhandari2018finite} shows that with decaying stepsizes $\eta_t = \frac{\beta}{\lambda+t}$ where
\begin{align}\label{eq:defn-beta-lambda}
\beta = \frac{2\|\bm{\Sigma}^{-1}\|}{(1-\gamma)}, \quad \lambda = \frac{16\|\bm{\Sigma}^{-1}\|}{(1-\gamma)^2},
\end{align}
the expected $\ell_2$ norm of TD estimation error is bounded by
\begin{align}
\label{eq:Russo-bound-2}
\mathbb{E}\|\bm{\theta}_T - \bm{\theta}^{\star}\|_2^2 \leq \frac{\nu}{\lambda+T},
\end{align}
where
\begin{align}\label{eq:defn-nu}
\nu = \max\left\{\frac{8\sigma^2\|\bm{\Sigma}^{-2}\|}{(1-\gamma)^2},\frac{16\|\bm{\theta}^\star\|_2^2\|\bm{\Sigma}^{-1}\|}{(1-\gamma)^2}\right\}.
\end{align}
\begin{itemize}
\item Suppose the maximum is attained at the second term for $\nu$ and $T$ is sufficiently large, \eqref{eq:Russo-bound-2} is simplified as
\begin{align*}
\mathbb{E}\|\bm{\theta}_T - \bm{\theta}^{\star}\|_2^2  \lesssim \frac{16\|\bm{\theta}^\star\|_2^2\|\bm{\Sigma}^{-1}\|}{(1-\gamma)^2 T}.
\end{align*}
In order for  $\mathbb{E}\|\bm{\theta}_T - \bm{\theta}^{\star}\|_{\bm{\Sigma}}^2 \leq \varepsilon^2,$ it suffices to take
\begin{align*}
\frac{\varepsilon^2}{\|\bm{\Sigma}\|} \geq \frac{16\|\bm{\theta}^\star\|_2^2\|\bm{\Sigma}^{-1}\|}{(1-\gamma)^2 T} \geq \mathbb{E}\|\bm{\theta}_T - \bm{\theta}^{\star}\|_2^2,
\end{align*}
which implies the following sample complexity:
\begin{align*}
T \asymp \frac{\|\bm{\Sigma}^{-1}\|\|\bm{\Sigma}\|\|\bm{\theta}^\star\|_2^2 }{(1-\gamma)^2 \varepsilon^2}
\end{align*}
\item Suppose that the first term on the right hand side of expression~\eqref{eq:defn-nu} is larger, \eqref{eq:Russo-bound-2} can be simplified as
\begin{align*}
\mathbb{E}\|\bm{\theta}_T - \bm{\theta}^{\star}\|_2^2  \lesssim \frac{\sigma^2 \|\bm{\Sigma}^{-2}\|}{(1-\gamma)^2 T}.
\end{align*}
Then similarly, the sample complexity is
\begin{align*}
T \asymp \frac{\|\bm{\Sigma}^{-2}\|\|\bm{\Sigma}\|\sigma^2 }{(1-\gamma)^2 \varepsilon^2},
\end{align*}{}
where $\sigma^2 = \mathbb{E}\|\bm{A}_t \bm{\theta}^\star - \bm{b}_t\|_2^2.$
\end{itemize}
In the worst-case scenario, it satisfies $\sigma^2 \asymp \|\bm{\theta}^\star\|_{\bm{\Sigma}}^2+ 1.$
Therefore, the sample complexity implied by Theorem 2(c) of \cite{bhandari2018finite} scales as 
\begin{align}
\label{eq:Russo-bound}
T \asymp \frac{\kappa \|\bm{\Sigma}^{-1}\|\left(\|\bm{\theta}^{\star}\|_{\bm{\Sigma}}^2 +1 \right)}{(1-\gamma)^2 } \frac{1}{\varepsilon^2}. 
\end{align}
% \ytw{check!}

\subsection{Comparison with \cite{mou2020optimal} and \cite{li2021accelerated}}
\label{sec:mou}

\cite{mou2020optimal} studied a more general problem of linear approximation for fixed point equations in Hilbert spaces, and considered its application to TD learning with linear function approximation and $i.i.d.$ samples. A similar result was reached by \cite{li2021accelerated} in their Theorem 2 and Theorem 3. While these works explored both the \emph{approximation error}, which measures the difference between $\bm{\Phi \theta}^\star$ and $\bm{V}^\star$ under our notation, and the \emph{statistical error}, which measures the convergence of $\bar{\bm{\theta}}_T$ to $\bm{\theta}^\star$, it is the latter that is directly comparable to our results. Therefore, we hereby provide a comparison between the statistical error term in their Corollary 5 and the sample complexity result of ours as shown in Theorem \ref{thm:ind-td}. Translated to our notation, \cite{mou2020optimal} proved that with a sufficiently large sample size $T$ and a stepsize of $\eta \asymp \frac{1}{\sqrt{T}}$, the estimation error of the averaged TD learning algorithm satisfies
\begin{align}\label{eq:mou-iid}
&\mathbb{E}_{s \sim \mu}[{V}_{\bar{\bm{\theta}}_T}(s) - V_{\bm{\theta}^\star}(s)]^2\nonumber \\ 
& = \|\bar{\bm{\theta}}_T - \bm{\theta}^\star\|_{\bm{\Sigma}}\nonumber \\ 
&\lesssim \frac{1}{T} \mathsf{Tr}\left[(\bm{I}-\bm{M})^{-1} (\bm{\Sigma}_{L} + \bm{\Sigma}_b)(\bm{I} - \bm{M})^{-\top}\right],
\end{align}
in which $\bm{M}$, $\bm{\Sigma}_L$ and $\bm{\Sigma}_b$ are defined as
\begin{align*}
&\bm{M} = \gamma \bm{\Sigma}^{-\frac{1}{2}} \mathbb{E}_{s \sim \mu, s' \sim P(\cdot \mid s)}[\bm{\phi}(s)\bm{\phi}(s')^\top] \bm{\Sigma}^{-\frac{1}{2}},\\
&\bm{\Sigma}_L = \mathbf{Cov}_{s_t \sim \mu, s_t' \sim P(\cdot \mid s_t)}[\bm{\Sigma}^{-\frac{1}{2}}\bm{A}_t\bm{\theta}^\star], \quad \text{and} \\ 
&\bm{\Sigma}_b = \mathbf{Cov}_{s_t \sim \mu}[\bm{\Sigma}^{-\frac{1}{2}}\bm{b}_t].
\end{align*}
\paragraph{Translation into our notation} We firstly translate the upper bound \eqref{eq:mou-iid} into our notation. By definition,
\begin{align*}
\mathbb{E}_{s \sim \mu, s' \sim P(\cdot \mid s)}[\bm{\phi}(s)\bm{\phi}(s')^\top] = \bm{\Phi}^\top \bm{D}_{\bm{\mu}}\bm{P} \bm{\Phi}.
\end{align*}
Therefore, the term $\bm{I}-\bm{M}$ can be expressed as
\begin{align*}
\bm{I} - \bm{M} &= \bm{\Sigma}^{-\frac{1}{2}} \bm{\Sigma} \bm{\Sigma}^{-\frac{1}{2}} - \gamma \bm{\Sigma}^{-\frac{1}{2}}\bm{\Phi}^\top \bm{D}_{\bm{\mu}}\bm{P} \bm{\Phi}\bm{\Sigma}^{-\frac{1}{2}} \\ 
&=\bm{\Sigma}^{-\frac{1}{2}} \left[\bm{\Phi}^\top \bm{D}_{\bm{\mu}}\bm{\Phi}- \gamma \bm{\Phi}^\top \bm{D}_{\bm{\mu}}\bm{P} \bm{\Phi} \right]\bm{\Sigma}^{-\frac{1}{2}}\\ 
&= \bm{\Sigma}^{-\frac{1}{2}} \bm{\Phi}^\top \bm{D}_{\bm{\mu}}(\bm{I}-\gamma \bm{P})\bm{\Phi}\bm{\Sigma}^{-\frac{1}{2}} = \bm{\Sigma}^{-\frac{1}{2}}\bm{A}\bm{\Sigma}^{-\frac{1}{2}}.
\end{align*}
Furthermore, the terms $\bm{\Sigma}_L$ and $\bm{\Sigma}_b$ can be expressed in our notation as
\begin{align*}
\bm{\Sigma}_L &= \bm{\Sigma}^{-\frac{1}{2}} \mathbf{Cov}_{s_t \sim \mu, s_t' \sim P(\cdot \mid s_t)}[\bm{A}_t \bm{\theta}^\star] \bm{\Sigma}^{-\frac{1}{2}} \\ 
&= \bm{\Sigma}^{-\frac{1}{2}} \mathbb{E}\left[[(\bm{A}_t - \bm{A})\bm{\theta}^\star][(\bm{A}_t - \bm{A})\bm{\theta}^\star]^\top \right]\bm{\Sigma}^{-\frac{1}{2}}, 
\end{align*}
and
\begin{align*} 
\bm{\Sigma}_b &= \bm{\Sigma}^{-\frac{1}{2}} \mathbf{Cov}_{s_t \sim \mu}[\bm{b}_t]\bm{\Sigma}^{-\frac{1}{2}}\\ 
&= \bm{\Sigma}^{-\frac{1}{2}} \mathbb{E}_{s_t \sim \mu}[(\bm{b}_t - \bm{b})(\bm{b}_t - \bm{b})^\top ]\bm{\Sigma}^{-\frac{1}{2}}.
\end{align*}
For simplicity, we will omit the subscript $s_t \sim \mu, s_t' \sim P(\cdot \mid s_t)$ in the following. Combining these terms, the upper bound in \eqref{eq:mou-iid} can be expressed as
\begin{align*}
&\frac{1}{T} \mathsf{Tr}\left[(\bm{I}-\bm{M})^{-1} (\bm{\Sigma}_{L} + \bm{\Sigma}_b)(\bm{I} - \bm{M})^{-\top}\right]\\ 
&= \frac{1}{T} \mathsf{Tr}\Big[ \left(\bm{\Sigma}^{\frac{1}{2}}\bm{A}^{-1}\bm{\Sigma}^{\frac{1}{2}}\right) \bm{\Sigma}^{-\frac{1}{2}}\mathbb{E}\Big[[(\bm{A}_t - \bm{A})\bm{\theta}^\star][(\bm{A}_t - \bm{A})\bm{\theta}^\star]^\top \\ 
&+(\bm{b}_t - \bm{b})(\bm{b}_t - \bm{b})^\top\Big]\bm{\Sigma}^{-\frac{1}{2}}\left(\bm{\Sigma}^{\frac{1}{2}}\bm{A}^{-\top}\bm{\Sigma}^{\frac{1}{2}}\right) \Big] \\ 
&= \frac{1}{T} \mathsf{Tr}\left[\bm{\Sigma}^{\frac{1}{2}}\bm{A}^{-1} \mathbb{E}\left[[(\bm{A}_t - \bm{A})\bm{\theta}^\star][(\bm{A}_t - \bm{A})\bm{\theta}^\star]^\top \right]\bm{A}^{-\top} \bm{\Sigma}^{\frac{1}{2}}\right] \\ 
&+ \frac{1}{T} \mathsf{Tr}\left[\bm{\Sigma}^{\frac{1}{2}}\bm{A}^{-1} \mathbb{E}\left[(\bm{b}_t - \bm{b})(\bm{b}_t - \bm{b})^\top \right]\bm{A}^{-\top} \bm{\Sigma}^{\frac{1}{2}}\right]\\ 
&=\frac{1}{T} \mathbb{E}\|\bm{A}^{-1}(\bm{A}_t - \bm{A})\bm{\theta}^\star\|_{\bm{\Sigma}}^2 + \frac{1}{T} \mathbb{E}\|\bm{A}^{-1}(\bm{b}_t - \bm{b})\|_{\bm{\Sigma}}^2
\end{align*}
So in summary, \cite{mou2020optimal} bounds the estimation error by 
\begin{align}\label{eq:mou-iid-translate}
&\|\bar{\bm{\theta}}_T - \bm{\theta}^\star\|_{\bm{\Sigma}}\nonumber \\ 
&\lesssim \frac{1}{T} \mathbb{E}\|\bm{A}^{-1}(\bm{A}_t - \bm{A})\bm{\theta}^\star\|_{\bm{\Sigma}}^2 + \frac{1}{T} \mathbb{E}\|\bm{A}^{-1}(\bm{b}_t - \bm{b})\|_{\bm{\Sigma}}^2.
\end{align}
\paragraph{Comparison to our results} In the following, we show that the upper bound \eqref{eq:mou-iid-translate} can be directly deducted from our proof of Theorem \ref{thm:ind-td}. Specifically, our analysis in \eqref{eq:Deltbar-decomposition-TD}, \eqref{eq:sum-Ai-xi-i-Bound1} and \eqref{eq:Ainv-A-sum-UB1} reveals that $\|\bar{\bm{\theta}}_T - \bm{\theta}^\star\|_{\bm{\Sigma}}^2$ is bounded by
\begin{align*}
\|\bar{\bm{\theta}}_T - \bm{\theta}^\star\|_{\bm{\Sigma}}^2 
&\lesssim \frac{1}{T^2} \left\|\sum_{i=0}^{T-1} \bm{A}^{-1}(\bm{A}_i - \bm{A})\bm{\theta}_i\right\|_{\bm{\Sigma}}^2 \\ 
& + \frac{1}{T^2} \left\|\sum_{i=0}^{T-1} \bm{A}^{-1}(\bm{b}_i - \bm{b})\right\|_{\bm{\Sigma}}^2 + o\left(\frac{1}{T}\right).
\end{align*}
Taking expectations on both sides and applying the martingale property, we obtain
\begin{align*}
&\mathbb{E}\|\bar{\bm{\theta}}_T - \bm{\theta}^\star\|_{\bm{\Sigma}}^2  \\ 
&\lesssim \frac{1}{T^2} \sum_{i=0}^{T-1} \mathbb{E}\left\|\bm{A}^{-1}(\bm{A}_i - \bm{A})\bm{\theta}_i\right\|_{\bm{\Sigma}}^2 + \frac{1}{T^2} \sum_{i=0}^{T-1} \mathbb{E}\left\|\bm{A}^{-1}(\bm{b}_i - \bm{b})\right\|_{\bm{\Sigma}}^2 \\ 
&\lesssim \frac{1}{T^2} \sum_{i=0}^{T-1} \mathbb{E}\left\|\bm{A}^{-1}(\bm{A}_i - \bm{A})\bm{\theta}^\star\right\|_{\bm{\Sigma}}^2 + \frac{1}{T^2} \sum_{i=0}^{T-1} \mathbb{E}\left\|\bm{A}^{-1}(\bm{b}_i - \bm{b})\right\|_{\bm{\Sigma}}^2\\ 
& + \frac{1}{T^2} \sum_{i=0}^{T-1} \mathbb{E}\left\|\bm{A}^{-1}(\bm{A}_i - \bm{A})\bm{\Delta}_i\right\|_{\bm{\Sigma}}^2\\ 
&= \frac{1}{T} \mathbb{E} \|\bm{A}^{-1}(\bm{A}_t - \bm{A})\bm{\theta}^\star\|_{\bm{\Sigma}}^2 + \frac{1}{T} \mathbb{E}\|\bm{A}^{-1}(\bm{b}_t - \bm{b})\|_{\bm{\Sigma}}^2\\ 
&+ \frac{1}{T^2} \sum_{i=0}^{T-1} \mathbb{E}\left\|\bm{A}^{-1}(\bm{A}_i - \bm{A})\bm{\Delta}_i\right\|_{\bm{\Sigma}}^2.
\end{align*}
Notice here that the first two terms are exactly the same as the right-hand-side of \eqref{eq:mou-iid-translate}. Hence, it now boils down to showing that
\begin{align}\label{eq:mou-remainder}
\frac{1}{T^2} \sum_{i=0}^{T-1} \mathbb{E}\left\|\bm{A}^{-1}(\bm{A}_i - \bm{A})\bm{\Delta}_i\right\|_{\bm{\Sigma}}^2 = o\left(\frac{1}{T}\right).
\end{align}
Towards this end, we firstly observe that
\begin{align*}
&\frac{1}{T^2} \sum_{i=0}^{T-1} \mathbb{E}\left\|\bm{A}^{-1}(\bm{A}_i - \bm{A})\bm{\Delta}_i\right\|_{\bm{\Sigma}}^2 \\ 
& \lesssim \frac{\|\bm{\Sigma}\|^2 \|\bm{\Sigma}^{-1}\|^2}{(1-\gamma)^2T^2} \sum_{i=0}^{T-1} \mathbb{E}\left\|\bm{\Delta}_i\right\|_{\bm{\Sigma}}^2.
\end{align*}
For the expectation of $\|\bm{\Delta}_i\|_{\bm{\Sigma}}^2$, we again apply the iterative relation deducted in \eqref{eq:Srikant-iter} and obtain
\begin{align*}
\mathbb{E}\|\bm{\Delta}_i\|_{\bm{\Sigma}}^2 &\leq (1-(1-\gamma)\lambda_{\min}(\bm{\Sigma})\eta)^i \|\bm{\Delta}_0\|_{\bm{\Sigma}}^2 \\ 
& + \frac{6\eta \|\bm{\Sigma}\|\left(2\|\bm{\theta}^{\star}\|_{\bm{\Sigma}}^2 + 1\right)}{(1-\gamma)\lambda_{\min}(\bm{\Sigma})}.
\end{align*}
Summing from $i=0$ through $i=T-1$ yields
\begin{align*}
\sum_{i=0}^{T-1} \mathbb{E} \|\bm{\Delta}_i\|_{\bm{\Sigma}}^2 &\leq \frac{1}{(1-\gamma)\lambda_{\min}(\bm{\Sigma})\eta}\|\bm{\Delta}_0\|_{\bm{\Sigma}}^2 \\ 
& + \frac{6\eta T \|\bm{\Sigma}\|\left(2\|\bm{\theta}^{\star}\|_{\bm{\Sigma}}^2 + 1\right)}{(1-\gamma)\lambda_{\min}(\bm{\Sigma})}.
\end{align*}
By setting $\eta \asymp T^{-1/2}$ as suggested by \cite{mou2020optimal}, this immediately implies
\begin{align*}
\frac{1}{T^2} \sum_{i=0}^{T-1} \mathbb{E}\left\|\bm{A}^{-1}(\bm{A}_i - \bm{A})\bm{\Delta}_i\right\|_{\bm{\Sigma}}^2 \lesssim \frac{1}{T^2}\cdot T^{1/2} = o\left(\frac{1}{T}\right).
\end{align*}
In summary, we have shown that the upper bound proposed by \cite{mou2020optimal} follows directly from our analysis. Our result, as is shown in Theorem \ref{thm:ind-td}, improves upon theirs in the sense that we use a stepsize $\eta$ that only depends on the logarithm of $T$, provide a bound with high probability instead of in expectation, and reveal a clearer dependence on the problem-related parameters.

\subsection{Comparison with \cite{durmus2022finite}}
\label{sec:durmus}

It is difficult to place the corresponding instance dependent results in comparison, so, we focus our attention on the minimax results.
In the following, we make use of the relations that $\|\bm{A} (\overline{\bm\theta}_T - {\bm\theta}^{\star})\|_2 \ge \|\bm{A} \bm{\Sigma}^{-1/2}\|\|\overline{\bm\theta}_T - {\bm\theta}^{\star}\|_{\bm{\Sigma}} \gtrsim (1-\gamma)\sqrt{\lambda_{\min}(\bm{\Sigma})}\|\overline{\bm\theta}_T - {\bm\theta}^{\star}\|_{\bm{\Sigma}}$, and $\mathbb{E}\|\bm{A}_t \bm{\theta}^\star - \bm{b}_t\|_2^2 \lesssim \frac{1}{(1-\gamma)^2}$, $\sup \|\bm{A}_t \bm{\theta}^\star - \bm{b}_t\|_2 \lesssim \frac{1}{1-\gamma}$.
We also consider the situations when $\|{\bm\theta}^{\star}\|_{\bm{\Sigma}} \lesssim \frac{1}{1-\gamma}$, and $\bm{\phi}(s)^{\top}\bm{\Sigma}^{-1}\bm{\phi}(s) \lesssim \lambda_{\min}(\bm{\Sigma})^{-1}$.
Notice that there exists an MDP instance such that equality can be attained in all these bounds simultaneously.
For ease of presentation, let us first rephrase the result \citet[Corollary 1]{durmus2022finite} in terms of our notation\footnote{We take $C_{\bm{A}} \lesssim (1-\gamma)^{-1}$, $a \asymp \|\bm{Q}\|^{-1} \lesssim (1-\gamma)\lambda_{\min}(\bm{\Sigma})$ and then $c_{\bm{A}} \lesssim \kappa^{-1}(1-\gamma)^3\lambda_{\min}(\bm{\Sigma})$ (see the definitions of these parameters in \citet{durmus2022finite}).}.
It is shown therein that for 
\begin{align*}
\eta \lesssim \frac{(1-\gamma)^3\lambda_{\min}(\bm{\Sigma})}{\kappa\sqrt{T}},
\end{align*}
with probability at least $1-\delta$, the averaged TD estimation error is bounded by
\begin{align}
\label{eq:durmus}
\|\overline{\bm\theta}_T - {\bm\theta}^{\star}\|_{\bm{\Sigma}} &\lesssim \sqrt{\frac{1}{\lambda_{\min}(\bm{\Sigma})(1-\gamma)^4T}},
\end{align}
when $T \gtrsim \frac{1}{c_{\bm{A}}^2} \gtrsim \frac{\kappa^2}{(1-\gamma)^6\lambda_{\min}(\bm{\Sigma})^2}$.
Here, we omit the dependency of log factors.
In comparison, our result delivers the same bound as long as $T \gtrsim \frac{\kappa^2}{(1-\gamma)^4\lambda_{\min}(\bm{\Sigma})}$.
We incur a lower born-in cost for the relation~\eqref{eq:durmus} to hold. 

\bibliographystyle{IEEEtran}
\bibliography{bibfileRL,bibfileRL-2}

\begin{IEEEbiographynophoto}{Gen Li}
(Member, IEEE) received his Ph.D. degree from the Department of Electronic Engineering at Tsinghua University, advised by Professor Yuantao Gu, and bachelor's degree from the Department of Electronic Engineering and Department of Mathematics at Tsinghua University in 2016.
He is currently an Assistant Professor at the Department of Statistic, the Chinese University of Hong Kong. 
Prior to that, he was a post-doctoral researcher at the Department of Statistics and Data Science, Wharton School, University of Pennsylvania. 
His research interests include diffusion based generative model, reinforcement learning, high-dimensional statistics, machine learning, signal processing, and mathematical optimization.
\end{IEEEbiographynophoto}

\begin{IEEEbiographynophoto}{Weichen Wu}
received the B.S. degree in Data Science and Big Data Technology from Peking University, Beijing, China, in 2015, and is currently a Ph.D. candidate from the Department of Statistics and Data Science at Carnegie Mellon University, co-advised by Alessandro Rinaldo and Yuting Wei. His research interests lie in Reinforcement Learning and Topological Data Analysis.
\end{IEEEbiographynophoto}

\begin{IEEEbiographynophoto}{Yuejie Chi}
(Fellow, IEEE; S'09--M'12--SM'17--F'23) received Ph.D. and M.A. in Electrical Engineering from Princeton University in 2012 and 2009, and B.E. (Hon.) in Electrical Engineering from Tsinghua University, Beijing, China, in 2007. She is currently the Sense of Wonder Group Endowed Professor of Electrical and Computer Engineering in AI Systems at Carnegie Mellon University, with courtesy appointments in the Machine Learning Department and CyLab. Her research interests lie in the theoretical and algorithmic foundations of data science, signal processing, machine learning and inverse problems, with applications in sensing, imaging, decision making, and AI systems. Among others, she is a recipient of Presidential Early Career Award for Scientists and Engineers (PECASE), SIAM Activity Group on Imaging Science Best Paper Prize, IEEE Signal Processing Society Young Author Best Paper Award, and the inaugural IEEE Signal Processing Society Early Career Technical Achievement Award for contributions to high-dimensional structured signal processing. She was named a Goldsmith Lecturer by IEEE Information Theory Society, a Distinguished Lecturer by IEEE Signal Processing Society, and a Distinguished Speaker by ACM. She currently serves or served as an Associate Editor for IEEE Trans. on Information Theory, IEEE Trans. on Signal Processing, IEEE Trans. on Pattern Recognition and Machine Intelligence, Information and Inference: A Journal of the IMA, and SIAM Journal on Mathematics of Data Science.
\end{IEEEbiographynophoto}

\begin{IEEEbiographynophoto}{Cong Ma}
(Member, IEEE) received the B.Eng. degree from Tsinghua University in 2015 and the Ph.D. degree from Princeton University in 2020. He was a Post-Doctoral Researcher with the Department of Electrical Engineering and Computer Sciences, UC Berkeley. He is currently an Assistant Professor with the Department of Statistics, The University of Chicago. His research interests include mathematics of data science, machine learning, high-dimensional statistics, convex, and nonconvex optimization. He has received the Student Paper Award from the International Chinese Statistical Association in 2017, the School of Engineering and Applied Science Award for Excellence from Princeton University in 2019, the AI Labs Fellowship from Hudson River Trading in 2019, the Hannan Graduate Student Travel Award from IMS in 2020, as well as the Best Paper Prize from SIAM Activity Group on Imaging Science in 2024.
\end{IEEEbiographynophoto}

\begin{IEEEbiographynophoto}{Alessandro Rinaldo}
is a Professor of Statistics and Data Science at the University of Texas at Austin. He received his Ph.D. in Statistics at Carnegie Mellon University (2005), and remained there as a faculty member until 2023. His research interests revolve around the theoretical properties of statistical and machine learning models for high-dimensional data under various structural assumptions, such as sparsity or intrinsic low dimensionality. An IMS fellow, he has served on the editorial boards of various journals, including the Annals of Statistics, the Journal of the American Statistics Association, and the Electronic Journal of Statistics.
\end{IEEEbiographynophoto}

\begin{IEEEbiographynophoto}{Yuting Wei}
(Member, IEEE) received the B.S. degree (Hons.) in statistics from Peking University, Beijing, China, in 2013, and the Ph.D. degree in statistics from the University of 
California, Berkeley in 2018, advised by Martin Wainwright and Aditya Guntuboyina. 
She is currently an Assistant Professor at the Department of Statistics and Data Science, Wharton School, University of Pennsylvania.
 Prior to that, She was with Carnegie Mellon University from 2019 to 2021 as an Assistant Professor of Statistics and Data Science, and with Stanford University as a Stein Fellow from 2018 to 2019. 
 Her research interests include high-dimensional and non-parametric statistics, statistical machine learning, and reinforcement learning. 
 She was a recipient of the NSF CAREER award, the Google Research Award, the ICSA Junior Research Award, honorable mention for Bernoulli Society's New Researcher Award, 
 and the Erich L. Lehmann Citation from Berkeley Statistics (awarded to the best dissertation in theoretical statistics).
\end{IEEEbiographynophoto}

\end{document}